\def\eqref#1{equation~\ref{#1}}
\def\1{\bm{1}}
\DeclareMathAlphabet{\mathsfit}{\encodingdefault}{\sfdefault}{m}{sl}
\SetMathAlphabet{\mathsfit}{bold}{\encodingdefault}{\sfdefault}{bx}{n}
\newtheorem{assumption}{Assumption}[section]
\newtheorem{proposition}{Proposition}[section]
\newtheorem{remark}{Remark}[section]
\newcommand{\DPk}{Diffusion Policy }
\newcommand{\DP}{Diffusion Policy}
\definecolor{mydarkred}{rgb}{0.6,0,0}
\title{Block-wise Adaptive Caching for Accelerating Diffusion Policy}
\author{
  Kangye Ji\textsuperscript{1}, 
  \setcounter{footnote}{1}%
  Yuan Meng\textsuperscript{2}\thanks{Corresponding authors}, 
  \setcounter{footnote}{0}%
  Hanyun Cui\textsuperscript{1}\thanks{Work performed during internship at MMLab@SIGS, Tsinghua University}, 
  Ye Li\textsuperscript{1}, 
  Jianbo Zhou\textsuperscript{1}\footnotemark[1], 
  Shengjia Hua\textsuperscript{1}\footnotemark[1]\\
  \textbf{Lei Chen\textsuperscript{1},
  Zhi Wang\textsuperscript{1}\footnotemark[2]}\\
  \textsuperscript{1}Tsinghua Shenzhen International Graduate School, Tsinghua University \\
  \textsuperscript{2}Department of Computer Science and Technology, Tsinghua University 
  %\texttt{jky25@mails.tsinghua.edu.cn, yuanmeng@mail.tsinghua.edu.cn, wangzhi@sz.tsinghua.edu.cn}
}
\begin{document}

\maketitle

\begin{abstract}

Diffusion Policy has demonstrated strong visuomotor modeling capabilities, but its high computational cost renders it impractical for real-time robotic control.
Despite huge redundancy across repetitive denoising steps, existing diffusion acceleration techniques fail to generalize to Diffusion Policy due to fundamental architectural and data divergences.
In this paper, we propose \textbf{B}lock-wise \textbf{A}daptive \textbf{C}aching (\textbf{BAC}), a method to accelerate Diffusion Policy by caching intermediate action features. BAC achieves lossless action generation acceleration by adaptively updating and reusing cached features at the block level, based on a key observation that feature similarities exhibit non-uniform temporal dynamics and distinct block-specific patterns. 
To operationalize this insight, we first design an Adaptive Caching Scheduler to identify optimal update timesteps by maximizing the global feature similarities between cached and skipped features. However, applying this scheduler for each block leads to significant error surges due to the inter-block propagation of caching errors, particularly within Feed-Forward Network (FFN) blocks. To mitigate this issue, we develop the Bubbling Union Algorithm, which truncates these errors by updating the upstream blocks with significant caching errors before downstream FFNs.
As a training-free plugin, BAC is readily integrable with existing transformer-based Diffusion Policy and vision-language-action models.  Extensive experiments on multiple robotic benchmarks demonstrate that BAC achieves up to $3 \times$ inference speedup for free. Project page: {\url{https://block-wise-adaptive-caching.github.io}.}

\end{abstract}

\section{Introduction}
\DPk has gained substantial attention in robotic control, due to its ability to model action distributions via conditional denoising processes~\citep{chi2023diffusion}.
Recently, it has also been widely adopted by vision-language-action models ~\citep{wen2025dexvla,liu2025hybridvla,hou2025dita} to perform highly dexterous and complex tasks.
However, its massive computational burden in the denoising process makes the action frequency unable to satisfy real-time and smooth control.
For instance, on a 6-DoF robotic arm executing block pick-and-place, 50 diffusion denoising steps at 1 ms per step restrict the action update rate to 10 Hz, well below 30–50 Hz usually needed for smooth real-time control~\citep{shih2023parallel}.

Despite the aforementioned necessity, the acceleration of \DPk remains an underexplored field. Cache-based methods have recently gained significant attention in accelerating diffusion models on image-generation tasks~\citep{ma2024deepcache,wimbauer2024cachemeifyoucan,selvaraju2024fora,chen2024delta,zou2025accelerating} and video-generation tasks~\citep{liu2024timestep,kahatapitiya2024adaptive,lv2024fastercache}. However, they cannot be directly applied to Diffusion Policy, due to differences in data characteristics and model architectures.

\begin{figure}[htbp]
    \centering
    % 子图 (a)
    \begin{subfigure}[b]{\textwidth} % 适度缩小
        \centering
        \includegraphics[width=0.31\textwidth]{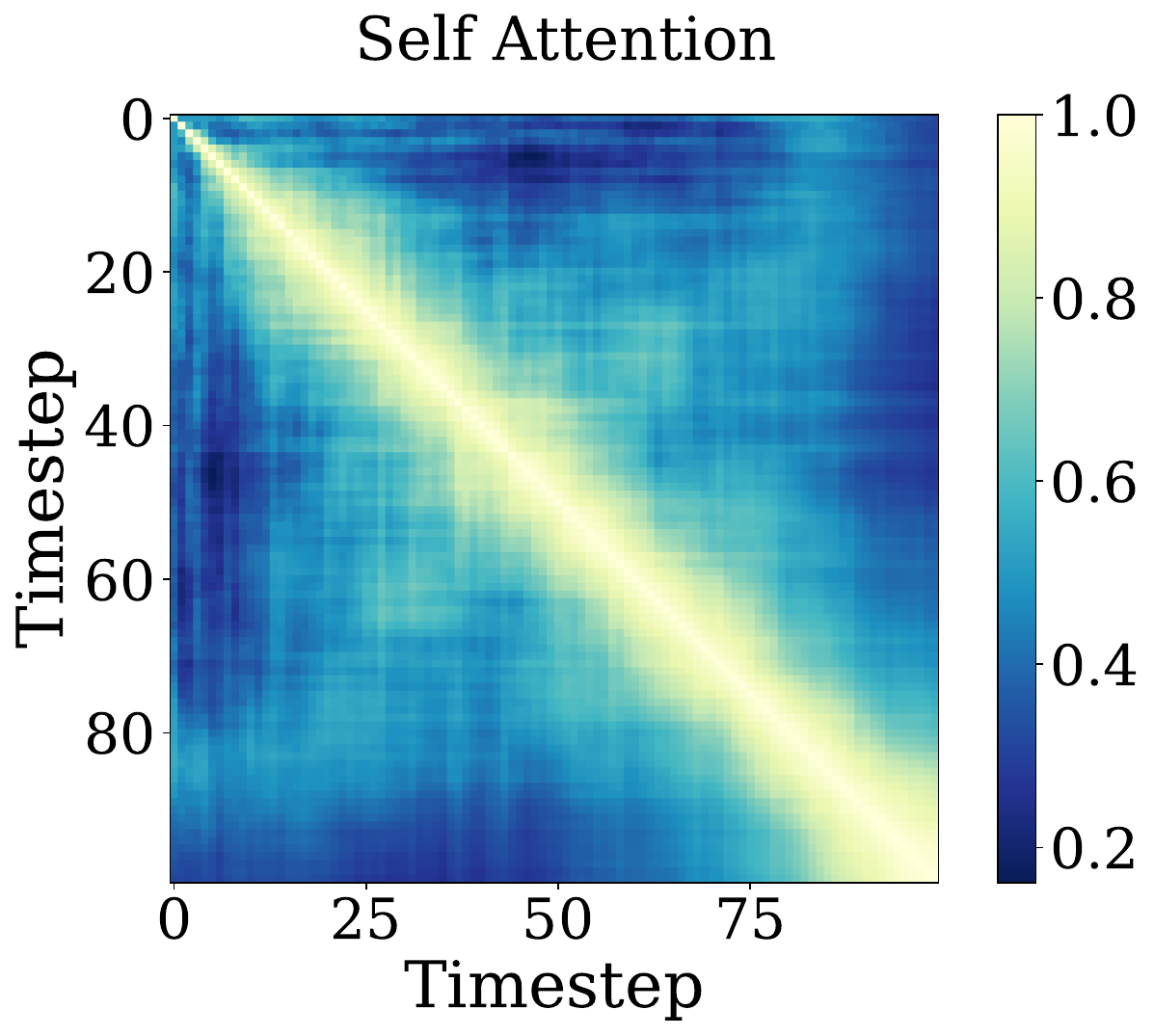}
        \includegraphics[width=0.31\textwidth]{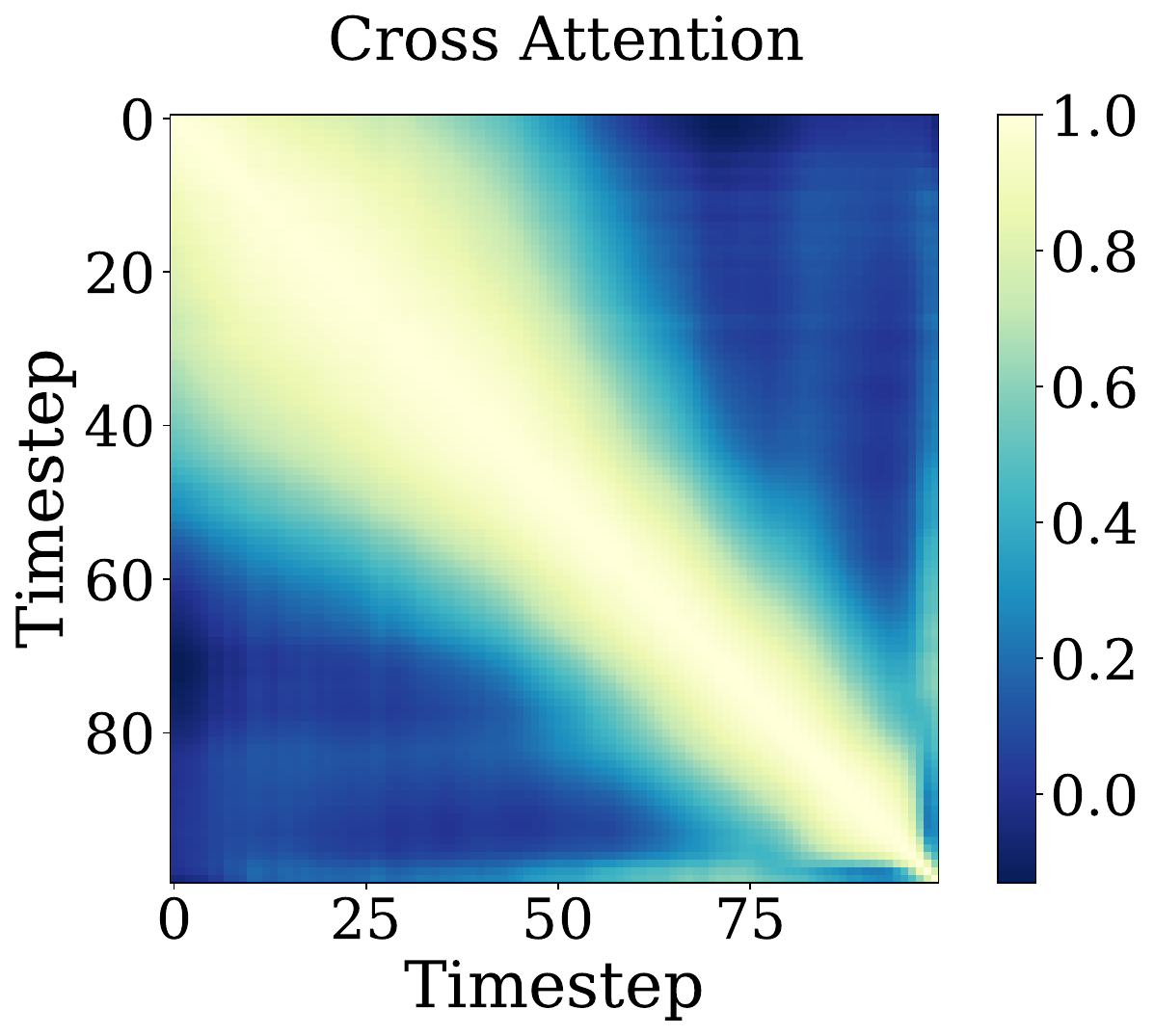}
        \includegraphics[width=0.31\textwidth]{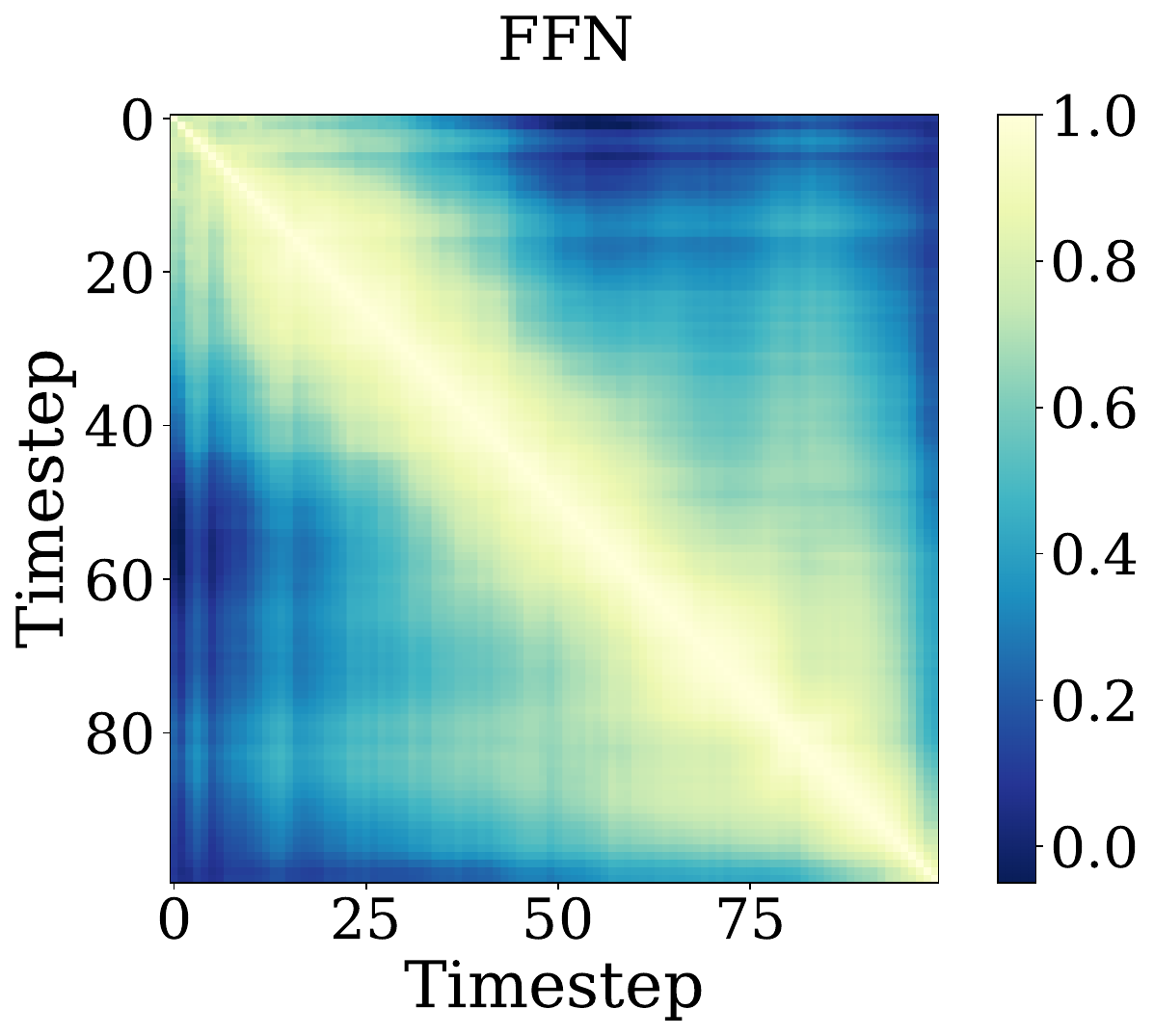}
        \caption{Feature similarities across timesteps.}
        \label{fig: similarity matrices}
    \end{subfigure}
    
    %\vspace{-1\baselineskip}

    % 子图 (b)
    \begin{subfigure}[b]{\textwidth}
        \centering
        \includegraphics[width=0.31\textwidth]{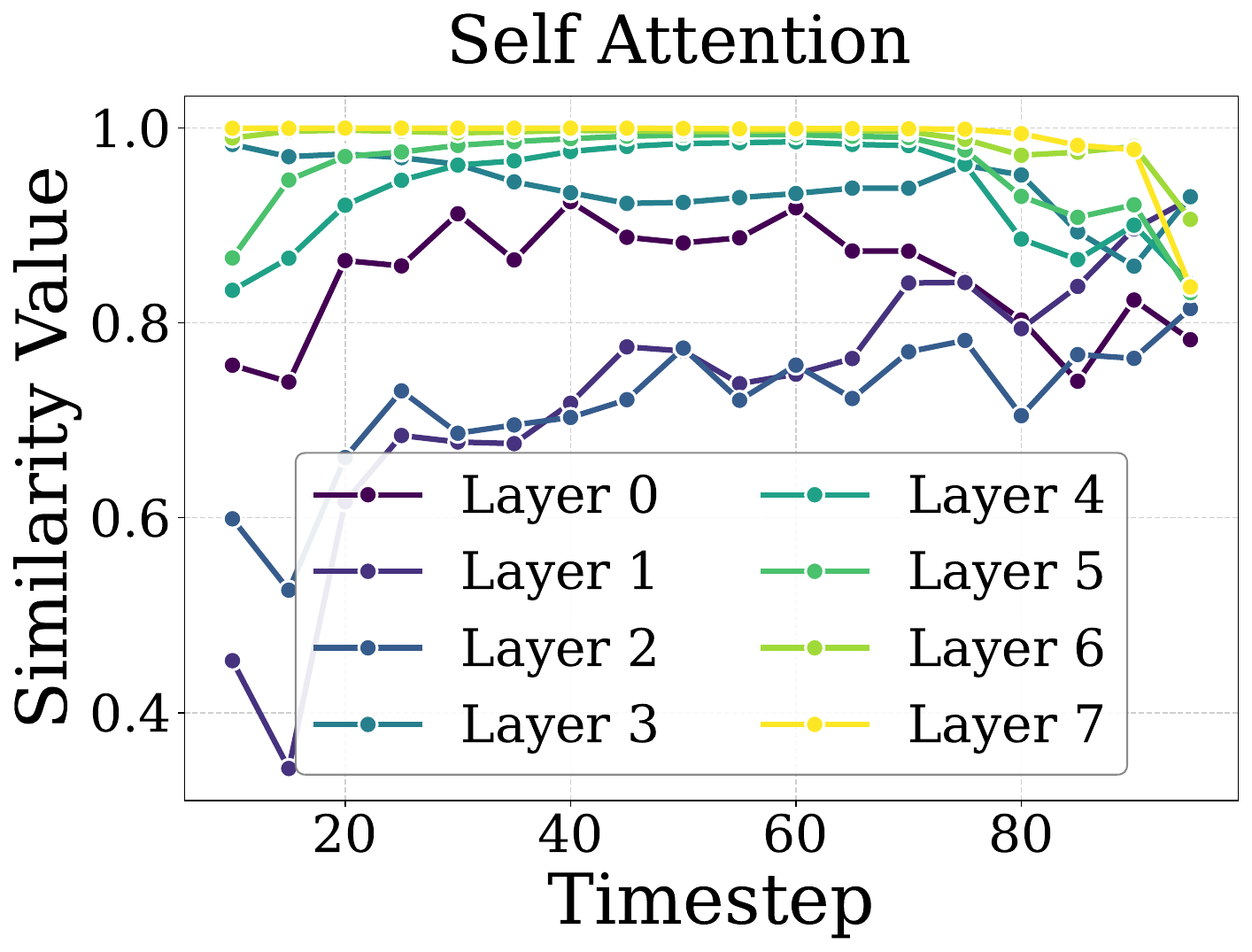}
        \includegraphics[width=0.31\textwidth]{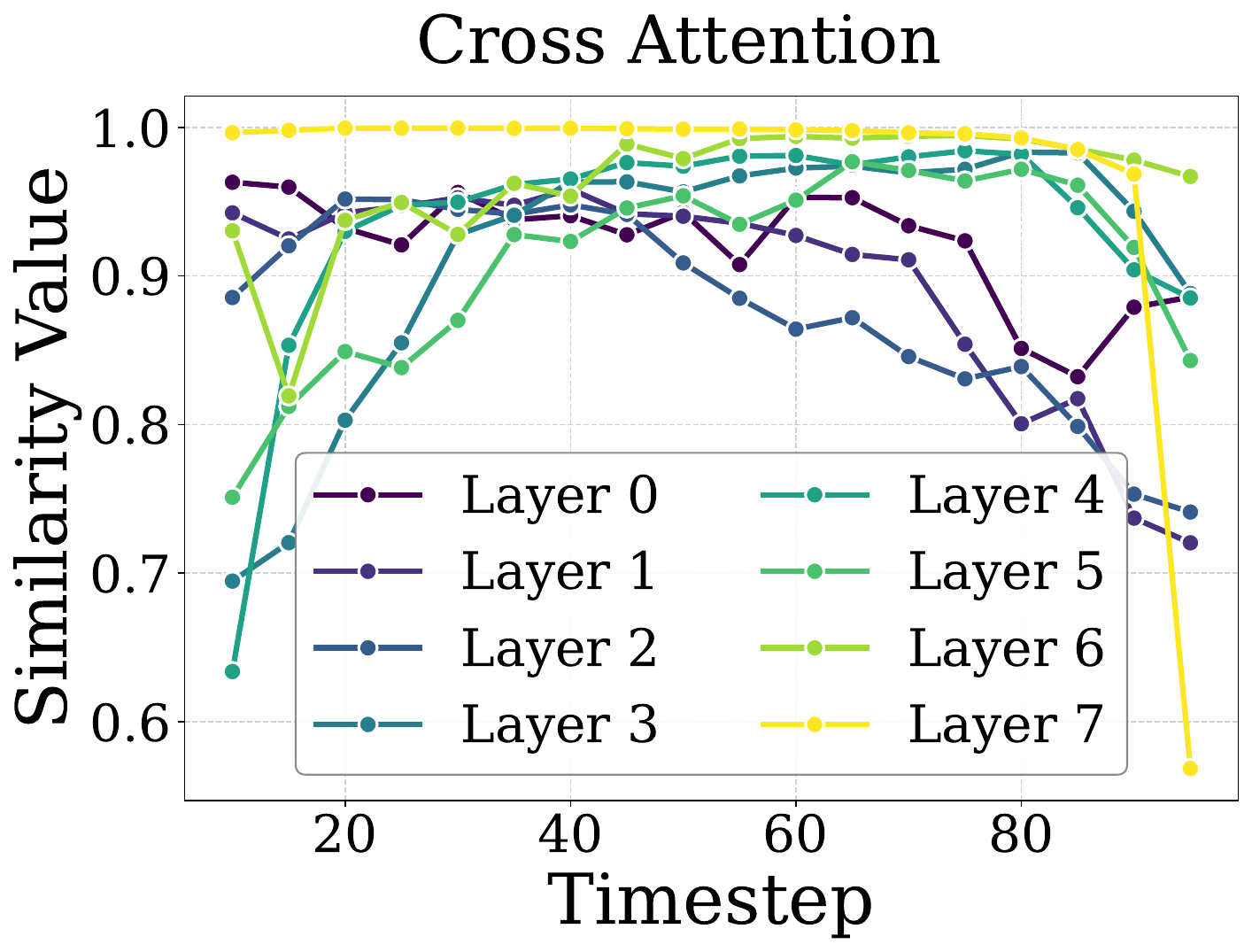}
        \includegraphics[width=0.31\textwidth]{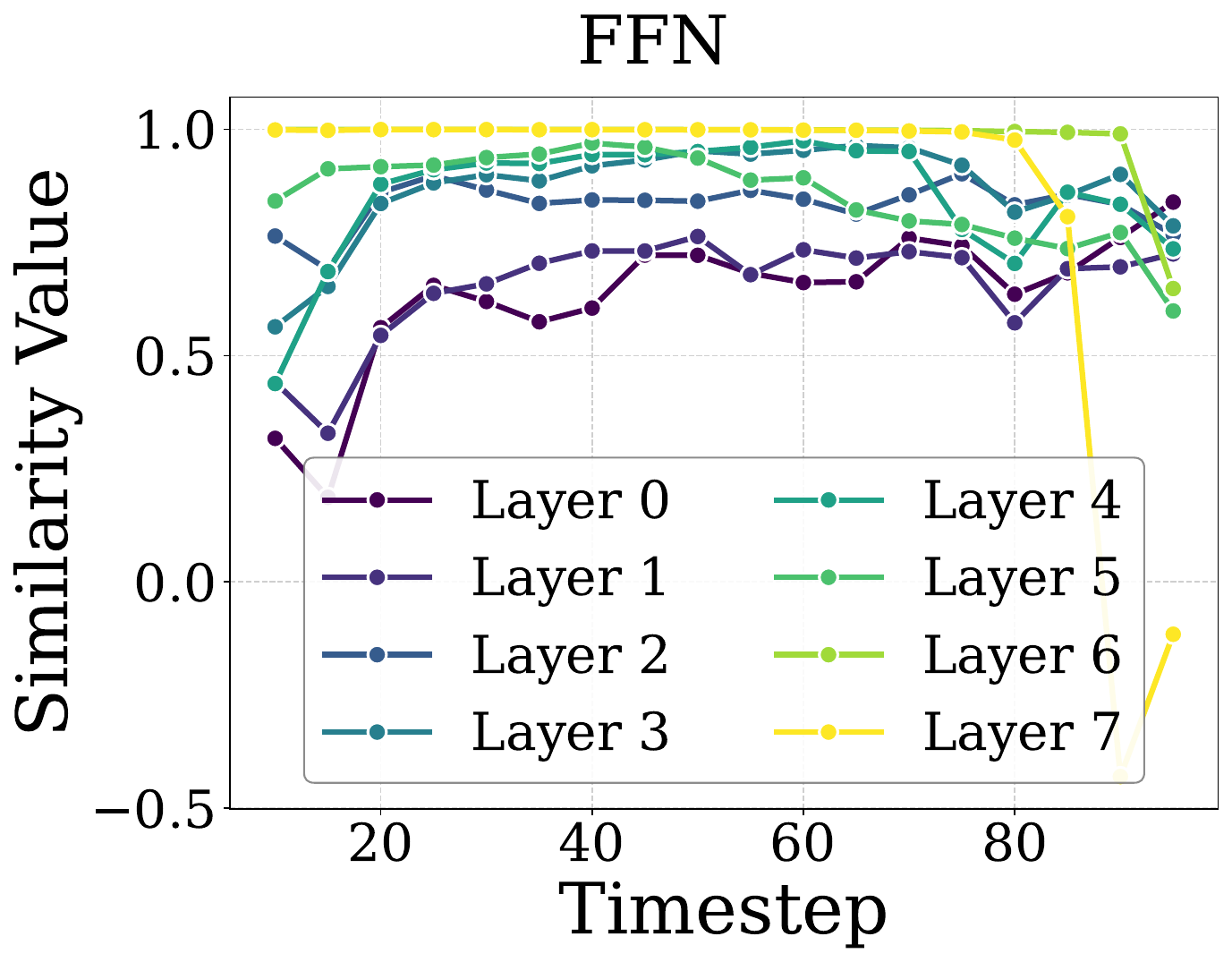}
        \caption{Similarity change curves, measured between timestep $t$ and $t{-}10$.}
        \label{fig: similarity pattern of different blocks}
    \end{subfigure}

    % \caption{Activation Similarities. Top Row: Activation similarity matrices of blocks in the third decoder layer. Bottom Row: The feature similarities between the current step and the timestep 10 steps ago of different blocks.\sj{ \textbf{Block-wise Activation Similarity Patterns.} (a) Similarity matrices showing activation similarities across timesteps for different blocks in the third decoder layer.  
    % (b) Temporal similarity trends of different blocks, computed as the similarity between activations at time step $t$ and $t-10$.}}

    \caption{Temporal and block-wise feature similarity patterns. (a) Similarity matrices of blocks in the third decoder layer. (b) Similarity change curves of different blocks. The feature similarity between consecutive timesteps varies non-uniformly over time and differs across blocks.}
    
    \label{fig: insight observation}
\end{figure}

% To address these issues, we propose \textbf{B}lock-wise \textbf{A}daptive \textbf{C}aching (BAC), a training-free approach that accelerates transformer-based diffusion policy by adaptively updating and reusing cached action features at the block level.
% To address the tension between speed and accuracy, one often overlooked core aspect is the potential for dynamically customizing the caching update schedule: since activation similarities across timesteps vary non-uniformly and different blocks exhibit distinct temporal similarity patterns, a one‐size‐fits‐all update strategy is inherently suboptimal. 

To address this issue, we aim to propose a customized feature caching method for \DP. 
We first explore the
distinct characteristics of \DPk models and identify two key observations in feature similarities:
% issues in DiT quantization
% Our work reveals huge redundancy across timesteps and blocks in diffusion policy, based on a key 
(1) feature similarities across timesteps vary non-uniformly, and (2) different blocks exhibit distinct temporal similarity patterns as shown in Fig.~\ref{fig: insight observation}.
Motivated by this observation, we propose \textbf{B}lock-wise \textbf{A}daptive \textbf{C}aching (\textbf{BAC}), a training-free method that accelerates transformer-based \DPk by adaptively updating and reusing cached action features at the block level. 
BAC integrates an Adaptive Caching Scheduler (ACS) to allocate block-specific caching schedules and the Bubbling Union Algorithm (BUA) to truncate inter-block error propagation.
% BAC reduces the aforementioned two-dimensional redundancy by introducing the Adaptive Caching Scheduler to allocate block-specific caching schedules and proposes the Bubbling Union Algorithm to truncate inter-block error propagation.

Specifically, the Adaptive Caching Scheduler aims to identify a set of cache update timesteps that maximize the global feature similarities between cached and skipped features. However, directly searching this set within an exponential search space is unacceptable. To address this challenge, we reformulate the problem as a dynamic programming optimization, where the global similarity serves as the objective and the block-specific similarity matrix defines the scores. Leveraging the high episode homogeneity within a single task, the scheduler computes once before inference, incurring virtually no additional cost.

While the Adaptive Caching Scheduler effectively determines update timesteps, extending the scheduler to the block level can trigger significant error surges, leading to performance collapse. 
We examine this problem theoretically and experimentally and attribute this failure to inter-block caching error propagation: FFN blocks introduce the caching errors from upstream blocks during their updates, due to the lack of intermediate normalization. To truncate the error propagation, we propose the Bubbling Union Algorithm, which first selects the upstream blocks with large caching errors and then enforces them to update their cache if downstream FFNs do.

% BAC reduces caching errors through the Adaptive Caching Scheduler and achieves fine‐grained, block‐level caching via the Bubbling Union Algorithm, resulting in up to $3\times$ acceleration across multiple robotic benchmarks. 
% Consequently, the action frequency increases from approximately 10\,Hz to 30–40\,Hz, meeting the 30–50\,Hz bound commonly required for real‐time and smooth control. 
Our main contributions are as follows:

\begin{itemize}
   \item [1.] We propose Block-wise Adaptive Caching, a training-free acceleration method for transformer-based \DP, which adaptively updates and reuses cached features at the block level.
    \item [2.] We develop the Adaptive Caching Scheduler that optimally determines cache update timesteps by maximizing the global feature similarity with a dynamic programming solver.
    % \item [3.] We theoretically and empirically analyze the error surge phenomenon in diffusion policy.
    % We design the Bubbling Union Algorithm to further extend the caching schedule to the block level by truncating inter-block caching error propagation.
    \item [3.] We design the Bubbling Union Algorithm to further extend the caching schedule to the block level by truncating inter-block caching error propagation, based on the theoretical and empirical analysis of the error surge phenomenon in \DP.
    \item [4.] We conduct extensive robotic experiments to evaluate our method. The results demonstrate that our method efficiently boosts \DPk by $3 \times$ for free.
\end{itemize}

\begin{figure}[H]
    \centering
    \includegraphics[width=\textwidth]{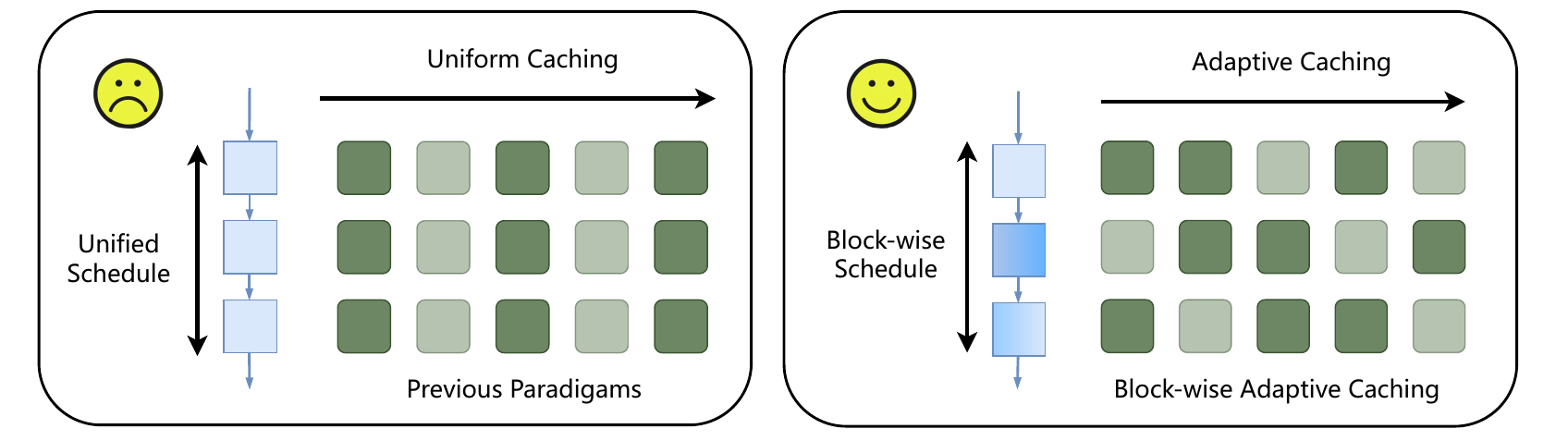}
    \caption{Comparison of Block-wise Adaptive Caching and previous caching paradigms.}
    \label{fig: comparsion}
\end{figure}

\section{Related Work}

\subsection{Diffusion Policy}
Diffusion models, initially developed for image generation~\citep{sohl2015deep,ho2020denoising,esser2024scaling,bar2024lumiere}, have been adapted for robot policy learning~\citep{martinez2007active}. Within the Diffusion Policy framework~\citep{chi2023diffusion}, both U-Net~\citep{ronneberger2015u} and Diffusion Transformer (DiT)~\citep{peebles2023scalable,yuan2024ditfastattn,zhao2024dynamic} denoisers are supported, enabling scalable backbone designs. Recent VLA methods~\citep{brohan2023rt,kim2024openvla,wen2024diffusion,wen2025dexvla} increasingly adopt Transformer-based denoisers for stronger expressivity, but the iterative denoising steps induce significant inference latency, motivating acceleration techniques~\citep{xu2025vla,song2025accelerating}. Detailed discussion is provided in Appendix~\ref{appendix: diffusion_background}.

\subsection{Diffusion Models Caching}
Despite the success of cache-based methods for diffusion models, their adaptation to Diffusion Policy remains underexplored. Existing caching methods primarily target U-Net-based diffusion models~\citep{ma2024deepcache,wimbauer2024cachemeifyoucan}. For example, DeepCache~\citep{ma2024deepcache} exploits the temporal redundancy inherent in U-Nets by caching high-level feature representations. Nevertheless, these methods cannot be generalized to transformer backbones. 
Recently, some methods~\citep{selvaraju2024fora,ma2024learning,chen2024delta,zou2025accelerating} explore the caching mechanism in transformer-based diffusion models. These methods typically operate at a coarse granularity, with all the blocks sharing a uniform caching schedule~\citep{selvaraju2024fora, ma2024deepcache}, i.e., updating the cache at uniform intervals. Despite some works extending this schedule in a finer architectural granularity, they either require extra training~\citep{ma2024learning} or are specifically designed for the patterns of the image generation process~\citep{chen2024delta}.

\section{Block-wise Adaptive Caching}
\label{sec: method}
As illustrated in Fig.~\ref{fig: Framework}, BAC achieves a finer-grained cache schedule by first applying the Adaptive Caching Scheduler to compute optimal update timesteps for each block and then employing the Bubbling Union Algorithm to truncate inter-block error propagation. In this section, we first present the preliminaries in Sec.~\ref{subsection: Preliminaries}. Next, we introduce the Adaptive Caching Scheduler in Sec.~\ref{subsection: Adaptive Caching Scheduler}. To extend the scheduler to the block level, we analyze the error surge phenomenon in Sec.~\ref{subsubsection: error surge} and describe the Bubbling Union Algorithm in Sec.~\ref{subsubsection: Bubbling Union Algorithm}.

 \subsection{Preliminaries}
\label{subsection: Preliminaries}

\paragraph{Diffusion Policy.}
\DPk treats robot visuomotor control as sampling from a conditional denoising diffusion model~\citep{chi2023diffusion}. At each time step \(t\), we first draw an initial noisy action \(\mathbf{a}_t^{(K)}\) from a standard Gaussian prior and then apply \(K\) learned reverse‐diffusion steps:
\begin{equation}
\mathbf{a}_t^{(k-1)}
= f_\theta\bigl(\mathbf{a}_t^{(k)},\,\mathbf{o}_{1:t},\,k\bigr),
\quad k = K,\,K-1,\,\dots,\,1,
\end{equation}
where \(f_\theta\) parameterizes the conditional reverse kernel (i.e.\ the denoiser) and the final sample \(\mathbf{a}_t^{(0)}\) is used as the control action.

\paragraph{Diffusion Transformer
 (DiT).}
 % DiT replaces the traditional U-Net backbone of diffusion models with a pure transformer architecture operating on latent-space patch sequences\yuan{??}~\citepppp{peebles2023scaxlable}. 
 % DiT replaces the convolutional U-Net backbone in latent diffusion models with a pure transformer that processes sequences of latent features~\citepppp{peebles2023scalable}. 
The DiT architecture in \DPk utilizes an MLP to encode observation embeddings, which are then passed into a transformer-based decoder. The decoder consists of $L$ layers, where each layer $l$ contains a cross-attention (CA) block that conditions on timesteps and observations, a self-attention (SA) block, and a feed-forward network (FFN) block. For a given input $\mathbf{h}_{k}^{(l-1)}$ at denoising step $k$, the output of layer $l$ is computed by summing the residual outputs of these blocks:

\begin{equation}
\mathbf{h}_k^{(l)} = \mathbf{h}_{k}^{(l-1)} 
+ \text{SA}_{k}^{(l)}+ \text{CA}^{(l)}_{k} + \text{FFN}_{k}^{(l)}. 
\end{equation}

\paragraph{Problem Formulation.}

To reduce redundant computations across timesteps in the denoising process of diffusion models, cache-based methods reuse intermediate features to skip repeated computations partially. Following existing caching methods~\citep{ma2024deepcache, selvaraju2024fora}, we adopt an update-then-reuse paradigm.

Let $\mathbf{b}_k$ denote the output of a target block at step $k $. A caching mechanism defines a set of update steps $\mathcal{C} \subseteq \{1, \dots, K\}$, where:
% \yuan{layer?}
\begin{itemize}
    \item The update step: If $k \in \mathcal{C}$, computes $\mathbf{b}_k$ and updates its cached features.
    \item The reuse step: The block reuses the cached feature $\mathbf{b}_{k'}$, which is computed in the most recent update step $k' = \min \{i \in \mathcal{C} \mid i > k\}$. 
\end{itemize}

Following prior work ~\citep{zou2025accelerating, selvaraju2024fora, ma2024deepcache}, we construct a baseline in which all blocks share a unified caching schedule, with a fixed update interval $\mathcal{C}$ (e.g., updating the cache every three timesteps). BAC aims to improve upon this baseline by allocating an optimal $\mathcal{C^*}$ for each block.

\begin{figure}[t]
    \centering
    \includegraphics[width=\textwidth]{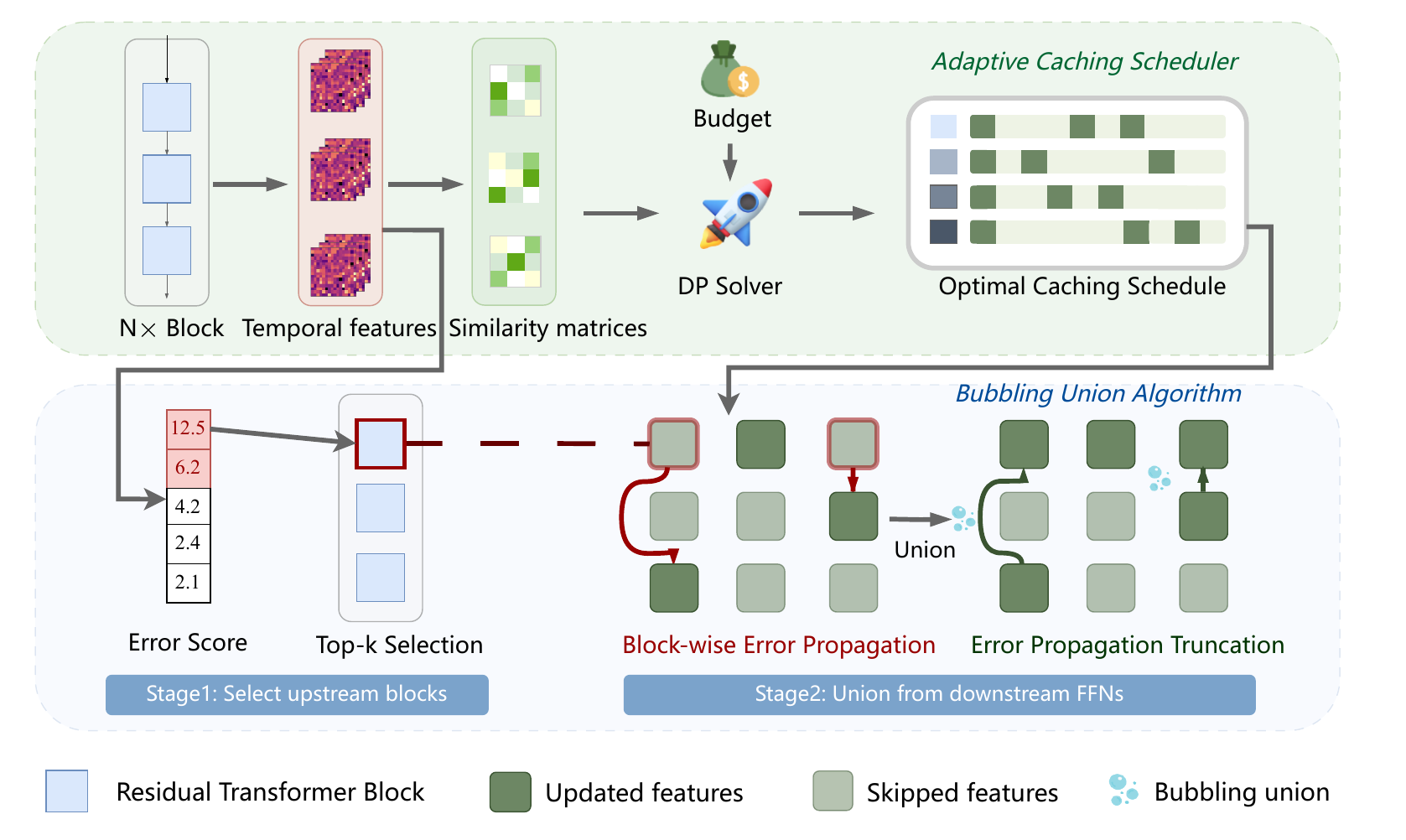}
    \caption{Framework of Block-wise Adaptive Caching (BAC). BAC enables adaptive feature caching by introducing the Adaptive Caching Scheduler, and further supports block-wise scheduling through the Bubbling Union Algorithm.}
    \label{fig: Framework}
\end{figure}

%\subsection{Adaptive Block-wise Caching strategy for Diffusion Policy}
\subsection{Adaptive Caching Scheduler}

\label{subsection: Adaptive Caching Scheduler}

\paragraph{Optimization Objective.}
In this work, we use cosine similarity to measure the similarity between features 
due to its superior performance in measuring directional consistency between high-dimensional feature vectors.
The consecutive similarity is calculated as:

\begin{equation}
s_k = \cos(\mathbf{b}_{k}, \mathbf{b}_{k-1}) = \frac{\mathbf{b}_{k-1}^\top \mathbf{b}_k}{\|\mathbf{b}_{k-1}\|_2 \|\mathbf{b}_k\|_2}, \quad k = 1, \dots, K,
\end{equation}

We define the interval similarity between timesteps \(i\) and \(j\) as \(\phi(i,j)=\sum_{k=i+1}^{j}s_k\). A larger $\phi(i,j)$ indicates lower caching errors incurred by reusing the cached feature $b_j$ over the interval \([i,j]\).
The value function is then:
\begin{equation}
\max_{\substack{\mathcal{C} \subseteq \{1, \dots, K\} \\ |\mathcal{C}| = M}} \sum_{m=0}^{M} \phi(c_m, c_{m+1}-1), \quad \text{with} \quad c_0 = 0, \quad c_{M+1} = K,
\label{eq:acs-obj}
\end{equation}
where $c_0$ and $c_{M+1}$ are boundary conditions representing the start step and the end step of the path.

\paragraph{Optimal Schedule Solver.}
% We now show how the combinatorial problem in Eq.~\eqref{eq:acs-obj} can be solved exactly in $O(MK)$ time via dynamic programming.
The combinatorial nature of selecting $M$ update steps
from $K$ timesteps renders exhaustive search computationally infeasible for large $K$. To address this, we design a dynamic programming (DP) solver that efficiently computes the optimal cache schedule. 

Define the DP state $\text{DP}[m][j]$ as the maximum cumulative similarity achievable when the $m$-th cache update occurs at timestep $j$:

\begin{equation}
\mathrm{DP}[m][j]
= \max \sum_{i=0}^{m} \phi\bigl(c_i,\;c_{i+1}-1\bigr).
\end{equation}

The corresponding state transition equation is given by:
\begin{equation}
\text{DP}[m][j] = \max_{0 \leq i < j} \left\{ \text{DP}[m-1][i] + \phi(i, j) \right\},
\end{equation}

% The algorithm initializes $\text{DP}[0][j]$, computes $\text{DP}[m][j]$ iteratively, and finds the optimal similarity as $\max_{j} \text{DP}[M][j]$. 

To recover the optimal update schedule $\mathcal C^*$ from the DP table, we introduce a pointer matrix:
\begin{equation}
\mathrm{PTR}[m][j]
=\arg\max_{0\le i<j}\bigl\{\mathrm{DP}[m-1][i]+\phi(i,j)\bigr\},
\quad m=1,\dots,M,\;j=1,\dots,K.
\end{equation}
Once both $\mathrm{DP}$ table and $\mathrm{PTR}$ table are filled, we dertermine the final endpoint as:
\begin{equation}
j^*=\arg\max_{1\le j\le K}\mathrm{DP}[M][j],
\end{equation}
and backtrack from $j^*$ to reconstruct the full update schedule:
\begin{equation}
c_M^* = j^*, 
\quad
c_{m-1}^* = \mathrm{PTR}[m][\,c_m^*\,],
\quad m=M,\dots,1.
\end{equation}
The solved optimal update step set is given by \(\mathcal{C}^* = \{c_1^* < \dots < c_M^*\}\). Adaptive Caching Scheduler maximizes the performance efficiency trade-off by computing $\mathcal{C^*}$ for each block under the given computation budget.

\subsection{The Error Surge Phenomenon and Analysis.}
\label{subsubsection: error surge}
However, purely extending the Adaptive Caching Scheduler to the block level can trigger significant error surges, particularly within FFN blocks. We provide a detailed analysis of this failure mode in the following section and introduce our remedy Bubbling Union Algorithm in Sec.~\ref{subsubsection: Bubbling Union Algorithm}. 

% \begin{proposition}[Block‐wise Error Surge]
% If an FFN block is updated before its upstream block where the upstream block has already accumulated substantial caching error, the FFN block’s update will amplify that error, causing a sudden surge in its caching error and risking collapse of overall performance.
% \end{proposition}

\begin{figure}[htbp]
  \centering
  % —— 第一列（两个子图）—— 顶端对齐
  \begin{minipage}[t]{150pt}\vspace{0pt}
    \centering
    \includegraphics[width=150pt]{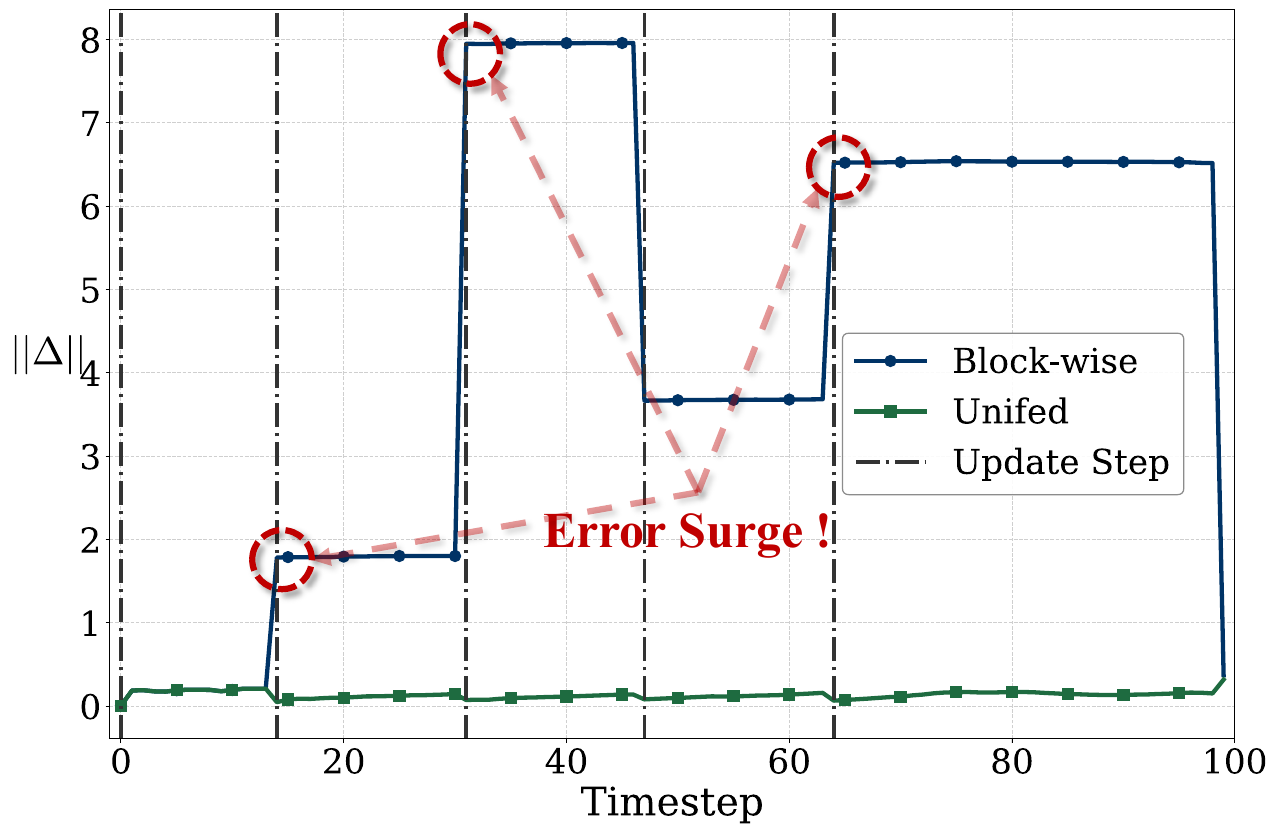} % (a)
    \subcaption{}\label{fig:error-surge}
    
    \includegraphics[width=150pt]{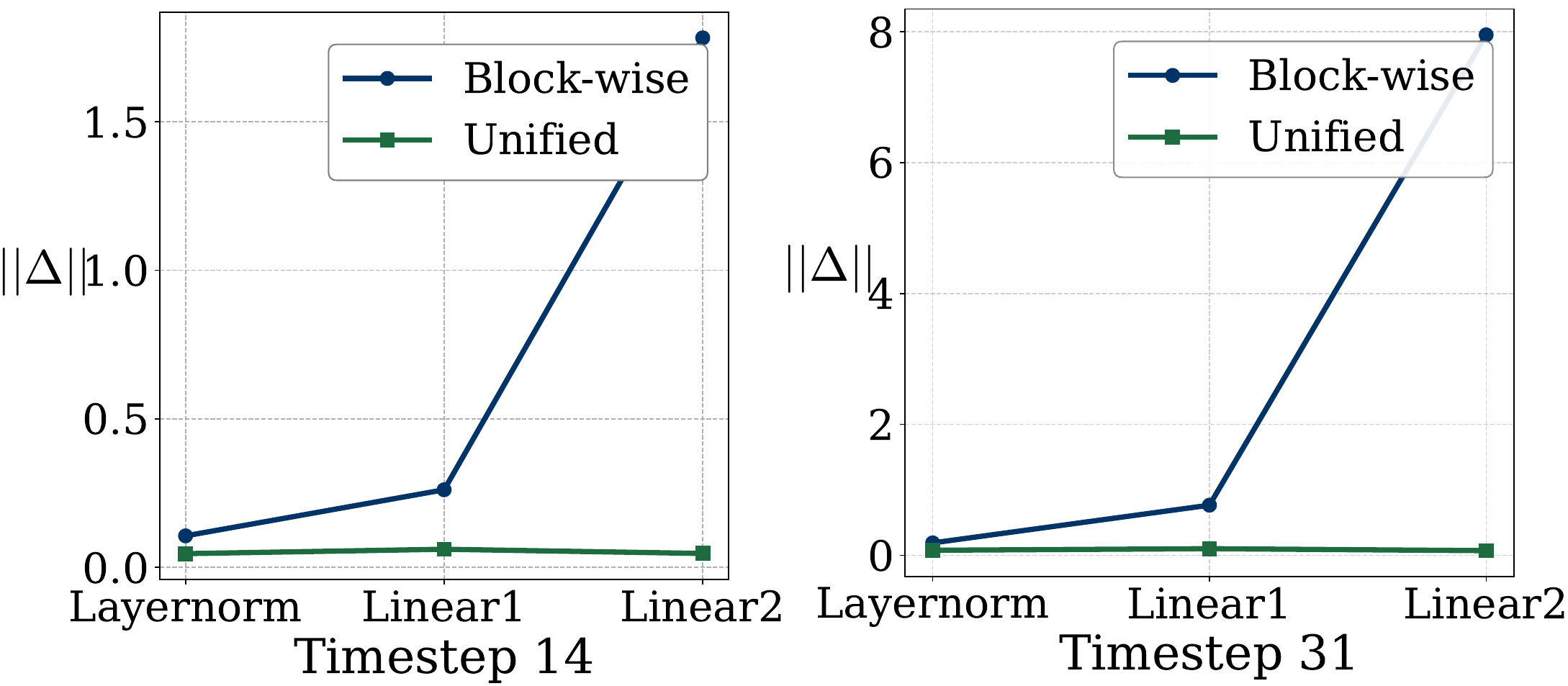}                              % (b)
    \subcaption{}\label{fig:linear1-linear2}
  \end{minipage}\hspace{-3pt}
  % —— 第二列（两个子图）—— 同理顶端对齐
    \begin{minipage}[t]{120pt}\vspace{0pt}
        \raggedright
        \includegraphics[width=120pt]{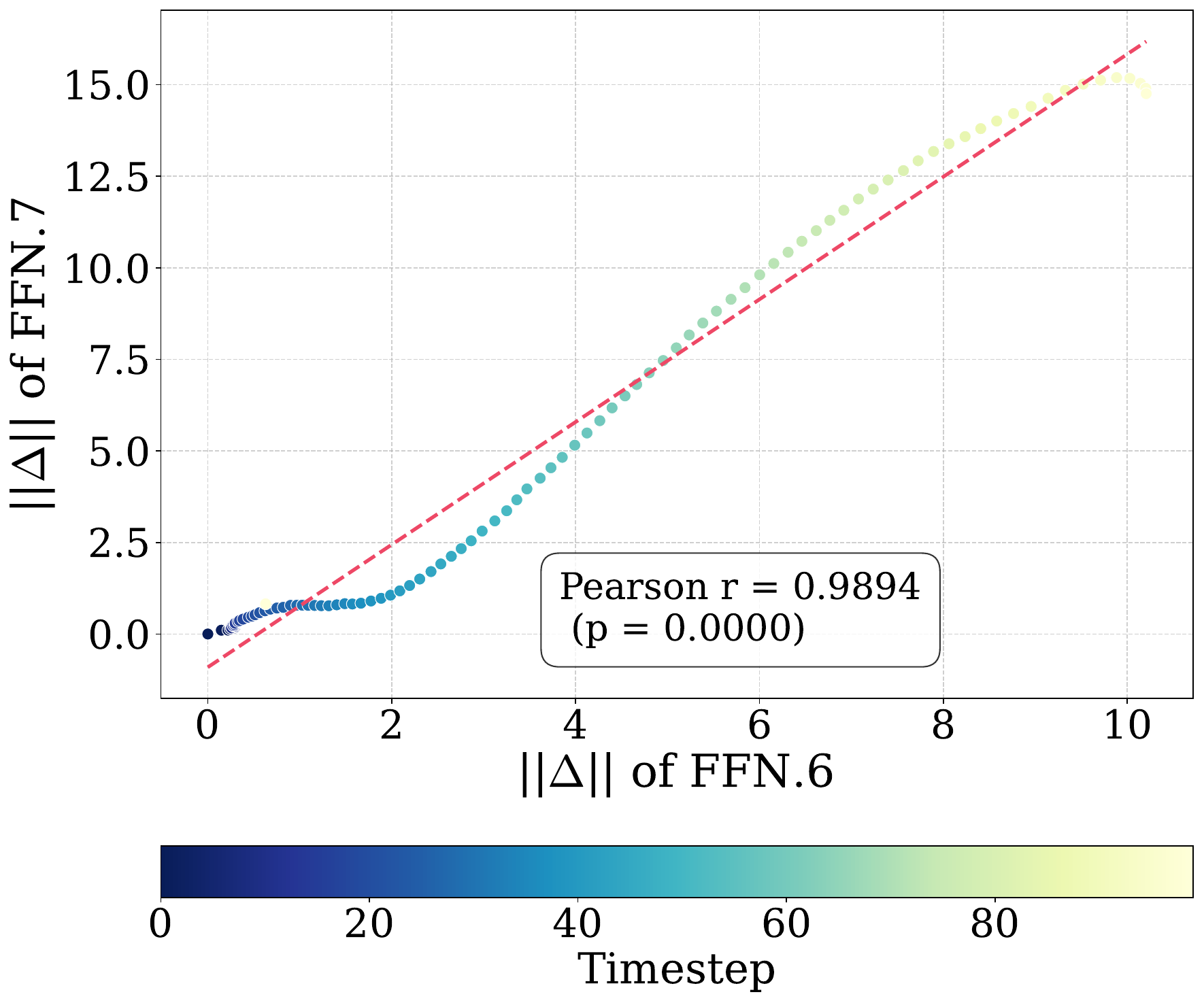}
        \subcaption{}\label{fig:ffn6-7}
        \raggedright
        \includegraphics[width=118pt]{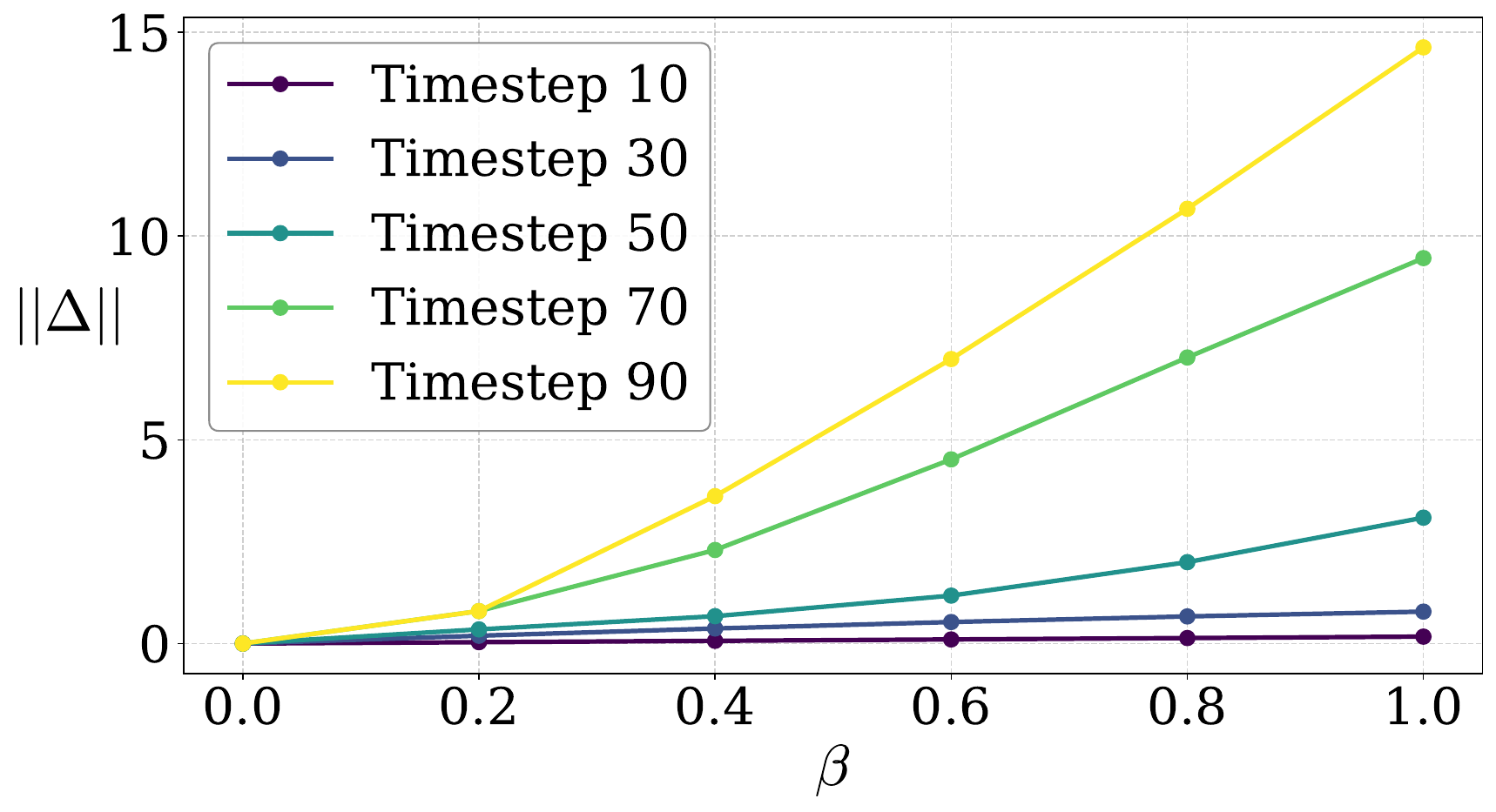}
        \subcaption{}\label{fig:beta}
    \end{minipage}\hspace{0pt}
    % —— 第三列（一个子图）—— 顶端对齐
  \begin{minipage}[t]{120pt}\vspace{0pt}
    \centering
    \includegraphics[width=120pt]{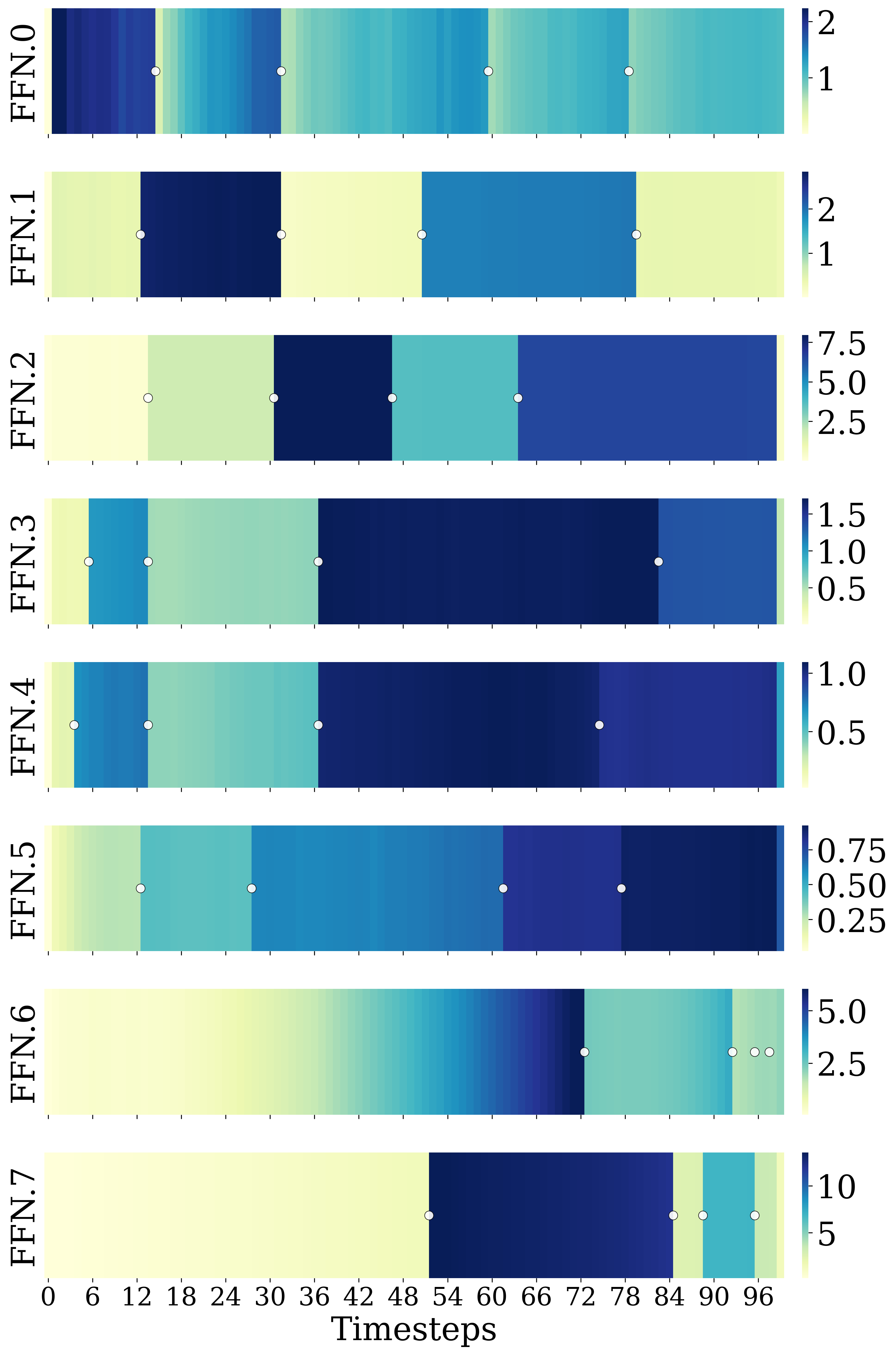}            % (e)
    \subcaption{}\label{fig: block-wise propagation}
  \end{minipage}

  \caption{(a) Caching error of the third FFN block when updated using the block-wise versus a unified schedule.
(b) Caching error of sub-layers within the third FFN block at update timesteps 14 and 31.
(c) Correlation between the caching errors of the seventh and eighth FFN blocks.
(d) Update-induced error under varying input error magnitudes, controlled by the scaling factor $\beta$.
(e) Caching error across all blocks throughout the diffusion process, with white dots indicating update steps. The experiments are conducted on the Square task.}
  \label{fig:main}
\end{figure}

%\yuan{unclear update step line}

\paragraph{Identifying Error Surge.}

Extending Adaptive Caching Scheduler to the block level leads to unexpected performance collapse. Our observation uncovers a surprising phenomenon: instead of reducing errors, block-wise updates amplify them, resulting in sudden error surges in the FFN blocks, as illustrated in Fig.~\ref{fig:error-surge}.

Generally, caching errors arise from either feature reuse or feature update. In the reuse case, the error comes from a mismatch between cached features and the shifted ground-truth distribution. In the update case, the error results from inaccurate inputs caused by errors from upstream blocks. We observe that error surges often occur during update steps of FFN blocks, where update-induced errors exceed reuse-induced errors, indicating a failure in the update process. 
%Our study reveals that the observed surges stem from the propagation of upstream errors. 
We first elucidate how FFN blocks incorporate these upstream errors during updates, then delineate the complete inter-block error propagation process.

\paragraph{Error Propagation in FFN Blocks.}
% To better characterize this phenomenon, the caching errors can be divided into two types based on the schedule timestep:
% \begin{itemize}
% \item The Updating Error: This error is introduced during the update step due to the propagation of the caching error of upstream blocks.

% \item The Reusing Error: This error arises from the discrepancy between cached features and the corresponding ground-truth features. It typically evolves gradually, driven by slow shifts in the underlying feature distribution.
% \end{itemize}

To understand how FFN blocks incorporate the upstream errors, we begin by formalizing the error propagation process.
Let
% \[
% \mathrm{FFN}(X)=W_{\mathrm{out}}\;\phi(U)+b_2,\qquad
% U=W_{\mathrm{in}}\,\mathrm{LN}(X)+b_1,
% \]

\begin{equation}
\mathrm{FFN}(X)=W_{\mathrm{out}}\phi(W_{\mathrm{in}}\mathrm{LN}(X)+b_1)+b_2,
\label{equation: FFN_equation}
\end{equation}

\begin{proposition}
\label{prop:ffn-error}
Given an upstream error \(\delta\), we have
\begin{equation}
\Delta = W_{\mathrm{out}}\,\mathrm{diag}\bigl(\phi'(U)\bigr)\,W_{\mathrm{in}}\,(A - B)\,\delta
+ O(\|\delta\|^{2}),
\end{equation}
where
\begin{equation}
A = \frac{\operatorname{diag}(\gamma)\,\bigl(I - \tfrac1d \mathbf{1}\mathbf{1}^\top\bigr)}{\sigma(X)}, 
\quad
B = \frac{\operatorname{diag}(\gamma)\,(X - \mu(X)\mathbf{1})(X - \mu(X)\mathbf{1})^\top}{d\,\sigma(X)^3}.
\end{equation}

\end{proposition}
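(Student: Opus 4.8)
The plan is to read $\mathrm{FFN}$ as a composition of three smooth maps and linearize it: the affine readout $v\mapsto W_{\mathrm{out}}v+b_2$, the coordinatewise activation $u\mapsto\phi(u)$, and the inner map $X\mapsto W_{\mathrm{in}}\mathrm{LN}(X)+b_1$. Setting $\Delta:=\mathrm{FFN}(X+\delta)-\mathrm{FFN}(X)$ and applying Taylor's theorem factor by factor, the two affine maps contribute their constant Jacobians $W_{\mathrm{out}}$ and $W_{\mathrm{in}}$, the activation contributes $\mathrm{diag}(\phi'(U))$ with $U=W_{\mathrm{in}}\mathrm{LN}(X)+b_1$ the pre-activation at the unperturbed input, and all quadratic and higher-order contributions collect into $O(\|\delta\|^{2})$ because $\phi$, the affine maps, and $\mathrm{LN}$ are $C^{2}$ on a neighborhood of $X$ (any such neighborhood on which $\sigma>0$). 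After the chain rule, the whole statement reduces to computing the Jacobian $J_{\mathrm{LN}}(X):=\partial\,\mathrm{LN}(X)/\partial X$ and verifying $J_{\mathrm{LN}}(X)=A-B$, since then
\[
\Delta = W_{\mathrm{out}}\,\mathrm{diag}(\phi'(U))\,W_{\mathrm{in}}\,J_{\mathrm{LN}}(X)\,\delta + O(\|\delta\|^{2}).
\]

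\textbf{Computing the LayerNorm Jacobian.} Write $\mathrm{LN}(X)=\mathrm{diag}(\gamma)\,\widehat X+\beta$ with $\widehat X=(X-\mu(X)\mathbf{1})/\sigma(X)$, $\mu(X)=\tfrac1d\mathbf{1}^\top X$, and $\sigma(X)^{2}=\tfrac1d\|X-\mu(X)\mathbf{1}\|_2^{2}$. Differentiating the centered numerator gives the projection $I-\tfrac1d\mathbf{1}\mathbf{1}^\top$; differentiating the scalar $\sigma$ through $\sigma^{2}$ and using $\mathbf{1}^\top(X-\mu(X)\mathbf{1})=0$ to cancel a cross term gives $\partial\sigma/\partial X=(X-\mu(X)\mathbf{1})^\top/(d\,\sigma(X))$. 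The quotient rule then yields
\[
\frac{\partial\widehat X}{\partial X}=\frac{I-\tfrac1d\mathbf{1}\mathbf{1}^\top}{\sigma(X)}-\frac{(X-\mu(X)\mathbf{1})(X-\mu(X)\mathbf{1})^\top}{d\,\sigma(X)^{3}},
\]
and premultiplying by $\mathrm{diag}(\gamma)$ produces exactly $A-B$. Substituting back into the display above gives the claimed first-order expansion; the shapes are consistent, with $A,B\in\mathbb{R}^{d\times d}$ and $\mathrm{diag}(\phi'(U))\in\mathbb{R}^{d_{\mathrm{ff}}\times d_{\mathrm{ff}}}$, so that $W_{\mathrm{out}}\,\mathrm{diag}(\phi'(U))\,W_{\mathrm{in}}\,(A-B)$ is a $d\times d$ matrix acting on $\delta$.

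\textbf{Main obstacle and caveats.} The step I expect to require the most care is the $\sigma$-derivative that produces the rank-one term $B$: it is the only nonlinear piece of $\mathrm{LN}$, it is easy to drop, and collapsing it into the stated form depends on the identity $\mathbf{1}^\top(X-\mu(X)\mathbf{1})=0$. I would also make explicit two conventions the statement leaves implicit: (i) the numerical $\epsilon$ usually placed inside the square root of $\sigma$ is omitted (or absorbed into $\sigma(X)$), which only perturbs the constants and is harmless to first order provided $\sigma(X)$ is bounded away from $0$; and (ii) $U$ denotes $W_{\mathrm{in}}\mathrm{LN}(X)+b_1$ evaluated at the clean input $X$, so $\mathrm{diag}(\phi'(U))$ is a constant matrix in the linearization. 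With these fixed, the remainder is a routine first-order perturbation computation, and the error term is $O(\|\delta\|^{2})$ uniformly on compact neighborhoods of $X$ avoiding $\sigma(X)=0$.
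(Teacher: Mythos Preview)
Your proposal is correct and follows essentially the same approach as the paper: a first-order Taylor/chain-rule expansion of the composition, with the LayerNorm Jacobian identified as $A-B$. The paper's proof is organized slightly differently—it computes the perturbations $\Delta\mu$ and $\Delta\sigma$ explicitly and then assembles the component-wise expansion of $\widetilde X$—whereas you compute $J_{\mathrm{LN}}(X)$ directly via the quotient rule, but the content and level of difficulty are the same.
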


To further analyze the correlation between inter-block errors, we design a toy experiment where the upstream block (FFN.6) uses only cached activations, while the downstream block (layer.7.FFN) performs full computation. The relationship between the update-induced error and its corresponding upstream error is depicted in Fig.~\ref{fig:ffn6-7}, with a Pearson correlation coefficient of $r=0.9894$, indicating a strong correlation.
To isolate the influence of the timestep, we fix it and use a factor $\beta$ to control the magnitude of the upstream error. The result in Fig.~\ref{fig:beta}~also shows a strong positive correlation between upstream and downstream errors, further confirming the effect of inter-block error propagation.

\paragraph{Inter-block Error Propagation.} 

% \begin{proposition}[Error Propagation by Premature Update]\label{prop:ffn-amplify}
% Under the Proposition~\ref{prop:ffn-error}, if an FFN block is updated before its upstream block and that upstream block has already accumulated substantial caching error, the FFN block’s update will amplify that error, causing a sudden surge in its own caching error and risking a collapse of overall performance.
% \end{proposition}

A complete propagation chain is shown in Fig.~\ref{fig: block-wise propagation}. 
When a block updates at a timestep when its upstream block does not update and has a larger caching error, an error surge occurs, visually manifested by a sudden deepening of block colors without any gradual transition. Although the upstream block updates later, the surging error in the downstream block still persists, indicating the failure of this update. 
%Based on the above analysis, we attribute the failure of the block-wise mechanism to an inappropriate update order of blocks.

\subsection{Bubbling Union Algorithm}
\label{subsubsection: Bubbling Union Algorithm}

% \begin{quotation}
%     \itshape
%     A Stitch in Time Saves Nine.

%     \vspace{0.5em}

%     \raggedleft 
%     --- English Proverb
% \end{quotation}

To truncate inter-block error propagation, we propose a simple yet effective algorithm to revise the original scheduler. The core insight of our algorithm is that if an FFN block updates its cache, its upstream blocks with large errors should also update. Therefore, the updated error $\Delta$ can be mitigated due to the suppressed propagated upstream error $\delta$.

Our algorithm consists of two stages:

% \noindent \textbf{Stage 1: Selecting upstream blocks with large caching errors.} Consider a denoising path comprising $K$ timesteps, for the \(i\)th FFN block, let \(U_i\) be its upstream blocks.  For each \(j\in U_i\), build the \(K\times K\) L1‐norm similarity matrix between its activations at all timesteps:
% \begin{equation}
% (S_j)_{t,u} \;=\;\bigl\lVert X_j^{(t)} - X_j^{(u)}\bigr\rVert_1,
% \quad t,u=1,\dots,K.
% \end{equation}
% Compute the average similarity
% \begin{equation}
% s_j \;=\;\frac{1}{K^2}\sum_{t=1}^{K}\sum_{u=1}^{K}(S_j)_{t,u}.
% \end{equation}

%we first compute the average $\ell_1$-norm error $\ell_j$ across all pairs of denoising steps:
\noindent\textbf{Stage 1: Selecting Upstream Blocks with Large Caching Errors.} 
To estimate the caching error magnitude for each block $j$, we compute the average of $\ell_1$ norm across features over all pairs of denoising timesteps:
\begin{equation}
\ell_j \;=\; \frac{1}{K^2}
\sum_{t=1}^{K}\sum_{u=1}^{K}
\bigl\lVert X_j^{(t)} - X_j^{(u)}\bigr\rVert_{1}\,.
\end{equation}

A larger \(\ell_j\) indicates that the block has larger reuse-induced errors.  We then select the top \(n\) blocks with the largest \(\ell_j\) and denote this block set by \(U\).

\begin{remark}
 As discussed in Sec.~\ref{subsubsection: error surge}, caching error consists of both reuse-induced errors and update-induced errors. We choose not to account for the update-induced error of upstream blocks because it occurs less frequently and is difficult to approximate reliably. Moreover, incorporating it would require treating all FFN blocks as upstream blocks, which would compromise the overall trade-off between efficiency and precision.
\end{remark}
\noindent \textbf{Stage 2: Unioning Update Timesteps of FFNs from Downstream to Upstream.} 
% Our algorithm truncates error propagation by enforcing for \(i = L, L-1,\,L-2,\,\dots,1\), upstream blocks in set \(E_i\) update their cache before the \(i\)th FFN block \(F_i\).
Our algorithm truncates error propagation by enforcing that each upstream block in \(U\) updates its cache before its downstream FFN blocks. Concretely, let \(C(u)\) denote cache update timesteps set of block \(u\).
Let \(\mathrm{D}(u)\) be the set of all FFN blocks downstream of block \(u\). Then for each \(u \in U\), we update $C(u)$ as:
\begin{equation}
%C(u) = C(u) \;\cup\; C\bigl(\mathrm{D}(u)\bigr).
C(u) = C(u) \cup \bigcup_{v \in \mathrm{D}(u)} C(v)
\end{equation}

%This approach effectively mitigates the error surge phenomenon described in Sec.~\ref{subsubsection: error surge}.

% Bubbling Union Algorithm hierarchically merges cache schedules to limit inter-block error propagation with bounded overhead, restoring precision while maintaining acceleration.

\section{Experiments}
We first outline the experimental setup, covering benchmarks, baselines, implementation details, and metrics in Sec.~\ref{sec: exp setup}. Following that, we evaluate BAC on the real-world and simulation benchmarks in Sec.~\ref{sec: realworld benchmark} and Sec.~\ref{sec: simulation benchmark}, respectively. Finally, we present an ablation study of BAC in Sec. \ref{sec: ablation study}. 
\subsection{Experimental Setup}
\label{sec: exp setup}

\paragraph{Benchmarks.} We comprehensively evaluate BAC across standard real-world and simulation benchmarks. For real-world evaluation, we deploy the Diffusion Policy on a Franka Research 3 arm equipped with a UGREEN CM717 RGB camera. The task requires the robot to grasp, pick, and release a soft, deformable bag whose diameter is approximately 80\% of the gripper’s maximum jaw opening. This setup requires precise temporal coordination to prevent the object from being toppled. For simulation, we utilize DP-T on four robotic manipulation benchmarks: Robomimic in PH/MH data, Push-T, Multimodal Block Pushing, and Kitchen. To test generalization on VLA models, we employ RDT-1B on the ManiSkill benchmark, focusing on four representative tasks, e.g., PegInsertionSide, PickCube, and StackCube. 

\paragraph{Baselines.} We report the result of DP-T as Full Precision and utilize the baseline constructed in Sec.~\ref {subsection: Preliminaries}, in which all blocks update and reuse their cache at uniform intervals simultaneously. We refer to this baseline as \emph{Uniform}. We migrate the cache update strategy from TeaCache~\citep{liu2025timestepembeddingtellsits} and adapt it to our setting, which serves as another baseline, denoted as TeaCache. We also include DDIM as the representative sampling-based method.

\paragraph{Implementation and Metrics.} BAC is implemented as a training-free plugin. We set the hyperparameter $n=5$ for simulation tasks and $n=3$ for real-world experiments. All experiments are conducted on a single NVIDIA GeForce RTX 4090D GPU. We report Success Rate (or target coverage for Push-T) as the primary precision metric, alongside FLOPs and Speedup for efficiency. In real-world settings, we additionally measure Inference Frequency (Hz) and End-to-End Latency to quantify practical speedup.

\subsection{Realworld Benchmark}
\label{sec: realworld benchmark}
%We conduct both quantitative and qualitative analyses on the Pick-and-Release task. 

\paragraph{Quantitative Results.} As shown in Fig.~\ref{fig:realworld-quan}, BAC significantly outperforms all baselines in both success rate and inference frequency. With $\mathcal{S}=7$ and $n=3$, BAC achieves a 71\% success rate at 39.2 Hz. By adopting more aggressive caching ($\mathcal{S}=5$), BAC reaches a peak inference frequency of 45.1 Hz while retaining a competitive 63\% success rate. In contrast, the standard DDPM with $K=100$ steps yields only a 3\% success rate at an impractical 7.8 Hz. Even with acceleration techniques such as DDIM with $K=50$ or \emph{Uniform} ($\mathcal{S}=20$), success rates remain limited to 52\% and 40\%, respectively, due to degraded generation quality.
\paragraph{Qualitative Analysis.}  Fig.~\ref{fig:realworld-qual} illustrates clear behavioral differences across acceleration methods. BAC ($\mathcal{S}=5$) achieves both high generation fidelity and low latency, enabling smooth and successful task execution. In comparison, baselines exhibit severe failure modes: DDPM with $K=30$ triggers premature picking and fails to release, while \emph{Uniform} $\mathcal{S}=20$ stays still due to generated joint poses lying outside the robot’s kinematic reachability and subsequently releases too early. Moreover, both baselines suffer from high end-to-end latency of up to 54 seconds, primarily caused by severe observation–action desynchronization. For DDPM methods, inference alone can exceed 900 ms per action chunk, rendering predictions stale by the time they are executed. This lag induces wandering motions and chunk rollbacks, as the controller attempts to reach states that no longer match the current state and observations.

\begin{figure}[t]
    \centering
    % --- 第一个子图 (占 0.3 宽度) ---
    \begin{subfigure}{0.3\textwidth}
        \centering
        % \linewidth 指的是当前 subfigure 的宽度 (即 0.3\textwidth)
        \includegraphics[width=\linewidth]{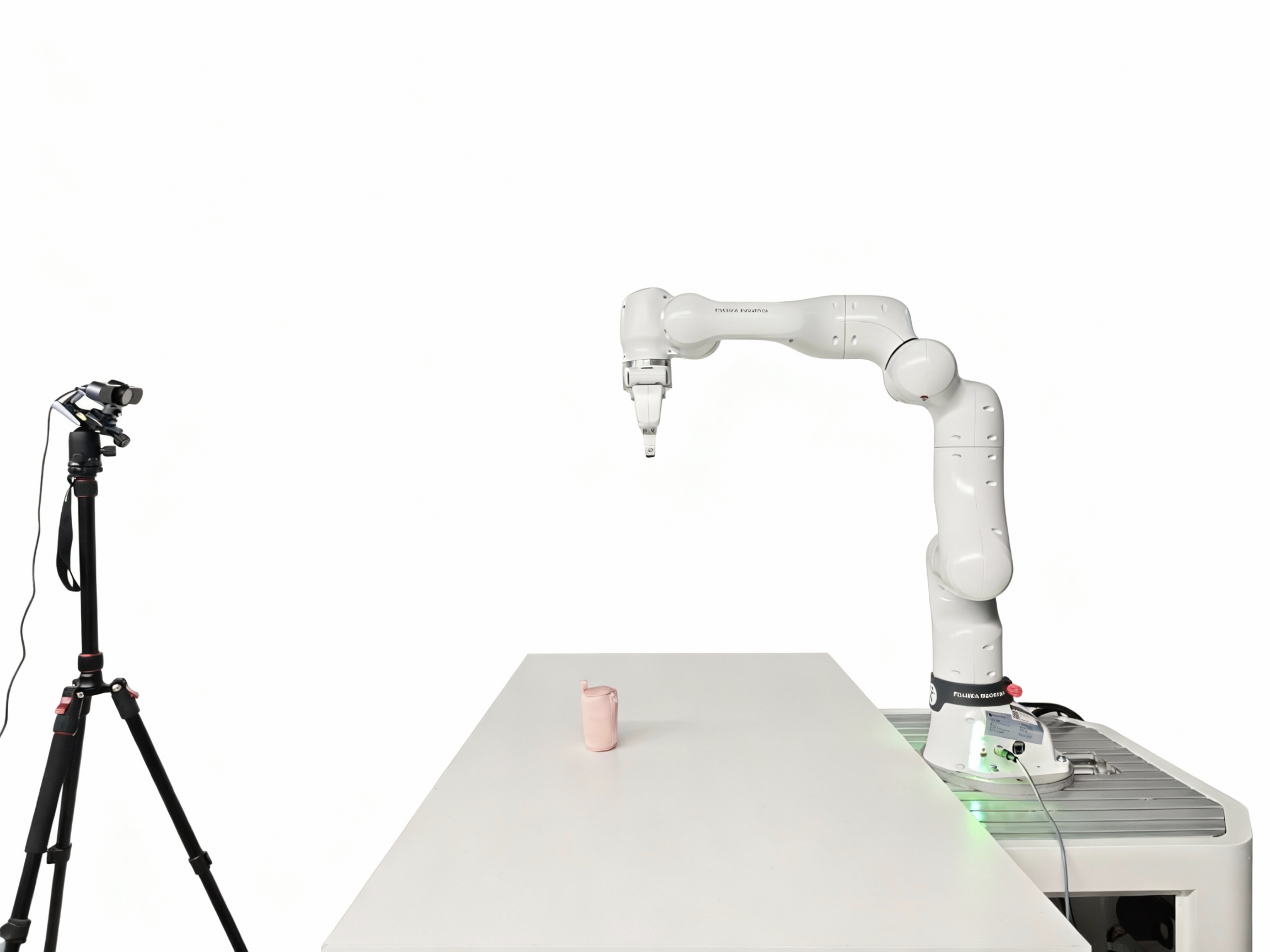} 
    \end{subfigure}
    \hfill % \hfill 用于自动填充两个图之间的空白，确保它们排在一行
    % --- 第二个子图 (占 0.6 宽度) ---
    \begin{subfigure}{0.65\textwidth}
        \centering
        \includegraphics[width=\linewidth]{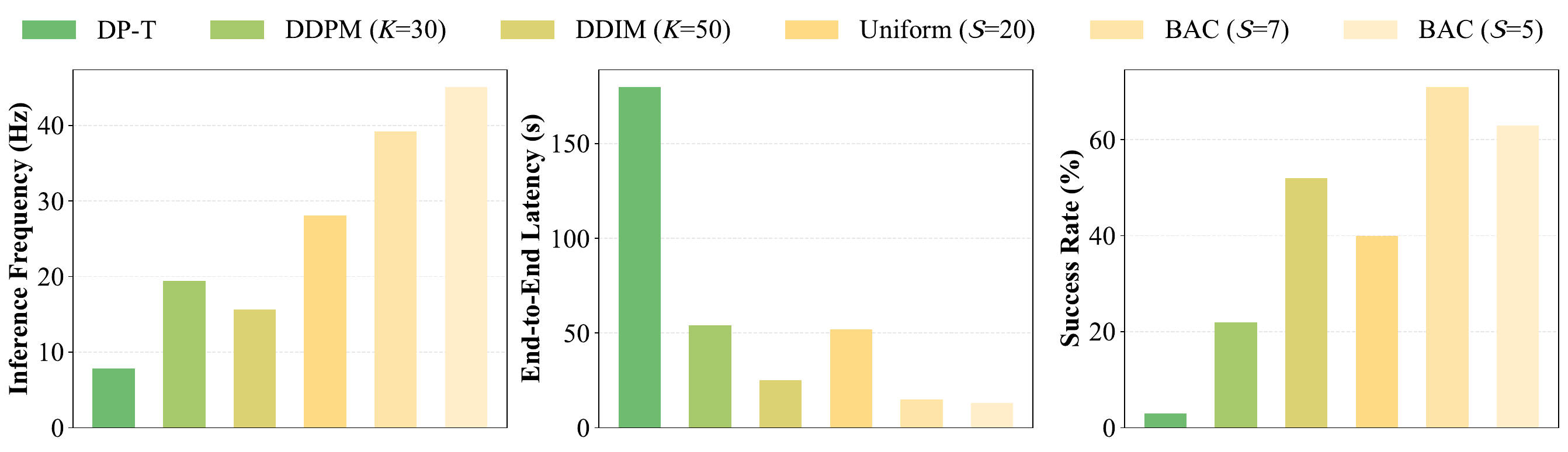}
    \end{subfigure}
    
    \caption{Left: Hardware setup for real-world evaluation. Right: Performance comparisons on inference frequency, end-to-end latency, and success rate.}

    \label{fig:realworld-quan}
\end{figure}

\begin{figure}[H]
    \centering
    \includegraphics[width=\textwidth]{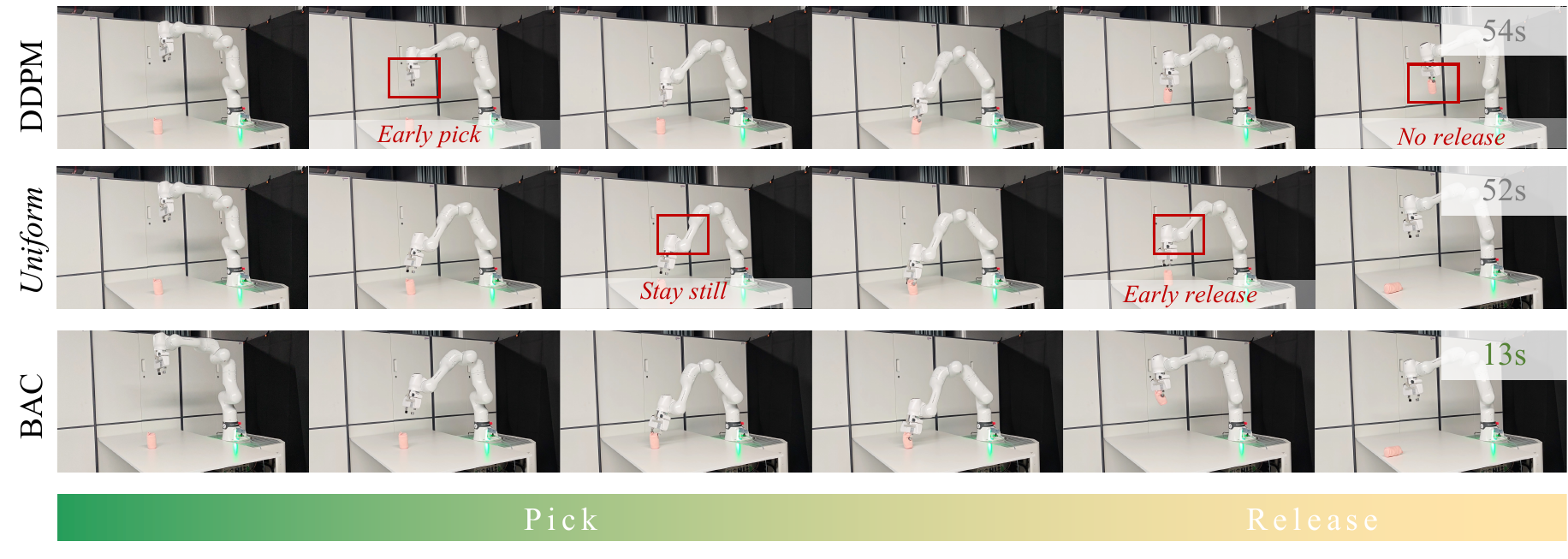}
    \caption{Qualitative results of real-world evaluations from different methods.}
    \label{fig:realworld-qual}
\end{figure}

\begin{table}[H]
  %\centering
  \small
  \setlength{\tabcolsep}{2pt} 
\caption{Results on different benchmarks. We present success rates of different checkpoints in the format of (max performance) / (average of last 10 checkpoints), with each averaged across 3 training seeds. The overall average success rate is denoted as AVG, with average flops and speedup reported as FLOPs and Speed$\times$, respectively.}

  \label{tab: main result}
  
  % Table 1: PH Demonstration Data
  \caption*{\small Benchmark on Proficient Human (PH) demonstration data.}
  \captionsetup{skip=2pt}
  \setlength{\tabcolsep}{2pt}
  \renewcommand{\arraystretch}{1.15}
  \begin{tabularx}{\linewidth}{l *{6}{>{\centering\arraybackslash}X} c cc}
    \toprule
    \multirow{2}{*}{\textbf{Method}} &
    \multicolumn{6}{c}{\textbf{Success Rate}\,$\uparrow$} &
    \multirow{2}{*}{\textbf{AVG}} &
    \multirow{2}{*}{\textbf{FLOPs}} &
    \multirow{2}{*}{\textbf{Speed}\,$\times$} \\
    \cmidrule(lr){2-7}
    & Lift & Can & Square & Transport & Tool & \mbox{Push--T} & & & \\
    \midrule
    Full Precision & 1.00/1.00 & 0.95/0.97 & 0.82/0.88 & 0.78/0.81 & 0.43/0.53 & 0.59/0.64 & 0.76 & 15.77G & -- \\
    \midrule
    \emph{Uniform}(fast) & 0.99/1.00 & 0.93/0.96 & \textbf{0.86}/0.88 & 0.78/0.77 & 0.39/0.50 & 0.58/0.64 & 0.79 & 3.15G & 2.69\\
    \emph{Uniform}(fastest) & 0.99/1.00 & 0.79/0.95 & 0.73/0.83 & 0.73/0.78 & 0.23/\textbf{0.64} & 0.57/\textbf{0.65} & 0.76 & 2.72G & 3.20\\
    \midrule
    Teacache(fast) & 0.99/1.00 & \textbf{0.97}/0.95 & 0.57/0.82 & \textbf{0.79}/0.56 & 0.34/0.23 & \textbf{0.65}/0.65 & 0.71 & 3.40G & 2.23\\
    Teacache(fastest) & 1.00/1.00 & 0.96/0.96 & 0.67/0.82 & 0.77/0.52 & 0.44/0.38 & 0.63/0.52 & 0.72 & 2.78G & 3.14\\
    \midrule
    \rowcolor{yellow!10}BAC($\mathcal S = 7$) &  1.00/1.00 & 0.90/0.95 & 0.78/0.87 & 0.75/0.81 & 0.36/0.47 & 0.57/0.61 & 0.74 & 2.02G &3.54\\     
    \rowcolor{yellow!10}BAC($\mathcal S = 10$) &  \textbf{1.00}/\textbf{1.00} & 0.94/\textbf{0.97} & 0.82/\textbf{0.89} & 0.77/\textbf{0.82} & \textbf{0.49}/0.55 & 0.59/0.62 & \textbf{0.79} & 2.66G & 3.40\\    
    \bottomrule
  \end{tabularx}

  \vspace{1.2em}

  \vspace{-0.5 em}
  \caption*{\small Benchmark on Mixed Human (MH) demonstration data.}
  \captionsetup{skip=2pt}
  \setlength{\tabcolsep}{2pt}
  \renewcommand{\arraystretch}{1.15}
  \begin{tabularx}{\linewidth}{l *{4}{>{\centering\arraybackslash}X} c cc}
    \toprule
    \multirow{2}{*}{\textbf{Method}} &
    \multicolumn{4}{c}{\textbf{Success Rate}\,$\uparrow$} &
    \multirow{2}{*}{\textbf{AVG}} &
    \multirow{2}{*}{\textbf{FLOPs}} &
    \multirow{2}{*}{\textbf{Speed}\,$\times$} \\
    \cmidrule(lr){2-5}
    & Lift & Can & Square & Transport & & & \\
    \midrule
    Full Precision & 0.99/1.00 & 0.92/0.97 & 0.76/0.79 & 0.35/0.46 & 0.76 & 15.77G & -- \\
    \midrule
    \emph{Uniform}(fast) & 0.99/0.99 & 0.91/0.96 & \textbf{0.80}/0.75 & 0.24/0.42 & 0.76 & 3.15G & 2.69\\ 
    \emph{Uniform}(fastest) & 0.95/\textbf{1.00} & 0.65/0.92 & 0.73/0.78 & 0.01/0.06 & 0.64 & 2.72G & 3.20 \\
    \midrule   
    Teacache(fast) & 0.79/0.85 & \textbf{0.97}/0.92 & 0.76/0.66 & \textbf{0.50}/0.40 & 0.73 & 5.25G & 1.41 \\        
    Teacache(fastest) & 0.00/0.02 & 0.97/0.90 & 0.26/0.22 & 0.35/0.31 & 0.38 & 2.62G & 3.44 \\  
    \midrule
    \rowcolor{yellow!10}BAC($\mathcal S = 7$) & 0.96/0.99 & 0.39/0.79 & 0.56/0.53 & 0.17/0.41 & 0.60 & 2.03G & 3.48 \\
    \rowcolor{yellow!10}BAC($\mathcal S = 10$) & \textbf{0.99}/0.98 & 0.95/\textbf{0.97} & 0.77/\textbf{0.79} & 0.30/\textbf{0.46} & \textbf{0.77} & 2.64G & 3.41 \\
    \bottomrule
  \end{tabularx}

  \vspace{1.2em}

  % Table 3: Multi-stage tasks
  \vspace{-0.5 em}
  \caption*{\small Benchmark on multi-stage tasks. For Block-Pushing, px is the frequency of pushing x blocks into the targets. For Kitchen, px is the frequency of interacting with x or more objects (e.g. bottom burner). }
  \captionsetup{skip=2pt}
  \setlength{\tabcolsep}{2pt}
  \renewcommand{\arraystretch}{1.15}
  \begin{tabularx}{\linewidth}{l *{6}{>{\centering\arraybackslash}X} c cc}
    \toprule
    \multirow{2}{*}{\textbf{Method}} &
    \multicolumn{6}{c}{\textbf{Success Rate}\,$\uparrow$} &
    \multirow{2}{*}{\textbf{AVG}} &
    \multirow{2}{*}{\textbf{FLOPs}} &
    \multirow{2}{*}{\textbf{Speed}\,$\times$} \\
    \cmidrule(lr){2-7}
    & BP$_{p1}$ & BP$_{p2}$ & Kit$_{p1}$ & Kit$_{p2}$ & Kit$_{p3}$ & Kit$_{p4}$ & & & \\
    \midrule
    Full Precision & 0.98/0.98 & 0.98/0.96 & 1.00/1.00 & 0.98/1.00 & 0.97/1.00 & 0.95/0.97 & 0.98 & 15.77G & -- \\
    \midrule
    \emph{Uniform}(fast) & 1.00/1.00 & 0.97/\textbf{0.97} & 0.97/\textbf{1.00} & 0.93/\textbf{1.00} & 0.91/0.99 & 0.79/0.93 & 0.96 & 3.15G & 2.85 \\
    \emph{Uniform}(fastest) & 0.99/0.99 & 0.95/0.95 & 0.66/0.89 & 0.42/0.79 & 0.29/0.63 & 0.08/0.34 & 0.66 & 2.72G & 3.34 \\
    \midrule
    Teacache(fast) & 0.33/0.33 & 0.33/0.49 & 0.75/0.76 & 0.24/0.18 & 0.27/0.13 & 0.00/0.00 & 0.52 & 1.82G & 4.42 \\
    Teacache(fastest) & 0.33/0.33 & 0.00/0.00 & 0.75/0.76 & 0.24/0.18 & 0.27/0.13 & 0.00/0.00 & 0.25 & 1.80G & 4.46 \\  
    \midrule
   \rowcolor{yellow!10} BAC($\mathcal S = 7$) & 0.99/0.99 & 0.91/0.93 & 0.80/0.85 & 0.61/0.69 & 0.45/0.57 & 0.23/0.39 & 0.69 & 1.92G & 3.78\\    
   \rowcolor{yellow!10} BAC($\mathcal S = 10$) & \textbf{1.00}/\textbf{0.99} & \textbf{0.97}/0.95 & \textbf{1.00}/0.99 & \textbf{1.00}/0.99 & \textbf{1.00}/\textbf{0.99} & \textbf{0.94}/\textbf{0.97} & \textbf{0.98} & 2.44G & 3.60\\
    \bottomrule
  \end{tabularx}
\end{table}

% \subsection{Main Result}
% \label{sec: main result}

\subsection{Simulation Benchmark}
\label{sec: simulation benchmark}

\paragraph{Acceleration on Diffusion Policy.}
We compare BAC against the baselines across multiple simulation benchmarks in Table~\ref{tab: main result}. We control the computation budget by setting the number of cache update steps $\mathcal S$. 
BAC achieves lossless acceleration with $\mathcal S=10$ on all the benchmarks, with an average success rate of 0.79, 0.77, 0.98 versus 0.76, 0.76, 0.98 for the full‐precision DP-T, even exhibiting a modest improvement. 
BAC consistently achieves stable acceleration rates above $3.4\times$ across most of the tasks. Compared with \emph{Uniform} and Teacache, BAC has two advantages: (1) BAC improves the success rate significantly on the hard tasks such as $\text{Kitchen}_{p4}$, where \emph{Uniform} and Teacache fail to restore the correct action generation. (2) BAC maintains a strong and stable performance in all tasks, demonstrating the reliability of a lossless acceleration plugin. We attribute this to the ability of BAC to reduce the reuse-induced error by ACS and precisely avoid the update‐induced errors by BUA.

\begin{figure}[t]
    \centering
    % 左边图
        % 右边图
    \begin{minipage}[t]{0.48\linewidth}
        \centering
        \includegraphics[width=\linewidth]{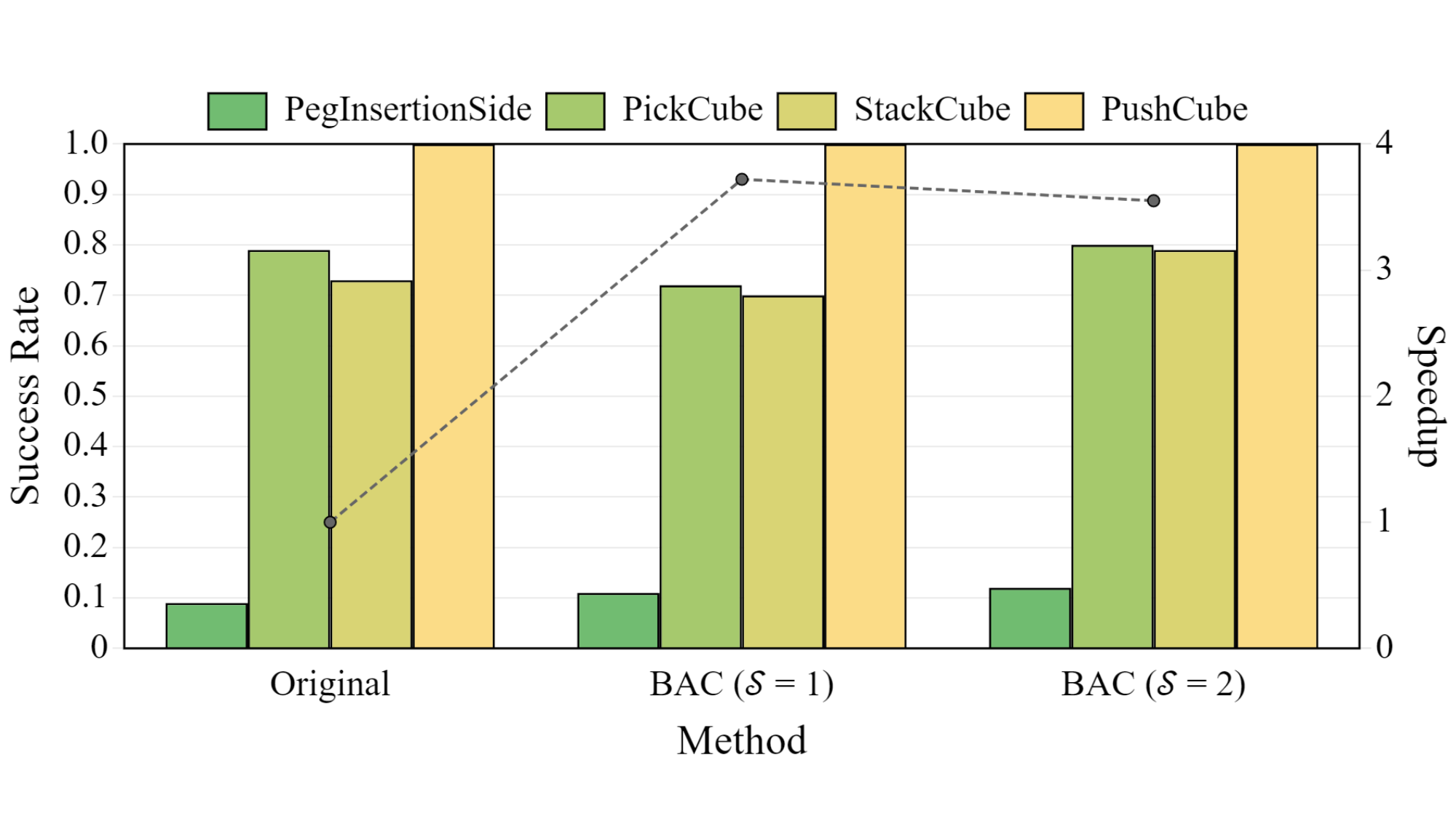}
        \subcaption{}
        \label{fig:rdt}
    \end{minipage}
    \begin{minipage}[t]{0.48\linewidth}
        \centering
        \includegraphics[width=\linewidth]{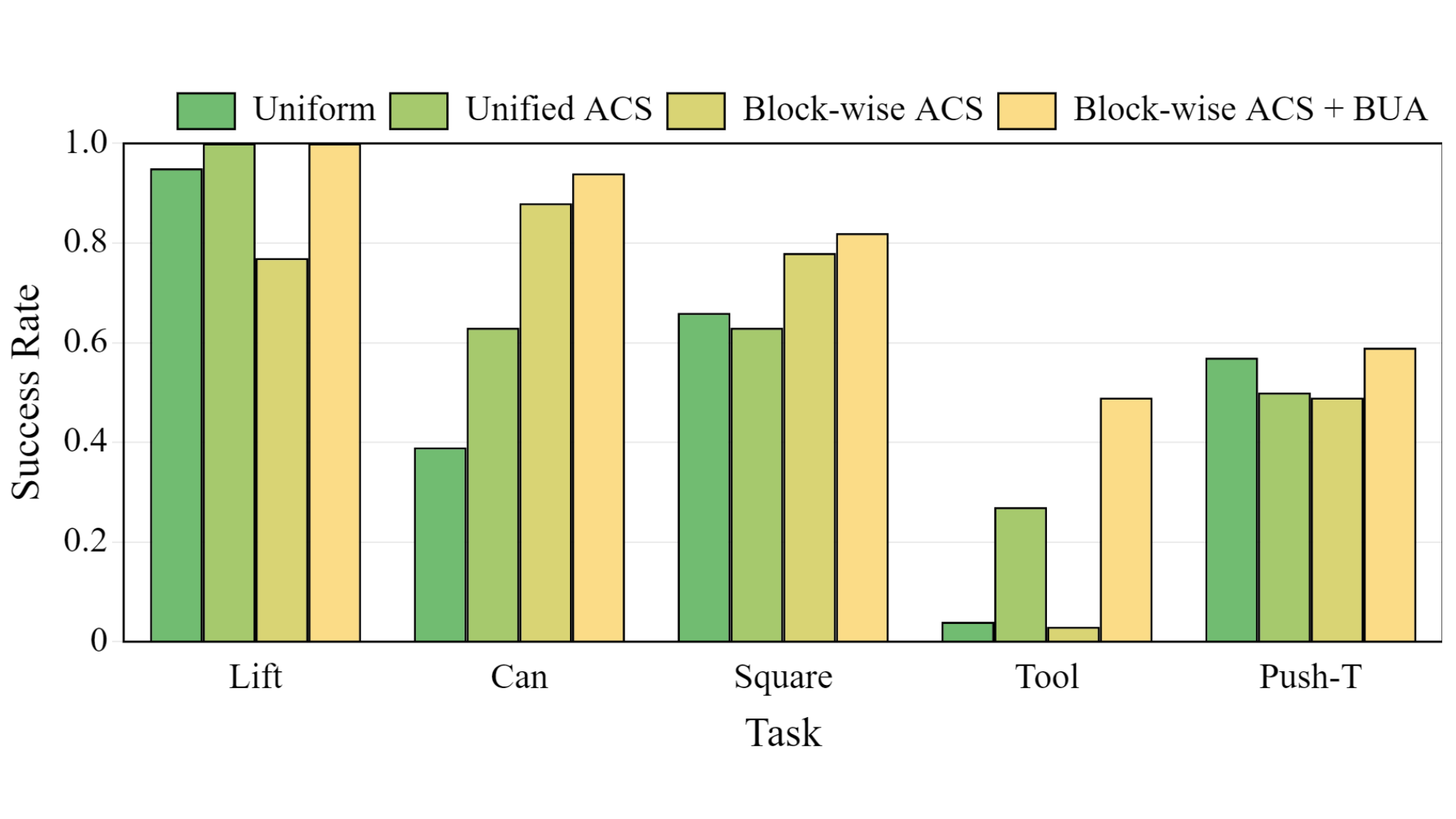}
        \subcaption{}
        \label{fig:ablation}
    \end{minipage}
    \hfill

    \caption{(a) Performance of BAC on RDT-1B. (b) Ablation study on BAC.}
    \label{fig:AbRDT}
\end{figure}

\paragraph{Acceleration on VLA.}

% To further examine the generalization of BAC, we evaluate its effectiveness on the recent RDT-1B~\citep{liu2025rdt1bdiffusionfoundationmodel} using the ManiSkill simulation benchmark. RDT-1B is a large diffusion transformer foundation model for bimanual robotic manipulation, enabling scalable pretraining and strong zero-shot and few-shot generalization.
% We focus on four representative tasks: \textit{PegInsertionSide}, \textit{PickCube}, 
% \textit{StackCube}, and \textit{PushCube}.
% Each task is run with 20 denoising steps and 25 episodes, and we report the mean success rate across three independent trials.

As summarized in Fig.~\ref{fig:rdt}, BAC achieves up to \textbf{3.55$\times$ acceleration} with a negligible performance drop compared to the original RDT-1B~\citep{liu2025rdt1bdiffusionfoundationmodel} at $\mathcal{S}=2$, while maintaining lossless performance at $\mathcal{S}=1$. These results demonstrate that BAC delivers significant speed gains even when combined with DPMSolver, providing strong evidence of its robust generalization across Diffusion Policy-based VLA models.

\subsection{Ablation Study}
\label{sec: ablation study}

% To verify the effectiveness of each component of our method, we conduct ablation experiments on these benchmarks. 

% The selected results on PH benchmark is shown on Table~\ref{tab:ablation}.

\paragraph{Ablation Study Methods.}
We consider three variants of our caching method. To evaluate the effectiveness of ACS, we build \textbf{Unified ACS}, the Adaptive Caching Scheduler is applied solely to the self-attention block in layer 0, which is the very first block in the decoder. The computed update steps are then used by every block. To evaluate the effectiveness of BUA, we build \textbf{Block-wise ACS}, the scheduler is naively applied to each block, producing a distinct set of update steps for all blocks. Finally, in \textbf{Block-wise ACS + BUA}, we first compute block-wise update steps via the Adaptive Caching Scheduler and then integrate the Bubbling Union Algorithm, yielding our full BAC method. Results of these methods are presented in Fig.~\ref{fig:ablation}.

\paragraph{Effectiveness of ACS.}
Experiments with the Unified ACS schedule demonstrate a clear performance improvement over \emph{Uniform}. This result confirms the necessity and effectiveness of reducing the reuse-induced error.

\paragraph{Effectiveness of BUA.}
The performance of Block-wise ACS unexpectedly falls below that of the Unified ACS, empirically substantiating the Error Surge Phenomenon.
% and in some more challenging cases (e.g., Tool$_\mathit{ph}$) even collapses, this behavior is consistent with the Error Surge Phenomenon we described in Sec.~\ref{subsubsection: error surge}, whereby block-wise scheduling can cause downstream blocks to be updated before upstream ones, inducing an error surge and resulting in performance degradation.
Integrating BUA into the block‐wise schedule recovers full‐precision performance across all tasks with the highest score of 0.79, demonstrating the effectiveness of the Bubbling Union Algorithm.

\section{Conclusion}
In this paper, we propose BAC, a novel training-free acceleration method for transformer-based \DP. BAC minimizes the caching error by adaptively scheduling cache updates through the Adaptive Caching Scheduler. Moreover, we conduct theoretical and empirical analysis on the error surge phenomenon due to inter-block error propagation, and propose the Bubbling Union Algorithm to truncate the propagation. Extensive experiments demonstrate that BAC achieves substantial speedups without performance degradation, typically exceeding $3\times$ compared to full computation.

\section{Ethics Statement}
In this work, all experiments are based on publicly available robotic simulation datasets, without involving new human subjects, personal data, or sensitive information. 
Our method focuses solely on improving computational efficiency and raises no concerns of privacy infringement, discrimination, bias, or legal non-compliance.

\section{Reproducibility Statement}
We have made extensive efforts to ensure reproducibility of our results. 
All datasets used in this work are publicly available robotic simulation benchmarks (RoboMimic, Push-T, Kitchen, Block-Pushing, ManiSkill), with details of preprocessing and task setups provided in Appendix \ref{appendix: Additional Experiments on k}. 
The proposed Block-wise Adaptive Caching (BAC) algorithm is fully described in Section \ref{sec: method}, with implementation details and hyperparameters reported in Section \ref{sec: exp setup}. 
Complete theoretical assumptions and proofs, including Proposition 3.1, are presented in Appendix \ref{appendix: theoretical analysis of error propagation}. Additional ablation studies, parameter sensitivity analyses, and visualizations are included in the supplemental material (Appendix \ref{appendix: Additional Experiments on k}, \ref{appendix: different metrics}, \ref{appendix: More details on Observation for insight}, \ref{appendix:steps after ACS}, \ref{appendix:steps after BUA}) to further support reproducibility. 
The full implementation and instructions will be released on an open-source GitHub repository to enable replication of all experiments.

\section{Acknowledgement}
This work is supported by the National Natural Science Foundation of China (Grant No. 92467204, 62472249, and 62402264), and the Shenzhen Science and Technology Program (Grant No. JCYJ20220818101014030 and KJZD20240903102300001).

\bibliographystyle{iclr2026_conference}
\bibliography{iclr2026_conference}

\appendix
\newpage
\section{Appendix / supplemental material}

% Optionally include supplemental material (complete proofs, additional experiments and plots) in appendix.
% All such materials \textbf{SHOULD be included in the main submission.}

% In this supplemental material, we provide:
% \begin{itemize}
%   \item Complete proof of Proposition~\ref{prop:ffn-error} in Appendix~\ref{appendix: theoretical analysis of error propagation},
%   \item Additional experiments on the choice of $n$ in Appendix~\ref{appendix: Additional Experiments on k},
%   \item Ablation studies on different metrics in Appendix~\ref{appendix: different metrics},
%   \item More figures on temporal similarities in Appendix~\ref{appendix: More details on Observation for insight},
%   \item Illustrations of episode homogeneity within a single task in Appendix~\ref{appendix: More details on the high episode homogeneity within a single task}, and
%   \item Supplementary visualizations for Fig~3 in Appendix~\ref{appendix: More details for Fig.3}.
% \end{itemize}

In this supplemental material, we provide declaration of LLMs uses in Appendix~\ref{appendix: llms}, detailed background on diffusion-based VLA models in Appendix~\ref{appendix: diffusion_background}, limitation of our work in Appendix~\ref{appendix: limitation}, complete proofs of Proposition~\ref{prop:ffn-error} in Appendix~\ref{appendix: theoretical analysis of error propagation}, discussion on cause of error surge phenomenon in Appendix~\ref{appendix: error surge}, more details on the benchmark in Appendix~\ref{appendix: more details on benchmarks}, along with additional experiments examining the choice of parameter $n$ in Appendix~\ref{appendix: Additional Experiments on k}. We further include ablation studies analyzing different metrics in Appendix~\ref{appendix: different metrics}, and present more temporal similarity figures in Appendix~\ref{appendix: More details on Observation for insight}. Additionally, Appendix~\ref{appendix: More details on the high episode homogeneity within a single task} contains illustrations demonstrating episode homogeneity within individual tasks, while Appendix~\ref{appendix: More details for Fig.3} offers supplementary visualizations supporting Figure~3. In Appendix~\ref{appendix:steps after ACS} and Appendix~\ref{appendix:steps after BUA}, we present the update steps obtained under $S=10$, $n=5$, while using cosine as metric via BAC, showing the update steps obtained after employing ACS and BUA, respectively. In Appendix~\ref{appendix:Visualization_of_Cache_Behavior}, we provide detailed visualizations and analysis of the cache behavior. We provide large-scale evidence showing that episodes within the same task exhibit strong homogeneity in Appendix~\ref{appendix:Large_diverse_evidence_for_episodes}. Finally, we demonstrate that BAC provides additional speedup even when applied on top of the DDIM baseline in Appendix~\ref{appendix:Evaluating BAC Under Fast DDIM Schedulers}.

\subsection{The Use of Large Language Models (LLMs)}
\label{appendix: llms}
In this work, Large Language Models (LLMs) were used solely to polish the language for clarity and readability. 
No LLMs were employed for idea generation, experimental design, data analysis, or any other part of the research process.

\subsection{Background on Diffusion-based VLA Models}
\label{appendix: diffusion_background}
Diffusion models~\citep{esser2024scaling,bar2024lumiere} were originally proposed for image generation ~\citep{sohl2015deep,ho2020denoising} and have been adapted for robot policy learning~\citep{martinez2007active}. Traditionally, diffusion-based vision–language–action (VLA)~\citep{ma2024survey,brohan2023rt,kim2024openvla} methods have depended on U-Net~\citep{ronneberger2015u} based denoising backbones borrowed directly from image generation pipelines to model multimodal action distributions and ensure stable training. Within the Diffusion Policy framework ~\citep{chi2023diffusion}, both U-Net and Diffusion Transformer (DiT)~\citep{peebles2023scalable,yuan2024ditfastattn,zhao2024dynamic} denoisers are supported, enabling exploration of hybrid backbone designs. More recent work has begun to replace U-Net with DiT architectures to improve scalability and expressive power. Diffusion Transformer Policy ~\citep{hou2024diffusion} is itself a DiT variant within the broader Diffusion Policy framework and uses a large-scale Transformer~\citep{vaswani2017attention} as the denoiser in continuous action spaces, conditioned on visual observations and language instructions. The Diffusion-VLA framework ~\citep{wen2024diffusion} unifies autoregressive next-token reasoning with diffusion-based action generation into a single, scalable framework for fast, interpretable, and generalizable visuomotor robot policies. DexVLA~\citep{wen2025dexvla} introduces plug-in diffusion expert modules that decouple action generation from the core VLA backbone. However, the iterative denoising steps inherent in diffusion models introduce substantial inference latency that poses challenges for high-frequency VLA tasks requiring real-time responsiveness. Consequently, accelerating the inference procedure of diffusion-based policies through techniques such as caching~\citep{xu2025vla,song2025accelerating} is critical for deploying responsive VLA-driven agents~\citep{chiang2024mobility,xiang2025vla,li2025pointvla}.

% Diffusion models~\citep{esser2024scaling,bar2024lumiere} were originally proposed for image generation ~\citep{sohl2015deep,ho2020denoising} and have been adapted for robot policy learning~\citep{martinez2007active}. Traditionally, diffusion-based vision–language–action (VLA)~\citep{ma2024survey,brohan2023rt,kim2024openvla} methods have depended on U-Net~\citep{ronneberger2015u} based denoising backbones borrowed directly from image generation pipelines to model multimodal action distributions and ensure stable training. Within the Diffusion Policy framework ~\citep{chi2023diffusion}, both U-Net and Diffusion Transformer (DiT)~\citep{peebles2023scalable,yuan2024ditfastattn,zhao2024dynamic} denoisers are supported, enabling exploration of hybrid backbone designs. More recent work has begun to replace U-Net with DiT architectures to improve scalability and expressive power. 

\subsection{Limitation}
\label{appendix: limitation}

The primary limitation of our work arises when the base model’s accuracy on a given task is very low, as our caching strategy may inadvertently amplify this inaccuracy. 

%For example, on the $\text{Transport}_{mh}$ benchmark, we observe a slight drop in performance.  Another limitation is that due to equipment constraints, we have not yet conducted real-world VLA experiments, we plan to address this in the future. 

\subsection{Proof for Proposition~\ref{prop:ffn-error}}
\label{appendix: theoretical analysis of error propagation}

\begin{assumption}
\mbox{}
\begin{enumerate}[leftmargin=*]
\item Activation function $\phi$ is twice continuously differentiable with bounded second derivative
\item LayerNorm variance $\sigma(X) \geq \sigma_{\min} > 0$ for all valid inputs $X$
\item Weight matrices satisfy $\|W_1\|_2 \leq C_1$, $\|W_2\|_2 \leq C_2$ for fixed constants $C_1,C_2$
\end{enumerate}
\end{assumption}

\begin{proposition}
\label{prop:error}
Under Assumption 1, for input error $\delta$ with $\|\delta\| \leq \epsilon$, the FFN block output error admits the first-order approximation:

\begin{equation}
\mathrm{FFN}(X+\delta) - \mathrm{FFN}(X) = f(\delta) + O(\|\delta\|^2)
\end{equation}

where the linear response operator $f(\delta)$ is given by:

\begin{equation}
f(\delta) = W_2\,\mathrm{diag}(\phi'(U))\,W_1\,(A - B)\,\delta
\end{equation}

with $U = W_1\mathrm{LN}(X) + b_1$ and operators:

\begin{align}
A &=  \frac{\mathrm{diag}(\gamma)\cdot(I - \tfrac{1}{d}\mathbf{1}\mathbf{1}^\top)}{\sigma(X)} \label{eq:A} \\
B &= \frac{\mathrm{diag}(\gamma) \cdot(X - \mu(X)\mathbf{1})(X - \mu(X)\mathbf{1})^\top}{d\sigma(X)^3} \label{eq:B}
\end{align}
\end{proposition}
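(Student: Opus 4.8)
The plan is to read off $\Delta := \mathrm{FFN}(X+\delta)-\mathrm{FFN}(X)$ as the first-order term of the Taylor expansion of the smooth map $X\mapsto\mathrm{FFN}(X)$, i.e.\ to show $\Delta = J_{\mathrm{FFN}}(X)\,\delta + O(\|\delta\|^2)$ and then to identify the Jacobian $J_{\mathrm{FFN}}(X)$ with the claimed product $f(\delta)=W_2\,\mathrm{diag}(\phi'(U))\,W_1\,(A-B)\,\delta$. Since $W_1(\cdot)+b_1$ and $W_2(\cdot)+b_2$ are affine and $\phi$ acts entrywise, the only non-routine Jacobian is that of LayerNorm; the chain rule then assembles the factors in the order LayerNorm $\to W_1 \to \phi \to W_2$, and Assumption~1 provides the uniform bounds that let us absorb the rest into $O(\|\delta\|^2)$.

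First I would differentiate LayerNorm. Writing $\mathrm{LN}(X)=\mathrm{diag}(\gamma)\,\frac{X-\mu(X)\mathbf{1}}{\sigma(X)}+\beta$ with $\mu(X)=\tfrac1d\mathbf{1}^\top X$ and $\sigma(X)$ the per-token standard deviation (bounded below by $\sigma_{\min}$ under Assumption~1), one has $\partial_X(X-\mu(X)\mathbf{1})=I-\tfrac1d\mathbf{1}\mathbf{1}^\top$ and, using the identity $(X-\mu(X)\mathbf{1})^\top\mathbf{1}=0$, $\partial_X\sigma(X)=\tfrac{1}{d\,\sigma(X)}(X-\mu(X)\mathbf{1})^\top$. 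The quotient rule then gives
\[
\partial_X\!\left(\frac{X-\mu(X)\mathbf{1}}{\sigma(X)}\right)=\frac{I-\tfrac1d\mathbf{1}\mathbf{1}^\top}{\sigma(X)}-\frac{(X-\mu(X)\mathbf{1})(X-\mu(X)\mathbf{1})^\top}{d\,\sigma(X)^3},
\]
so left-multiplying by $\mathrm{diag}(\gamma)$ yields exactly $J_{\mathrm{LN}}(X)=A-B$ with $A,B$ as in \eqref{eq:A}--\eqref{eq:B}.

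Next I would chain through the remaining maps. With $U=W_1\mathrm{LN}(X)+b_1$, the Jacobian of $z\mapsto\phi(z)$ at $U$ is $\mathrm{diag}(\phi'(U))$ because $\phi$ is coordinatewise, while the Jacobians of the two affine maps are $W_1$ and $W_2$. Composing gives $J_{\mathrm{FFN}}(X)=W_2\,\mathrm{diag}(\phi'(U))\,W_1\,(A-B)$, i.e.\ $\Delta = f(\delta)+R(\delta)$ with $R$ the remainder. To bound $R$ I invoke Assumption~1 once more: $\phi\in C^2$ with bounded $\phi''$, $\sigma\ge\sigma_{\min}>0$, and $\|W_1\|_2\le C_1$, $\|W_2\|_2\le C_2$ together imply that $\mathrm{FFN}$ is $C^2$ with Hessian bounded uniformly on the segment $[X,X+\delta]$ (the lower bound on $\sigma$ is precisely what prevents the LayerNorm Hessian from blowing up). Taylor's theorem with remainder then gives $\|R(\delta)\|\le\tfrac12\sup_{0\le t\le1}\|\nabla^2\mathrm{FFN}(X+t\delta)\|\,\|\delta\|^2 = O(\|\delta\|^2)$, completing the argument.

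I expect the main obstacle to be the LayerNorm Jacobian --- in particular pinning down the variance-gradient term $B$: its rank-one structure $(X-\mu(X)\mathbf{1})(X-\mu(X)\mathbf{1})^\top$, the correct $1/(d\,\sigma(X)^3)$ normalization, and the cancellation $(X-\mu(X)\mathbf{1})^\top\mathbf{1}=0$ that collapses what would otherwise be a product of two centering projectors into a single one. The nonlinearity and the linear layers contribute only the standard coordinatewise and matrix Jacobians, and the $O(\|\delta\|^2)$ control is routine once Assumption~1 is in force; its only subtlety is that the Hessian bound must hold uniformly along the whole segment from $X$ to $X+\delta$, which is exactly why the a priori bounds ($\sigma\ge\sigma_{\min}$, $\|W_i\|_2\le C_i$, bounded $\phi''$) are assumed rather than mere finiteness at the base point.
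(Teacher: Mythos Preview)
Your proposal is correct and follows essentially the same approach as the paper: both compute the LayerNorm Jacobian $A-B$ (the paper via an explicit perturbation of $\mu$, $\sigma$, and then $\widetilde X_i$ component-wise; you via the quotient rule and the same cancellation $(X-\mu\mathbf{1})^\top\mathbf{1}=0$) and then chain through $W_1$, $\mathrm{diag}(\phi'(U))$, and $W_2$. The only difference is presentational---your Jacobian/chain-rule packaging is more compact than the paper's step-by-step perturbation analysis, but the mathematical content is identical.
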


\begin{proof}
We analyze the propagation of input error $\delta$ :

\vspace{0.5em}

Let $\Delta\widetilde{X} = \mathrm{LN}(X+\delta) - \mathrm{LN}(X)$. Define:

\begin{align*}
\mu &\triangleq \mu(X), \quad \sigma \triangleq \sigma(X) \\
\mu_\delta &\triangleq \mu(\delta), \quad \sigma_\delta^2 \triangleq \sigma^2(X+\delta)
\end{align*}

The mean of \( X + \delta \) is given by:
\begin{equation}
\Delta\mu = \mu(X+\delta) - \mu = \frac{1}{d}\mathbf{1}^\top\delta = \mu_\delta
\end{equation}

The variance of \( X + \delta \) is given by:

\begin{align}
\sigma_\delta^2 &= \frac{1}{d}\|X+\delta - (\mu+\mu_\delta)\mathbf{1}\|^2 \notag \\
&= \sigma^2 + \frac{2}{d}(X-\mu\mathbf{1})^\top(\delta-\mu_\delta\mathbf{1}) + O(\|\delta\|^2) \label{eq:var_perturb}
\end{align}

Taking the square root and expanding:

\begin{equation}
\Delta\sigma = \frac{(X-\mu\mathbf{1})^\top\delta}{d\sigma} + O(\|\delta\|^2)
\label{eq:delta_sigma}
\end{equation}

For each dimension $i$:
\begin{align}
\Delta\widetilde{X}_i
&= \gamma\Bigl(\frac{X_i+\delta_i - \mu - \mu_\delta}{\sigma + \Delta\sigma}
   - \frac{X_i - \mu}{\sigma}\Bigr)\notag\\
&\approx \gamma\Bigl(\frac{\delta_i - \mu_\delta}{\sigma}
   - \frac{(X_i - \mu)}{\sigma^2}\,\Delta\sigma\Bigr)
\label{eq:taylor_expansion_gamma}
\end{align}
Substituting \(\Delta\sigma\) from Eq.~\ref{eq:delta_sigma} yields
\begin{align}
\Delta\widetilde{X}_i
&= \gamma\Bigl(\frac{\delta_i - \mu_\delta}{\sigma}
   - \frac{(X_i - \mu)}{d\,\sigma^3}\sum_{k=1}^d (X_k - \mu)\,\delta_k\Bigr)\notag\\
&= \sum_{j=1}^d
   \Bigl[\frac{\gamma(\delta_{ij} - \tfrac1d)}{\sigma}
   - \frac{\gamma\,(X_i - \mu)(X_j - \mu)}{d\,\sigma^3}\Bigr]
   \,\delta_j.
\label{eq:component_form}
\end{align}
Eq. \ref{eq:component_form} implies:
\begin{equation}
\Delta\widetilde{X}
= \underbrace{\frac{\mathrm{diag}(\gamma)\,\cdot \bigl(I - \tfrac1d\mathbf{1}\mathbf{1}^\top\bigr)}{\sigma}}_{A}
\,\delta
\;-\;
\underbrace{\frac{\mathrm{diag}(\gamma)\,\cdot(X - \mu\mathbf{1})(X - \mu\mathbf{1})^\top}
           {d\,\sigma^3}}_{B}
\,\delta
\;+\;O(\|\delta\|^2).
\end{equation}

\vspace{0.5em}

The error propagates through the first linear layer:

\begin{equation}
\Delta U = W_1\Delta\widetilde{X} = W_1(A - B)\delta + O(\|\delta\|^2)
\end{equation}

\vspace{0.5em}

Using Taylor expansion of $\phi$ at $U$:

\begin{align}
\phi(U + \Delta U) - \phi(U) &= \mathrm{diag}(\phi'(U))\Delta U + O(\|\Delta U\|^2) \notag \\
&= \mathrm{diag}(\phi'(U))W_1(A - B)\delta + O(\|\delta\|^2)
\end{align}

Finally, projecting through $W_2$:

\begin{equation}
f(\delta) = W_2 \, \mathrm{diag}(\phi'(U)) \, W_1 \, (A - B) \, \delta.
\end{equation}

The proof is complete.

\end{proof}

\subsection{Discussion on Cause of Error Surge Phenomenon}
\label{appendix: error surge}
% In FFN blocks, errors propagate sequentially through a layer normalization, the first linear transformation ($W_1$), a GELU activation function, and the second linear transformation ($W_2$). In contrast, error propagation in attention blocks involves layer normalization, the query ($W_Q$), key ($W_K$), and value ($W_V$) projection matrices, and an output projection matrix that aggregates the multi-head attention outputs.

Based on Eq.~\ref{equation: FFN_equation}, we conducted additional experiments to examine the Frobenius norm distributions of weights across different blocks. 
In FFN blocks, errors propagate sequentially through a layer normalization, the first linear transformation ($W_1$), a GELU activation function, and the second linear transformation ($W_2$). In contrast, error propagation in attention blocks involves layer normalization, the query ($W_Q$), key ($W_K$), and value ($W_V$) projection matrices, and an output projection matrix that aggregates the multi-head attention outputs.
As illustrated in Fig.~\ref{fig:weight_norm}, we found that the magnitudes of $W_1$ and $W_2$ are approximately three times larger than those of $W_V$. This substantial difference in weight magnitudes likely contributes to the observed error surges within FFN blocks. Furthermore, the absence of intermediate normalization steps between the two linear layers in FFN blocks may exacerbate error amplification.

\begin{figure}[htbp]
  \centering
  \captionsetup[subfigure]{justification=centering}

  % -------- Row 1 --------
  \begin{subfigure}{0.48\linewidth}
    \centering
    \includegraphics[width=\linewidth]{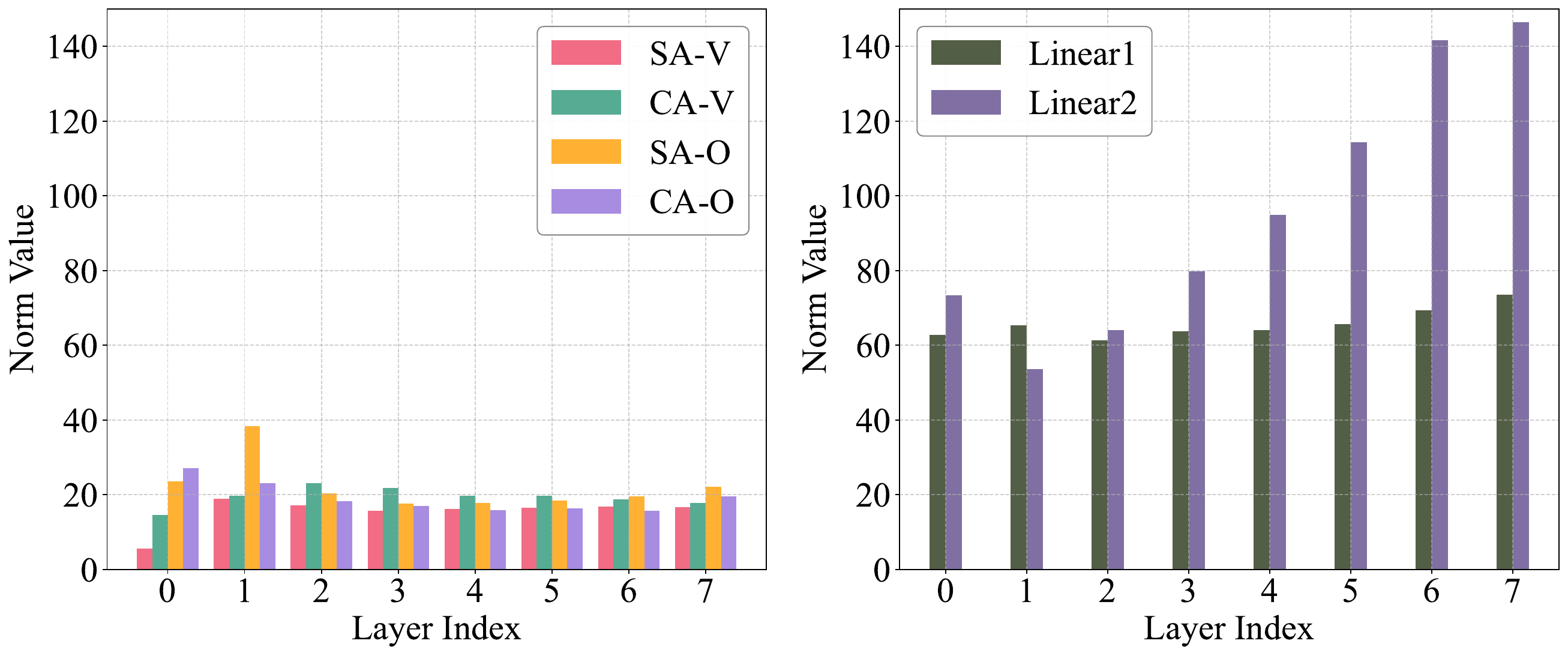}
    \caption{Block pushing}
    \label{fig:sub1}
  \end{subfigure}
  \hfill
  \begin{subfigure}{0.48\linewidth}
    \centering
    \includegraphics[width=\linewidth]{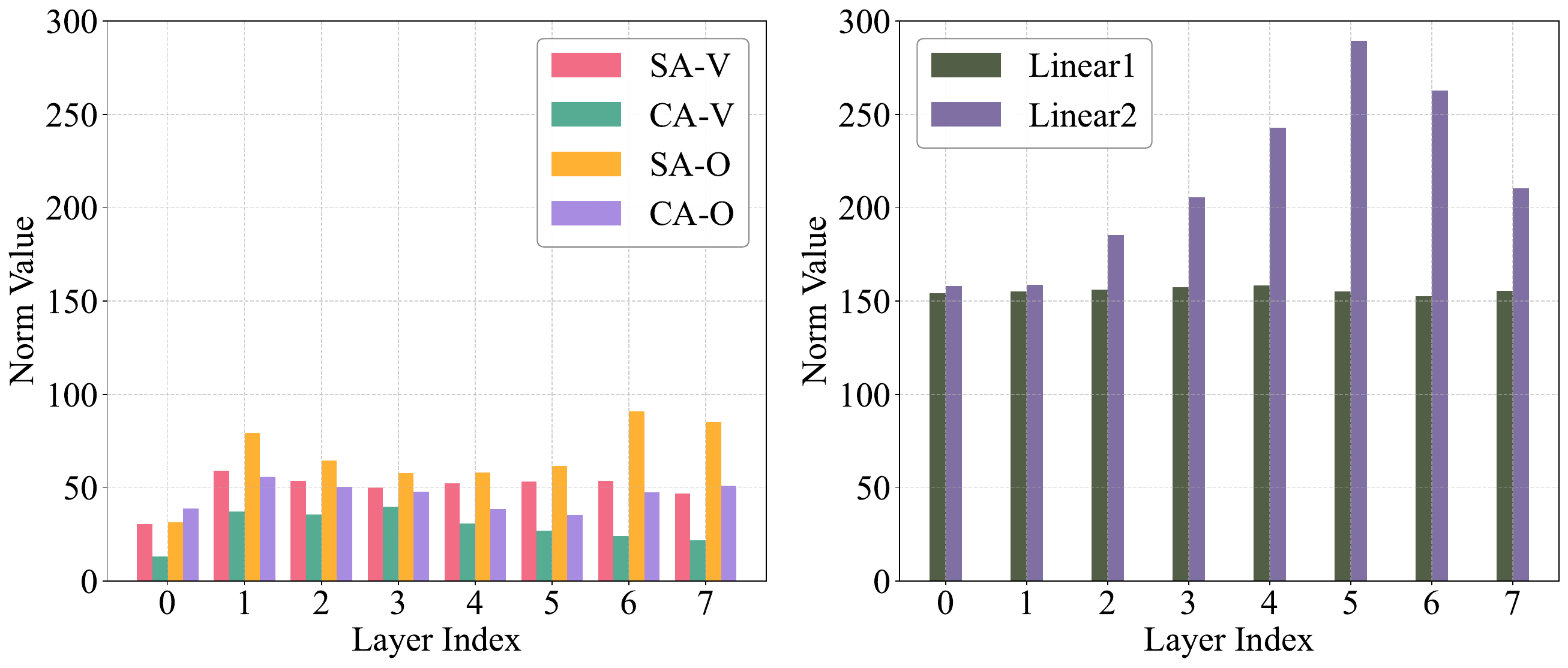}
    \caption{Kitchen}
    \label{fig:sub2}
  \end{subfigure}

  \vspace{8pt}

  % -------- Row 2 --------
  \begin{subfigure}{0.48\linewidth}
    \centering
    \includegraphics[width=\linewidth]{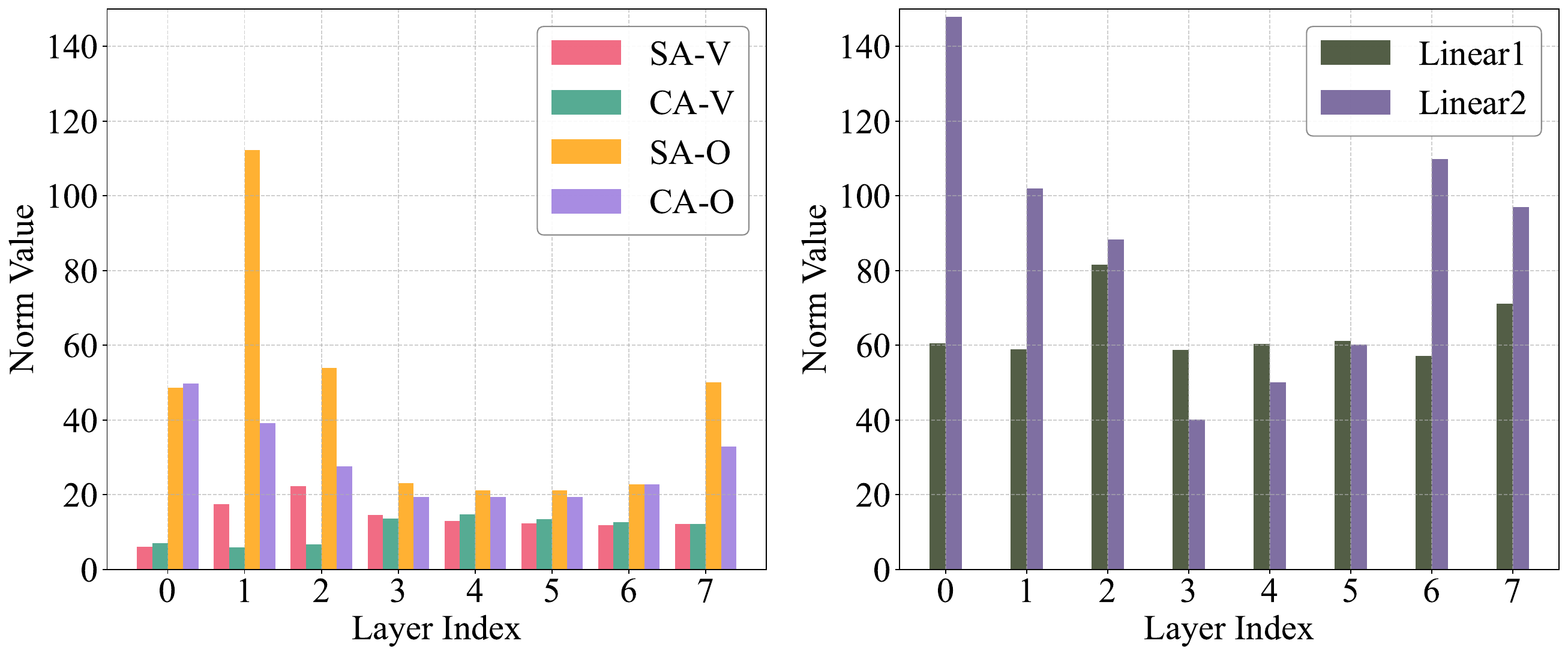}
    \caption{Square}
    \label{fig:sub3}
  \end{subfigure}
  \hfill
  \begin{subfigure}{0.48\linewidth}
    \centering
    \includegraphics[width=\linewidth]{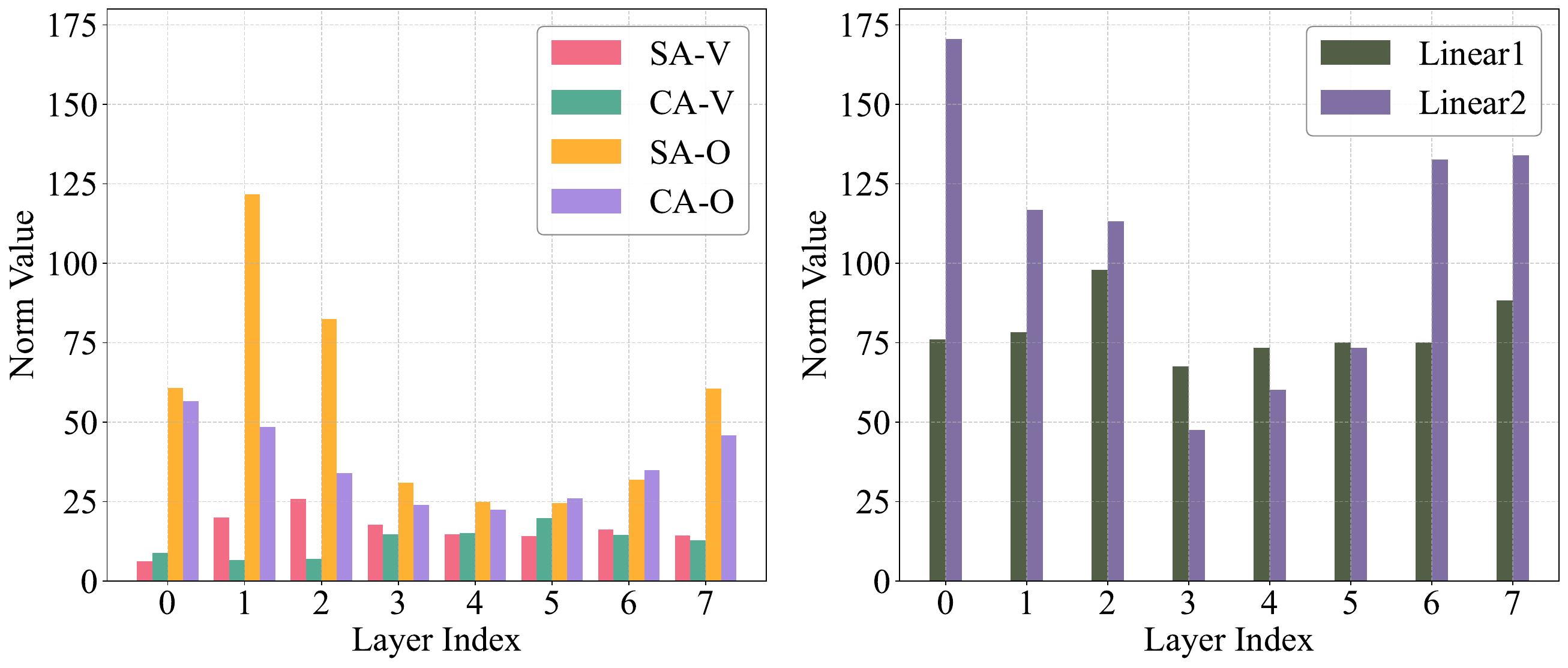}
    \caption{Tool}
    \label{fig:sub6}
  \end{subfigure}

  \vspace{8pt}

  % -------- Row 3 (single) --------
  \begin{subfigure}{0.48\linewidth}
    \centering
    \includegraphics[width=\linewidth]{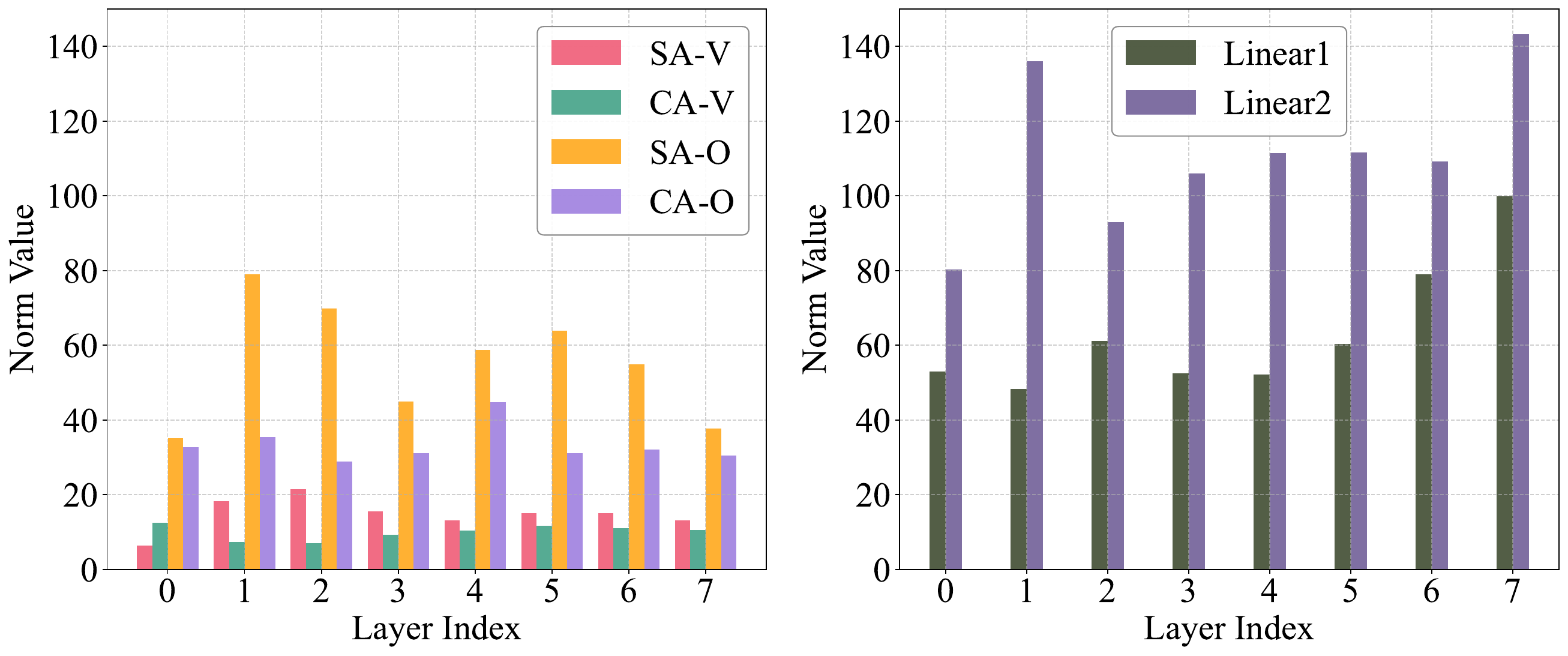}
    \caption{Transport}
    \label{fig:sub5}
  \end{subfigure}
  \hfill
  \begin{subfigure}{0.48\linewidth}
    \centering
    \includegraphics[width=\linewidth]{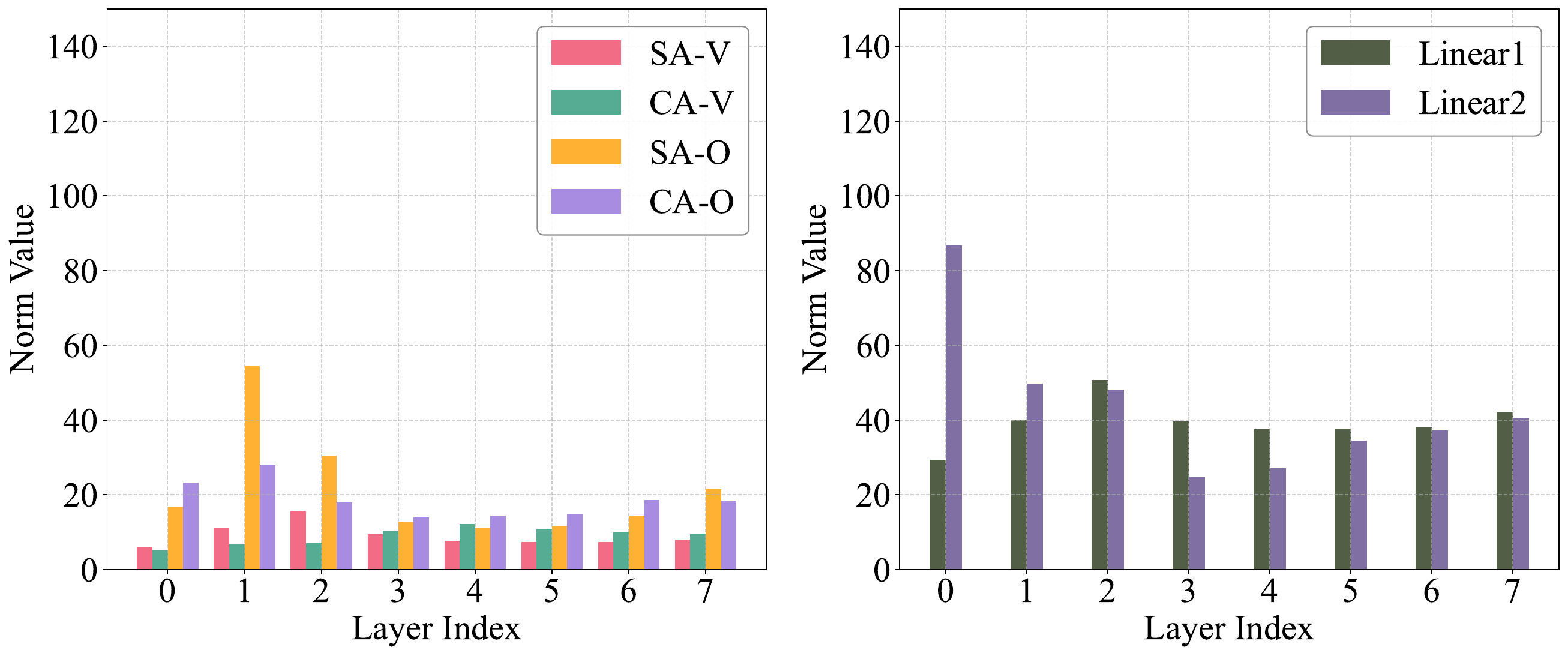}
    \caption{Can}
    \label{fig:sub4}
  \end{subfigure}

  \caption{Weight Norm}
  \label{fig:weight_norm}

\end{figure}

\subsection{More details on Benchmarks}
\label{appendix: more details on benchmarks}

% \ky{Include detailed introduction to all the tasks of benchmark in Experiments. Add Figures, download links, source paper.}

\subsubsection{Datasets}

\noindent \textbf{Pusht-T.} This dataset is based on the implicit behavioral cloning benchmark introduced by Florence et al. \citep{florence2022implicit}, comprising human-collected demonstrations of T-shaped block pushing with top-down RGB observations and 2D end-effector velocity control. Variation is added by random initial conditions for T block and end-effector. The task requires exploiting complex and contact-rich object dynamics to push the T block precisely, using point contacts. There are two variants: one with RGB image observations and another with 9 2D keypoints obtained from the groundtruth pose of the T block, both with proprioception for endeffector location \citep{chi2023diffusion}.

\noindent \textbf{Block Pushing.} This dataset consists of scripted trajectories first presented in Behavior Transformers by Shafiullah et al. \citep{shafiullah2022behavior}. This task tests the policy’s ability to model multimodal action distributions by pushing two blocks into two squares in any order. The demonstration data is generated by a scripted oracle with access to groundtruth state info \citep{chi2023diffusion}.

\noindent \textbf{Franka Kitchen.} This dataset originates from the Relay Policy Learning framework proposed by Gupta et al.~\citep{gupta2019relay}, featuring 566 VR tele-operated demonstrations of multi-step manipulation tasks in a simulated kitchen using a 9-DoF Franka Panda arm. The goal is to execute as many demonstrated tasks as possible, regardless of order, showcasing both short-horizon and long-horizon multimodality \citep{chi2023diffusion}.

\noindent \textbf{RoboMimic.} This dataset, introduced by Mandlekar et al. \citep{robomimic2021}, covers five manipulation tasks. Each task includes a Proficient-Human (PH) teleoperated demonstration set, and four of the tasks additionally offer Mixed-Human (MH) sets combining proficient and non-proficient operators (9 variants in total). The PH data were recorded by a single operator via the RoboTurk platform, whereas the MH sets were collected from six different operators using the same system.

\begin{figure}[htbp]
  \centering
  %--- subfigure (a)
  \begin{subfigure}[b]{0.19\linewidth}
    \includegraphics[width=\linewidth]{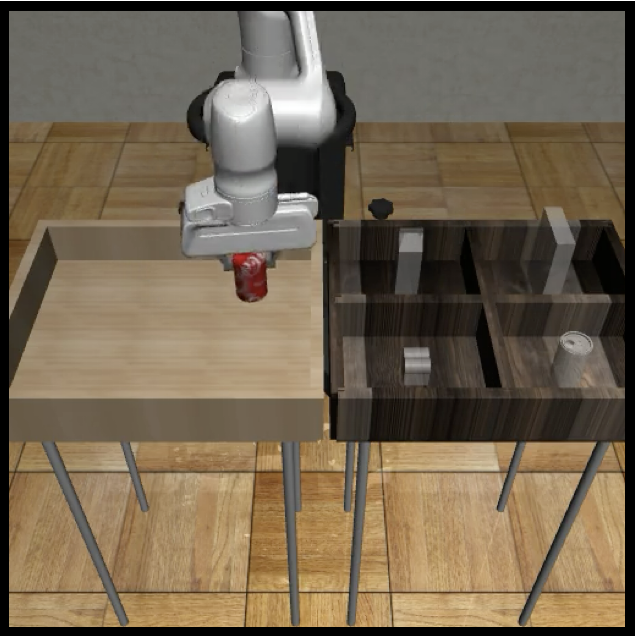}
    \caption{} % leave blank or put short label
    \label{fig:tasks:a}
  \end{subfigure}
  \hfill
  %--- subfigure (b)
  \begin{subfigure}[b]{0.19\linewidth}
    \includegraphics[width=\linewidth]{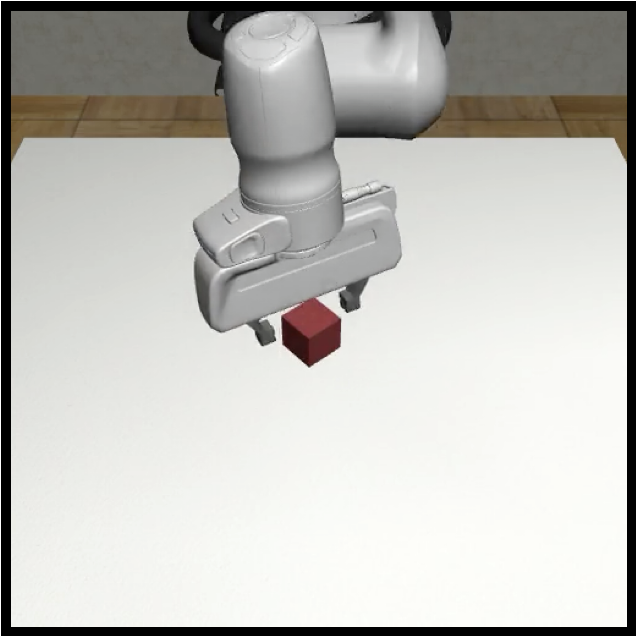}
    \caption{}
    \label{fig:tasks:b}
  \end{subfigure}
  \hfill
  %--- subfigure (c)
  \begin{subfigure}[b]{0.19\linewidth}
    \includegraphics[width=\linewidth]{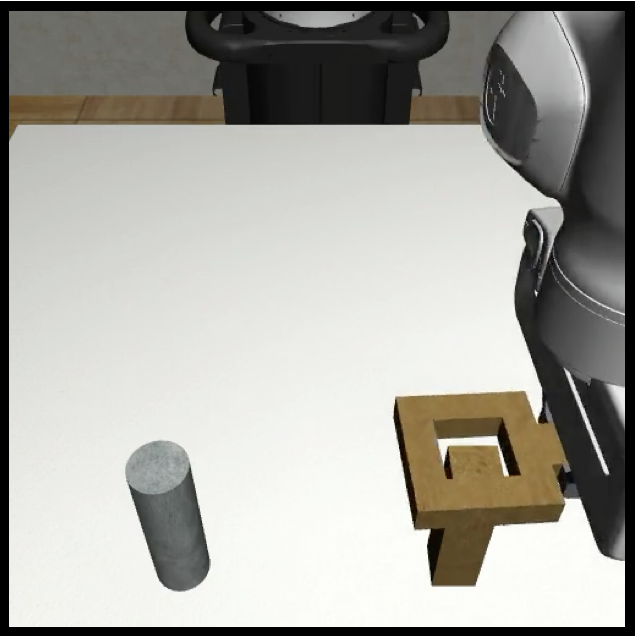}
    \caption{}
    \label{fig:tasks:c}
  \end{subfigure}
  \hfill
  %--- subfigure (d)
  \begin{subfigure}[b]{0.19\linewidth}
    \includegraphics[width=\linewidth]{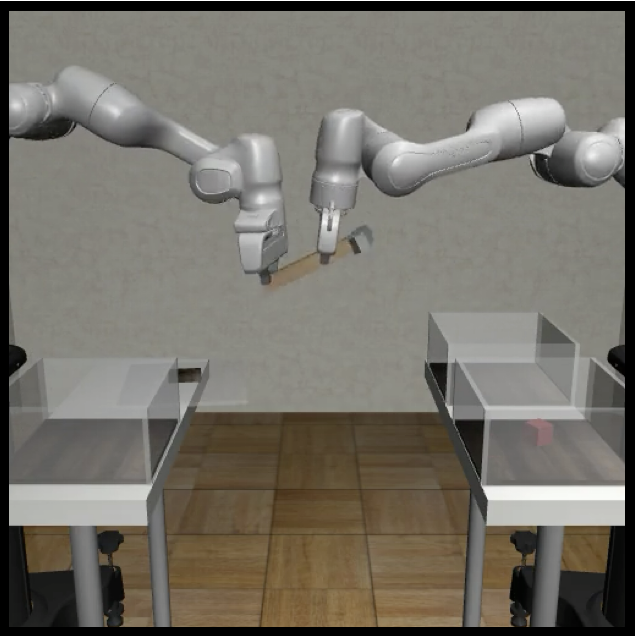}
    \caption{}
    \label{fig:tasks:d}
  \end{subfigure}
  \hfill
  %--- subfigure (e)
  \begin{subfigure}[b]{0.19\linewidth}
    \includegraphics[width=\linewidth]{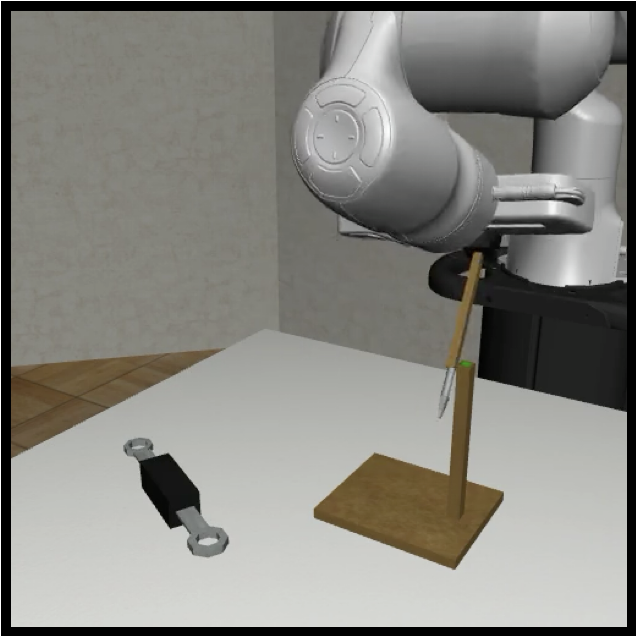}
    \caption{}
    \label{fig:tasks:e}
  \end{subfigure}
  %--- subfigure (f)
  \caption{Visualizations of different tasks.
    (a) Can
    (b) Lift
    (c) Square
    (d) Transport
    (e) Tool\_hang.
  }
  \label{fig:all_tasks}
\end{figure}

Fig.~\ref{fig:all_tasks} illustrates the five subtasks in the RoboMimic Image dataset. Below we describe each subtask.

\begin{itemize}
  \item \textbf{Can} (Fig.~\ref{fig:tasks:a}):  
    The robot must grasp a cylinder‐shaped object and placing it into a bin.  
    This subtask tests precise grasp planning and fingertip control under varying object poses \citep{robomimic2021}.
  
  \item \textbf{Lift} (Fig.~\ref{fig:tasks:b}):  
    The manipulator picks up a heavier, irregularly shaped object (e.g.\ a small box) and raises it to a designated height.  
    It evaluates the policy’s ability to modulate grip force and maintain stable trajectories \citep{robomimic2021}.
  
  \item \textbf{Square} (Fig.~\ref{fig:tasks:c}):  
    The agent must push or slide an object so that its center follows a square‐shaped path on the table.  
    This challenges both straight‐line control and precise cornering maneuvers \citep{robomimic2021}.
  
  \item \textbf{Transport} (Fig.~\ref{fig:tasks:d}):  
    The agent must learn bimanual maneuvers to transfer a hammer from a closed container on a shelf to the target bin on another shelf. 
    It tests coordinated lifting and translational motion under variable loads \citep{robomimic2021}.
  
  \item \textbf{Tool\_Hang} (Fig.~\ref{fig:tasks:e}):  
    The robot arm must learn high-precision manipulation behaviors to assemble a frame by inserting a hook into a narrow base.  
    This requires fine‐tuned wrist orientation and insertion accuracy \citep{robomimic2021}.
    
\end{itemize}

% Specifically, we employ:
% \begin{itemize}
%   \item \texttt{pusht.zip}: Contains both simulated and real‐world push task demonstrations.
%   \item \texttt{block\_pushing.zip}: Provides block‐pushing trajectories in simulation.
%   \item \texttt{kitchen.zip}: Consists of multi‐step manipulation sequences in a kitchen environment.
%   \item \texttt{robomimic\_image.zip}: Comprises image‐based demonstrations for various PH and MH manipulation subtasks.
% \end{itemize}

\subsubsection{Pre-trained Checkpoints}

% \paragraph{Image-based Tasks}
% All image-based tasks use the \texttt{diffusion\_policy\_transformer} model, with checkpoints stored under\footnote{\url{https://diffusion-policy.cs.columbia.edu/data/experiments/image/}}.

% \paragraph{Multi-stage Tasks}
% All multi-stage tasks use the \texttt{diffusion\_policy\_transformer} model, with checkpoints stored under\footnote{\url{https://diffusion-policy.cs.columbia.edu/data/experiments/low_dim/}}.

We use the pre-trained checkpoints of \texttt{diffusion\_policy\_transformer} model provided by Diffusion Policy~\citep{chi2023diffusion}, where checkpoints of image-based tasks are stored under link\footnote{\url{https://diffusion-policy.cs.columbia.edu/data/experiments/image/}} and those of multi-stage tasks are stored under link\footnote{\url{https://diffusion-policy.cs.columbia.edu/data/experiments/low_dim/}}. Following Diffusion Policy~\citep{chi2023diffusion}, we evaluate the success rates of two types of checkpoints: The checkpoints that achieve the maximum performance during training and those stored in the last 10 epochs. For robustness, these checkpoints are collected in three training seeds.

% \subsection{Additional Experiments on $n$}
\subsection{\texorpdfstring{Additional Experiments on $n$}{Additional Experiments on k}}

\label{appendix: Additional Experiments on k}

In this experiment, we aim to answer two questions: (1) how $n$ influences the effectiveness of BUA in mitigating update-induced error? (2) Does the effectiveness of $n$ relate to $S$?

To answer question 1, we evaluate all tasks with \(n=3\) and cache number \(S=10\). As shown in Table~\ref{tab:main-result-k3}, when \(n=3\), the average success rates for the three task categories were 0.71, 0.77, and 0.97, respectively, which are slightly lower than those for \(n=5\) (0.79, 0.77, 0.98) but still significantly better than the \emph{Uniform} baseline. Increasing \(n\) can further improve performance at the cost of higher computational overhead. Therefore, we choose \(n=5\) as the default setting, as it achieves lossless performance while maximizing acceleration.

To answer the second question, we investigate the relationship between the hyperparameter \( n\) (number of selected upstream blocks) and \( S \) (cache number) in the context of the effectiveness of BUA in mitigating update-induced errors. The experimental results in Table~\ref{tab:main-result-k4} demonstrate that the effectiveness of \( n\) is closely tied to \( S \). 

$n$ plays a more important role in mitigating update-induced errors. For certain difficult tasks, such as Tool\_hang$_{ph}$ and Transport$_{mh}$, the results for \( n=3, S=20 \) (e.g., 0.32/0.53 and 0.29/0.43, respectively) are outperformed by \( n=5, S=10 \) (e.g., 0.49/0.55 and 0.30/0.46, respectively). This indicates that an appropriately tuned \( n \) plays a dominant role in optimizing the effectiveness of BUA for challenging tasks.

Notebly, the effectiveness of $n$ depends on an appropriate $S$. for a fixed \( n=3 \), increasing \( S \) from 5 to 20 significantly improves the average success rate across tasks with performance collapse, from 0.07 at \( S=5 \) to 0.80 at \( S=20 \), suggesting that a larger number of cache update steps helps to mitigate update-induced errors. 
%When comparing \( k=3 \) and \( k=5 \) at \( S=10 \), the performance improvement from increasing \( k \) is modest (e.g., 0.71 to 0.79 for PH tasks) compared to the gains from increasing \( S \).
Thus, the interplay between \( n \) and \( S \) suggests that a balanced combination, such as \( n=5, S=10 \), achieves robust performance across diverse tasks.

\begin{table}[htbp]
  \centering
  \small
  \caption{Effect of the hyperparameter \(n\) on the mitigation of update‐induced error across PH, MH and multi‐stage settings by BUA (\(S=10\)).}
  \setlength{\tabcolsep}{2pt}
  
  \label{tab:main-result-k3}
  
  % PH Demonstration Data
  \caption*{\small Benchmark on Proficient Human (PH) demonstration data.}
  \renewcommand{\arraystretch}{1.15}
  \begin{tabularx}{\linewidth}{l *{6}{>{\centering\arraybackslash}X} c cc}
    \toprule
    \multirow{2}{*}{\textbf{Method}} &
    \multicolumn{6}{c}{\textbf{Success Rate}\,$\uparrow$} &
    \multirow{2}{*}{\textbf{AVG}} &
    \multirow{2}{*}{\textbf{FLOPs}} &
    \multirow{2}{*}{\textbf{Speed}\,$\times$} \\
    \cmidrule(lr){2-7}
    & Lift & Can & Square & Transport & Tool & \mbox{Push–T} & & & \\
    \midrule
    BAC ($n=3$)  & 1.00/1.00 & 0.93/0.96 & 0.83/0.91 & 0.81/0.79 & 0.03/0.07 & 0.59/0.60 & 0.71 & 2.39G & 3.21\\
    BAC ($n=5$) &  1.00/1.00 & 0.94/0.97 & 0.82/0.89 & 0.77/0.82 & 0.49/0.55 & 0.59/0.62 & 0.79 & 2.66G & 3.40\\

    \bottomrule
  \end{tabularx}

  % MH Demonstration Data
  \vspace{0.8em}
  \caption*{\small Benchmark on Mixed Human (MH) demonstration data.}
  \renewcommand{\arraystretch}{1.15}
  \begin{tabularx}{\linewidth}{l *{4}{>{\centering\arraybackslash}X} c cc}
    \toprule
    \multirow{2}{*}{\textbf{Method}} &
    \multicolumn{4}{c}{\textbf{Success Rate}\,$\uparrow$} &
    \multirow{2}{*}{\textbf{AVG}} &
    \multirow{2}{*}{\textbf{FLOPs}} &
    \multirow{2}{*}{\textbf{Speed}\,$\times$} \\
    \cmidrule(lr){2-5}
    & Lift & Can & Square & Transport & & & \\
    \midrule
    BAC ($n=3$)  & 0.97/0.99 & 0.93/0.97 & 0.79/0.77 & 0.21/0.53 &  0.77 & 2.48G &  3.21\\
    BAC ($n=5$)  & 0.99/0.98 & 0.95/0.97 & 0.77/0.79 & 0.30/0.46 & 0.77 & 2.64G & 3.41 \\
    \bottomrule
  \end{tabularx}

  % Multi-stage tasks
  \vspace{0.8em}
  \caption*{\small Benchmark on multi-stage tasks. For Block-Pushing task, p$_x$ is the frequency of pushing x blocks into the targets. For Kitchen task, p$_x$ is the frequency of interacting with x or more objects (e.g., the bottom burner).}
  \renewcommand{\arraystretch}{1.15}
  \begin{tabularx}{\linewidth}{l *{6}{>{\centering\arraybackslash}X} c cc}
    \toprule
    \multirow{2}{*}{\textbf{Method}} &
    \multicolumn{6}{c}{\textbf{Success Rate}\,$\uparrow$} &
    \multirow{2}{*}{\textbf{AVG}} &
    \multirow{2}{*}{\textbf{FLOPs}} &
    \multirow{2}{*}{\textbf{Speed}\,$\times$} \\
    \cmidrule(lr){2-7}
    & BP$_{p1}$ & BP$_{p2}$ & Kit$_{p1}$ & Kit$_{p2}$ & Kit$_{p3}$ & Kit$_{p4}$ & & & \\
    \midrule
    BAC ($n=3$)  & 0.99/0.98 & 0.94/0.93 & 0.99/1.00 & 0.97/0.99 & 0.95/0.99 & 0.89/0.97 & 0.97 & 2.23G & 3.38\\
    BAC ($n=5$)  & 1.00/0.99 & 0.97/0.95 & 1.00/0.99 & 1.00/0.99 & 1.00/0.99 & 0.94/0.97 & 0.98 & 2.44G & 3.60\\
    \bottomrule
  \end{tabularx}
\end{table}

\begin{table}[htbp]
  \centering
  \small
  \setlength{\tabcolsep}{2pt}
  \renewcommand{\arraystretch}{1.15}
  \caption{Benchmark Results across tasks that have performance collapse (Tool\_hang$_{ph}$, Transport$_{mh}$, Kitchen), all results set $n=3$ and $S\in\{5,7,20\}$.}
  \label{tab:main-result-k4}
  \begin{tabularx}{\linewidth}{l *{6}{>{\centering\arraybackslash}X} c cc}
    \toprule
    \multirow{2}{*}{\textbf{Method}} &
      \multicolumn{6}{c}{\textbf{Success Rate}\,$\uparrow$} &
      \multirow{2}{*}{\textbf{AVG}} &
      \multirow{2}{*}{\textbf{FLOPs}} &
      \multirow{2}{*}{\textbf{Speed}\,$\times$} \\
    \cmidrule(lr){2-7}
      & Tool$_{ph}$ & Trans$_{mh}$ & Kit$_{p1}$ & Kit$_{p2}$ & Kit$_{p3}$ & Kit$_{p4}$ & & & \\
    \midrule
    BAC ($\mathcal S=5$)  & 0.00/0.00 & 0.26/0.50 & 0.04/0.07 & 0.00/0.00 & 0.00/0.00 & 0.00/0.00 & 0.07 & 1.49G    & 4.02 \\
    BAC ($\mathcal S=7$)  & 0.00/0.01 & 0.19/0.32 & 0.82/0.93 & 0.64/0.79 & 0.55/0.70 & 0.33/0.52 & 0.49 & 1.90G    & 3.66 \\
    BAC ($\mathcal S=10$)  & 0.03/0.07 & 0.21/0.53 &  0.99/1.00 & 0.97/0.99 & 0.95/0.99 & 0.89/0.97 & 0.72 & 2.43G    & 3.34 \\
    BAC ($\mathcal S=20$) & 0.32/0.53 & 0.29/0.43 & 1.00/1.00 & 1.00/1.00 & 1.00/1.00 & 0.99/0.97 & 0.80 & 4.16G    & 2.53  \\
    \bottomrule
  \end{tabularx}
\end{table}

\subsection{Additional Experiments on Different Similarity Metrics}
\label{appendix: different metrics}

In this experiment, we evaluate the performance of BAC across all tasks using four similarity metrics: Mean Squared Error (MSE), L1-Norm distance (L1), Wasserstein-1 distance (Wa), and Cosine similarity (Cosine), as presented in Table~\ref{tab:metric}. BAC with metric Cosine achieves average success rates of 0.79, 0.77, and 0.98 for PH, MH, and multi-stage tasks, outperforming BAC with metric MSE (0.78, 0.70, 0.98), BAC with metric L1 (0.72, 0.67, 0.98), and BAC with metric Wa (0.70, 0.56, 0.81). BAC with metric Cosine excels particularly in MH settings while matching MSE and L1 in multi-stage tasks. BAC with metric Wa struggles with complex tasks (e.g., Kit$_{p4}$: 0.39/0.67). All metrics exhibit similar computational costs, with FLOPs ranging from 2.40G to 2.73G and speed from 3.07$\times$ to 3.42$\times$. Thus, BAC with metric Cosine and BAC with metric MSE demonstrate the most robust performance, with Cosine preferred for noisy data and complex tasks. 

%Notably, our experiments were conducted using rented machines on AutoDL, which may influence speed robustness, however, the speed remains stably above 3$\times$.

%\ky{add the new table1 under the steps computed steps on L1,MSE,wa}

\begin{table}[H]
  \small
  \caption{Benchmark Results across Proficient Human (PH), Mixed Human (MH), and Multi-stage Demonstrations ($n=5$, $S=10$) using MSE, L1, and Wa Metrics.}
  \setlength{\tabcolsep}{2pt}
  \label{tab:metric}
  
  % PH Demonstration Data
  \caption*{\small Benchmark on Proficient Human (PH) demonstration data.}
  \renewcommand{\arraystretch}{1.15}
  \begin{tabularx}{\linewidth}{l *{6}{>{\centering\arraybackslash}X} c cc}
    \toprule
    \multirow{2}{*}{\textbf{Method}} &
    \multicolumn{6}{c}{\textbf{Success Rate}\,$\uparrow$} &
    \multirow{2}{*}{\textbf{AVG}} &
    \multirow{2}{*}{\textbf{FLOPs}} &
    \multirow{2}{*}{\textbf{Speed}\,$\times$} \\
    \cmidrule(lr){2-7}
    & Lift & Can & Square & Transport & Tool & \mbox{Push–T} & & & \\
    \midrule
    BAC (MSE)  & 1.00/1.00 & 0.79/0.79 & 0.71/0.83 & 0.75/0.77 & 0.40/0.55 & 0.58/0.66 & 0.78 & 2.73G & 3.21\\
    BAC (L1)  & 1.00/1.00 & 0.89/0.95 & 0.74/0.85 & 0.71/0.73 & 0.19/0.33 & 0.56/0.65 & 0.72 &  2.60G & 3.19 \\
    BAC (Wa)  & 1.00/1.00 & 0.37/0.75 & 0.79/0.87 & 0.80/0.79 & 0.37/0.50 & 0.52/0.61 & 0.70 & 2.73G & 3.28\\
    BAC (Cosine) &  1.00/1.00 & 0.94/0.97 & 0.82/0.89 & 0.77/0.82 & 0.49/0.55 & 0.59/0.62 & 0.79 & 2.66G & 3.40\\

    \bottomrule
  \end{tabularx}

  % MH Demonstration Data
  \vspace{0.8em}
  \caption*{\small Benchmark on Mixed Human (MH) demonstration data.}
  \renewcommand{\arraystretch}{1.15}
  \begin{tabularx}{\linewidth}{l *{4}{>{\centering\arraybackslash}X} c cc}
    \toprule
    \multirow{2}{*}{\textbf{Method}} &
    \multicolumn{4}{c}{\textbf{Success Rate}\,$\uparrow$} &
    \multirow{2}{*}{\textbf{AVG}} &
    \multirow{2}{*}{\textbf{FLOPs}} &
    \multirow{2}{*}{\textbf{Speed}\,$\times$} \\
    \cmidrule(lr){2-5}
    & Lift & Can & Square & Transport & & & \\
    \midrule
    BAC (MSE)  &  1.00/0.99 & 0.41/0.86 & 0.75/0.77 & 0.26/0.50 &  0.70 & 2.69G & 3.21\\
    BAC (L1)  & 0.97/1.00 & 0.44/0.87 & 0.71/0.74 & 0.23/0.43 & 0.67 &  2.63G & 3.18\\
    BAC (Wa)  & 0.98/0.99 & 0.83/0.90 & 0.09/0.05 & 0.19/0.45 & 0.56 & 2.71G & 3.07\\
    BAC (Cosine) & 0.99/0.98 &0.95/0.97 & 0.77/0.79 & 0.30/0.46 & 0.77 & 2.64G & 3.41 \\
    \bottomrule
  \end{tabularx}

  % Multi-stage tasks
  \vspace{0.8em}
  \caption*{\small Benchmark on multi-stage tasks. For Block-Pushing task, p$_x$ is the frequency of pushing x blocks into the targets. For Kitchen task, p$_x$ is the frequency of interacting with x or more objects (e.g., the bottom burner).}
  \renewcommand{\arraystretch}{1.15}
  \begin{tabularx}{\linewidth}{l *{6}{>{\centering\arraybackslash}X} c cc}
    \toprule
    \multirow{2}{*}{\textbf{Method}} &
    \multicolumn{6}{c}{\textbf{Success Rate}\,$\uparrow$} &
    \multirow{2}{*}{\textbf{AVG}} &
    \multirow{2}{*}{\textbf{FLOPs}} &
    \multirow{2}{*}{\textbf{Speed}\,$\times$} \\
    \cmidrule(lr){2-7}
    & BP$_{p1}$ & BP$_{p2}$ & Kit$_{p1}$ & Kit$_{p2}$ & Kit$_{p3}$ & Kit$_{p4}$ & & & \\
    \midrule
    BAC (MSE) &  0.99/0.99 & 0.95/0.93 & 1.00/1.00 & 1.00/1.00 & 0.99/0.99 & 0.97/0.93 & 0.98  & 2.49G & 3.39\\
    BAC (L1)  & 0.99/0.99 & 0.94/0.94 & 1.00/1.00 & 1.00/1.00 & 1.00/1.00 & 0.93/0.95 & 0.98 & 2.40G & 3.42\\
    BAC (Wa)  & 0.99/1.00 & 0.95/0.95 & 0.83/0.95 & 0.66/0.90 & 0.55/0.85 & 0.39/0.67 & 0.81 & 2.61G & 3.08\\
    BAC (Cosine) & 1.00/0.99 & 0.97/0.95 & 1.00/0.99 & 1.00/0.99 & 1.00/0.99 & 0.94/0.97 & 0.98 & 2.44G & 3.60\\
    
    \bottomrule
  \end{tabularx}
\end{table}

\clearpage
\subsection{More details on Temporal Similarities}
\label{appendix: More details on Observation for insight}
\begin{minipage}{\textwidth}
We compute cosine similarity between intermediate features of different timesteps to analyze temporal similarity patterns, using the Square$_{ph}$ task as a case study. Fig.~\ref{fig:insight_observation_part2} provides more details on the temporal similarity patterns of different decoder blocks.
The observation that the feature similarity between consecutive timesteps varies non-uniformly over time exists in all the decoder blocks. This suggests the necessity of ACS method.
% \vspace{-5pt}
\begin{figure}[H]
    \centering
    % 子图 (a)
    \begin{subfigure}[b]{\textwidth}
        \centering
        \includegraphics[width=0.31\textwidth]{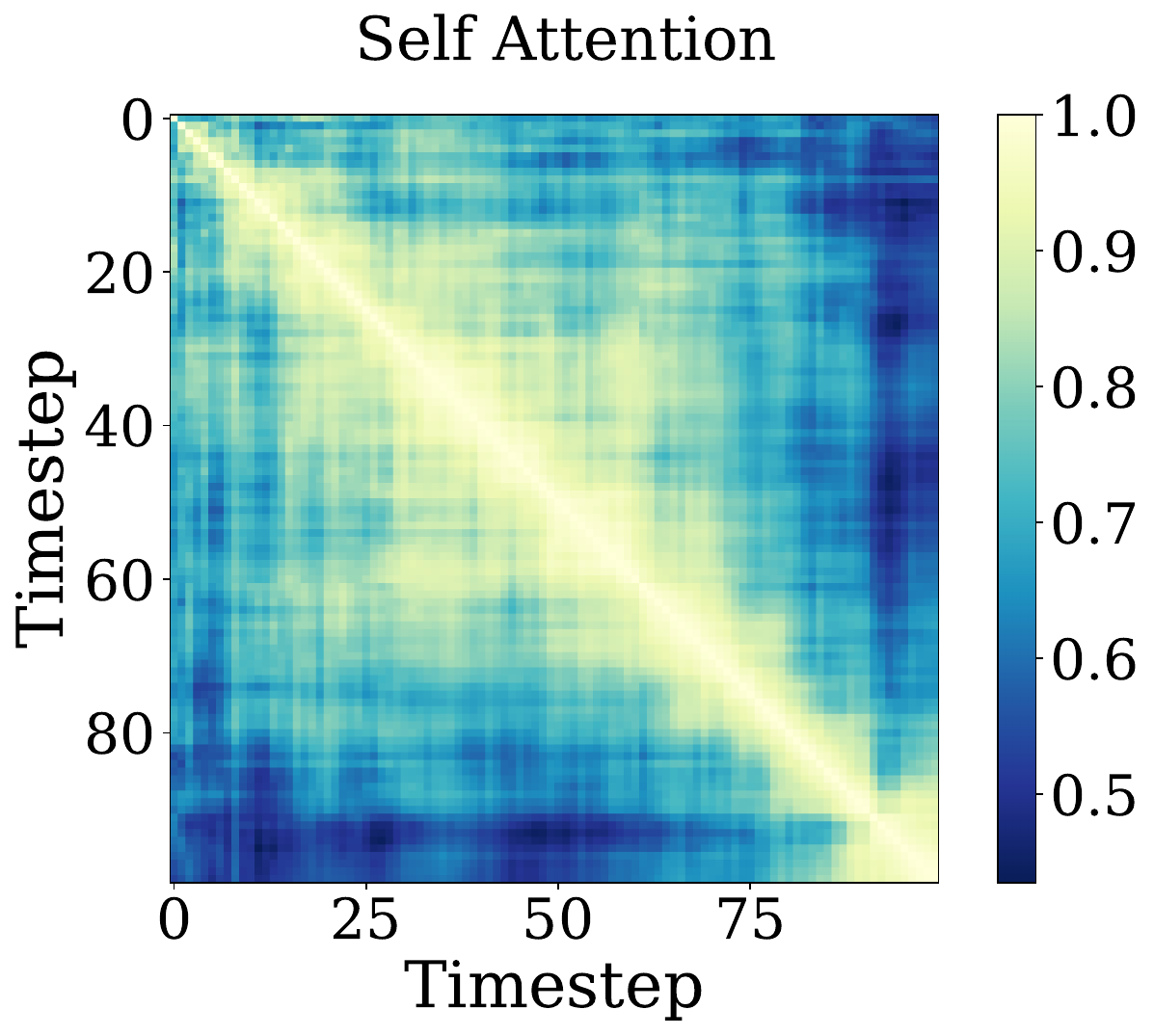}
        \includegraphics[width=0.31\textwidth]{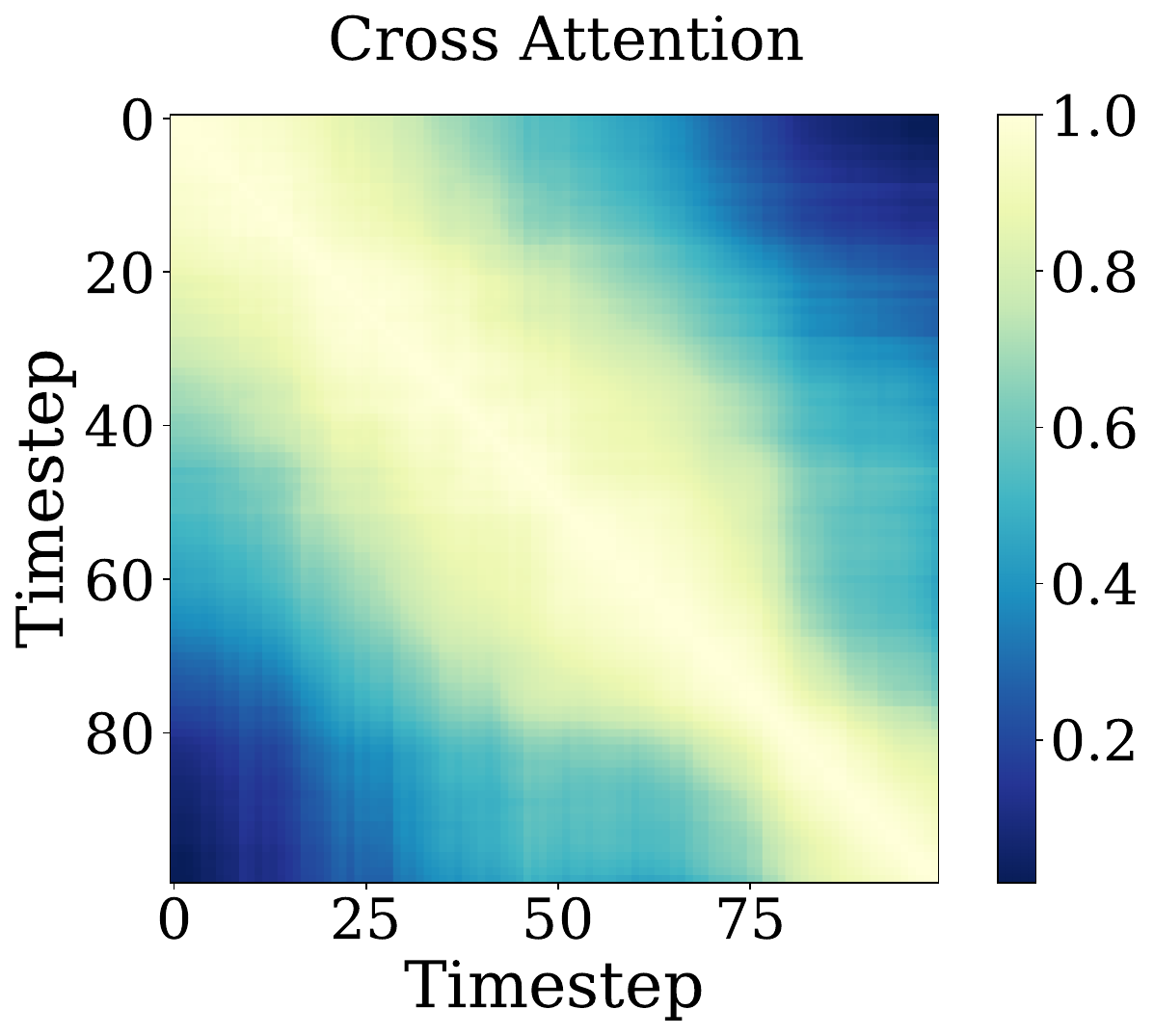}
        \includegraphics[width=0.31\textwidth]{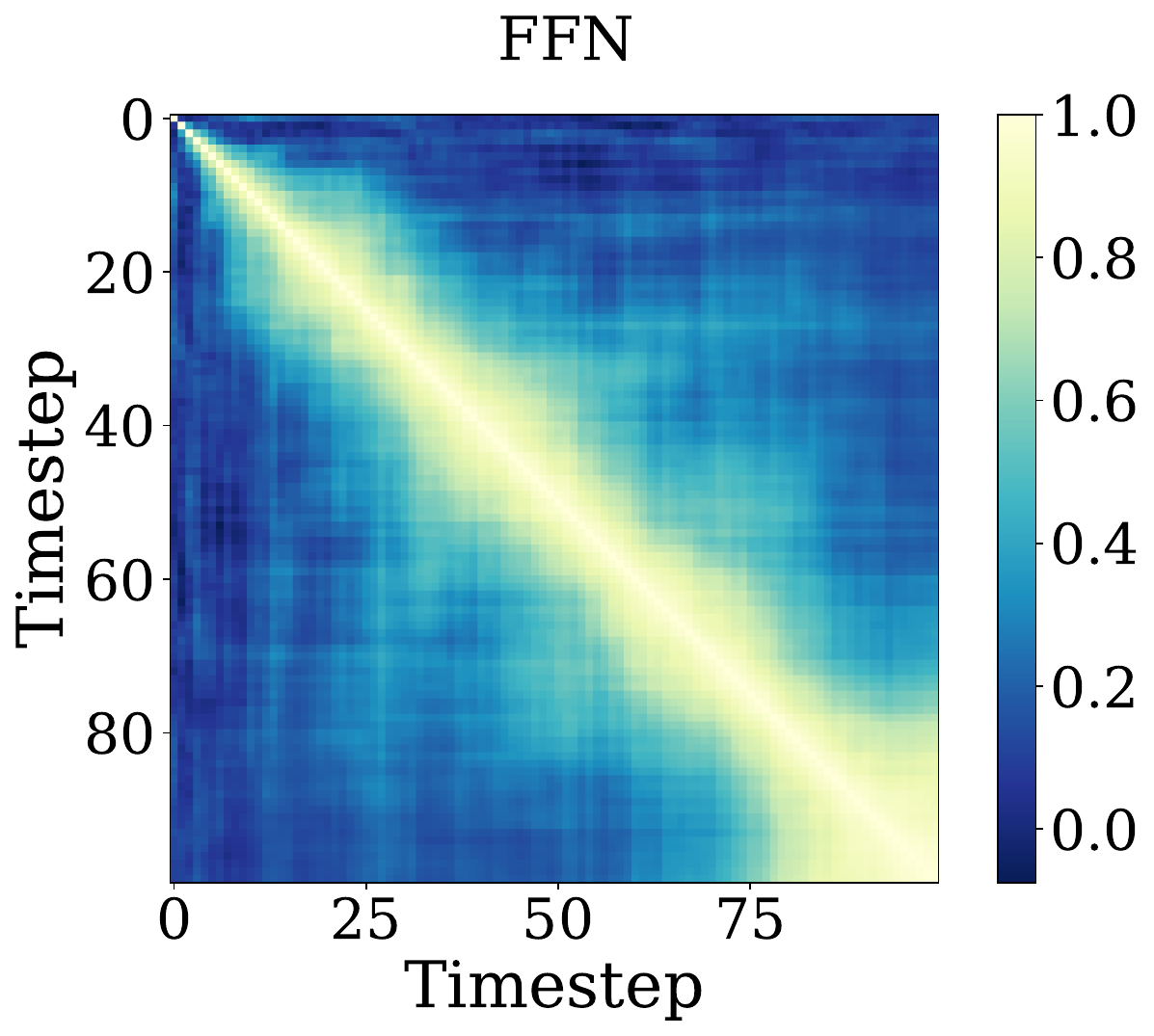}
        \caption{Feature similarities across timesteps in the first layer.}
        \label{fig:layer0_similarity}
    \end{subfigure}
    
    \begin{subfigure}[b]{\textwidth}
        \centering
        \includegraphics[width=0.31\textwidth]{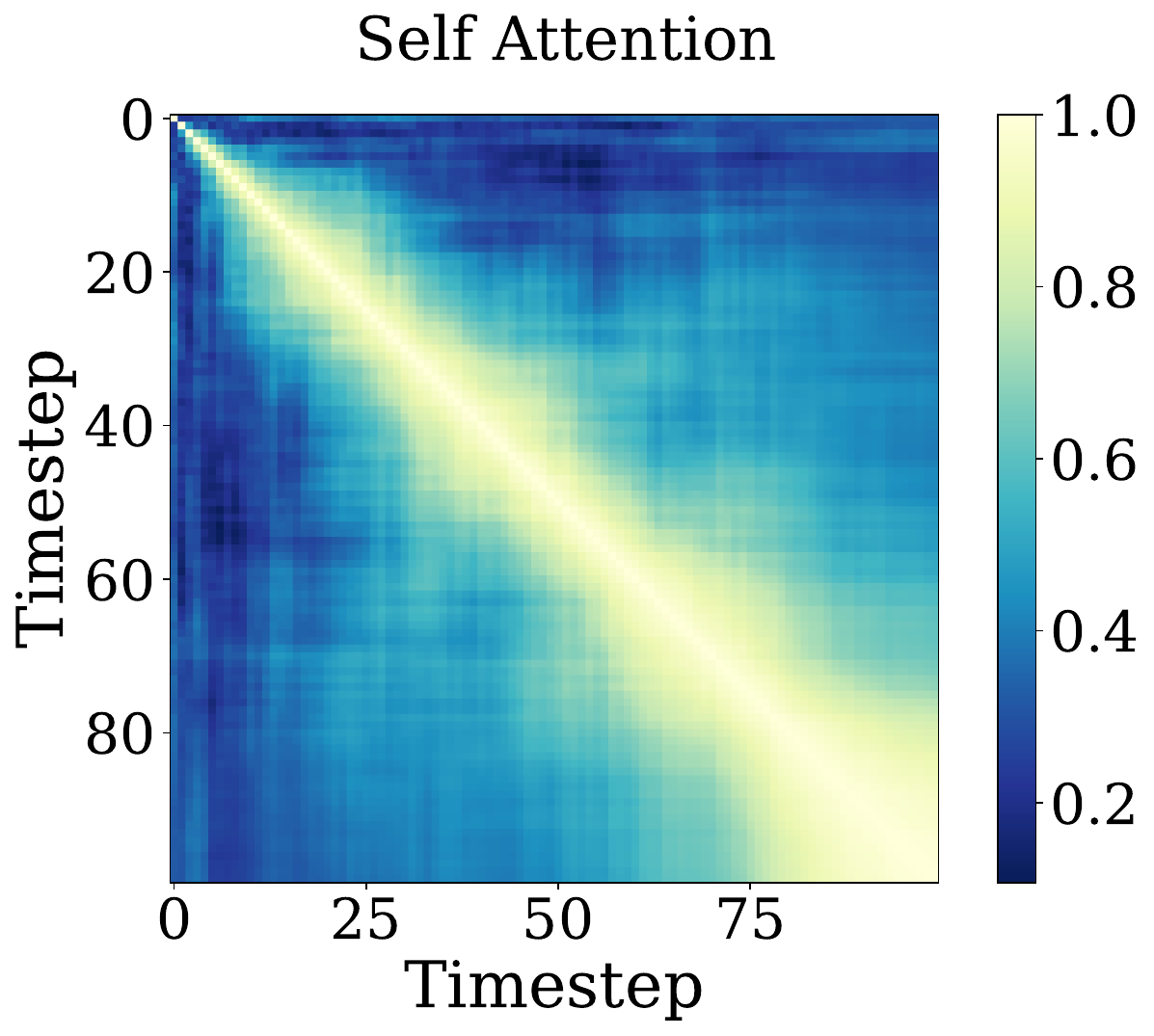}
        \includegraphics[width=0.31\textwidth]{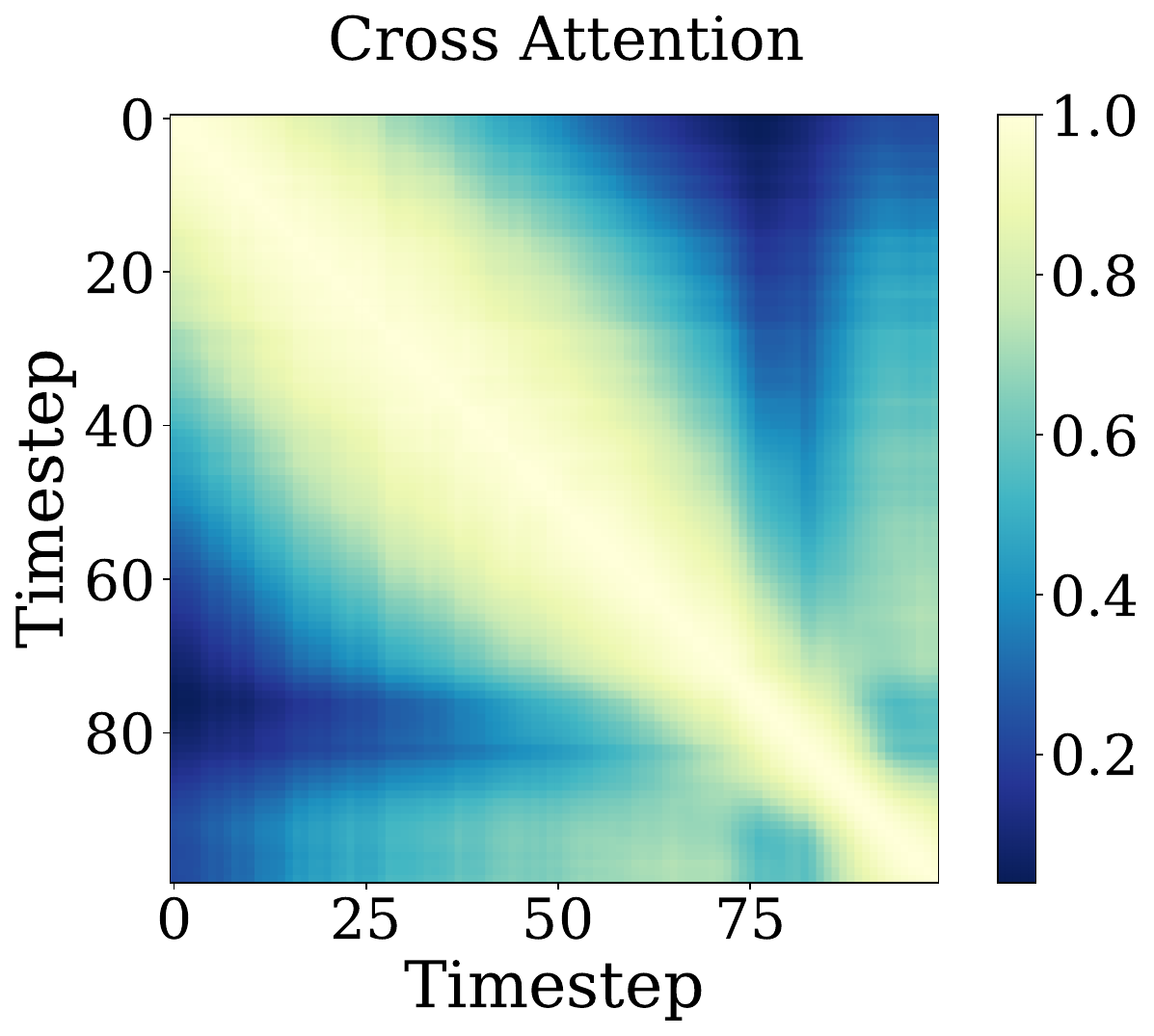}
        \includegraphics[width=0.31\textwidth]{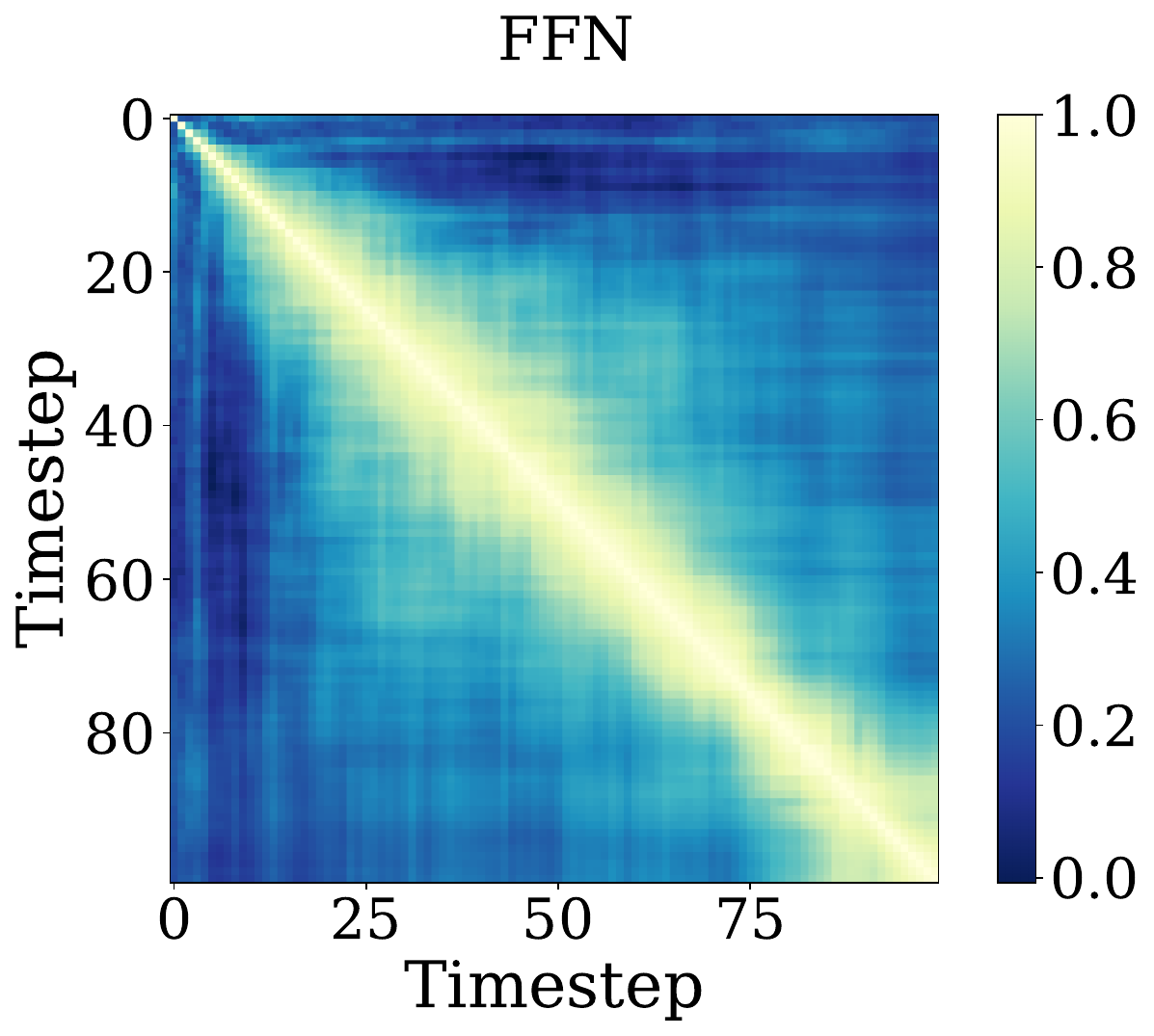}
        \caption{Feature similarities across timesteps in the second layer.}
        \label{fig:layer1_similarity}
    \end{subfigure}
    
    % \caption{Activation Similarities. Activation similarity matrices of blocks in different decoder layers. (Part 1/2)}
    \label{fig:insight_observation_part1}
% \end{figure}

% % \clearpage  % 强制分页    

% \begin{figure}[H]
%     \ContinuedFloat  % 继续前一个图的编号
%     \centering    
    \begin{subfigure}[b]{\textwidth}
        \centering
        \includegraphics[width=0.31\textwidth]{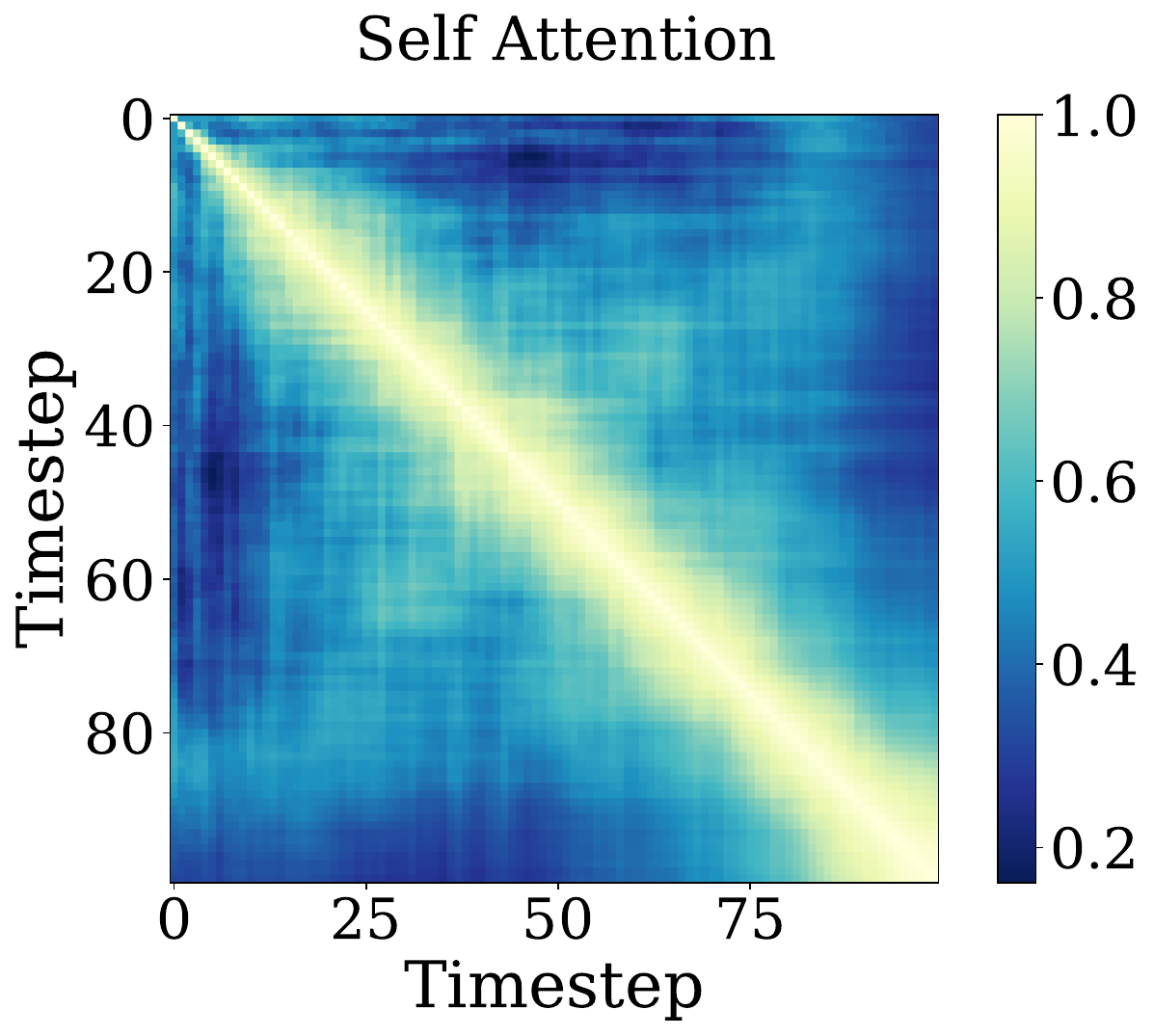}
        \includegraphics[width=0.31\textwidth]{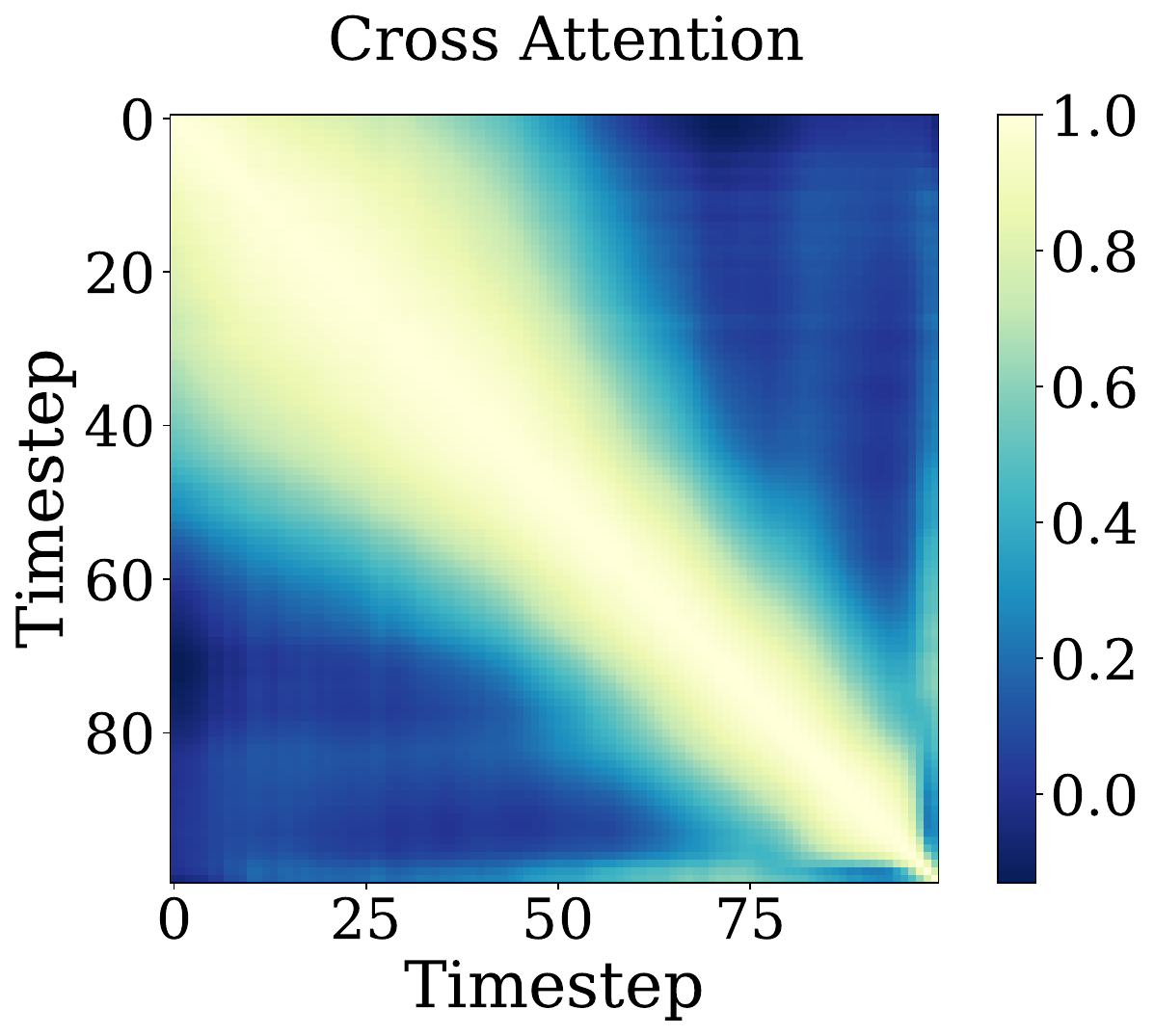}
        \includegraphics[width=0.31\textwidth]{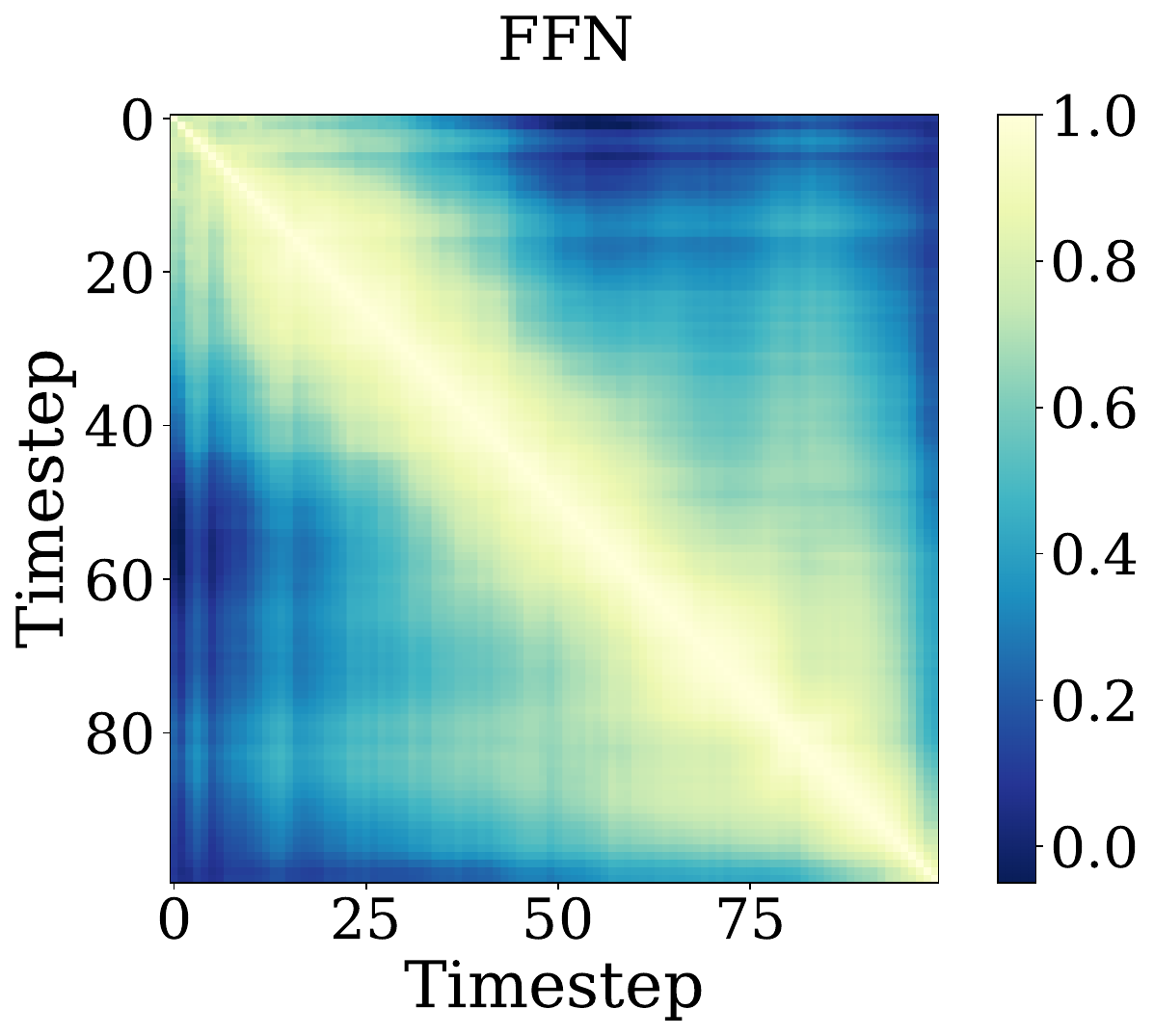}
        \caption{Feature similarities across timesteps in the third layer.}
        \label{fig:layer2_similarity}
    \end{subfigure}
    \begin{subfigure}[b]{\textwidth}
        \centering
        \includegraphics[width=0.31\textwidth]{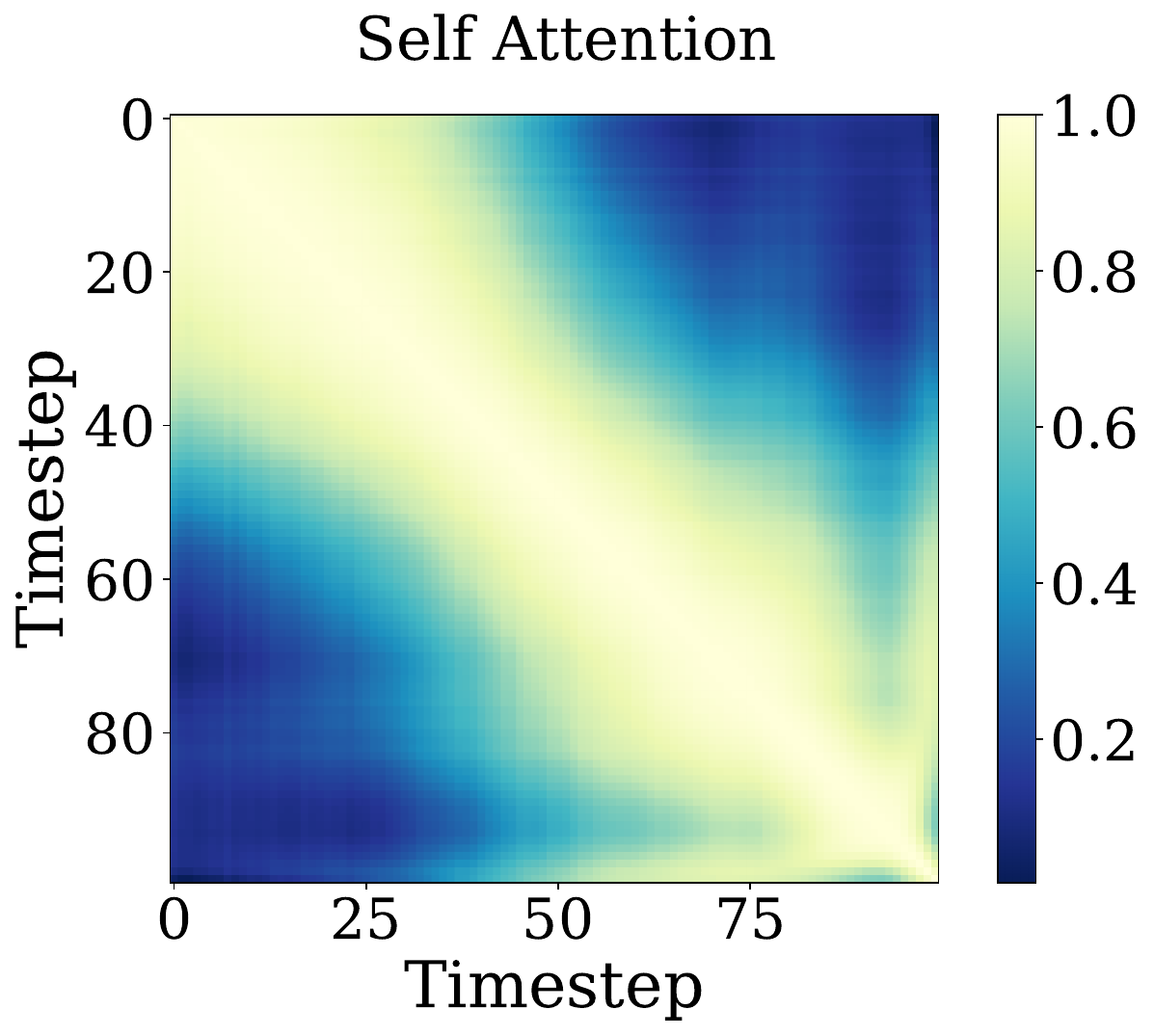}
        \includegraphics[width=0.31\textwidth]{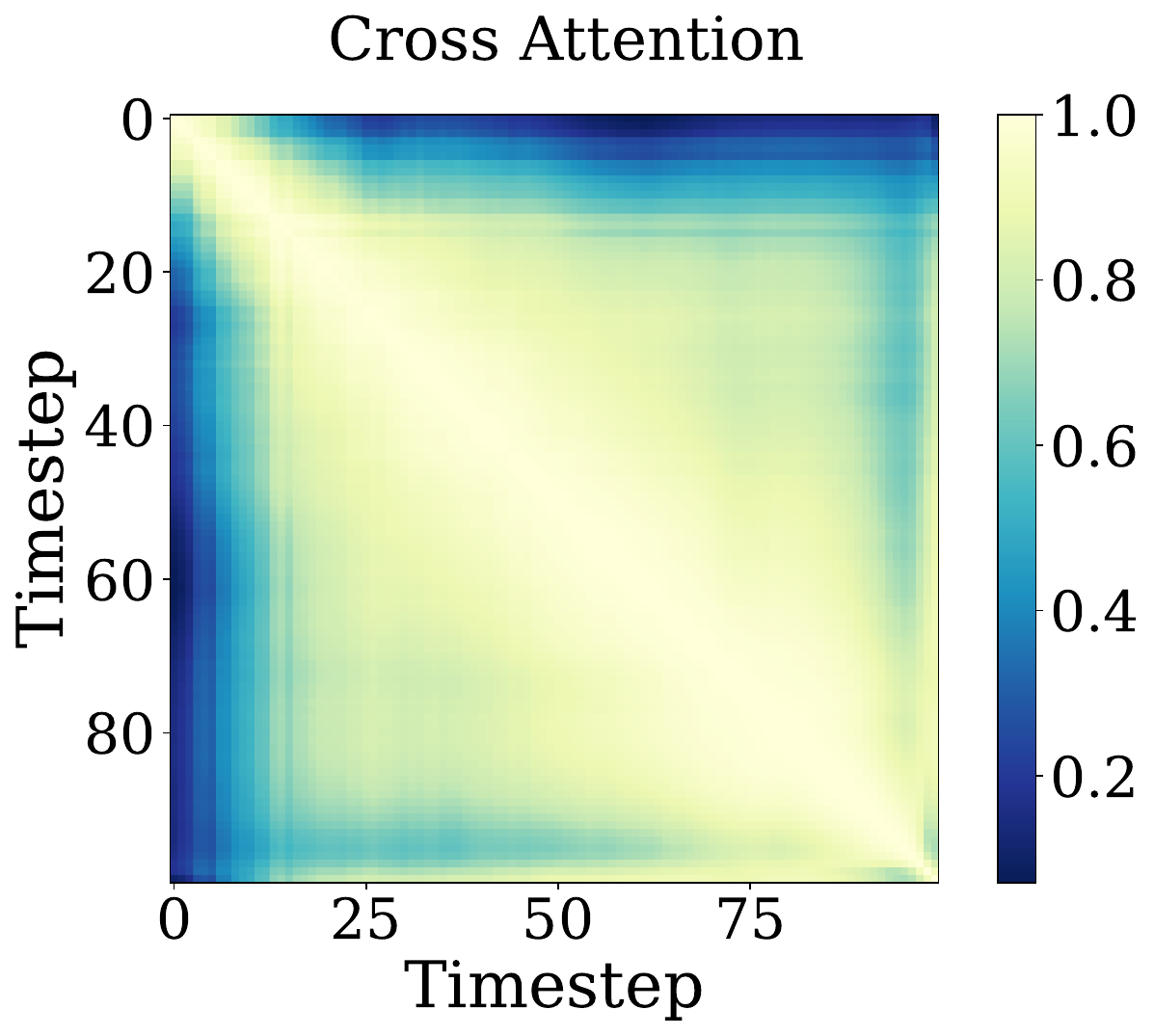}
        \includegraphics[width=0.31\textwidth]{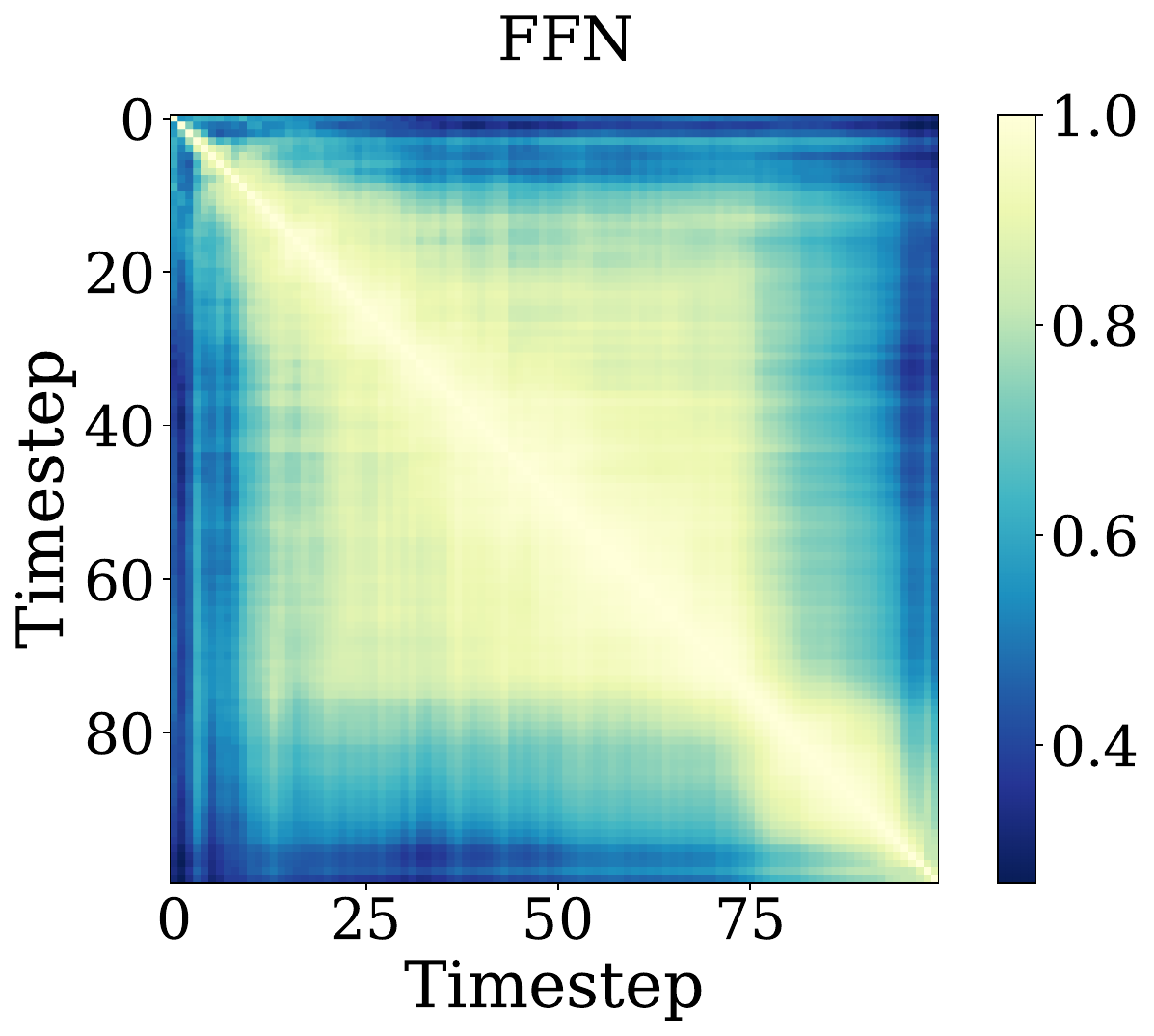}
        \caption{Feature similarities across timesteps in the fourth layer.}
        \label{fig:layer3_similarity}
    \end{subfigure}
\end{figure}
\end{minipage}
 \clearpage  % 强制分页    
\begin{minipage}{\textwidth}
\begin{figure}[H]
    \ContinuedFloat  % 继续前一个图的编号
    \centering     
    \begin{subfigure}[b]{\textwidth}
        \centering
        \includegraphics[width=0.31\textwidth]{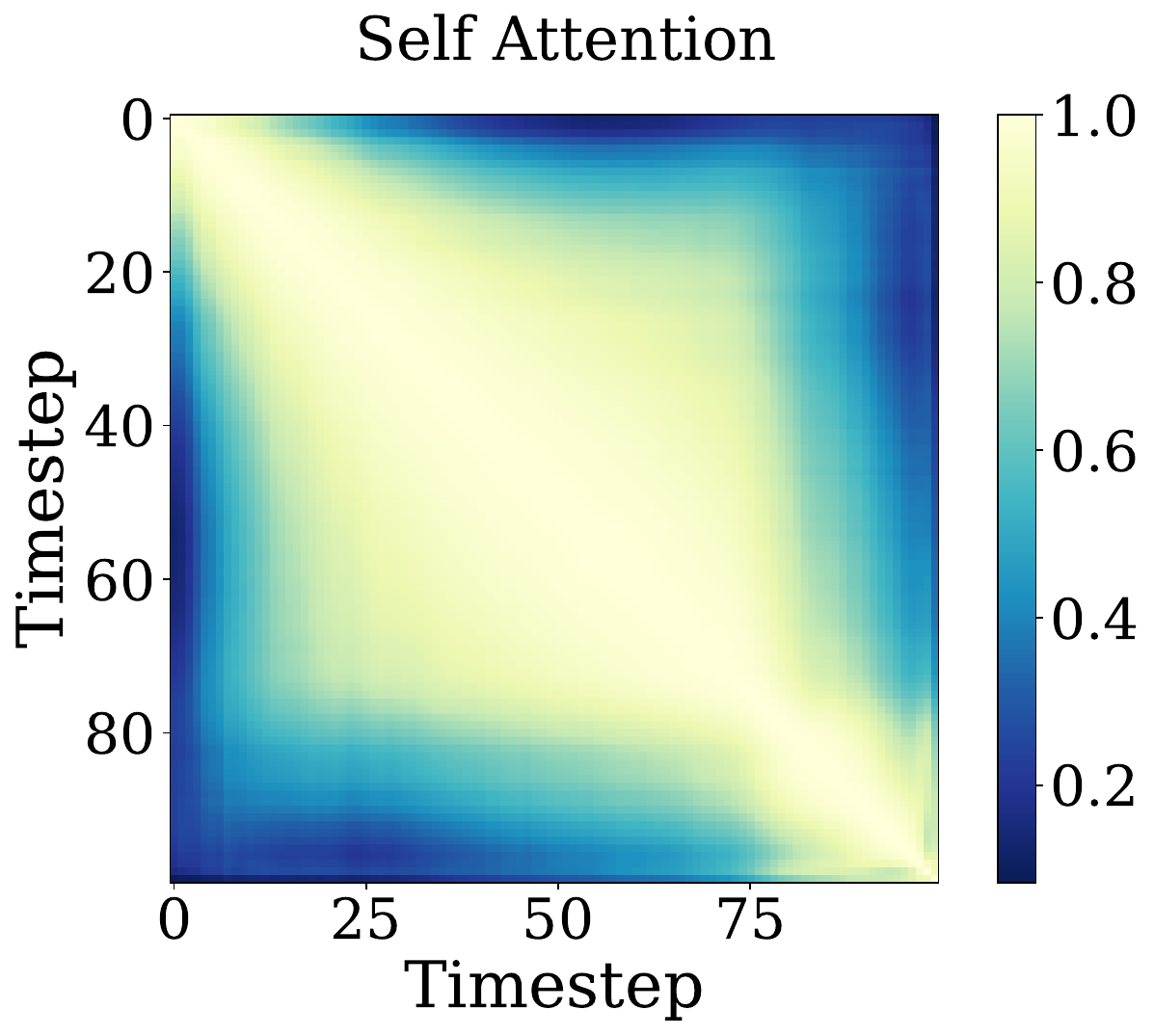}
        \includegraphics[width=0.31\textwidth]{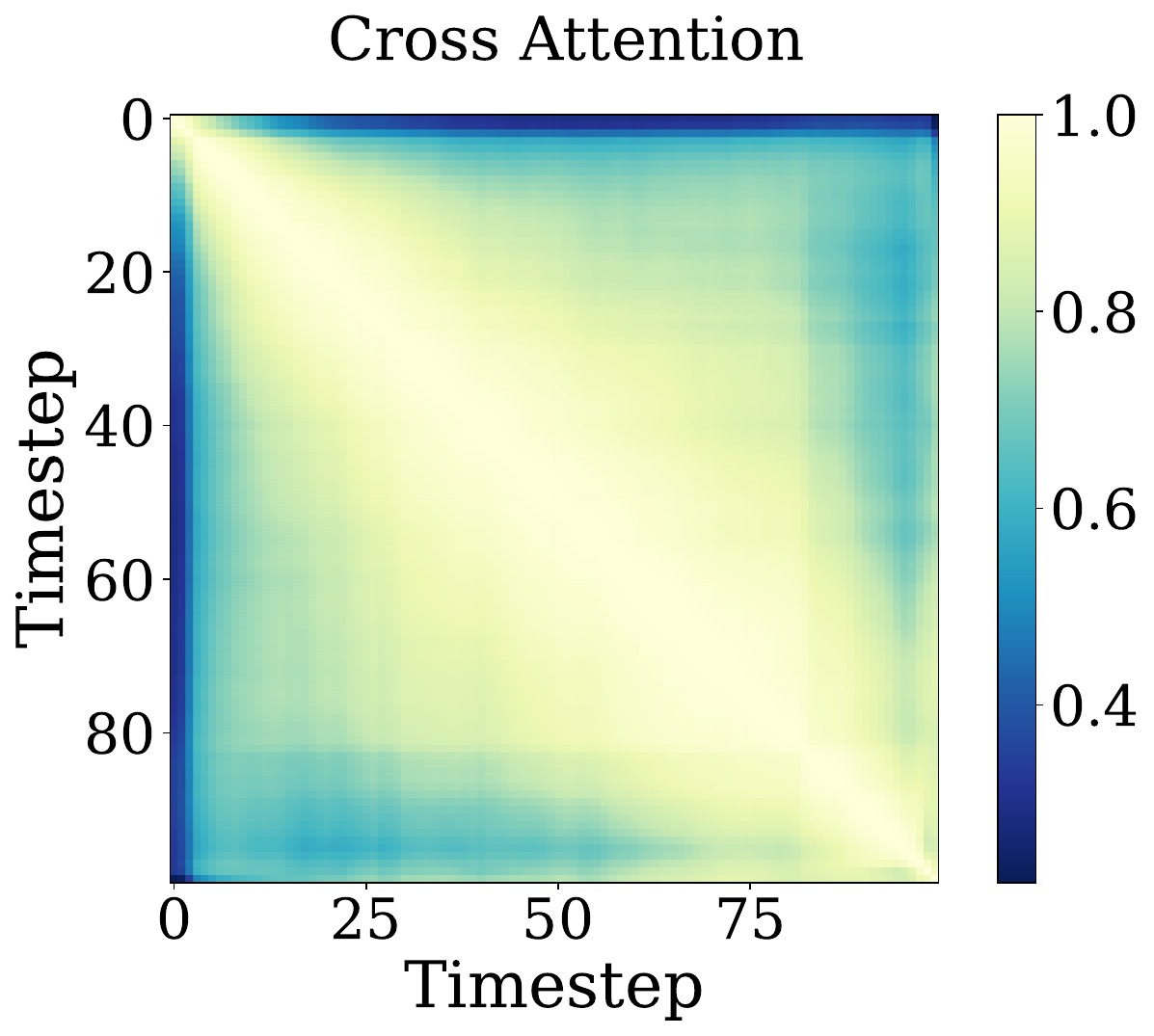}
        \includegraphics[width=0.31\textwidth]{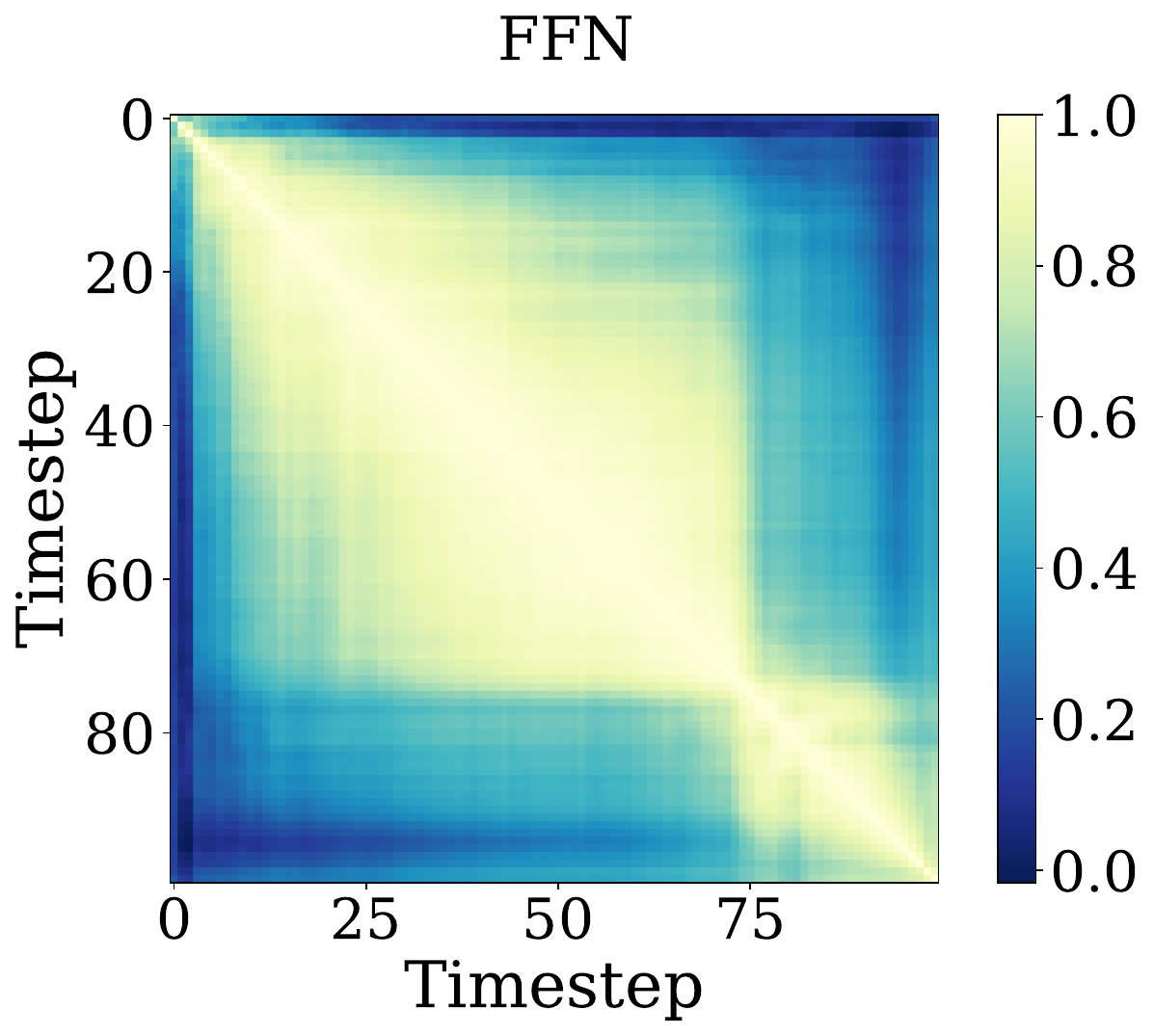}
        \caption{Feature similarities across timesteps in the fifth layer.}
        \label{fig:layer4_similarity}
    \end{subfigure}
% \end{figure}

% % \clearpage  % 强制分页    

% \begin{figure}[H]
%     \ContinuedFloat  % 继续前一个图的编号
%     \centering  
    \begin{subfigure}[b]{\textwidth}
        \centering
        \includegraphics[width=0.31\textwidth]{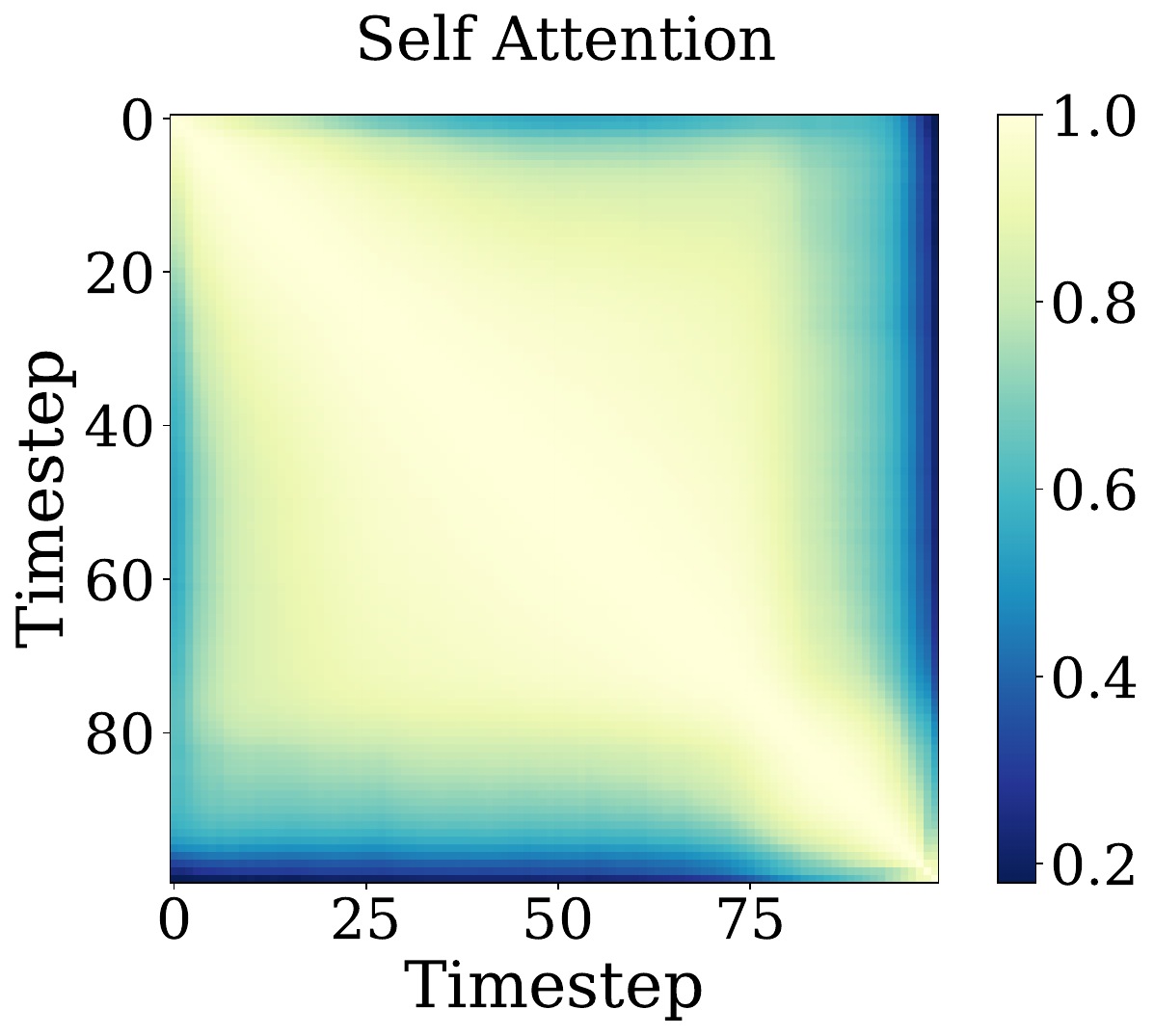}
        \includegraphics[width=0.31\textwidth]{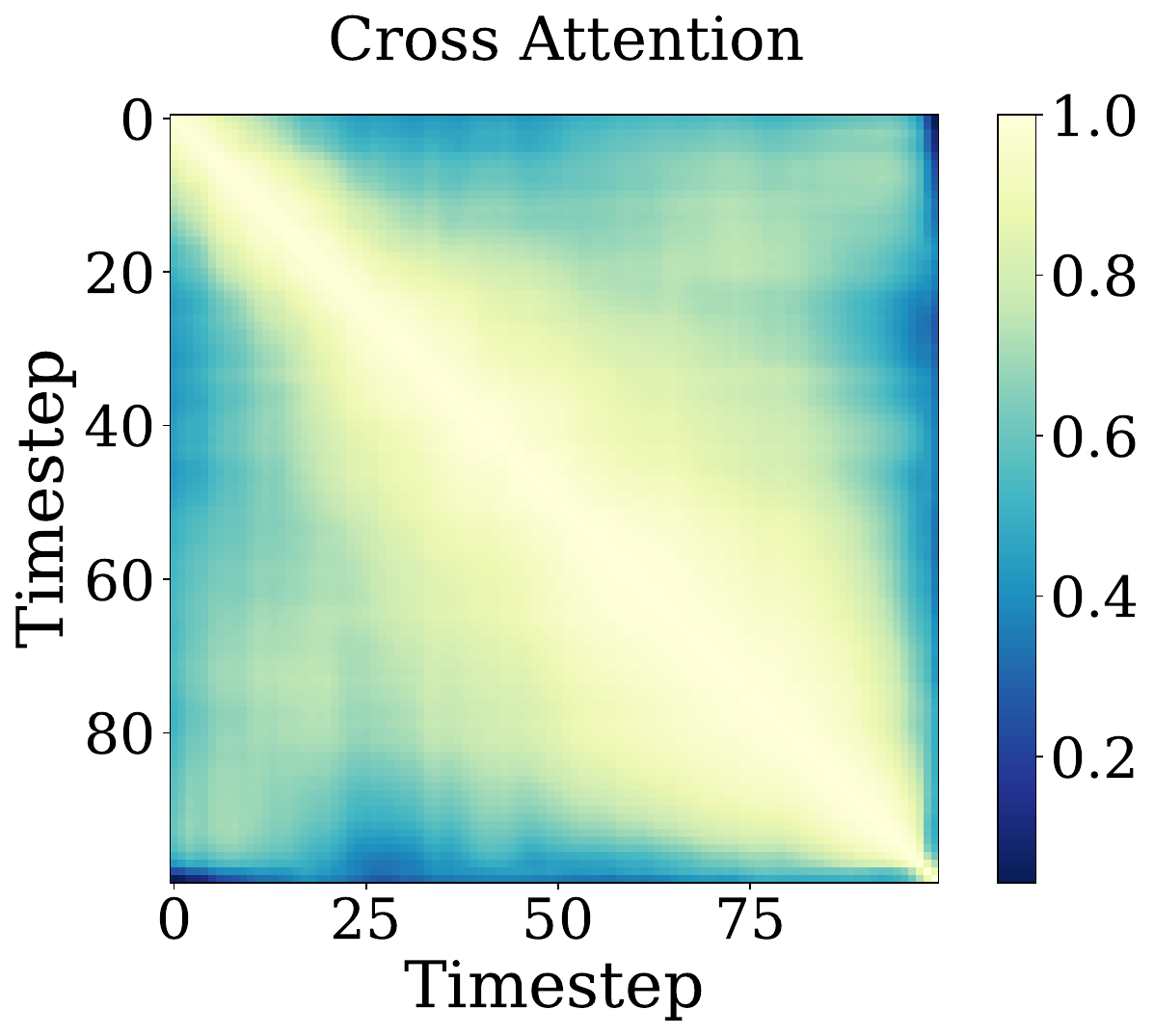}
        \includegraphics[width=0.31\textwidth]{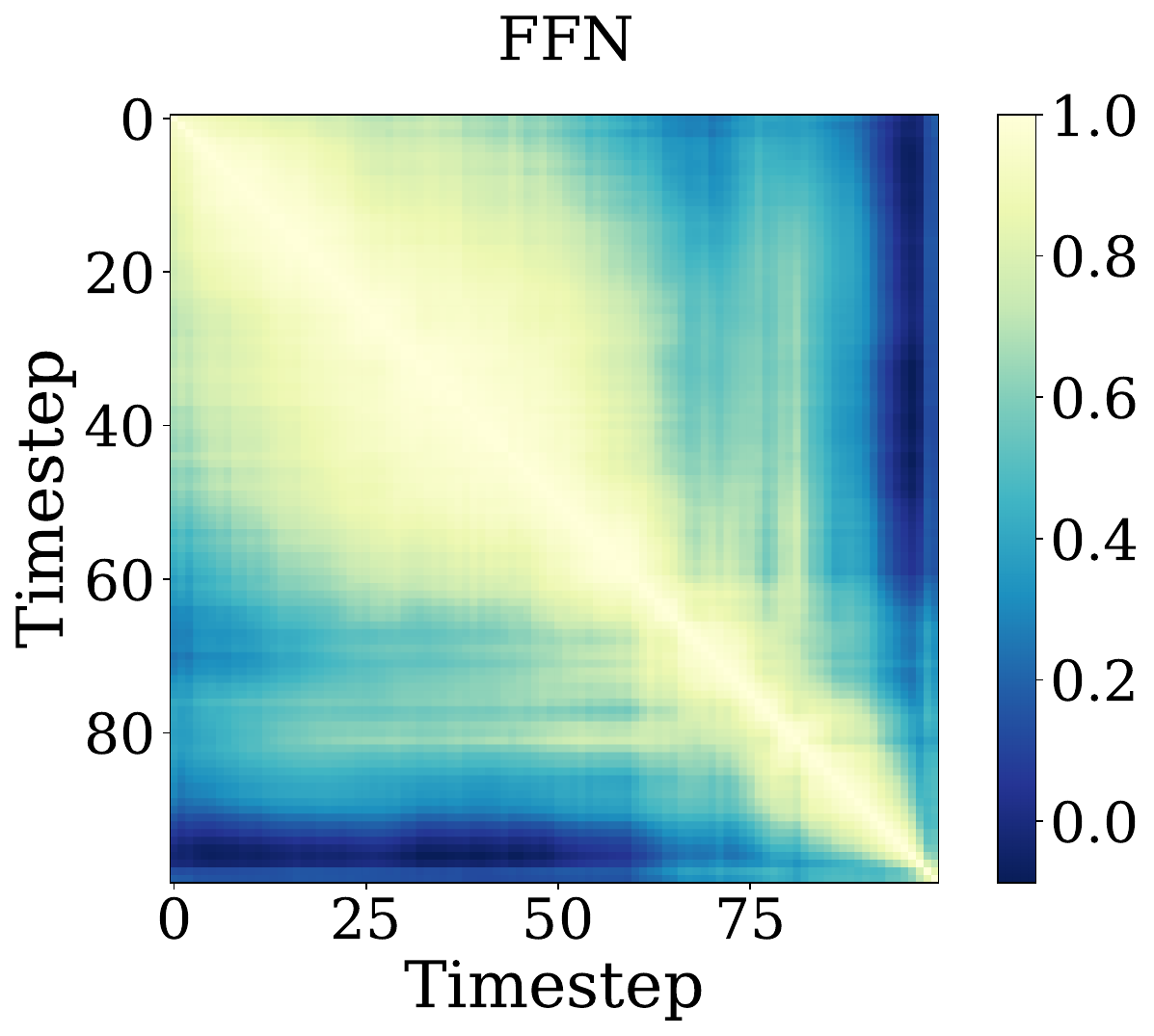}
        \caption{Feature similarities across timesteps in the sixth layer.}
        \label{fig:layer5_similarity}
    \end{subfigure}

    \begin{subfigure}[b]{\textwidth}
        \centering
        \includegraphics[width=0.31\textwidth]{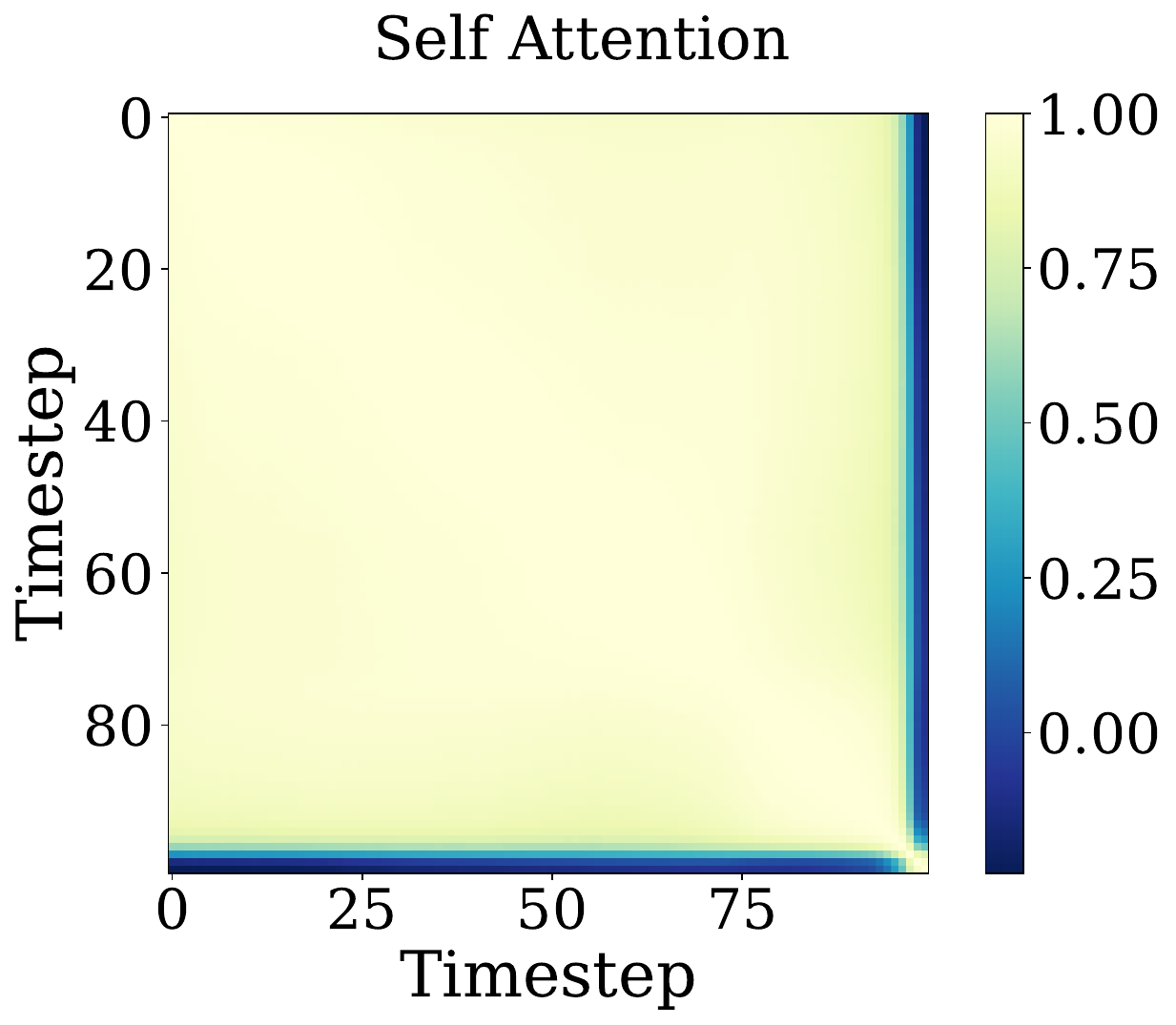}
        \includegraphics[width=0.31\textwidth]{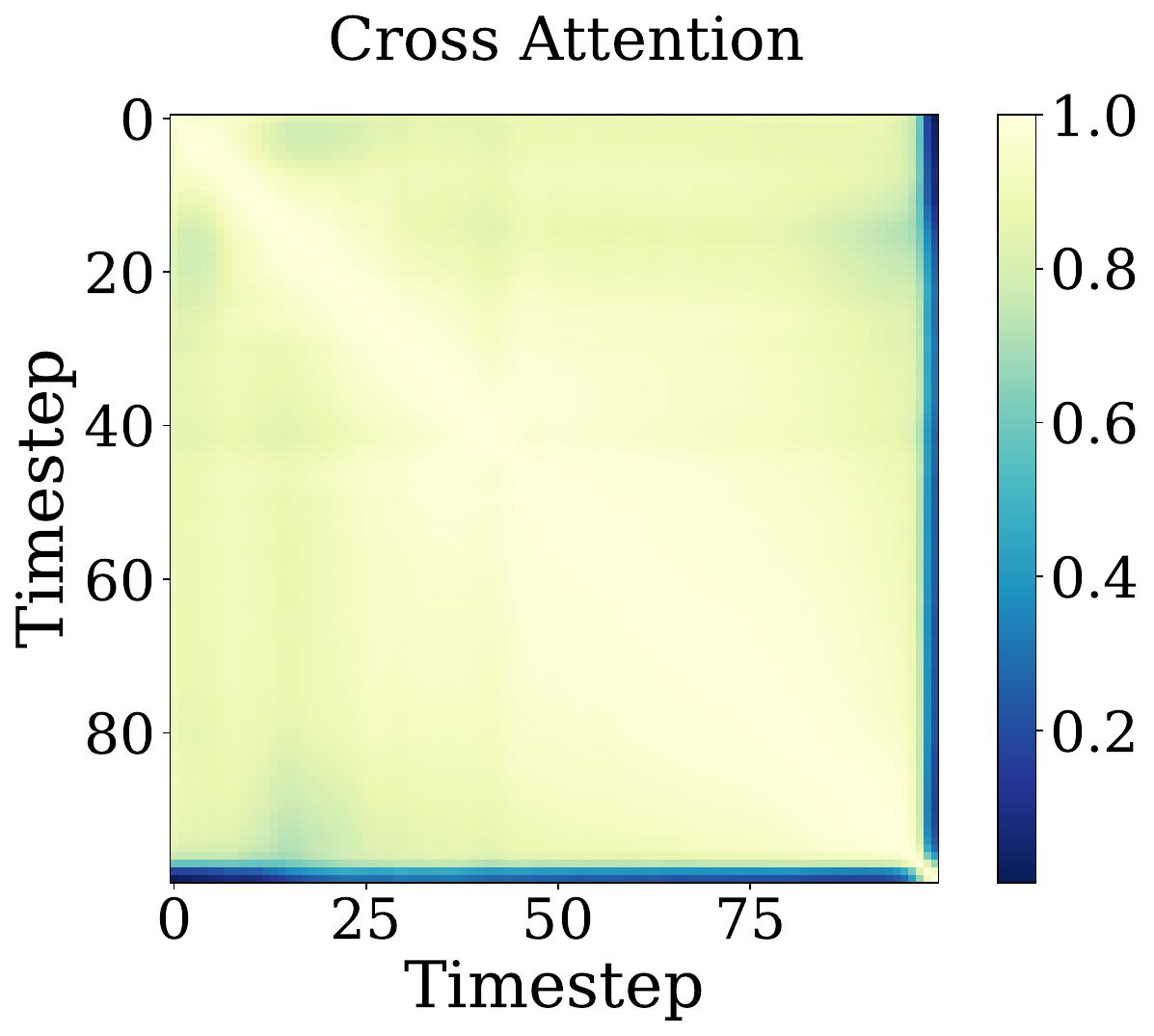}
        \includegraphics[width=0.31\textwidth]{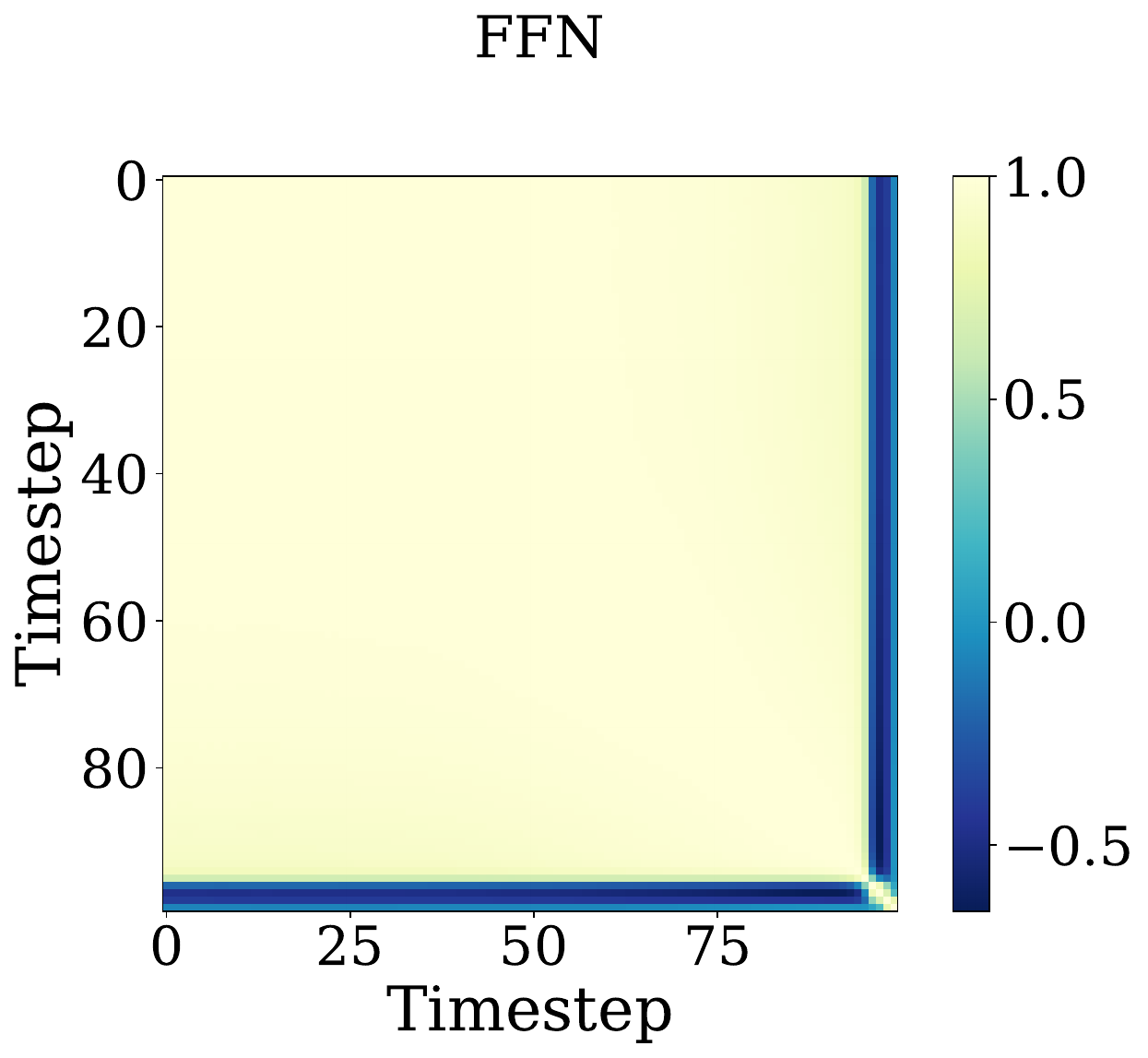}
        \caption{Feature similarities across timesteps in the seventh layer.}
        \label{fig:layer6_similarity}
    \end{subfigure}
% \end{figure}
% % \minipage{1\textwidth}
% \begin{figure}[H]
%     \ContinuedFloat  % 继续前一个图的编号
%     \centering      
    \begin{subfigure}[b]{\textwidth}
        \centering
        \includegraphics[width=0.31\textwidth]{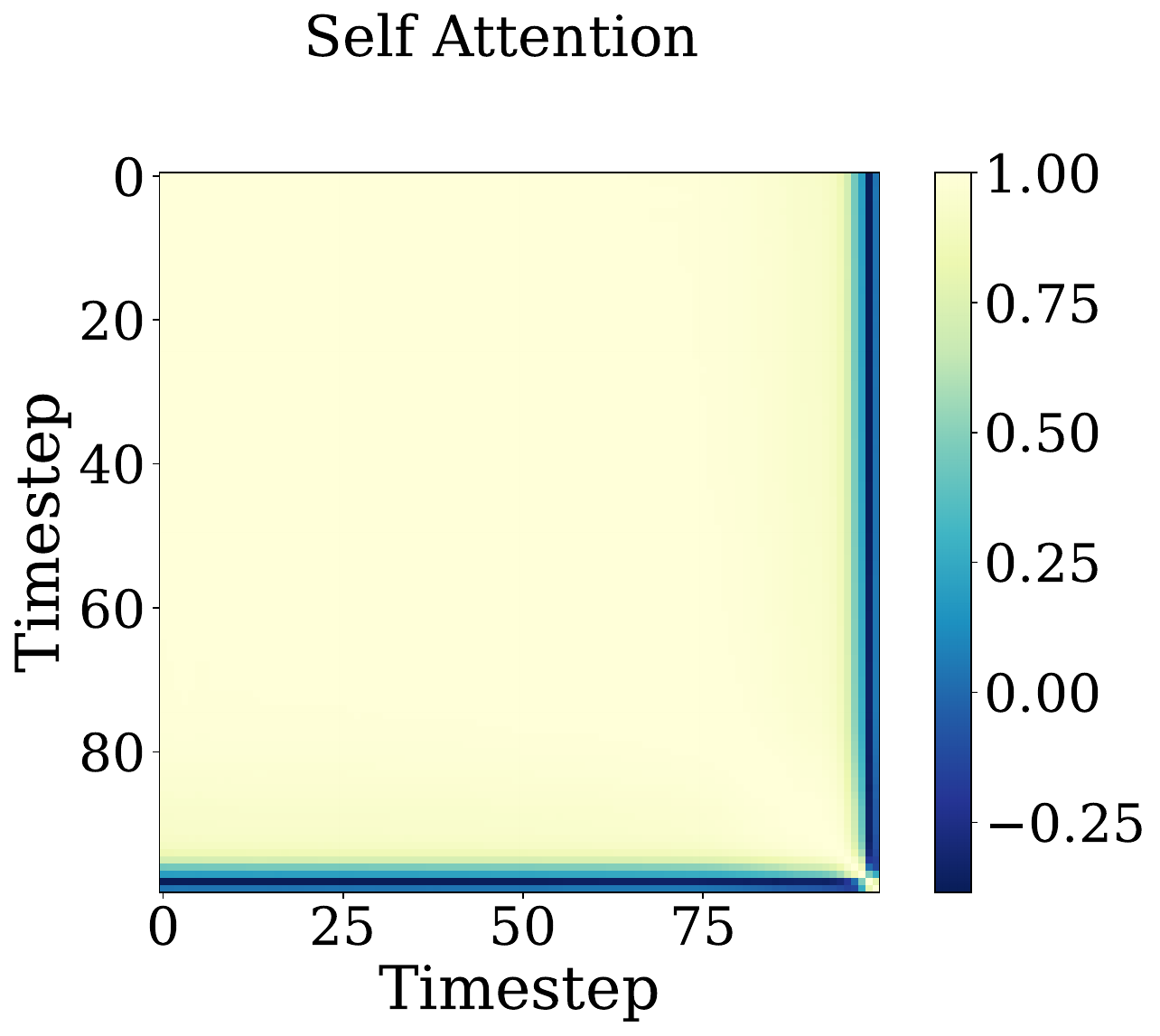}
        \includegraphics[width=0.31\textwidth]{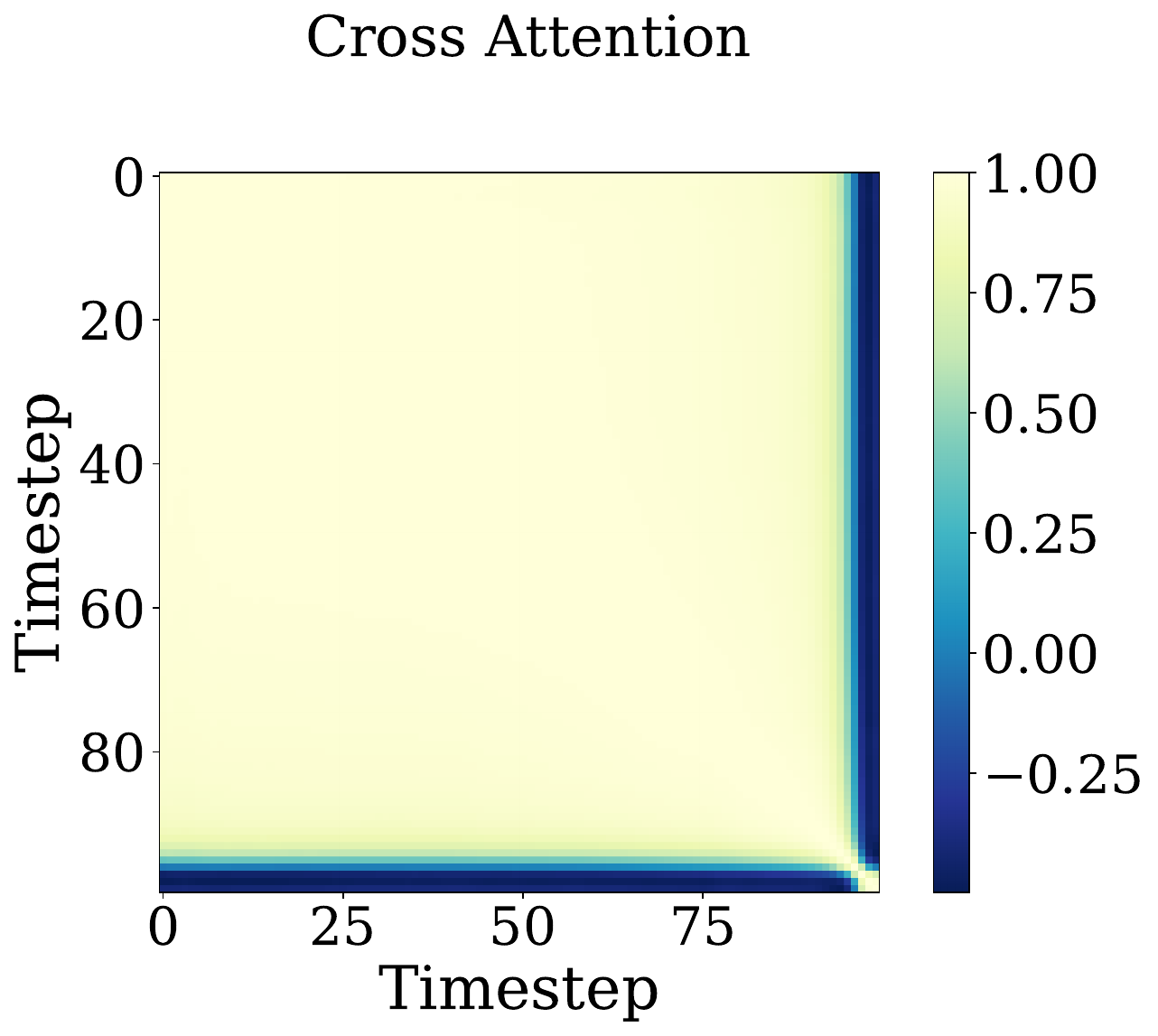}
        \includegraphics[width=0.31\textwidth]{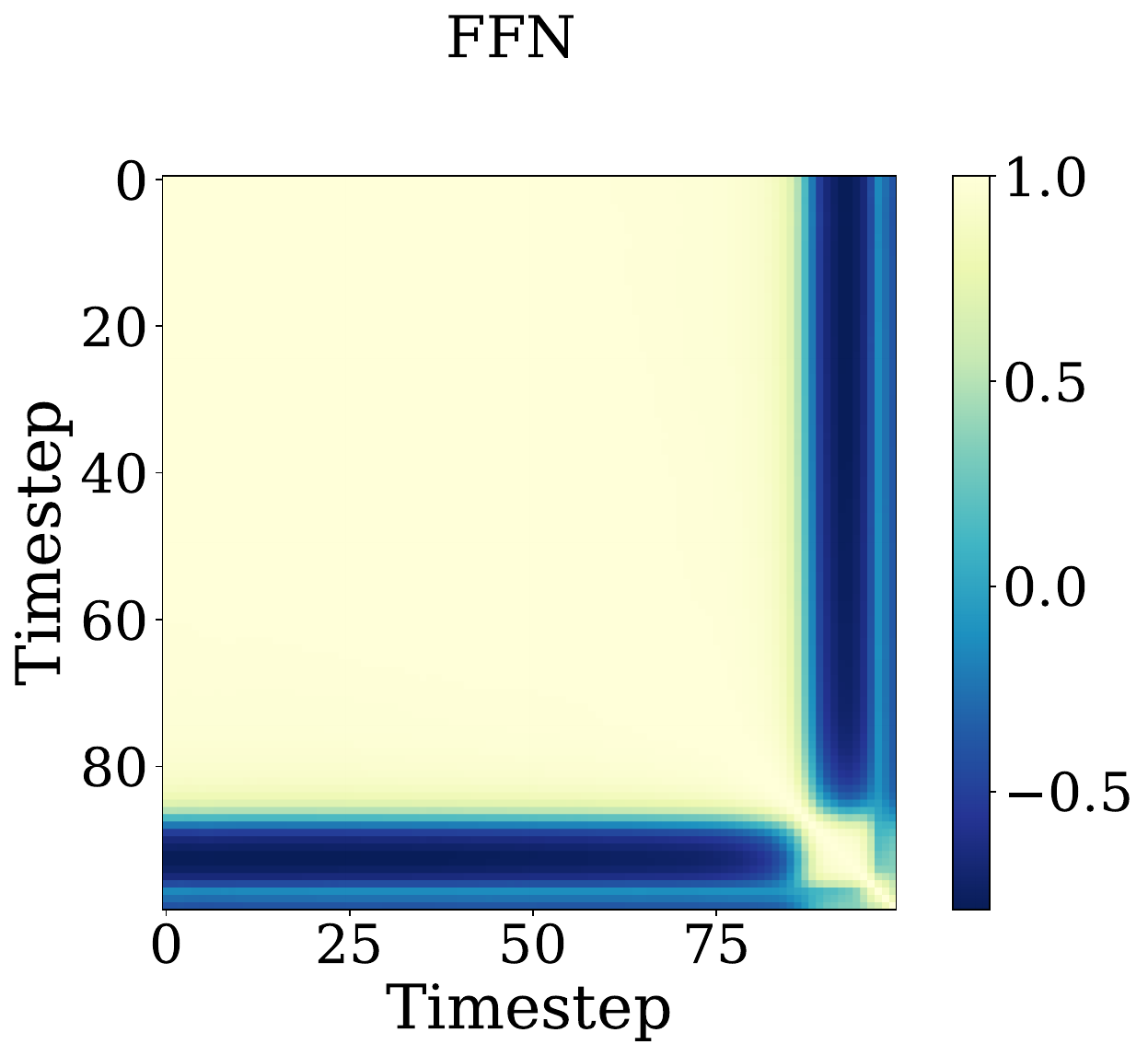}
        \caption{Feature similarities across timesteps in the eighth layer.}
        \label{fig:layer7_similarity}
    \end{subfigure}
    
    \caption{Feature similarities across timesteps for different blocks in each decoder layer.}
    \label{fig:insight_observation_part2}
\end{figure}
% \clearpage
% \vspace{-6pt}
To provide more details on the block-wise temporal similarity pattern, we compute cosine similarities under different intervals of consecutive steps in timestep $t$ and earlier timesteps $t{-}k$ for various values of $k$ (1, 5, 10, 15, 20). As shown in Fig.~\ref{fig: insight observation part 2}, different blocks exhibit distinct temporal similarity patterns. Some blocks maintain high similarity across long horizons, indicating few updates, while others show rapid drops in similarity even at short intervals, suggesting more update steps. This suggests the necessity of a block-wise schedule.

\end{minipage}
\clearpage
\begin{minipage}{\textwidth}
\begin{figure}[H]
    \centering
    \begin{subfigure}[b]{\textwidth}
        \centering
        \includegraphics[width=0.31\textwidth]{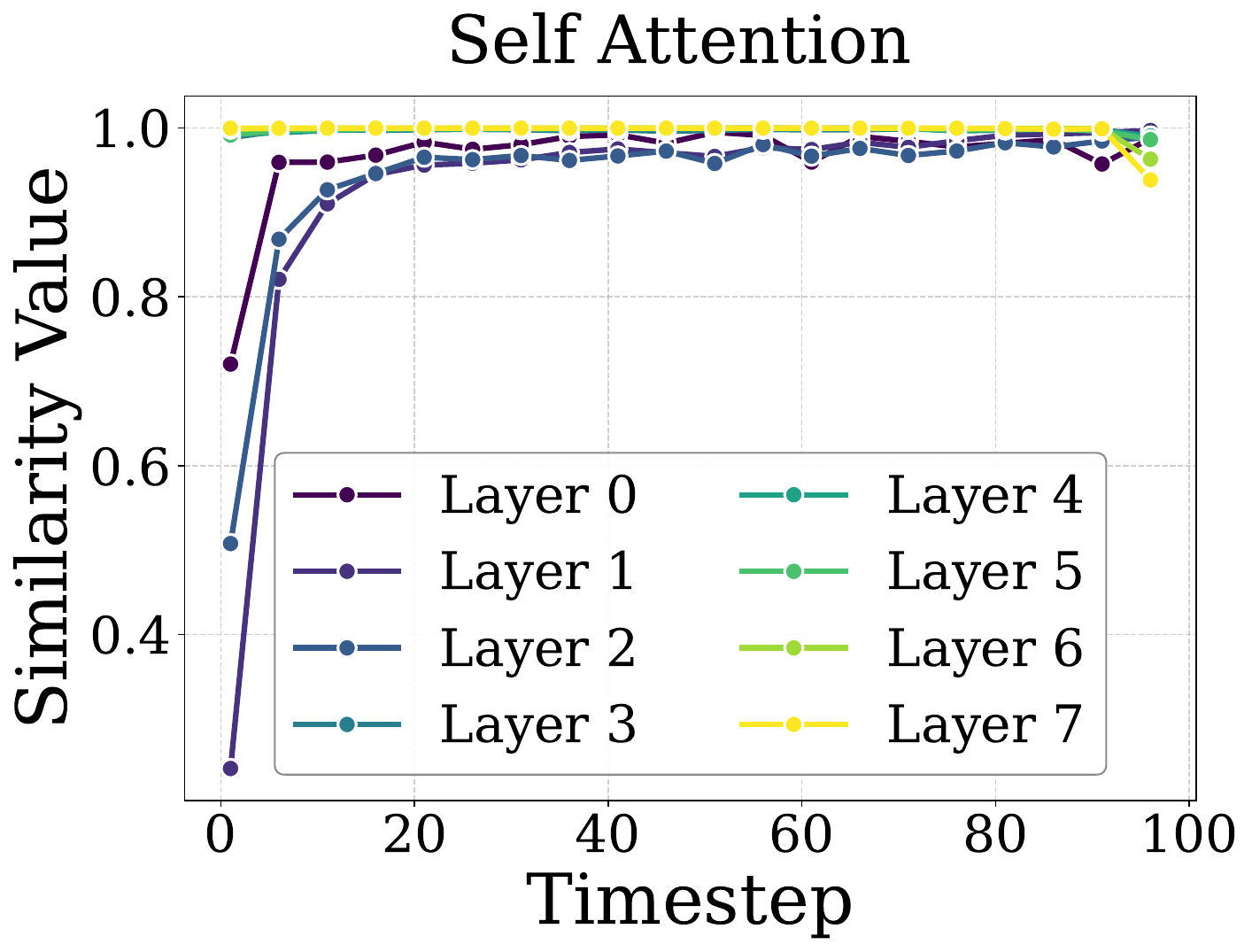}
        \includegraphics[width=0.31\textwidth]{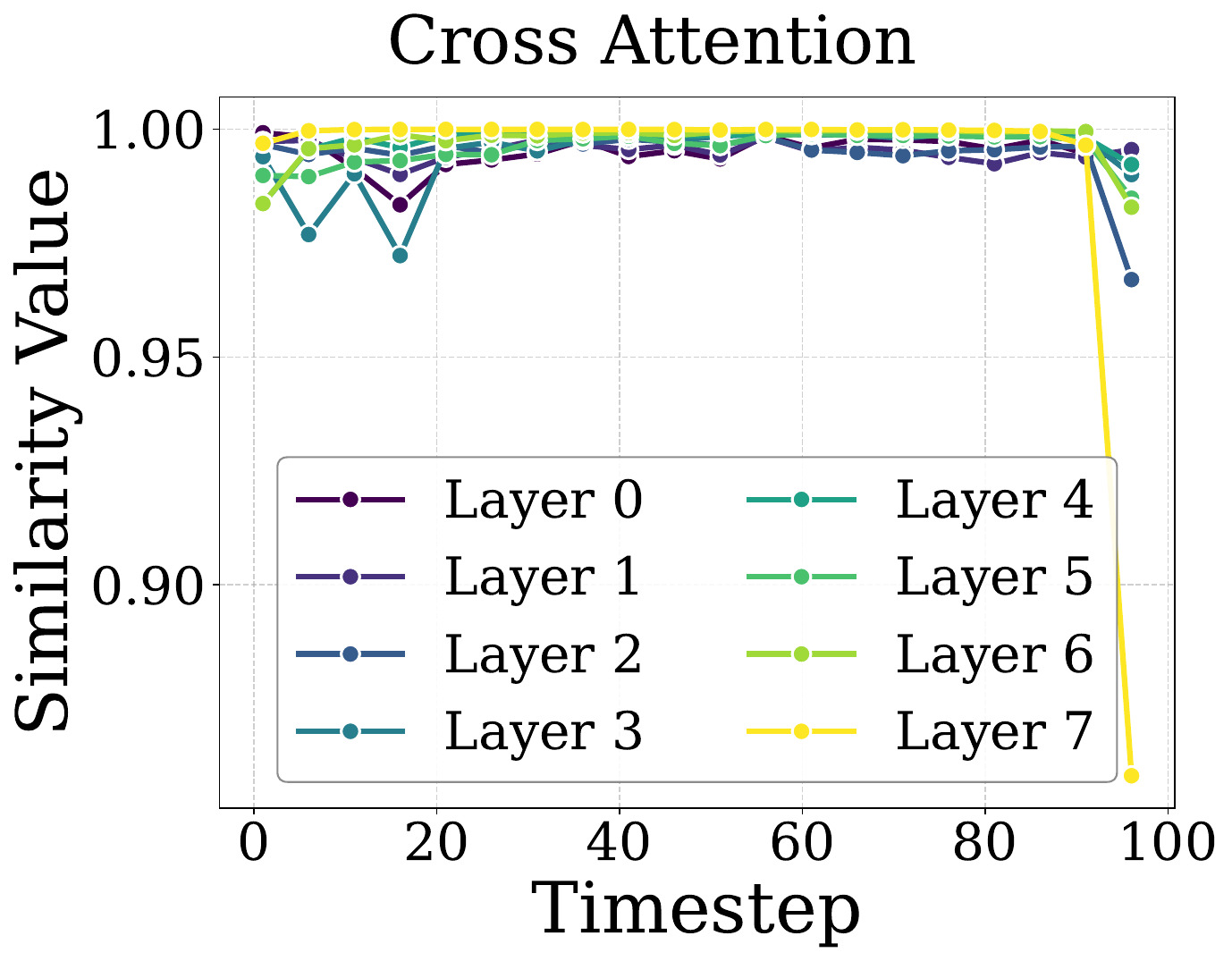}
        \includegraphics[width=0.31\textwidth]{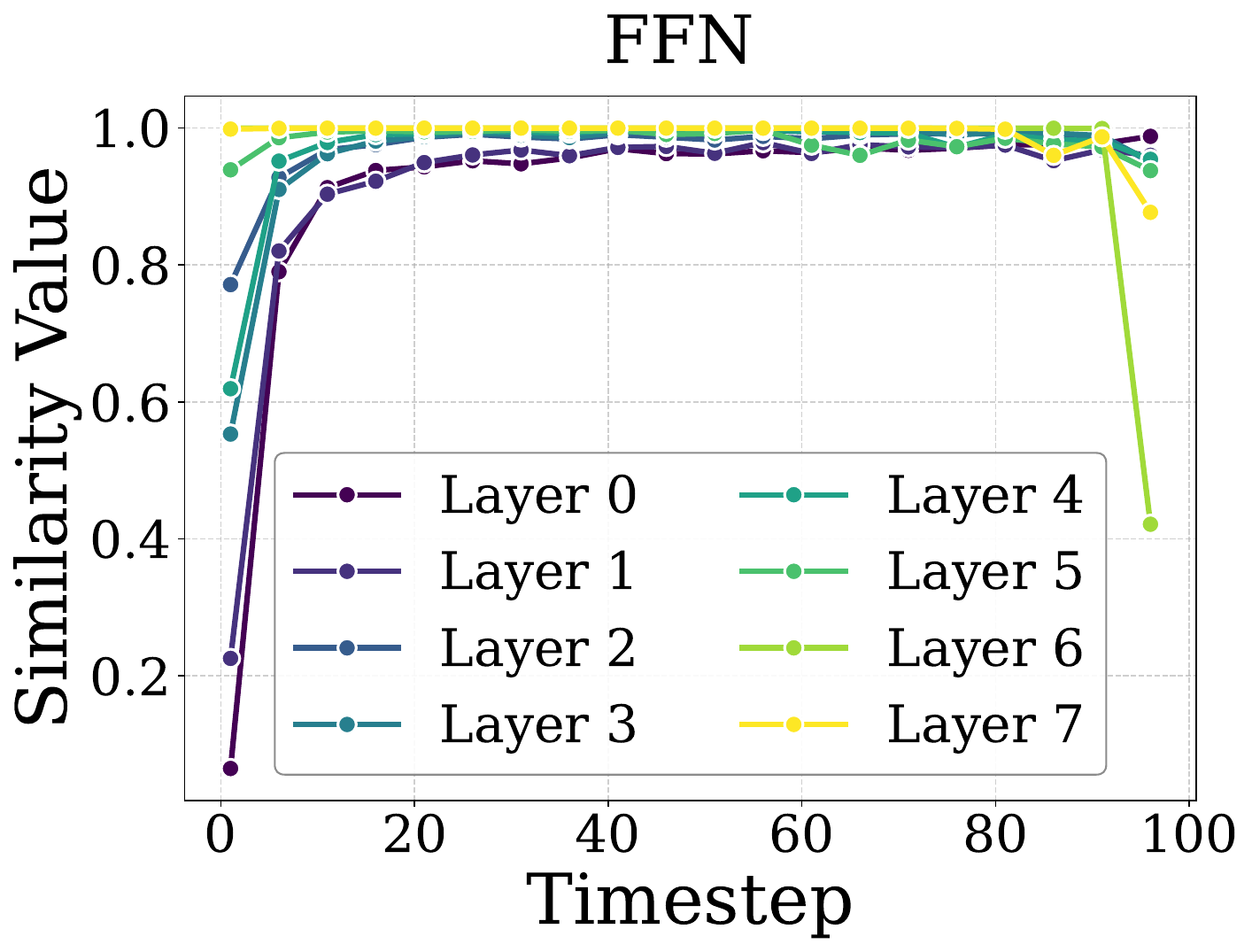}
        \caption{Similarity change curves, measured between timestep $t$ and $t{-}1$.}
        \label{fig: similarity pattern of different blocks 1}
    \end{subfigure}
    \begin{subfigure}[b]{\textwidth}
        \centering
        \includegraphics[width=0.31\textwidth]{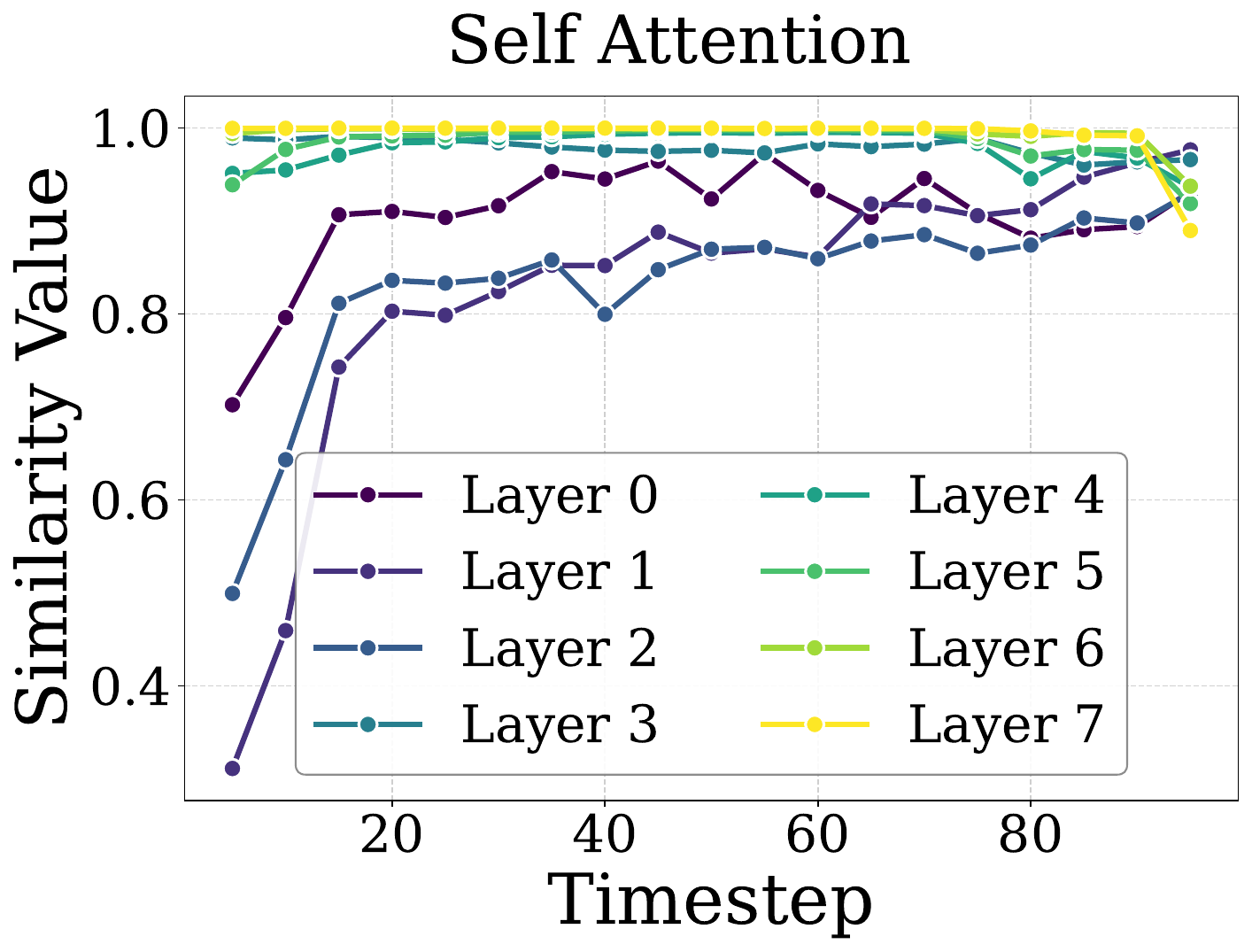}
        \includegraphics[width=0.31\textwidth]{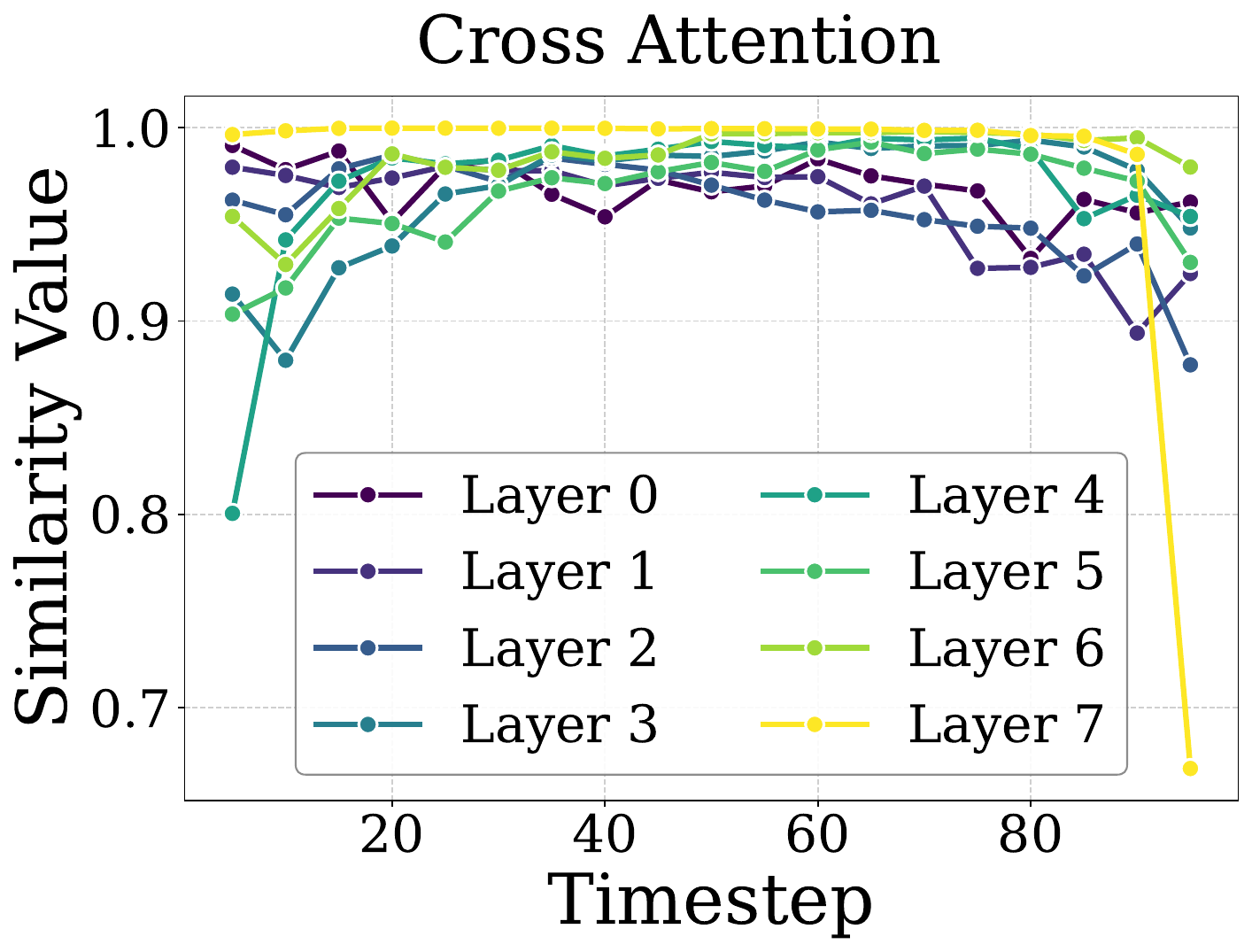}
        \includegraphics[width=0.31\textwidth]{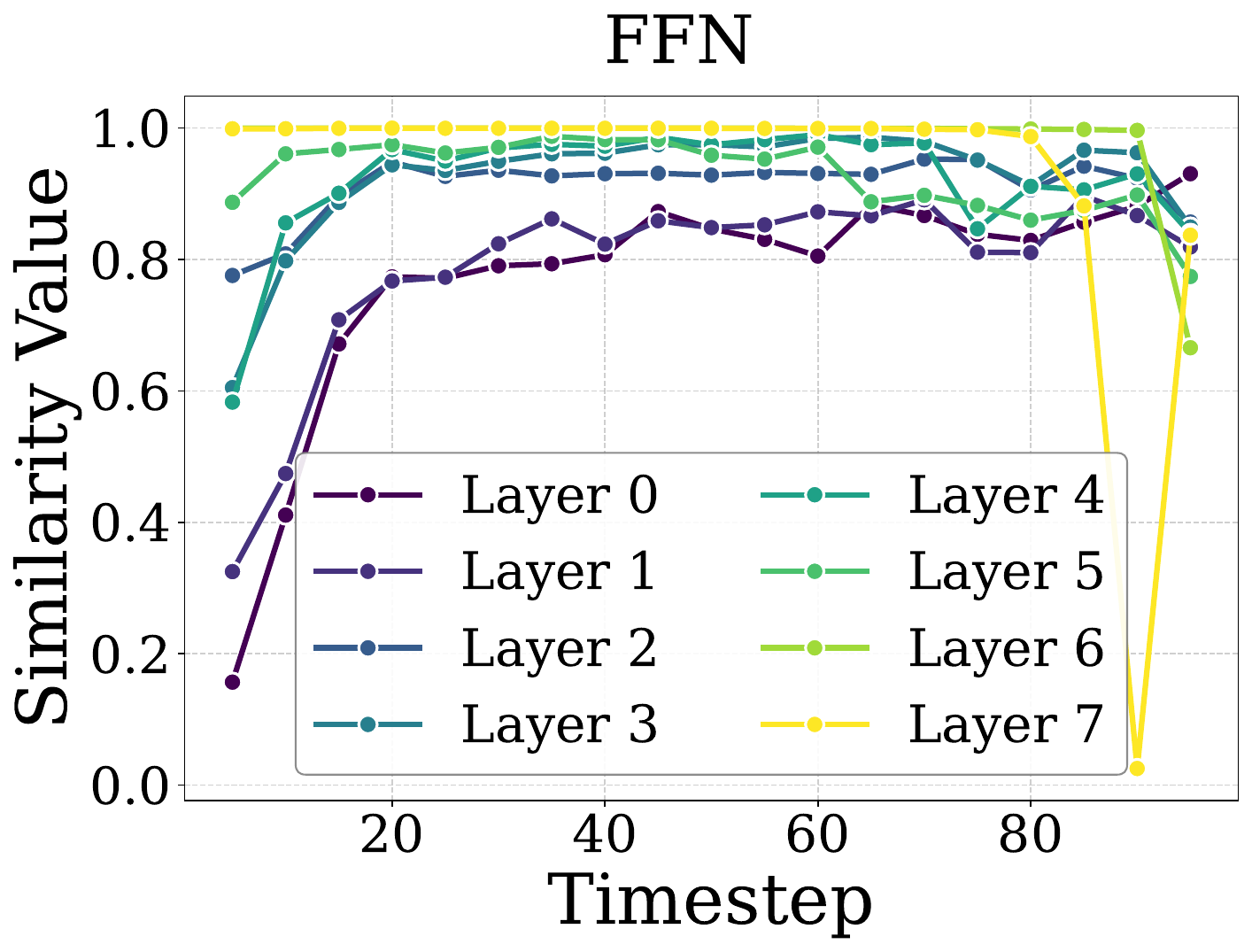}
        \caption{Similarity change curves, measured between timestep $t$ and $t{-}5$.}
        \label{fig: similarity pattern of different blocks 5}
    \end{subfigure}
       \begin{subfigure}[b]{\textwidth}
        \centering
        \includegraphics[width=0.31\textwidth]{Figures/dropout1_k10_similarity_over_time.pdf}
        \includegraphics[width=0.31\textwidth]{Figures/dropout2_k10_similarity_over_time.pdf}
        \includegraphics[width=0.31\textwidth]{Figures/dropout3_k10_similarity_over_time.pdf}
        \caption{Similarity change curves, measured between timestep $t$ and $t{-}10$.}
        \label{fig: similarity pattern of different blocks 10}
    \end{subfigure}
        \begin{subfigure}[b]{\textwidth}
        \centering
        \includegraphics[width=0.31\textwidth]{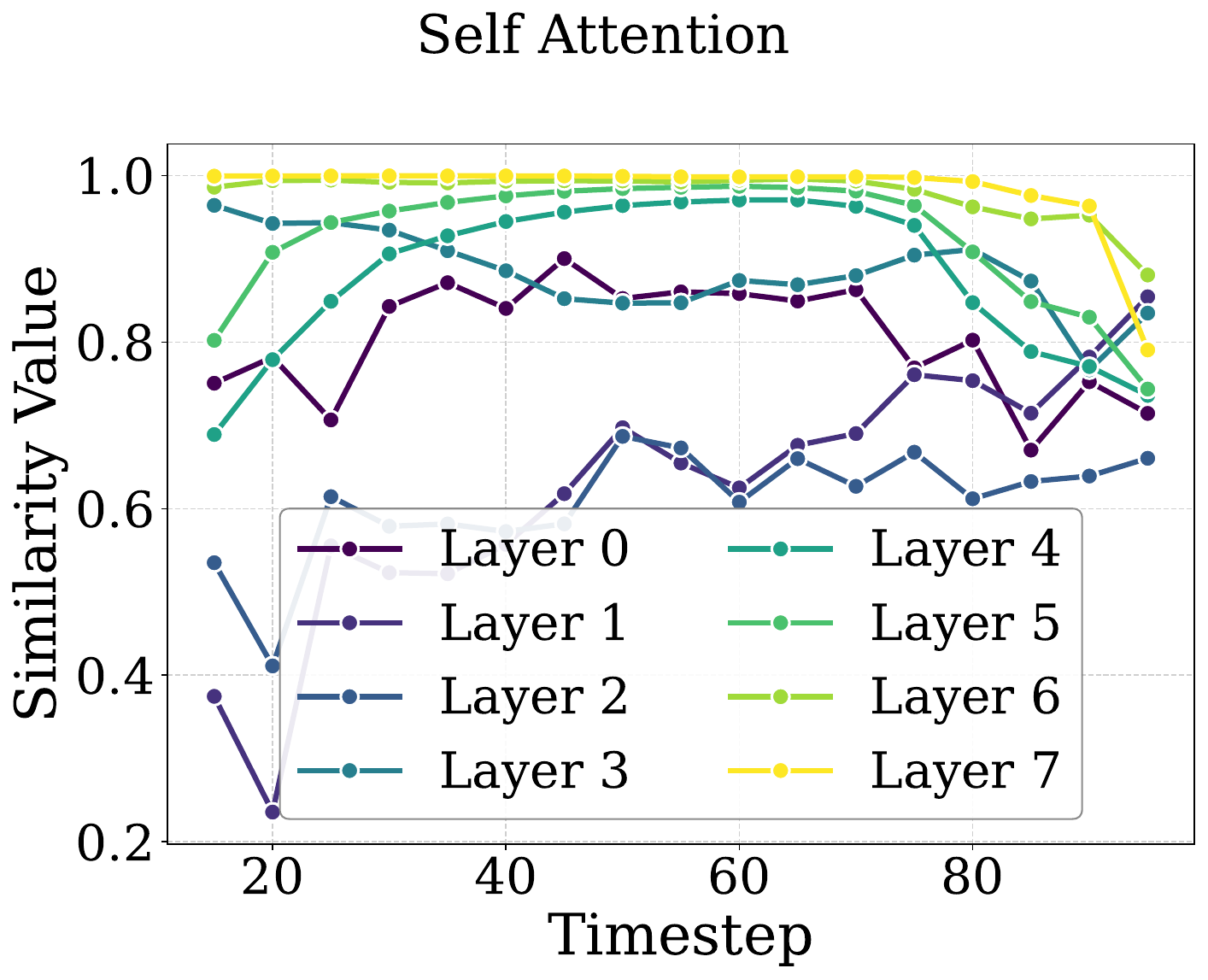}
        \includegraphics[width=0.31\textwidth]{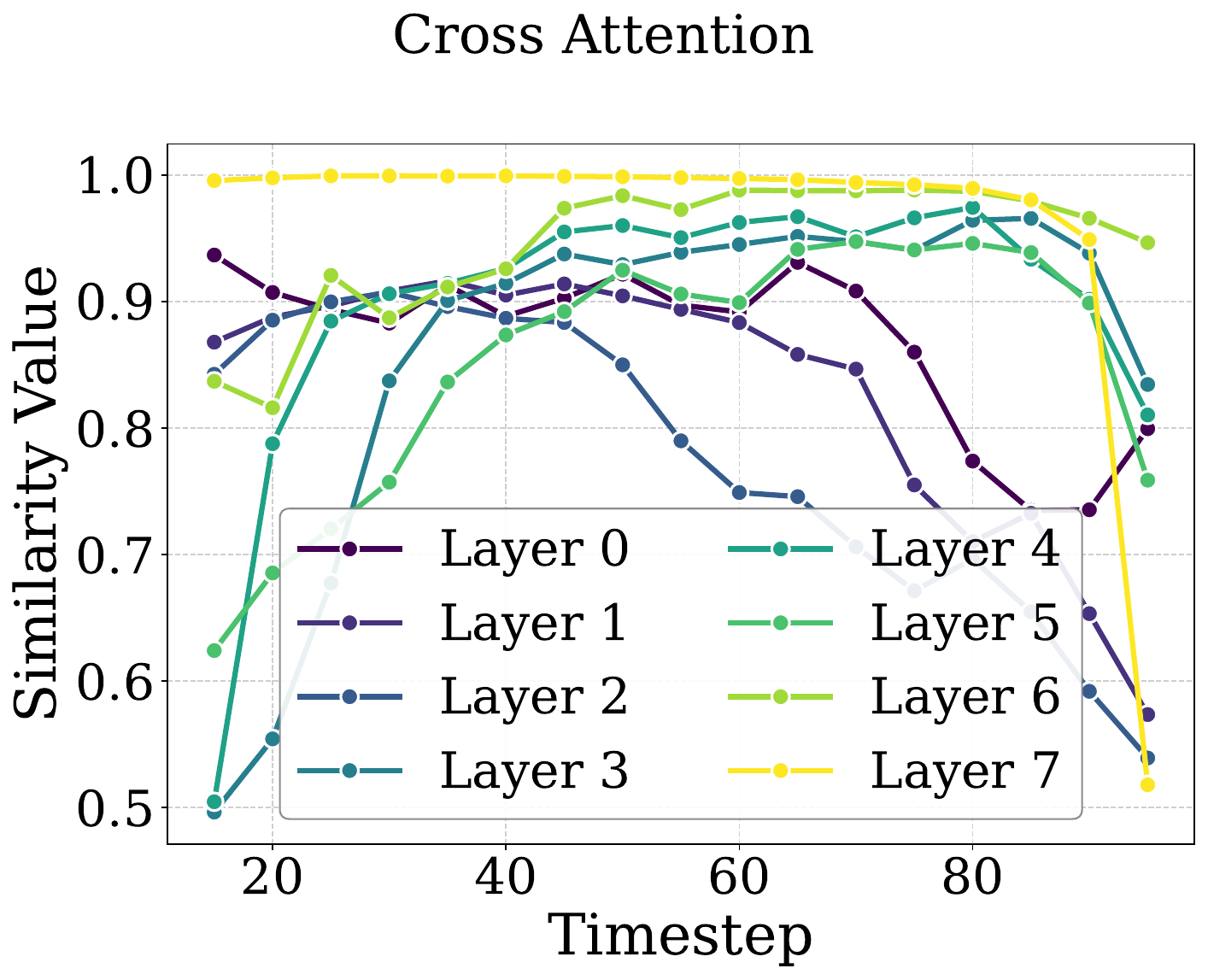}
        \includegraphics[width=0.31\textwidth]{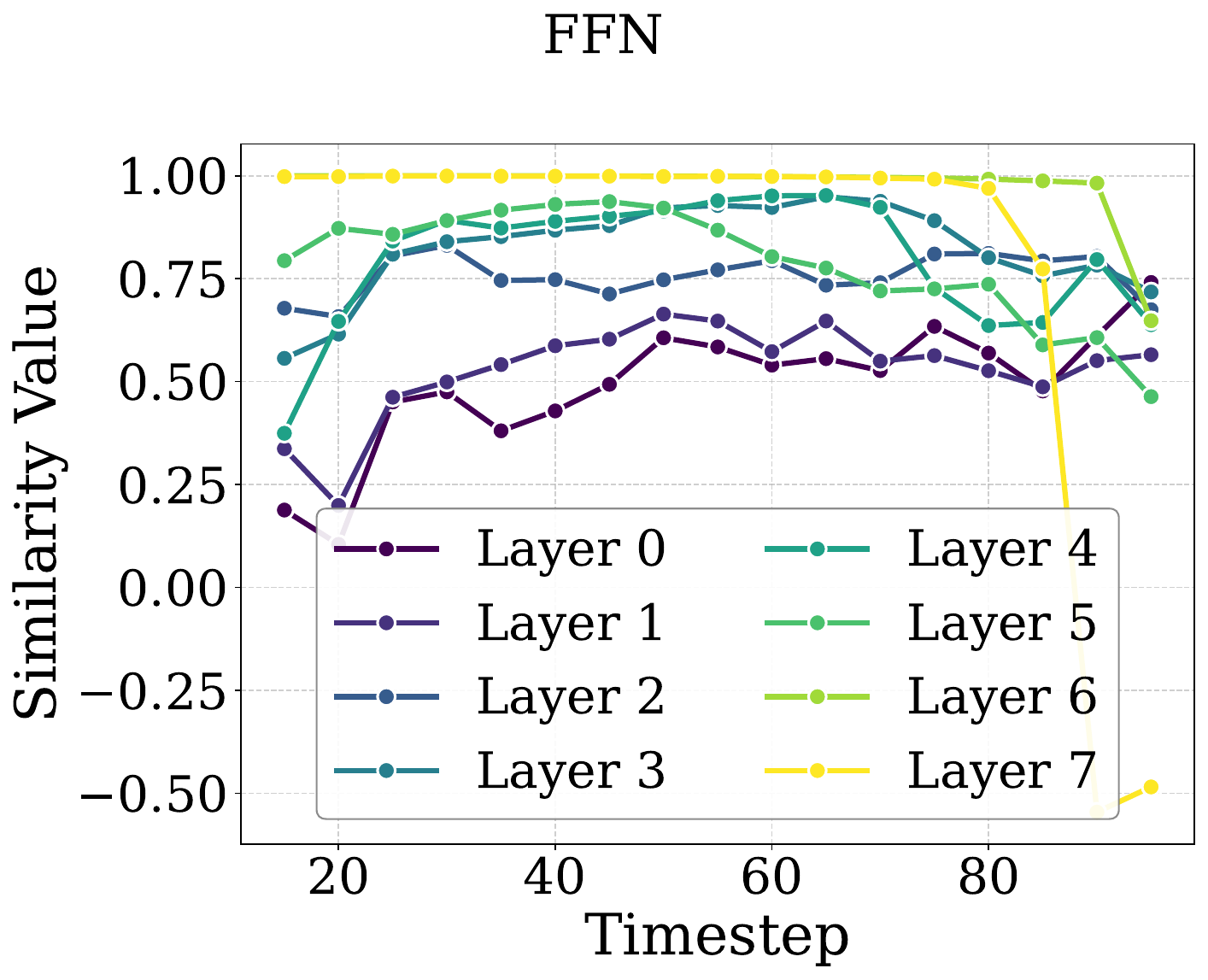}
        \caption{Similarity change curves, measured between timestep $t$ and $t{-}15$.}
        \label{fig: similarity pattern of different blocks 15}
    \end{subfigure}
%     \end{figure}
% \begin{figure}[H]
%     \ContinuedFloat  % 继续前一个图的编号
%     \centering     
        \begin{subfigure}[b]{\textwidth}
        \centering
        \includegraphics[width=0.31\textwidth]{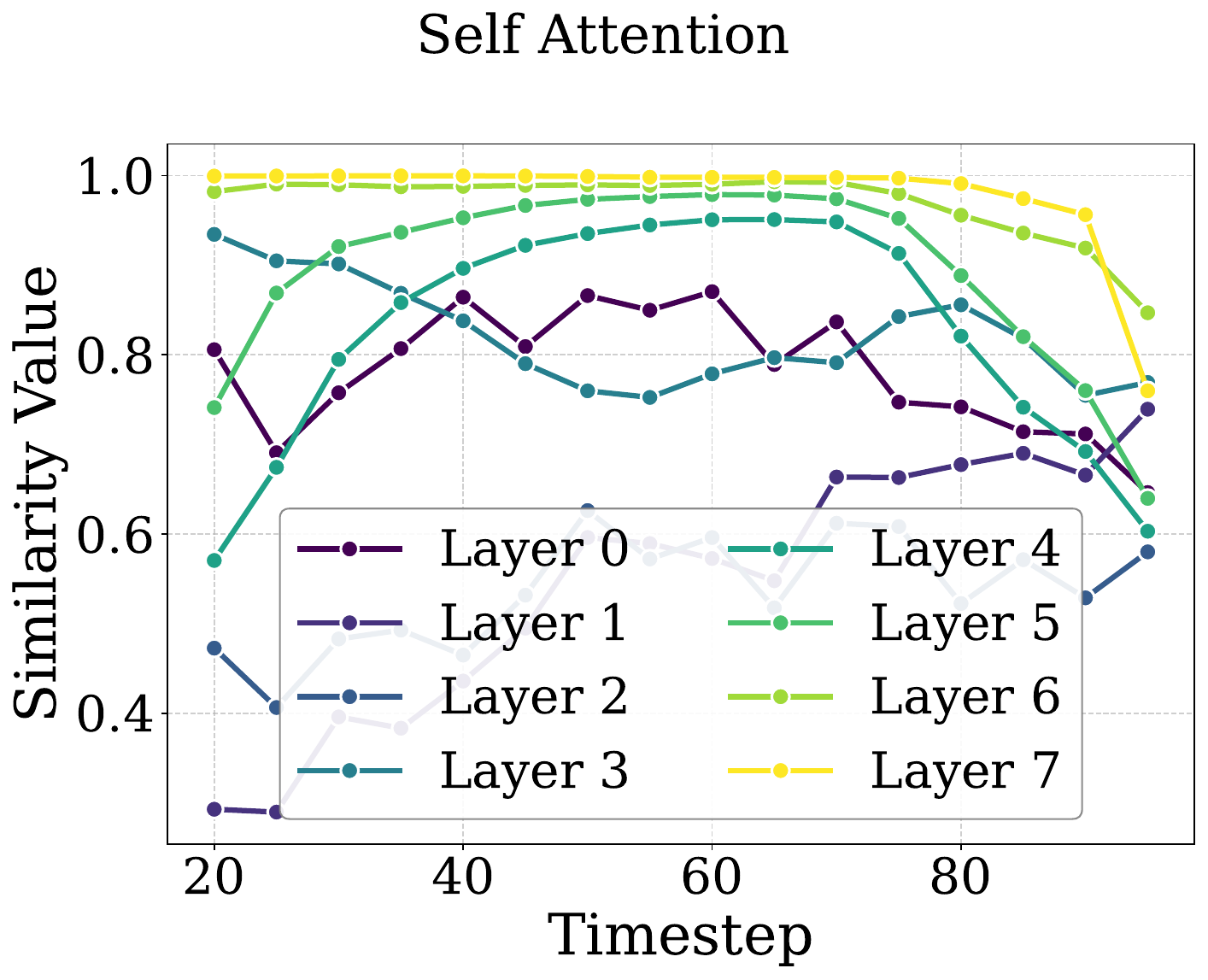}
        \includegraphics[width=0.31\textwidth]{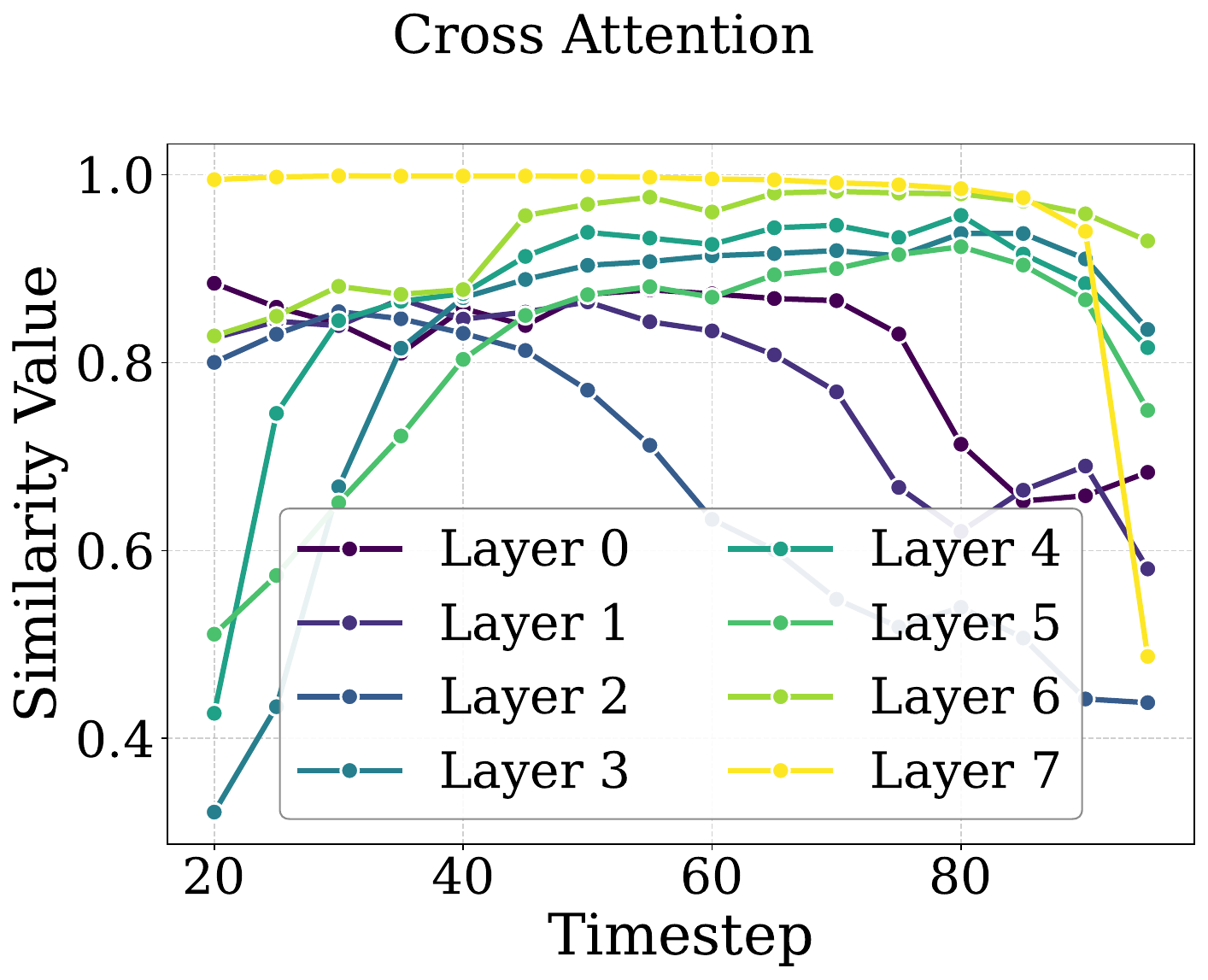}
        \includegraphics[width=0.31\textwidth]{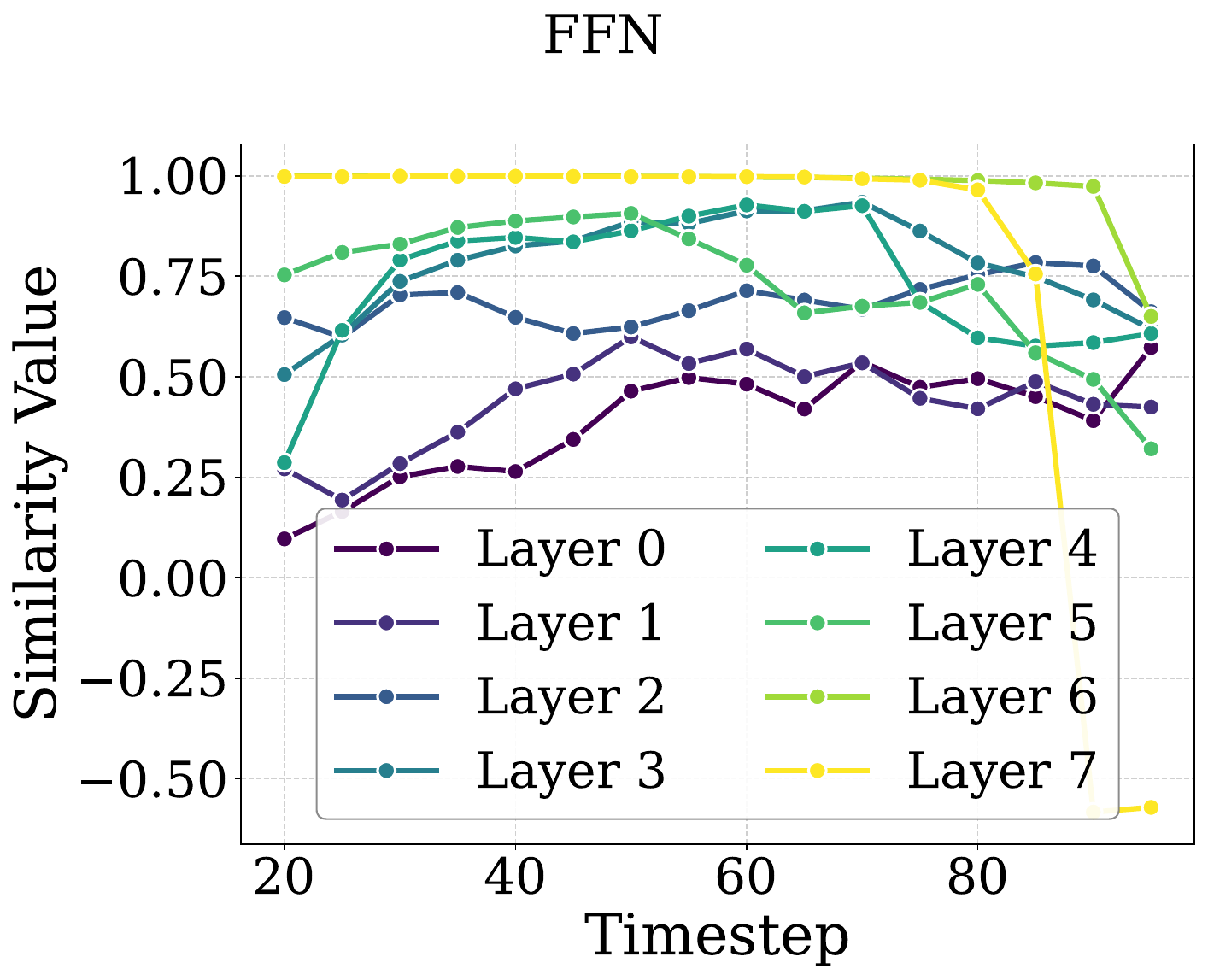}
        \caption{Similarity change curves, measured between timestep $t$ and $t{-}20$.}
        \label{fig: similarity pattern of different blocks 20}
    \end{subfigure}

    \caption{Block-wise feature similarity between consecutive steps. Each subfigure shows the similarity between features at timestep $t$ and ($t{-}k$, where $k = 1, 5, 10, 15, 20$) for different blocks.}
    \label{fig: insight observation part 2}
\end{figure}
\end{minipage}
\newpage    
\begin{minipage}{\textwidth}
% \endminipage
\subsection{More details on the high episode homogeneity within individual tasks}
\label{appendix: More details on the high episode homogeneity within a single task}
In this section, we present evidence for the high episode homogeneity of an embodied task by highlighting the distinct similarity patterns in action generation tasks versus image generation tasks.

\setlength{\parskip}{1em} % 段落间距设为1倍字体高度

In action generation tasks, as shown in Fig.~\ref{fig: Diffusion policy tasks}, we visualize the feature similarity matrices from the same layer of Diffusion Policy, under the same task (Square$_{ph}$), across two different scene demos (demo id 11001 vs. demo id 20000). Despite changes in scene settings, the similarity matrices remain strikingly consistent, suggesting a high degree of representational homogeneity across episodes.

\setlength{\parskip}{1em} % 段落间距设为1倍字体高度

In contrast, in image generation tasks, as shown in Fig.~\ref{fig: Image generation tasks}, we visualize the feature similarity matrices from the same layer of DiT-XL/2~\citep{peebles2023scalable}, across two different classes in ImageNet (class label 15:``robin, American robin, Turdus migratorius" vs. class label 800: ``slot, one-armed bandit"). The results from both the self-attention and MLP blocks reveal clear differences in feature patterns.

\setlength{\parskip}{1em} % 段落间距设为1倍字体高度

While an image generation task shows obvious differences in feature patterns across different classes, an action generation task shows almost no difference across different scene demos within the same task. These observations support the efficiency of our method, specifically in embodied episodes. Benefiting from the high episode homogeneity within individual tasks, our method can be run only once for a given task before inference, incurring virtually no additional cost.

\begin{figure}[H]
\centering 

    \begin{subfigure}[b]{0.4\textwidth}
        \centering
        \includegraphics[width=\textwidth]{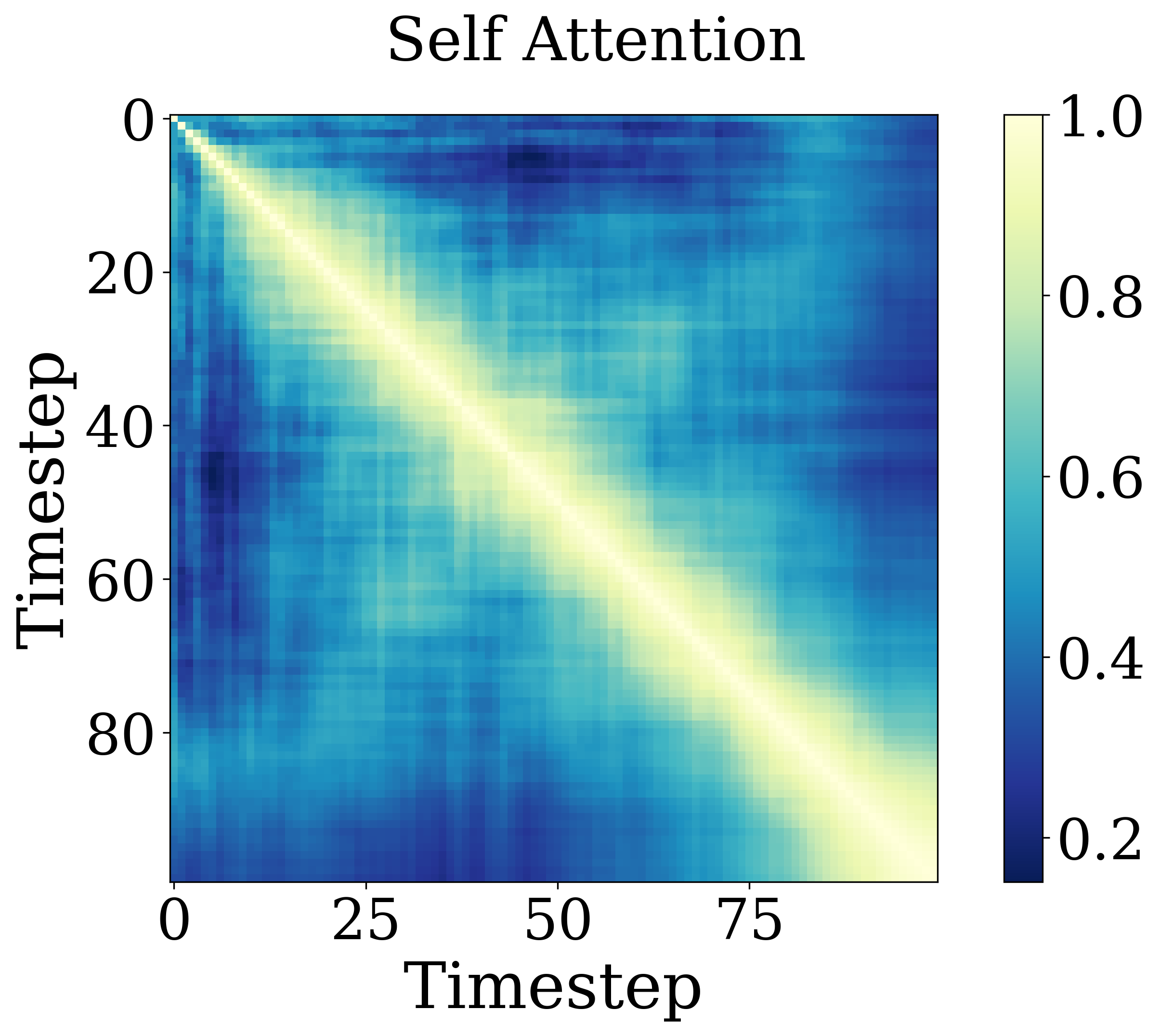}
        \caption{Diffusion Policy demo 11001.}
    \end{subfigure}
    \hfill
        \begin{subfigure}[b]{0.4\textwidth}
        \centering
        \includegraphics[width=\textwidth]{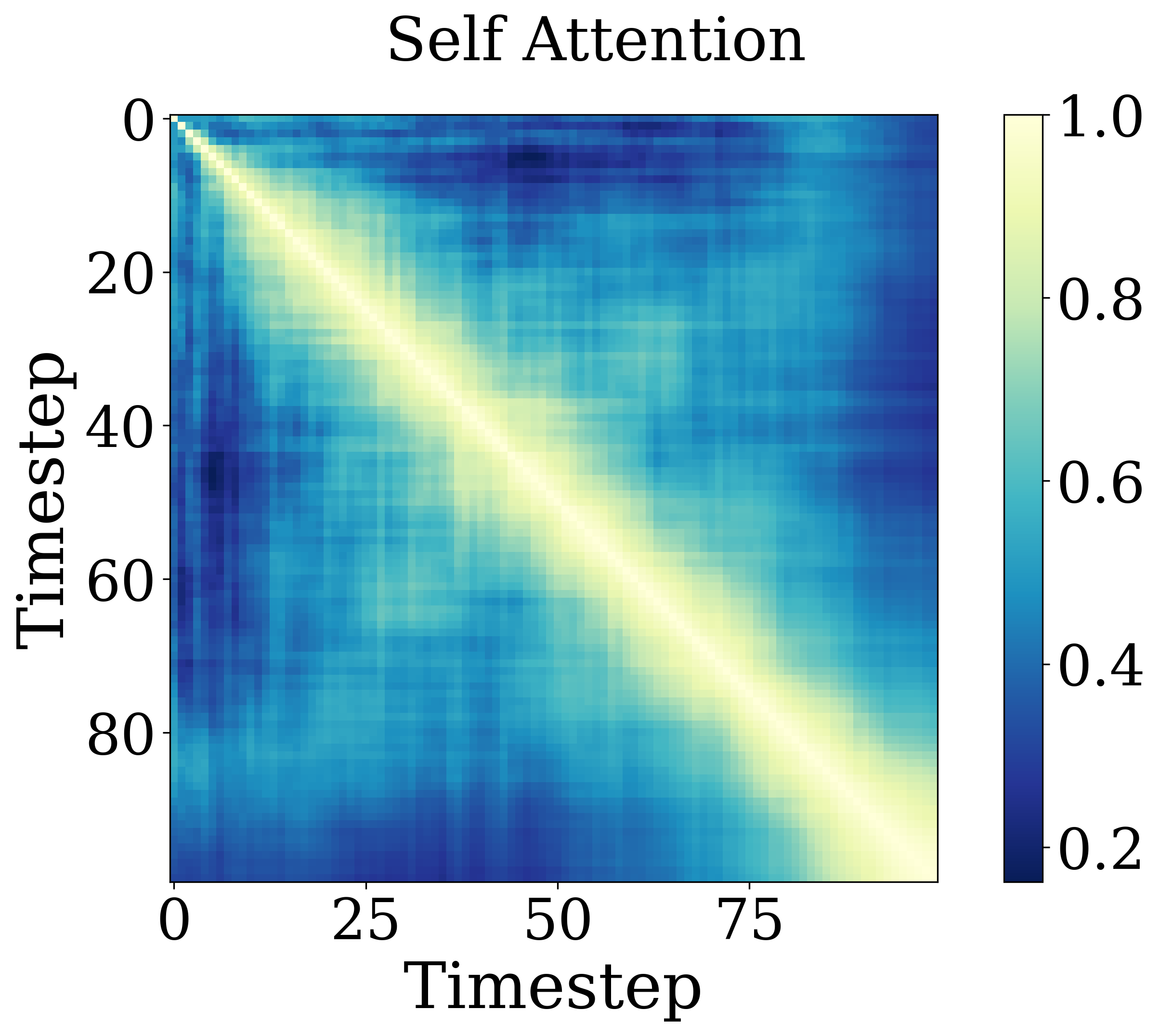}
        \caption{Diffusion Policy demo 20000.}
    \end{subfigure}
    \begin{subfigure}[b]{0.4\textwidth}
        \centering
        \includegraphics[width=\textwidth]{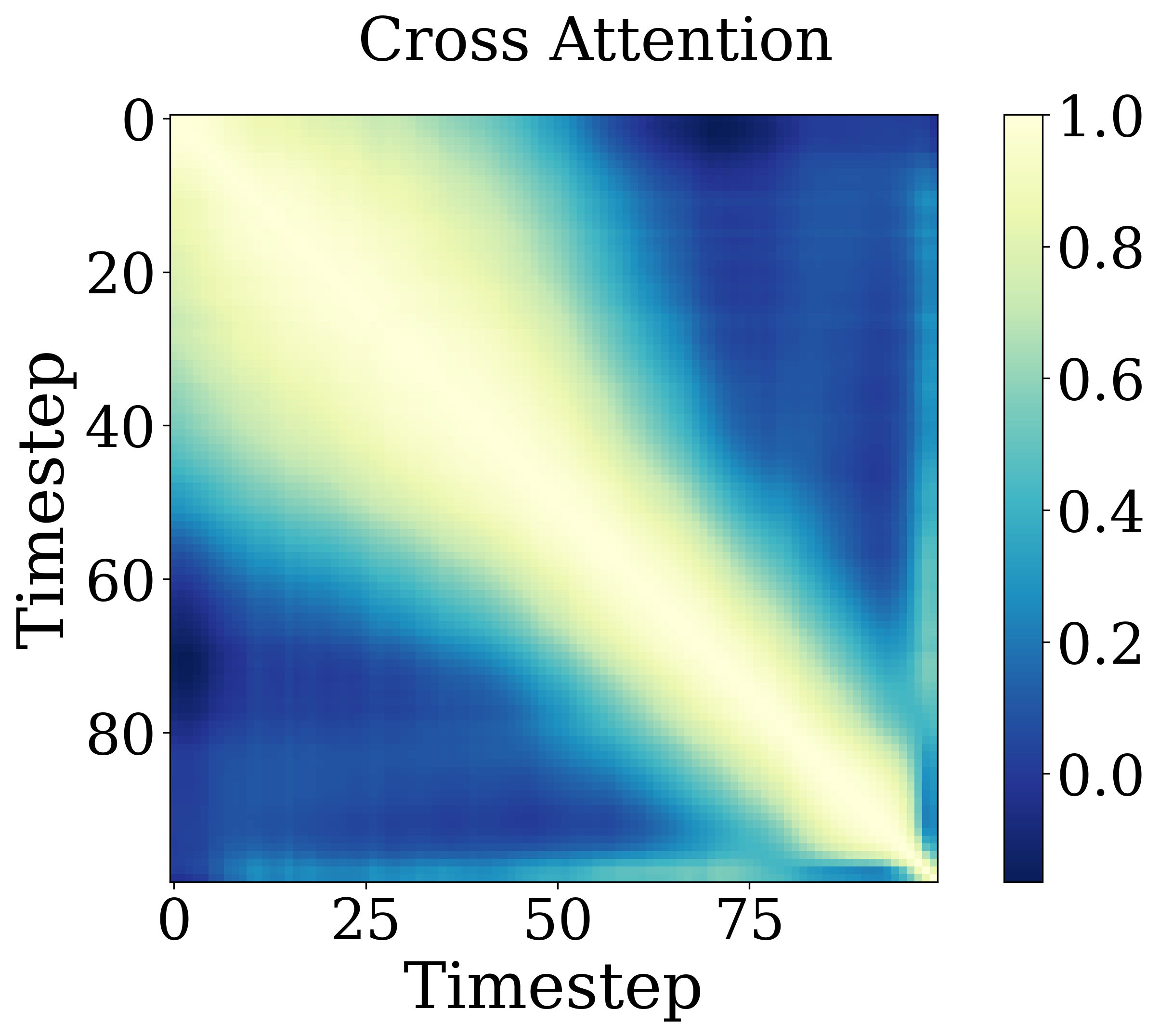}
        \caption{Diffusion Policy demo 11001.}
    \end{subfigure}
     \hfill
% \end{figure}
% \begin{figure}[H]
%     \ContinuedFloat  % 继续前一个图的编号
%     \centering 
     \begin{subfigure}[b]{0.4\textwidth}
        \centering
        \includegraphics[width=\textwidth]{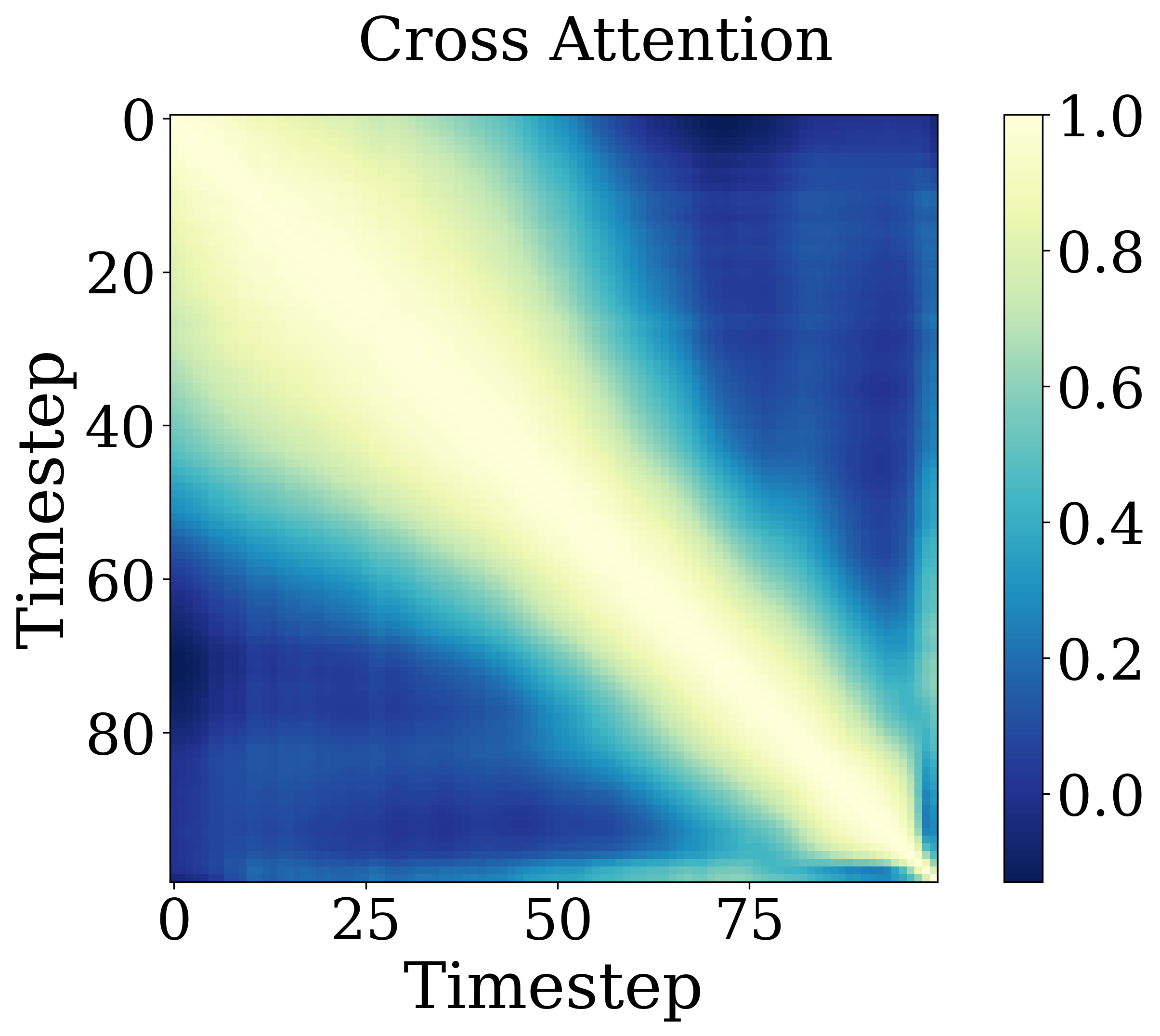}
        \caption{Diffusion Policy demo 20000.}
    \end{subfigure}
    \begin{subfigure}[b]{0.4\textwidth}
        \centering
        \includegraphics[width=\textwidth]{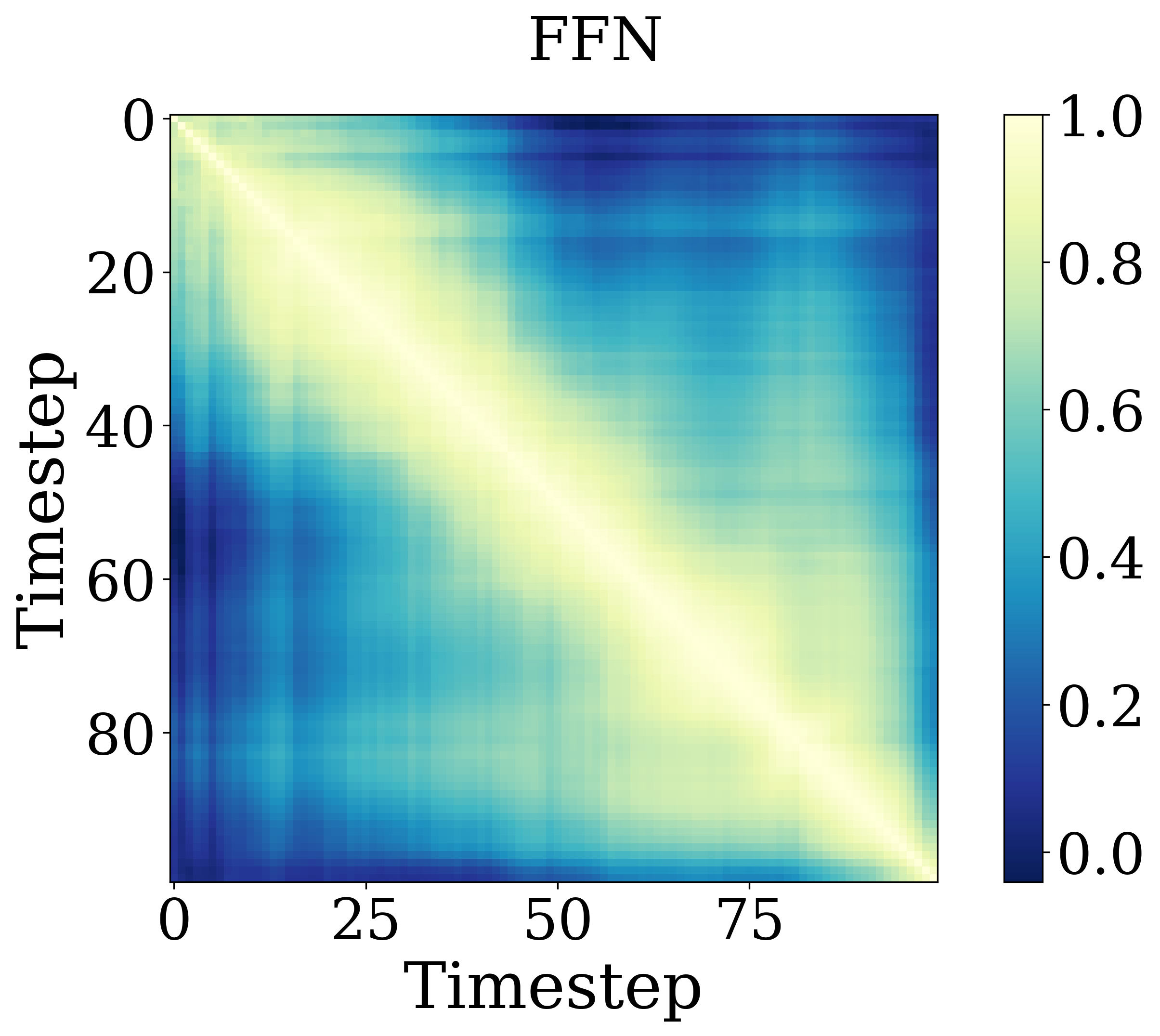}
        \caption{Diffusion Policy demo 11001.}
    \end{subfigure}
    \hfill
    \begin{subfigure}[b]{0.4\textwidth}
        \centering
        \includegraphics[width=\textwidth]{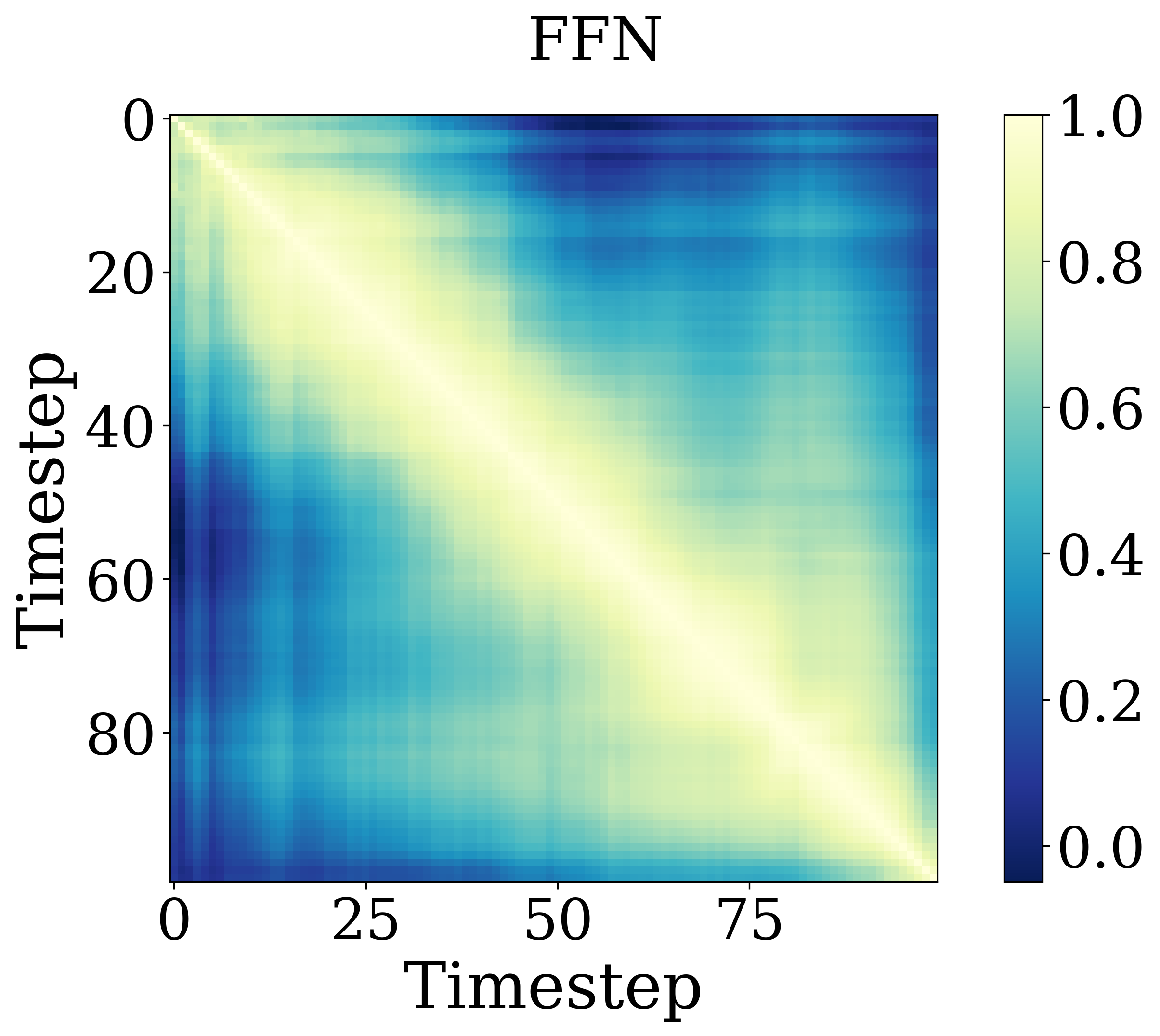}
        \caption{Diffusion Policy demo 20000.}
    \end{subfigure}
\end{figure}
\end{minipage}
 %\newpage  % 强制分页  
 \begin{figure}[H]
    \ContinuedFloat  % 继续前一个图的编号
    \phantom{.}  % 不可见的占位符
    \caption{Feature similarities across timesteps in action generation tasks. We visualize similarity matrices of different blocks in the third layer across different scene demos (e.g., demo id 11001 vs. demo id 20000).}
    \label{fig: Diffusion policy tasks}
\end{figure}
\begin{minipage}{\textwidth}
\begin{figure}[H]
    % \ContinuedFloat  % 继续前一个图的编号
    \centering  
    % First row - subfigures (a) and (b)
    \begin{subfigure}[b]{0.4\textwidth}
        \centering
        \includegraphics[width=\textwidth]{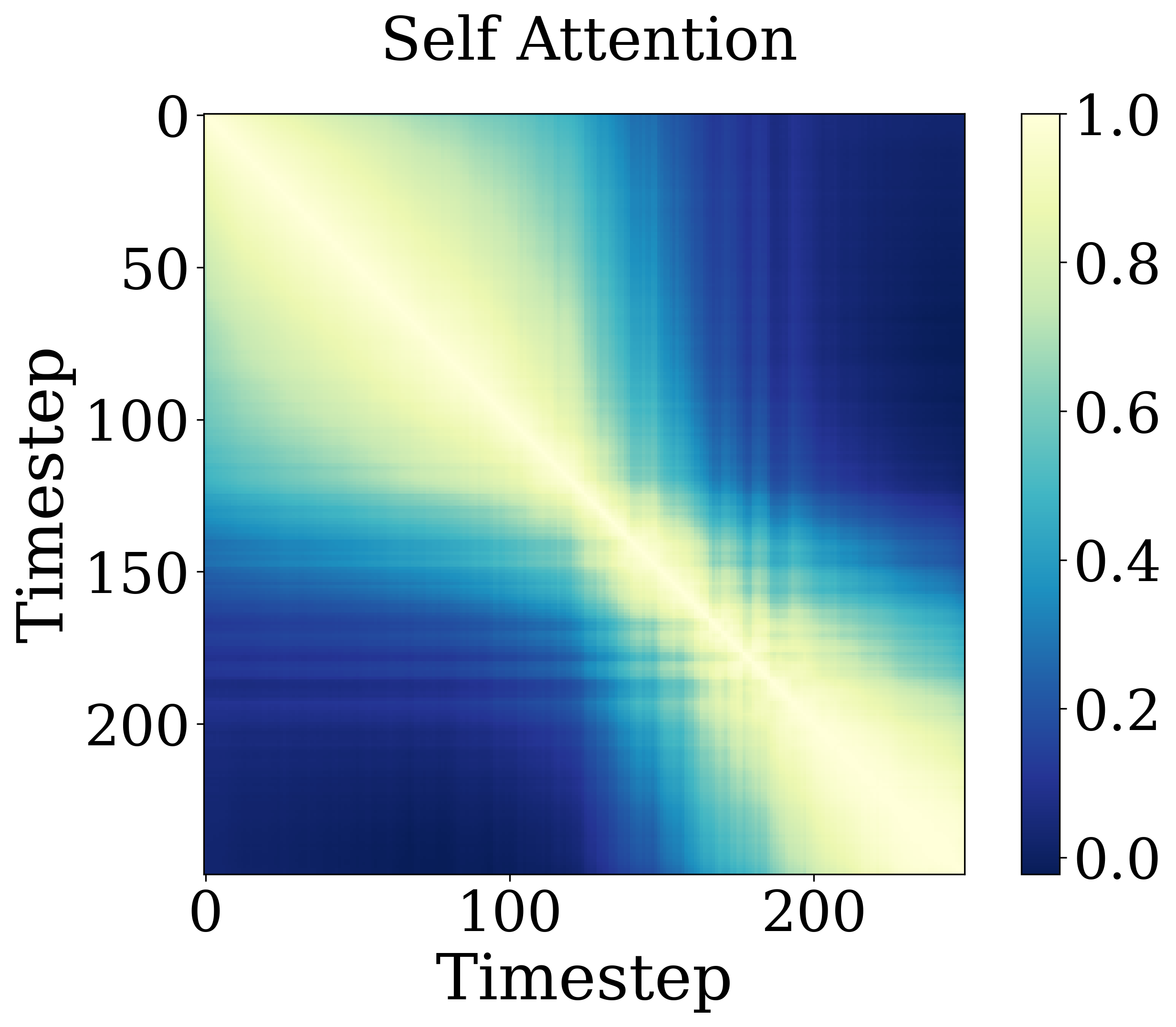}
        \caption{Diffusion Model class label 15.}
    \end{subfigure}
    \hfill
        \begin{subfigure}[b]{0.4\textwidth}
        \centering
        \includegraphics[width=\textwidth]{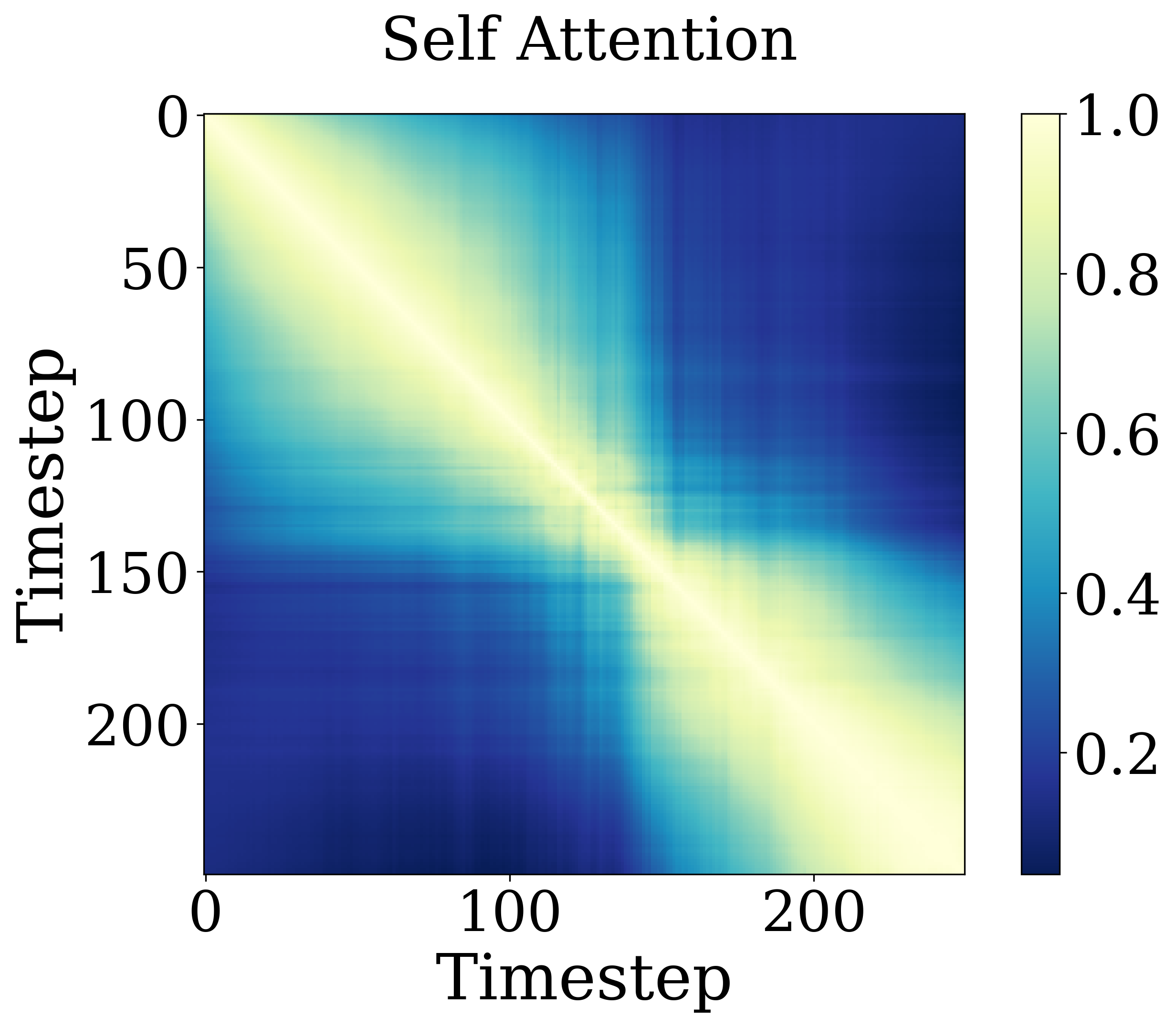}
        \caption{Diffusion Model class label 800.}
    \end{subfigure}
    \begin{subfigure}[b]{0.4\textwidth}
    \centering
    \includegraphics[width=\textwidth]{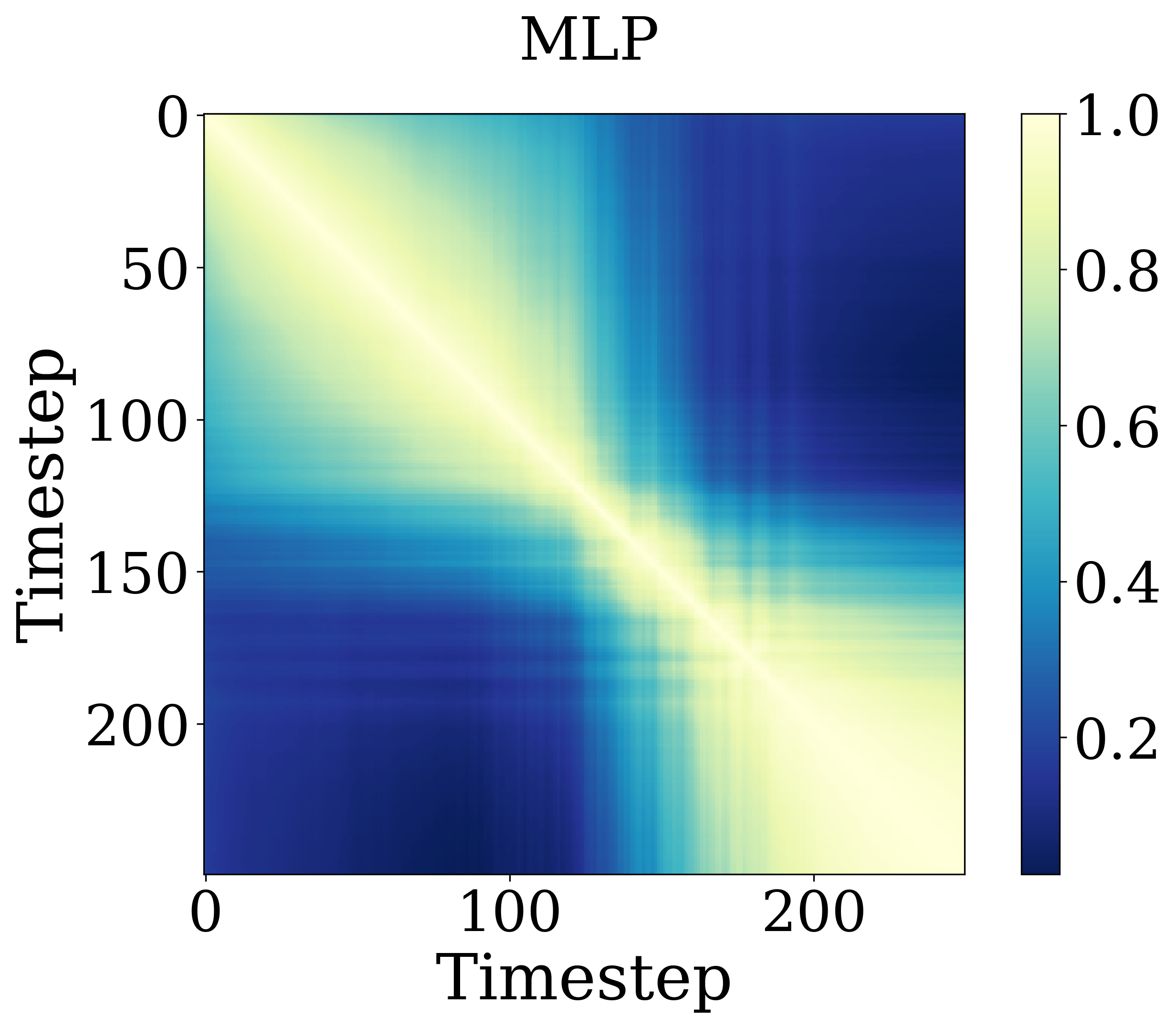}
    \caption{Diffusion Model class label 15.}
\end{subfigure}
    \hfill
    \begin{subfigure}[b]{0.4\textwidth}
        \centering
        \includegraphics[width=\textwidth]{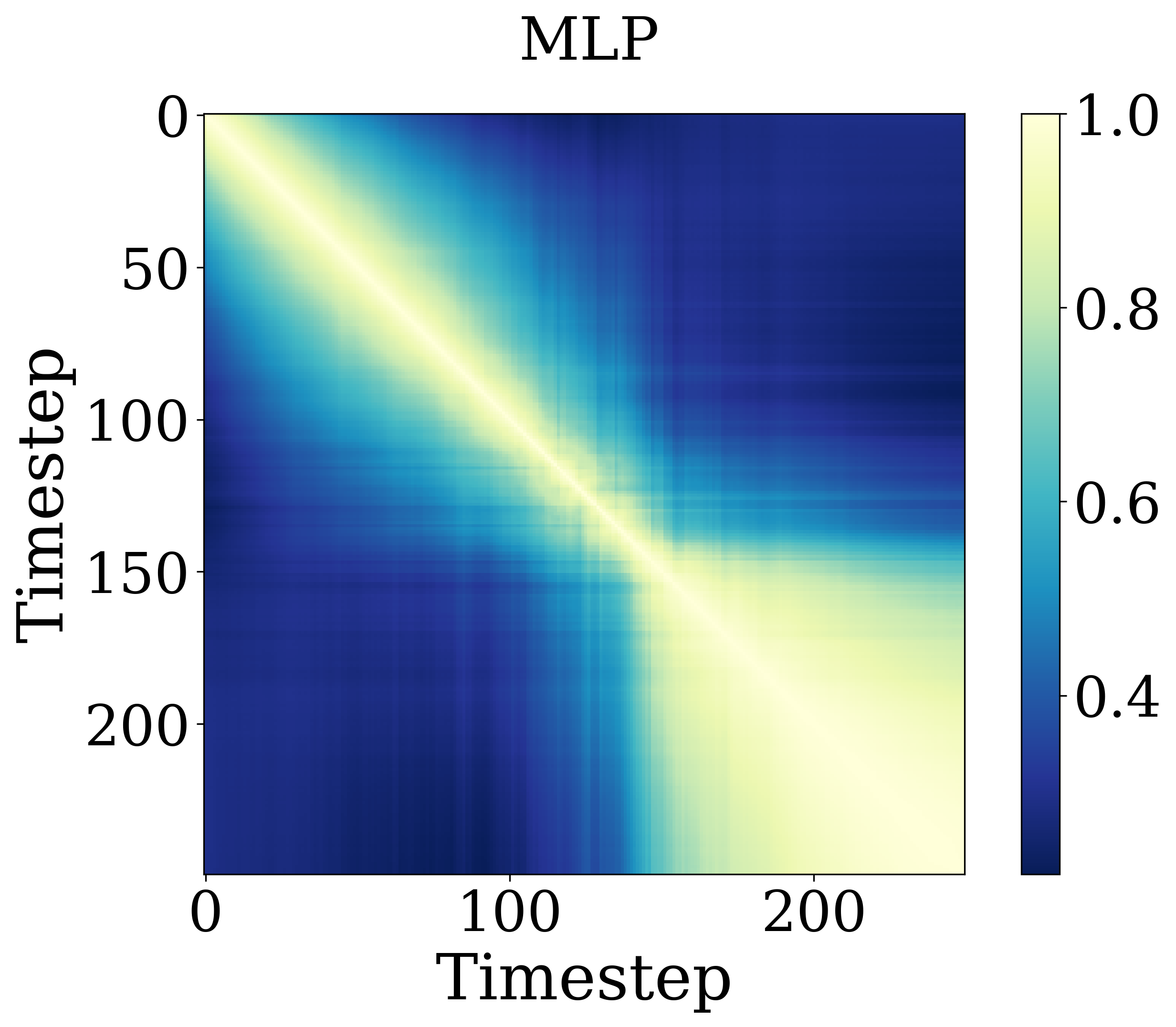}
        \caption{Diffusion Model class label 800.}
    \end{subfigure}   
     \caption{Feature similarities across timesteps in image generation tasks. We visualize similarity matrices of different blocks in the fourteenth layer across different classes (e.g., class label 15 vs. class label 800). }
     \label{fig: Image generation tasks}
\end{figure}

% \vspace{-10pt}
\subsection{More details for Fig.3}
\label{appendix: More details for Fig.3}
To support findings of the error surge phenomenon, we present comparisons of caching errors across different FFN blocks using a block-wise schedule versus a unified schedule. As shown in Fig.~\ref{fig: fig3.a more}, the block-wise schedule leads to error surges in nearly all FFN blocks except the first one, in contrast to the unified schedule. This observation further suggests that caching errors can propagate through downstream FFN blocks.

\begin{figure}[H]
    \centering
    % First row - subfigures (a) and (b)
    \begin{subfigure}[b]{0.48\textwidth}
        \centering
        \includegraphics[width=\textwidth]{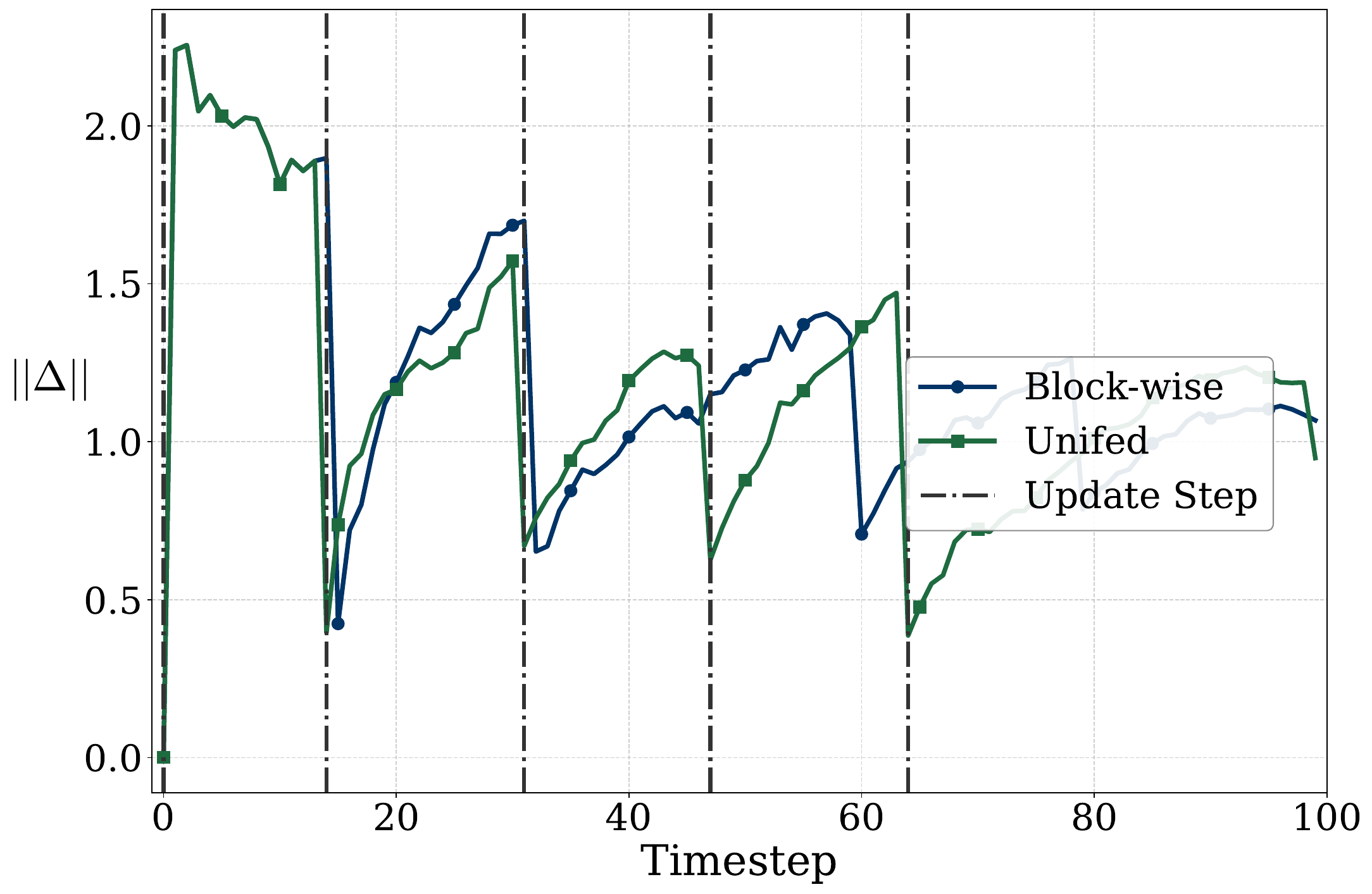}
        \caption{Caching error in the first FFN block.}
    \end{subfigure}
    \hfill
    \begin{subfigure}[b]{0.48\textwidth}
        \centering
        \includegraphics[width=\textwidth]{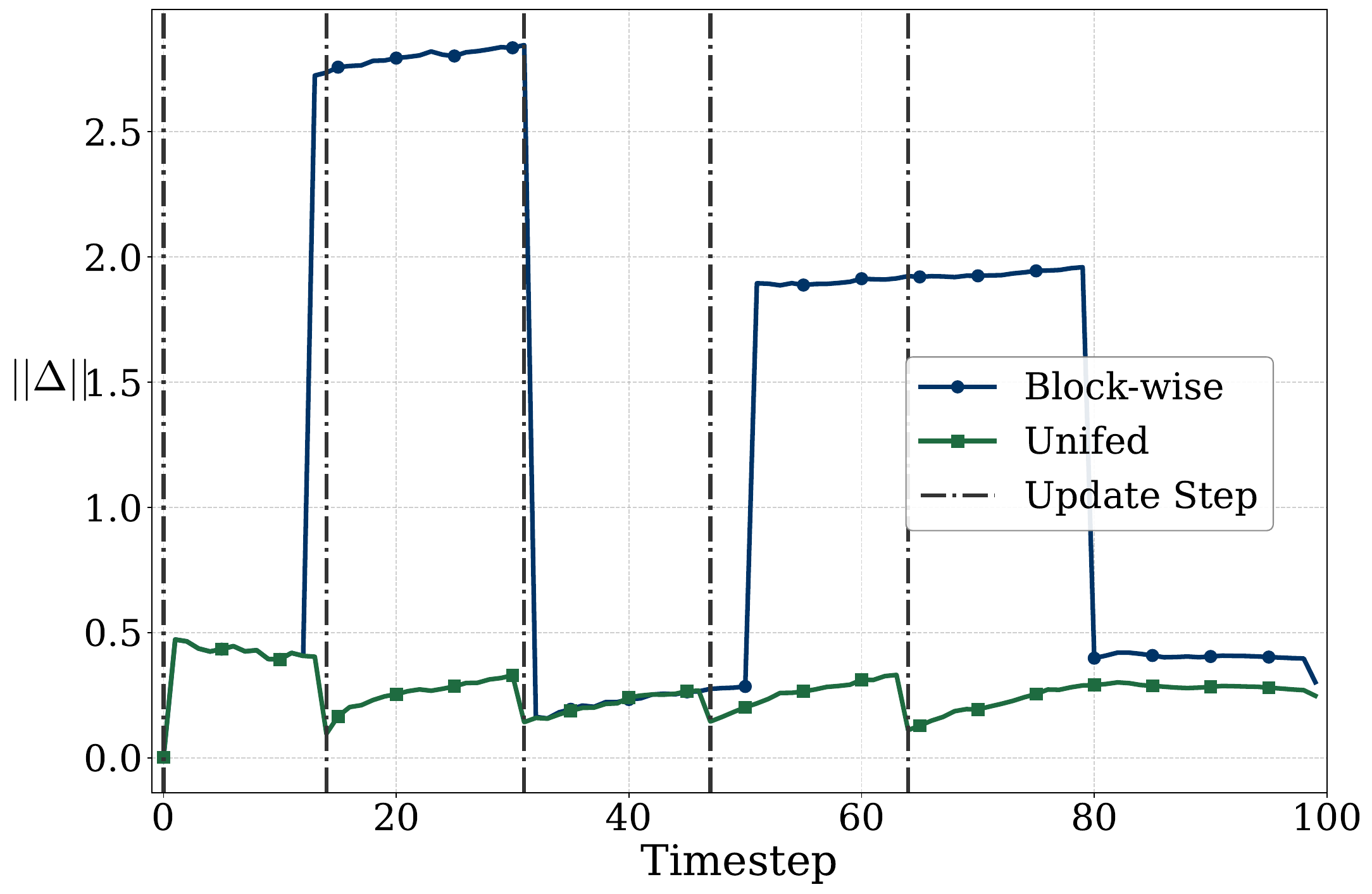}
        \caption{Caching error in the second FFN block.}
    \end{subfigure}
\end{figure}
\end{minipage}
 \clearpage  % 强制分页    
\begin{minipage}{\textwidth}
\begin{figure}[H]
    \ContinuedFloat  % 继续前一个图的编号
    \centering      
    % Second row - subfigures (c) and (d)
    \begin{subfigure}[b]{0.48\textwidth}
        \centering
        \includegraphics[width=\textwidth]{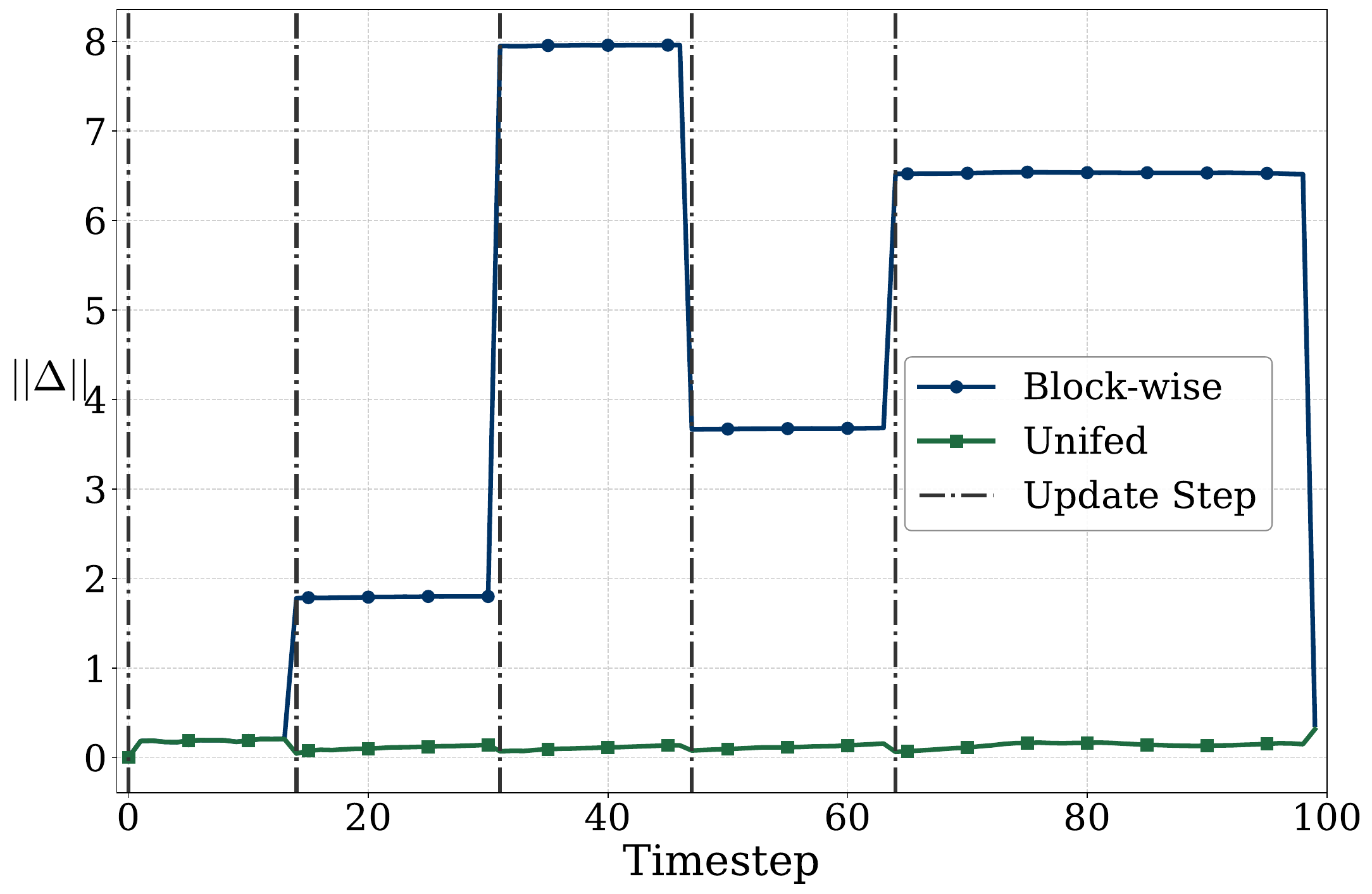}
        \caption{Caching error in the third FFN block.}
    \end{subfigure}
    \hfill
    \begin{subfigure}[b]{0.48\textwidth}
        \centering
        \includegraphics[width=\textwidth]{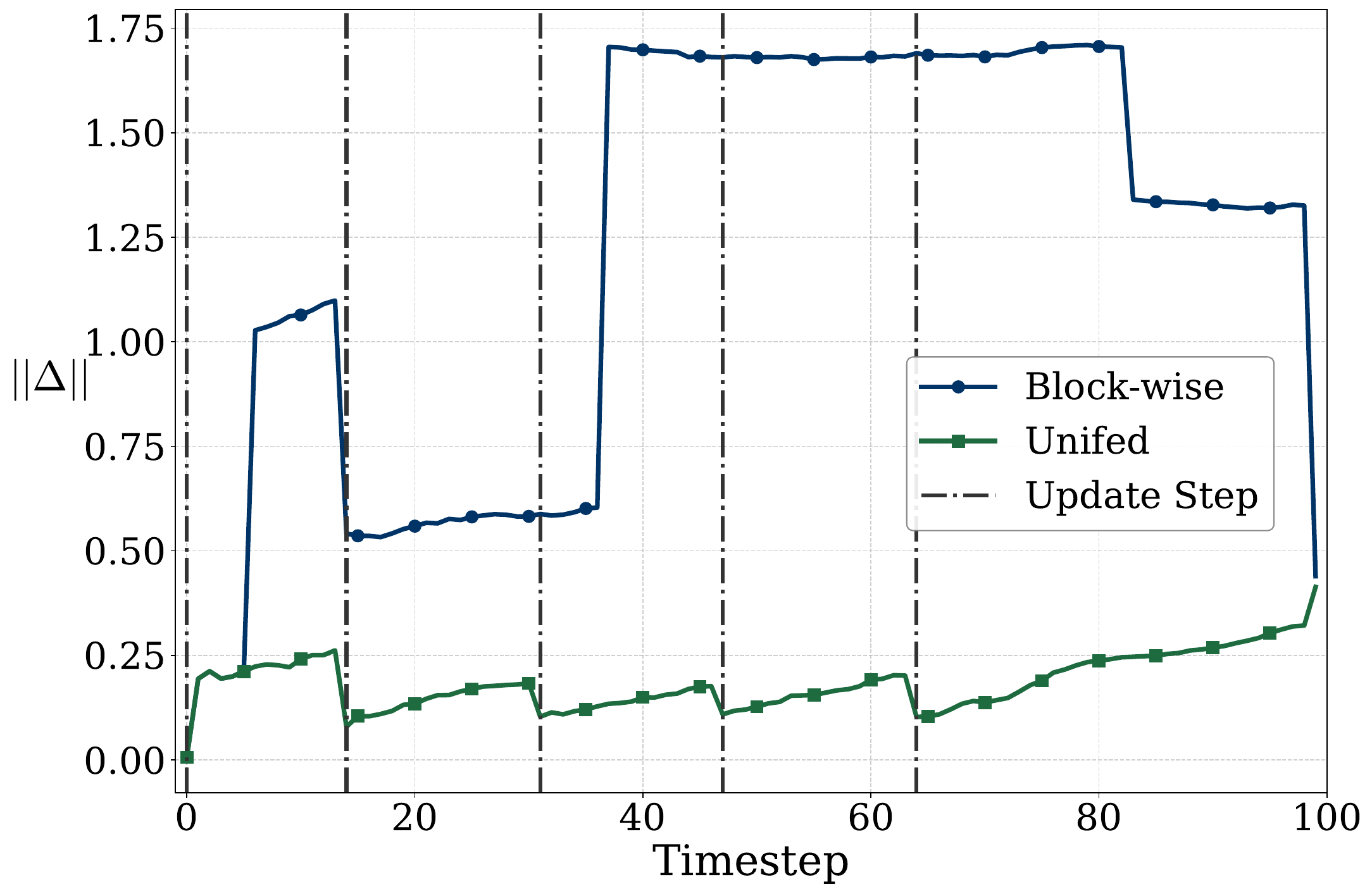}
        \caption{Caching error in the fourth FFN block.}
    \end{subfigure}
% \end{figure}

% \begin{figure}[H]
%     \ContinuedFloat  % 继续前一个图的编号
%     \centering      
    % Third row - subfigures (e) and (f)
    \begin{subfigure}[b]{0.48\textwidth}
        \centering
        \includegraphics[width=\textwidth]{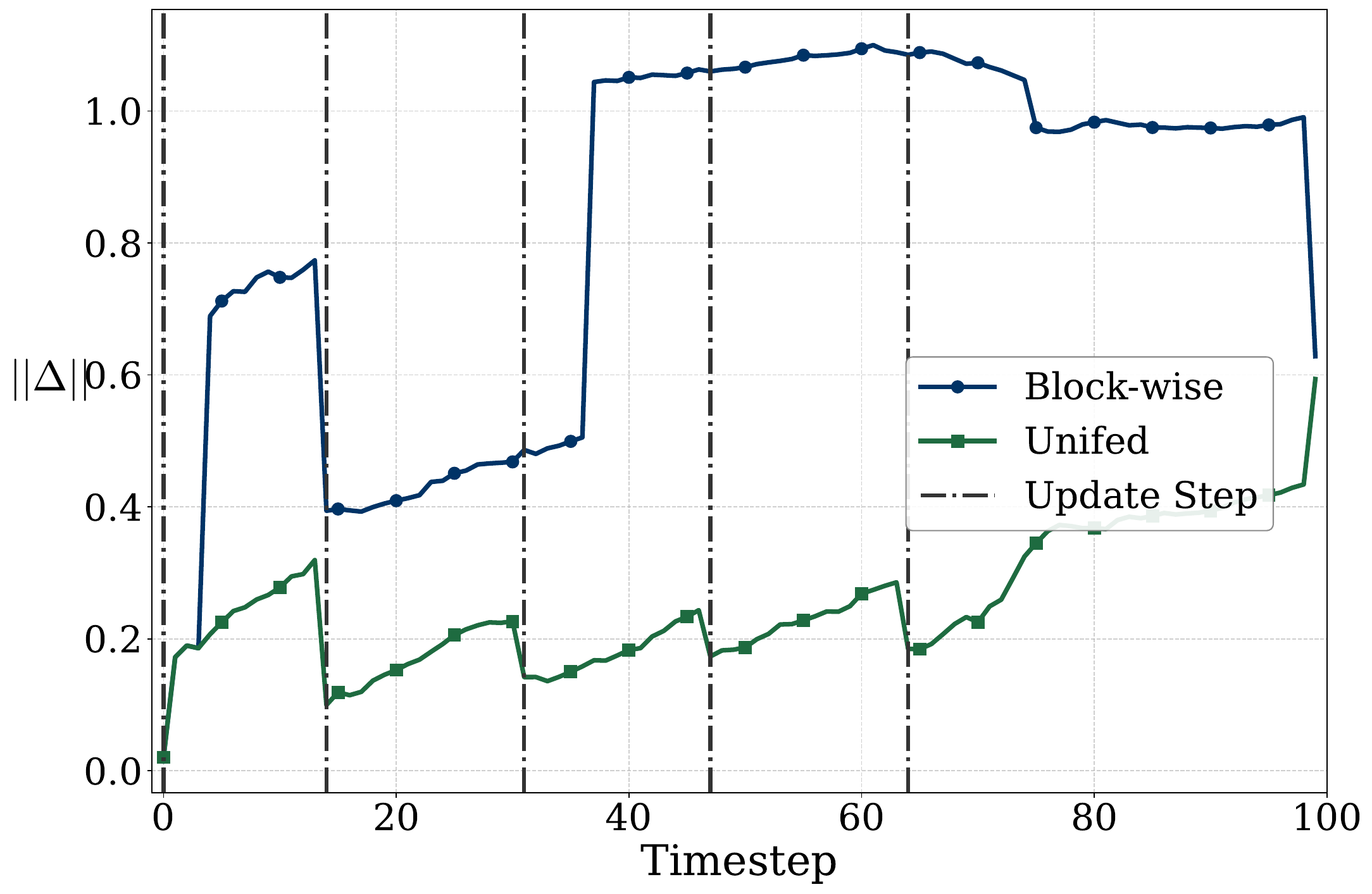}
        \caption{Caching error in the fifth FFN block.}
    \end{subfigure}
    \hfill
    \begin{subfigure}[b]{0.48\textwidth}
        \centering
        \includegraphics[width=\textwidth]{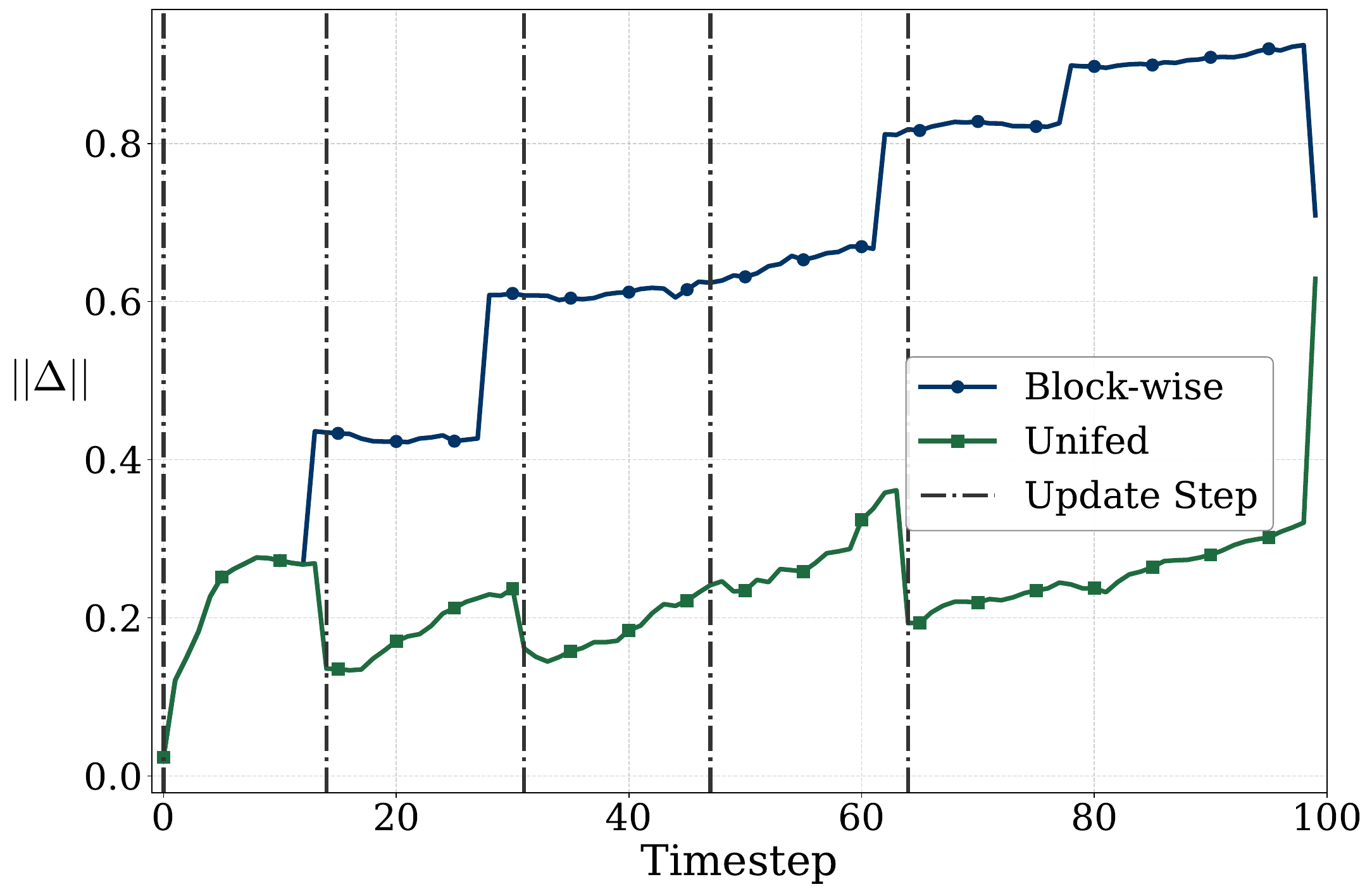}
        \caption{Caching error in the sixth FFN block.}
    \end{subfigure}
% \end{figure}

% \begin{figure}[H]
%     \ContinuedFloat  % 继续前一个图的编号
%     \centering     
    % Fourth row - subfigures (g) and (h)
    \begin{subfigure}[b]{0.48\textwidth}
        \centering
        \includegraphics[width=\textwidth]{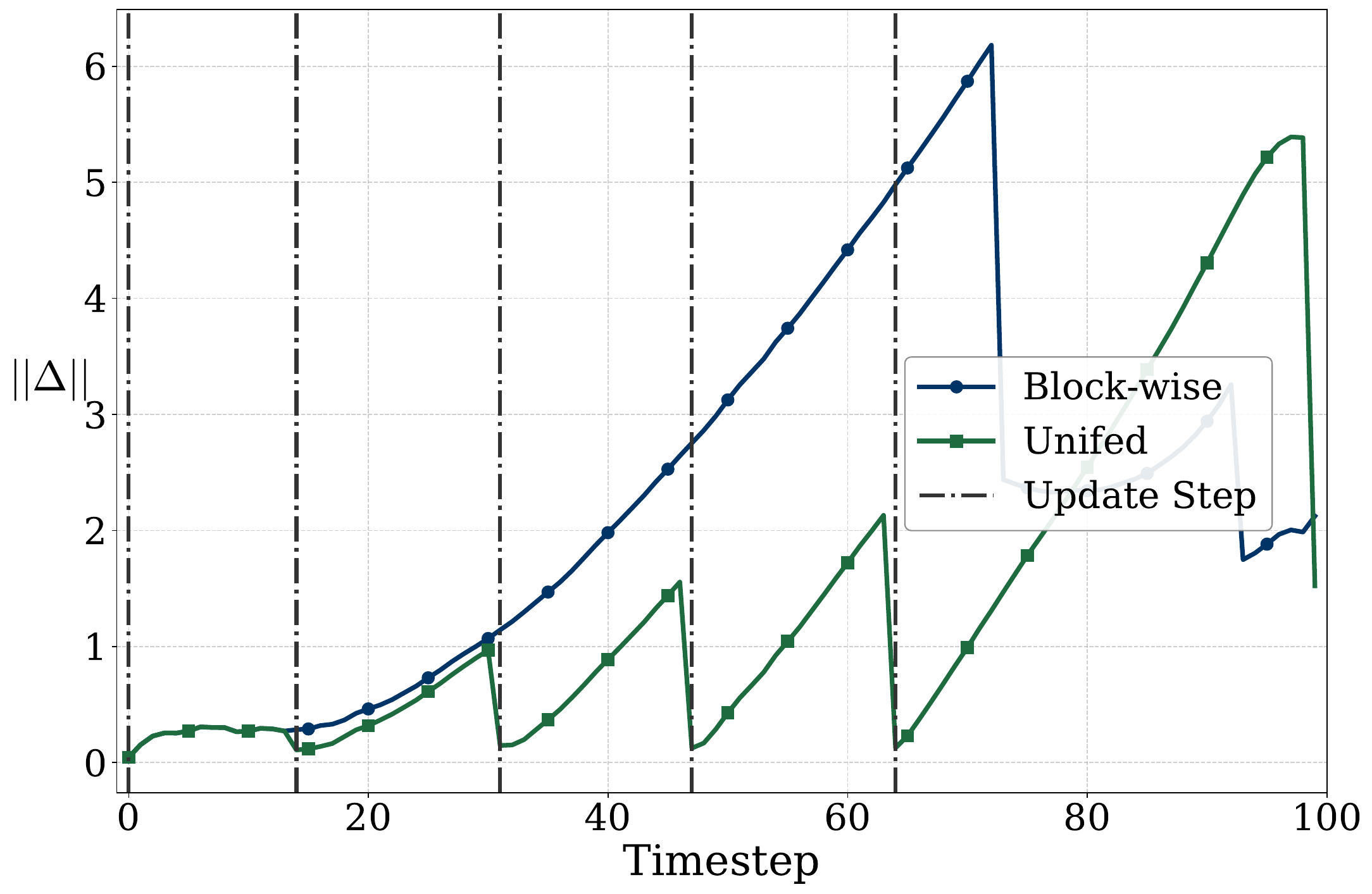}
        \caption{Caching error in the seventh FFN block.}
    \end{subfigure}
    \hfill
    \begin{subfigure}[b]{0.48\textwidth}
        \centering
        \includegraphics[width=\textwidth]{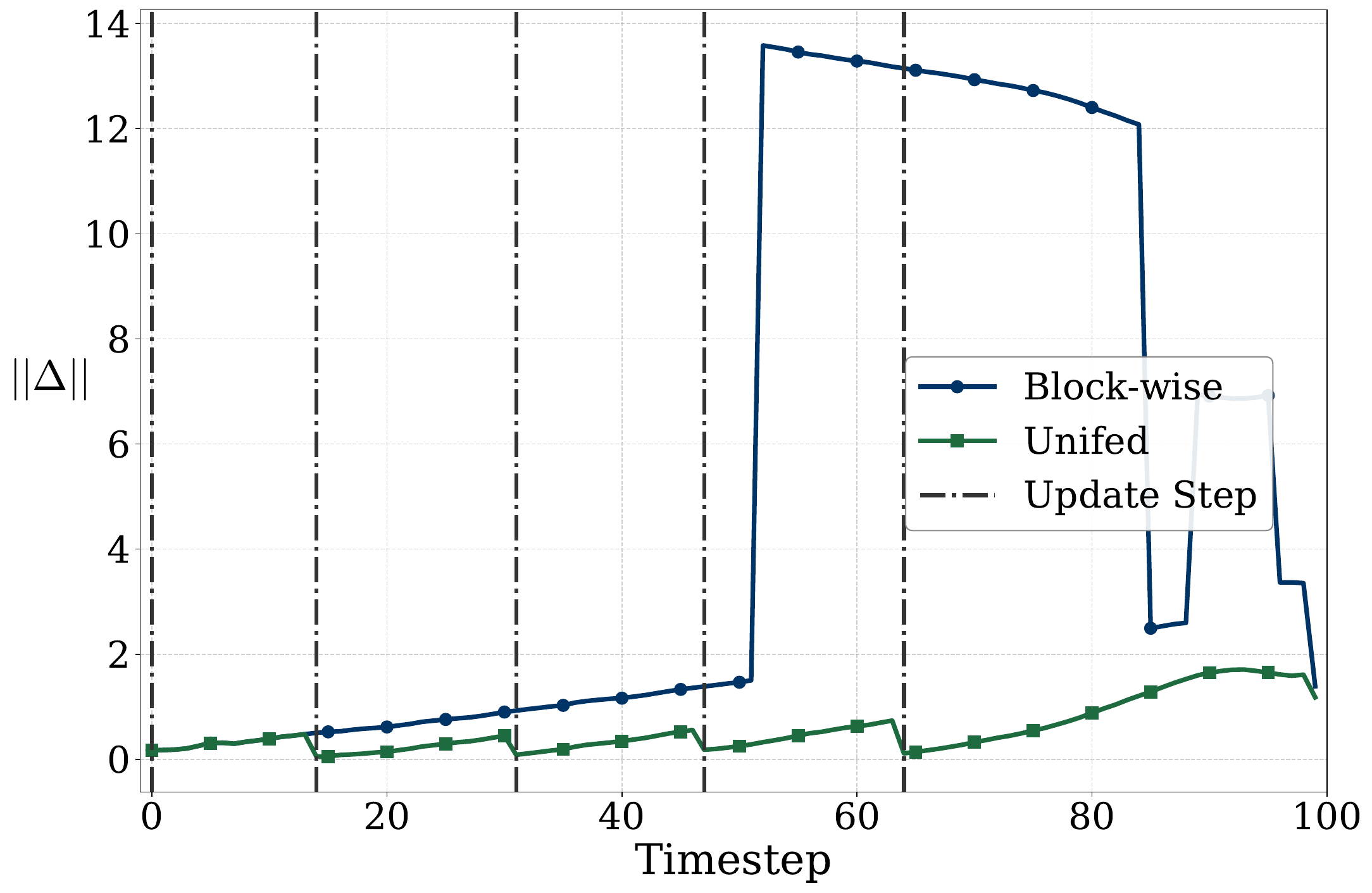}
        \caption{Caching error in the eighth FFN block.}
    \end{subfigure}
    
    \caption{Caching error across different FFN blocks using a block-wise schedule versus a unified one.}
    \label{fig: fig3.a more}
\end{figure}

Fig.~\ref{fig: fig3.e more} presents block-wise heatmaps of caching error across all blocks throughout the diffusion process, covering a diverse set of tasks and two demonstration settings: Proficient Human (PH) and Mixed Human (MH). Each subfigure corresponds to a different task, with color intensity representing the magnitude of caching error at each block and timestep, and white dots indicating cache update steps.
 
\setlength{\parskip}{1em} % 段落间距设为1倍字体高度

These visualizations support our analysis of inter-block error propagation by consistently exhibiting the following key phenomenon: in multiple tasks, blocks occasionally update at steps where their upstream blocks with large caching errors have not yet been updated. This mismatch leads to sudden, sharp increases in error (seen as abrupt darkening) in the downstream block, with no smooth transition. Importantly, even after upstream blocks are later updated, the downstream surge in error remains, indicating that the update failed to recover from the propagated upstream error.

\end{minipage}
 \clearpage  % 强制分页    
\begin{minipage}{\textwidth}
\begin{figure}[H]
    \centering
    % First row 
    \begin{subfigure}[b]{0.3\textwidth}
        \centering
        \includegraphics[height=6.8cm]{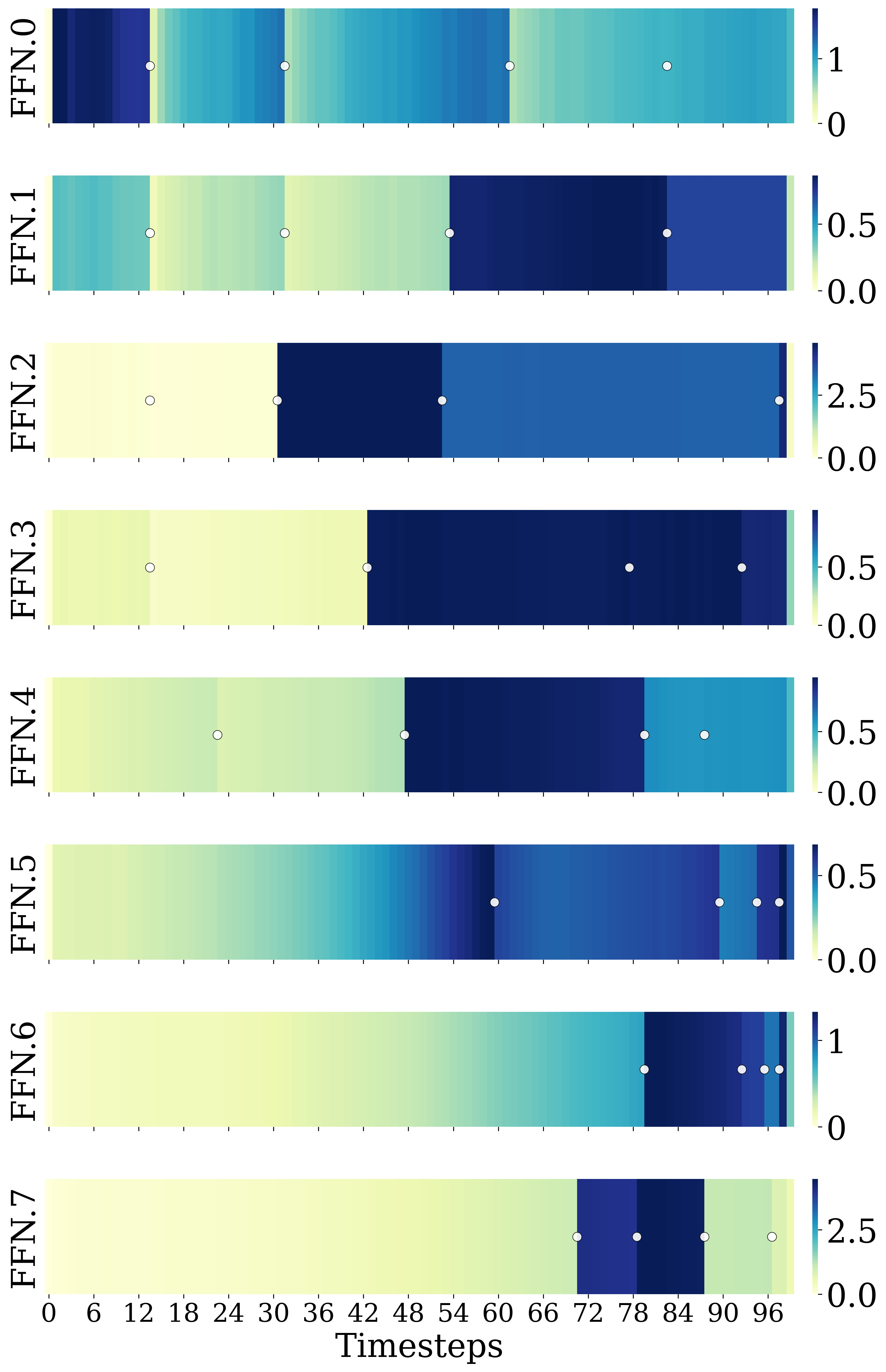}
        \caption{Can$_{mh}$}
    \end{subfigure}
    \hfill
    \begin{subfigure}[b]{0.3\textwidth}
        \centering
        \includegraphics[height=6.8cm]{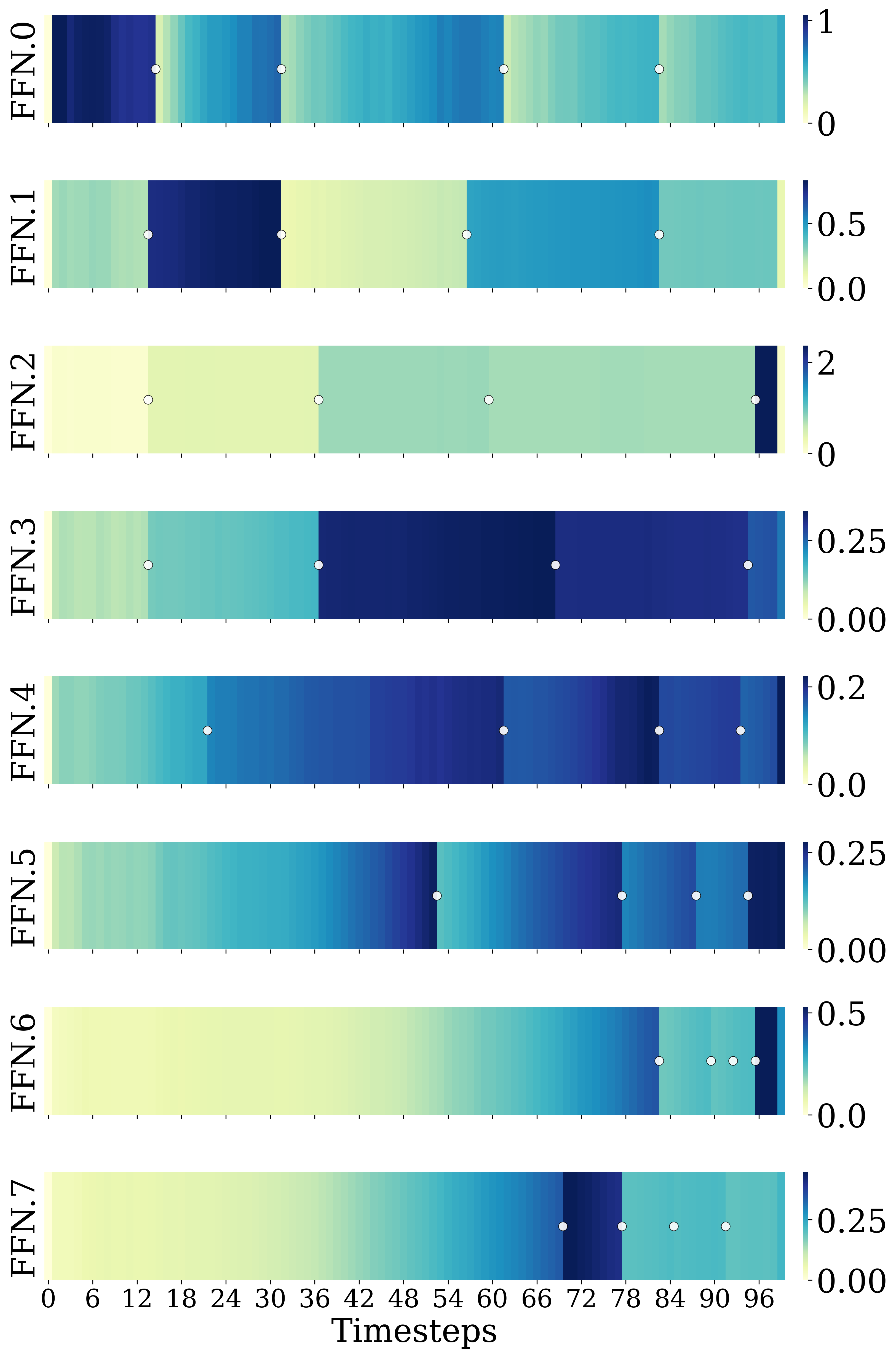}
        \caption{Can$_{ph}$}
    \end{subfigure}
    \hfill
     \begin{subfigure}[b]{0.3\textwidth}
        \centering
        \includegraphics[height=6.8cm]{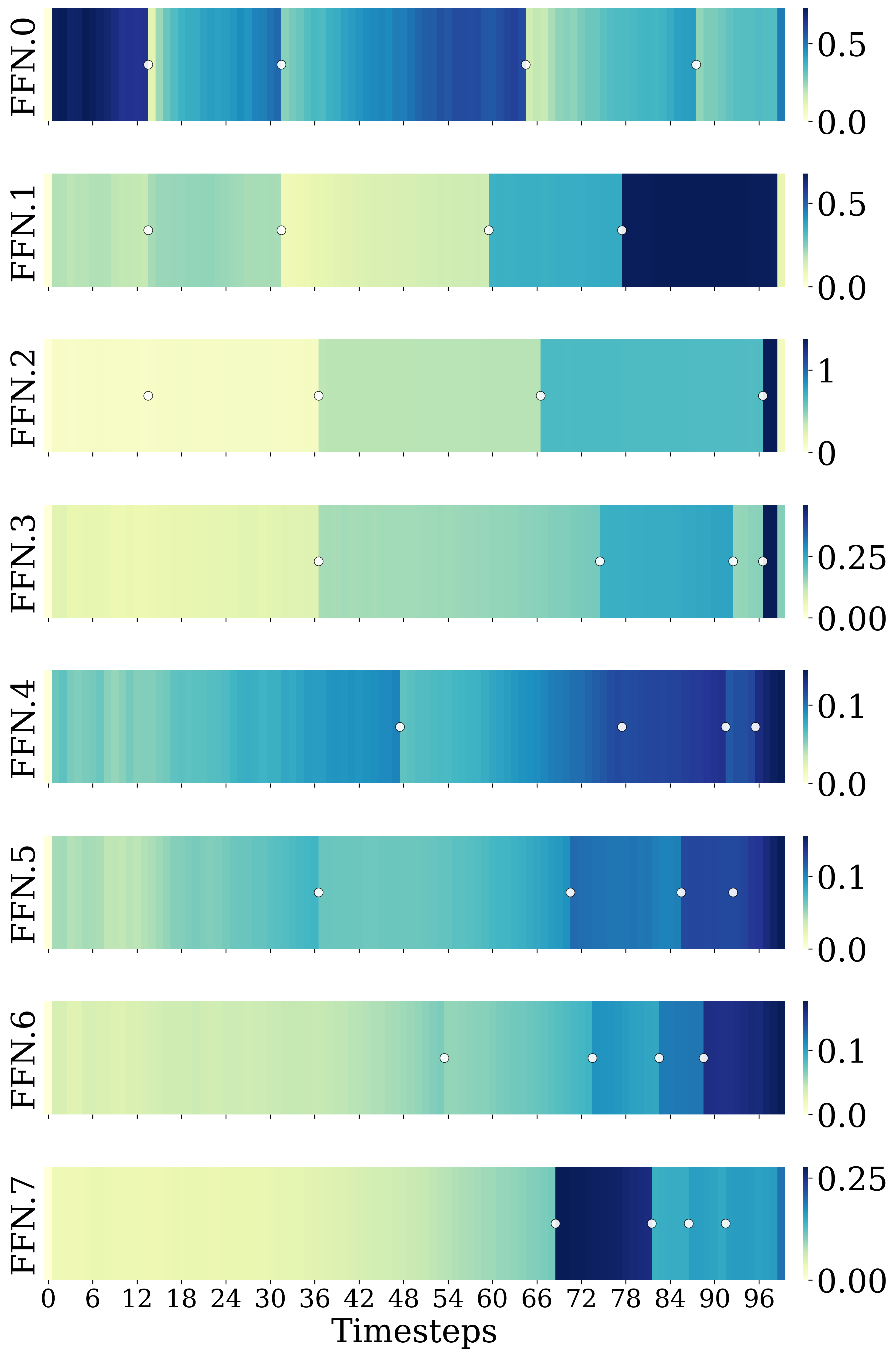}
        \caption{Lift$_{mh}$}
    \end{subfigure}
% \end{figure}
% \begin{figure}[H]
%     \ContinuedFloat  % 继续前一个图的编号
%     \centering   
    % Second row 
    \begin{subfigure}[b]{0.3\textwidth}
        \centering
        \includegraphics[height=6.8cm]{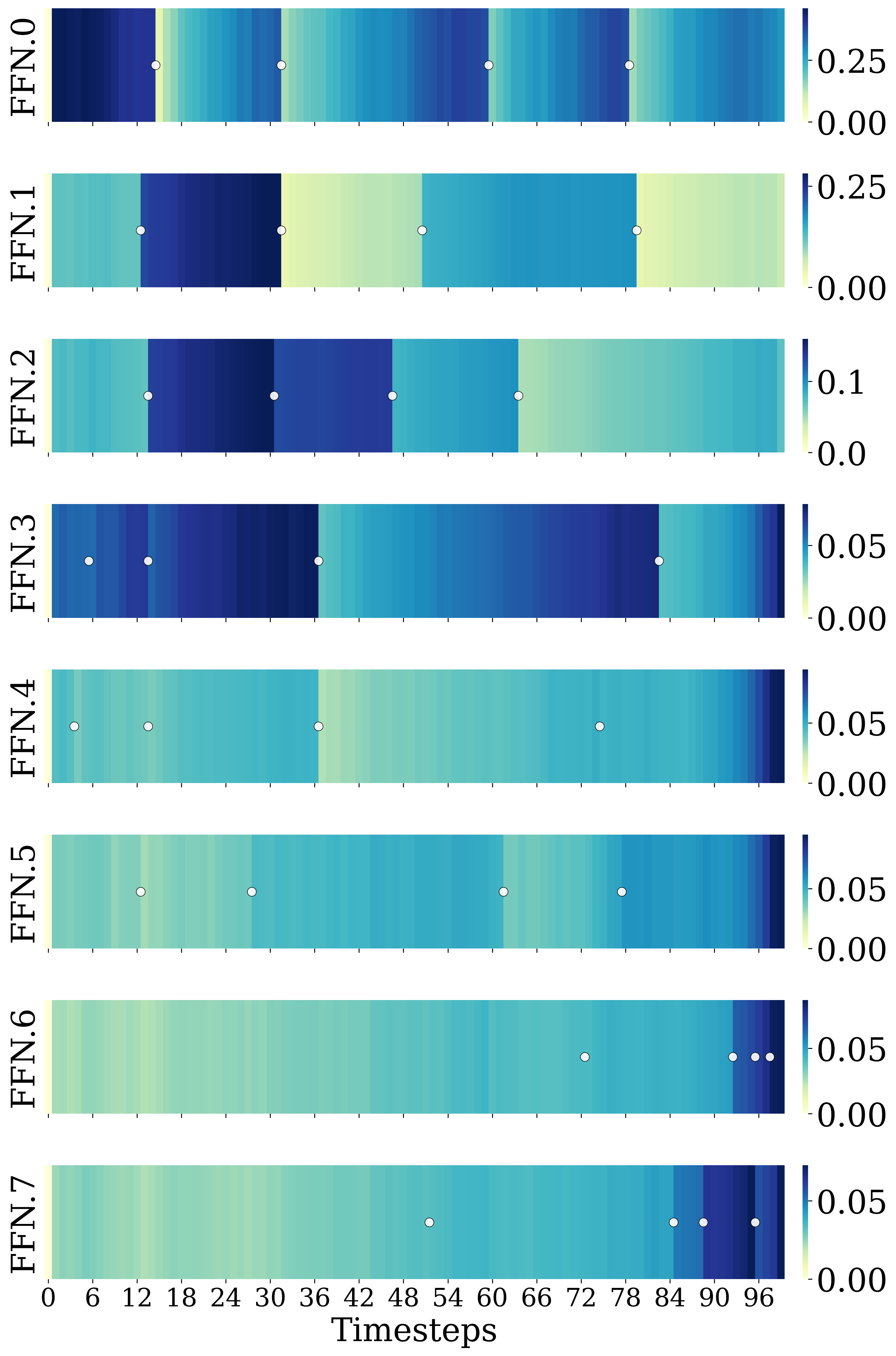}
        \caption{Lift$_{ph}$}
    \end{subfigure}
    \hfill
    \begin{subfigure}[b]{0.3\textwidth}
        \centering
        \includegraphics[height=6.8cm]{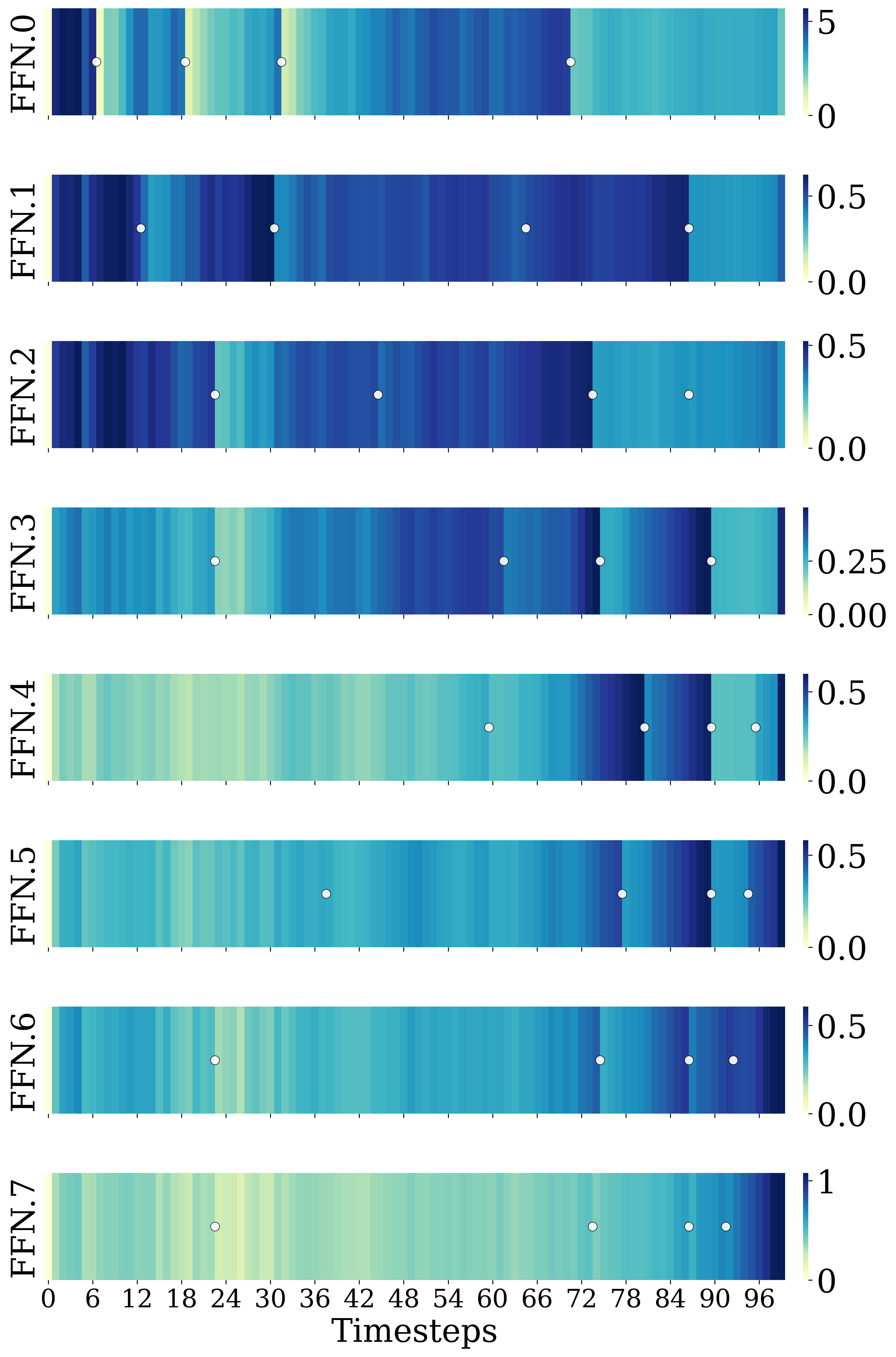}
        \caption{Push-T$_{ph}$}
    \end{subfigure}
    \hfill
    \begin{subfigure}[b]{0.3\textwidth}
        \centering
        \includegraphics[height=6.8cm]{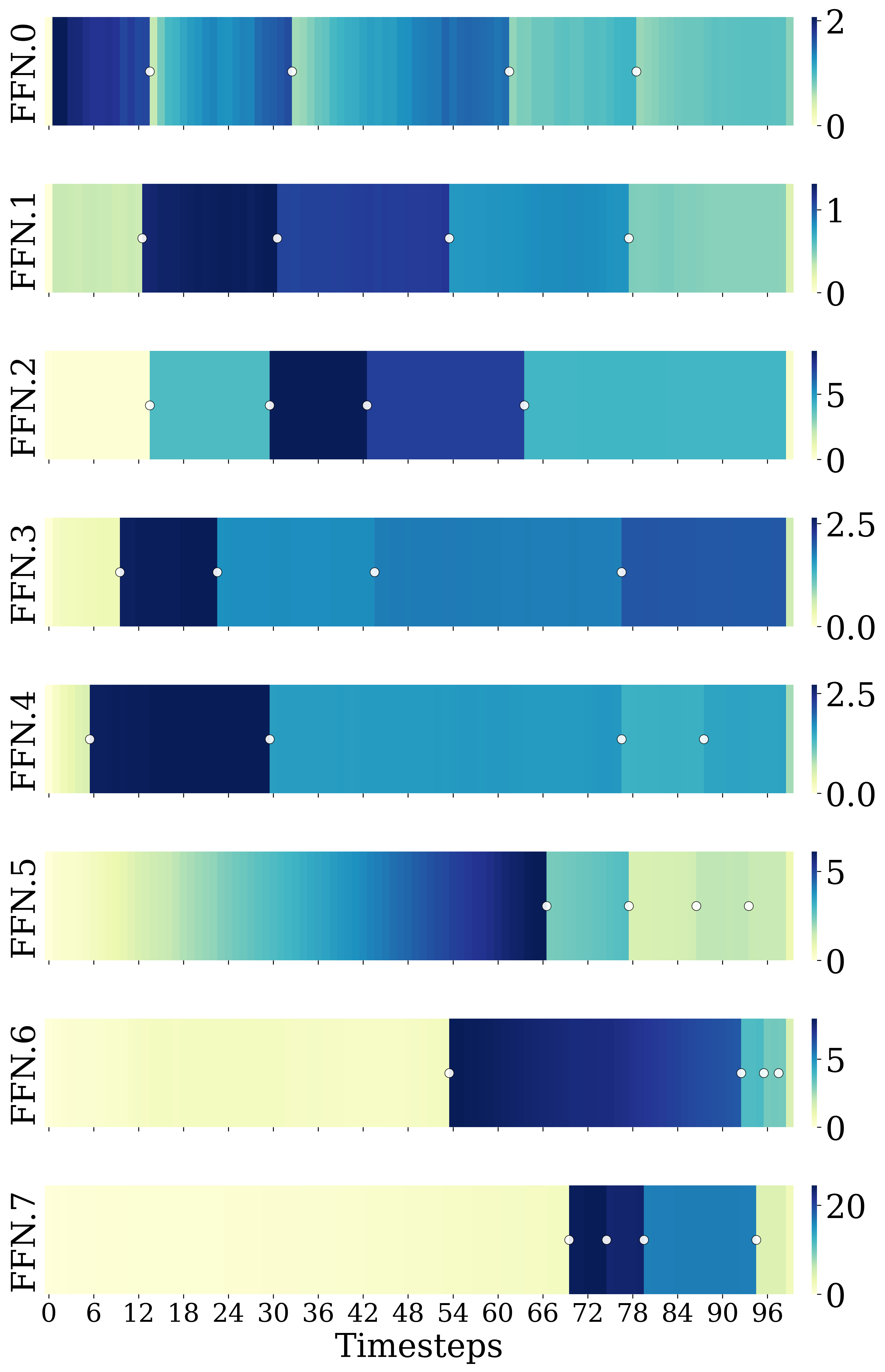}
        \caption{Square$_{mh}$}
    \end{subfigure}
% \end{figure}
% \begin{figure}[H]
%     \ContinuedFloat  % 继续前一个图的编号
%     \centering  
    % Thrid row 
    \begin{subfigure}[b]{0.3\textwidth}
        \centering
        \includegraphics[height=6.8cm]{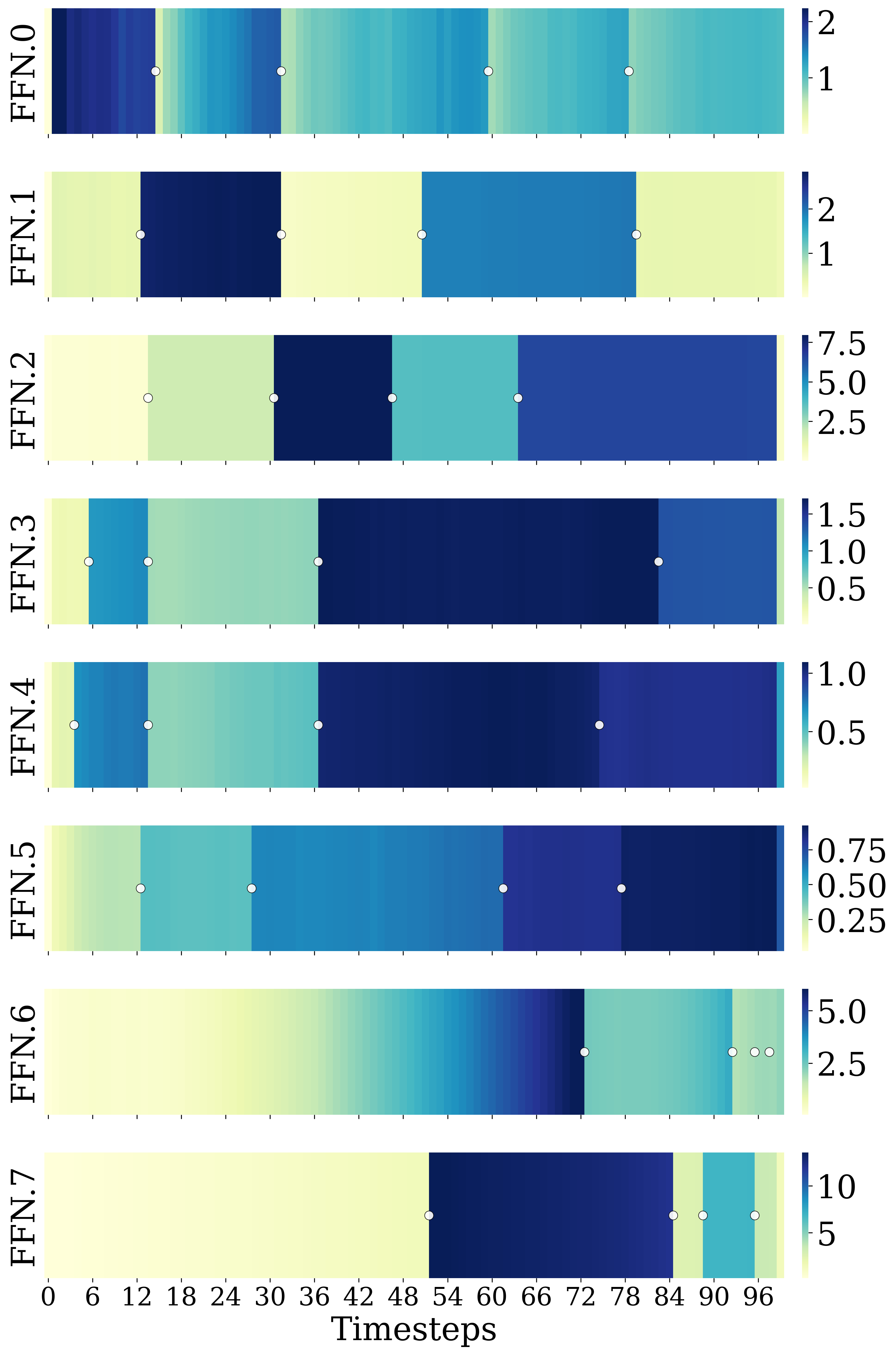}
        \caption{Square$_{ph}$}
    \end{subfigure}
    \hfill
     \begin{subfigure}[b]{0.3\textwidth}
        \centering
        \includegraphics[height=6.8cm]{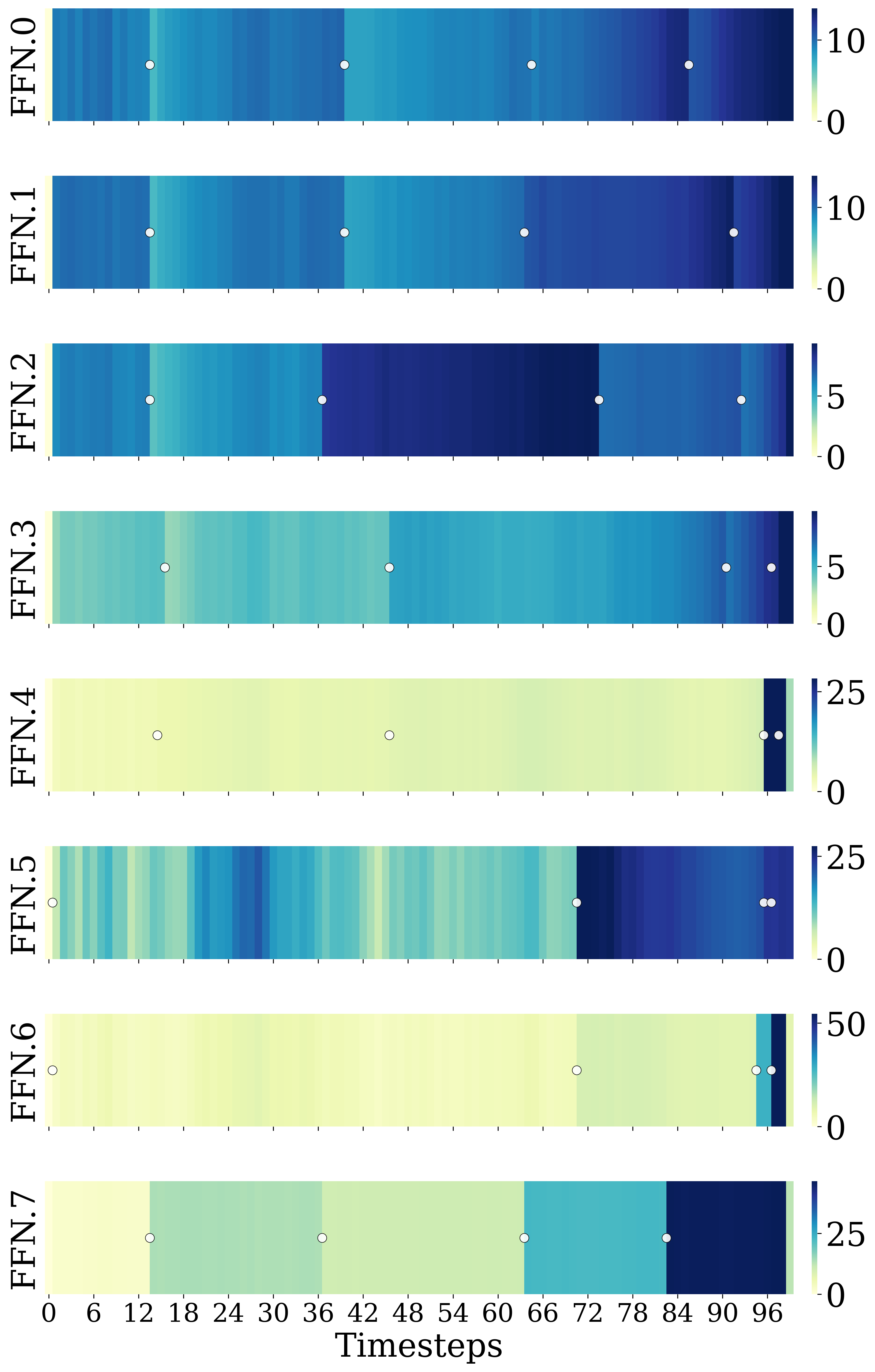}
        \caption{Transport$_{mh}$}
    \end{subfigure}
    \hfill
     \begin{subfigure}[b]{0.3\textwidth}
        \centering
        \includegraphics[height=6.8cm]{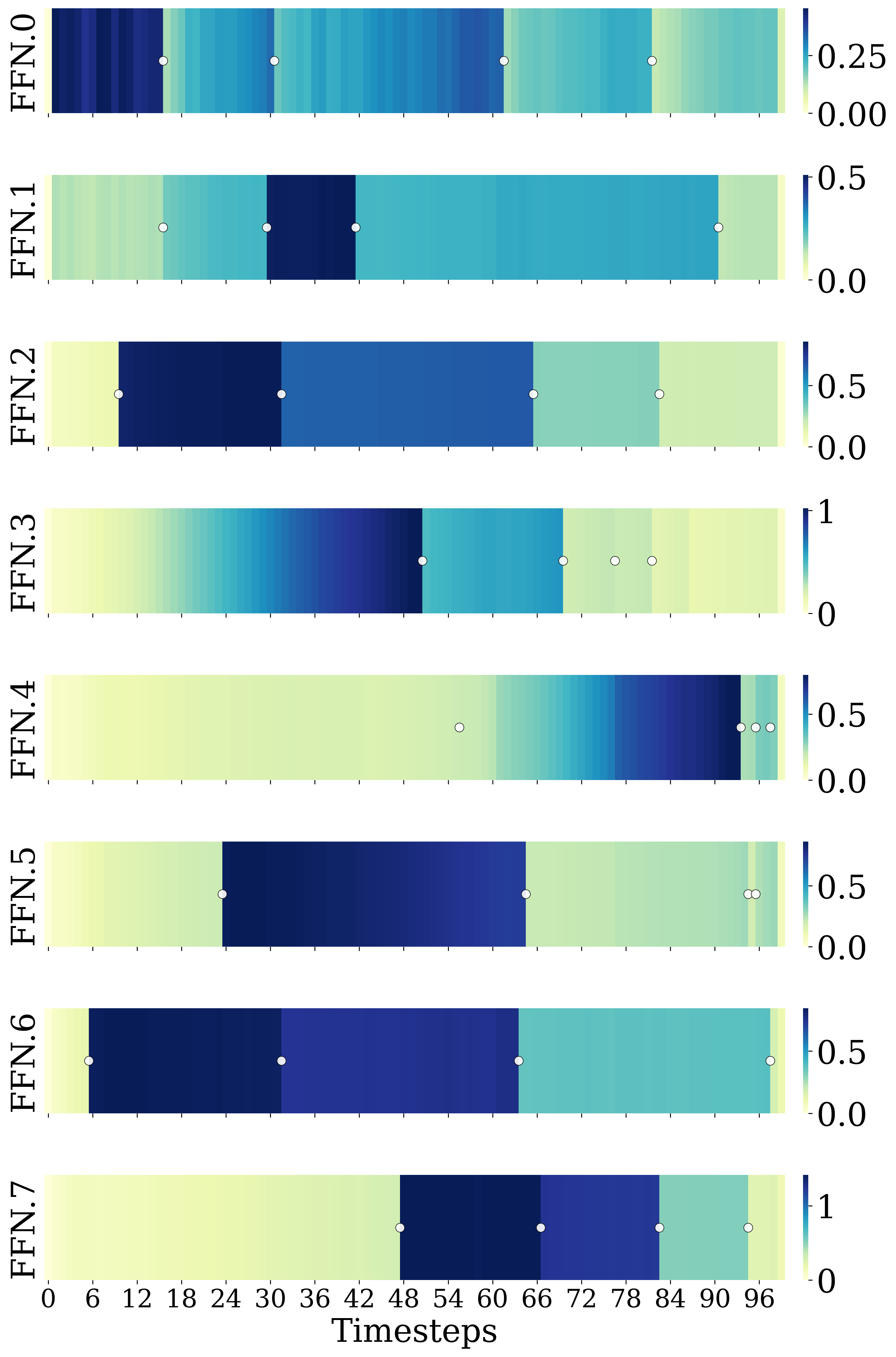}
        \caption{Transport$_{ph}$}
    \end{subfigure}
\end{figure}
\end{minipage}
\begin{figure}[H]
    \ContinuedFloat  % 继续前一个图的编号
    \centering  
        \begin{subfigure}[b]{0.3\textwidth}
        \centering
        \includegraphics[height=6.8cm]{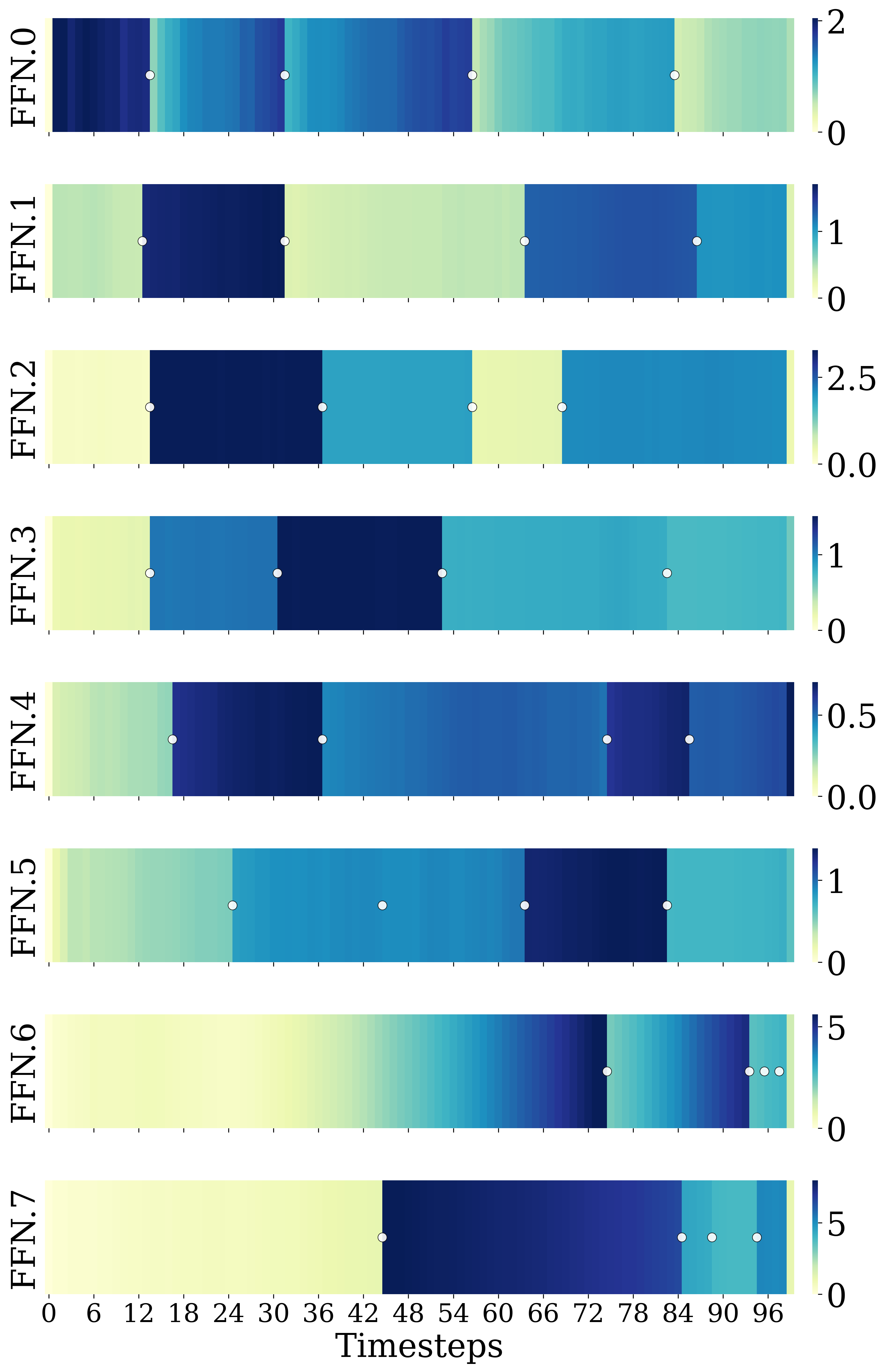}
        \caption{Tool\_hang$_{ph}$}
    \end{subfigure}
    \hfill
     \begin{subfigure}[b]{0.3\textwidth}
        \centering
        \includegraphics[height=6.8cm]{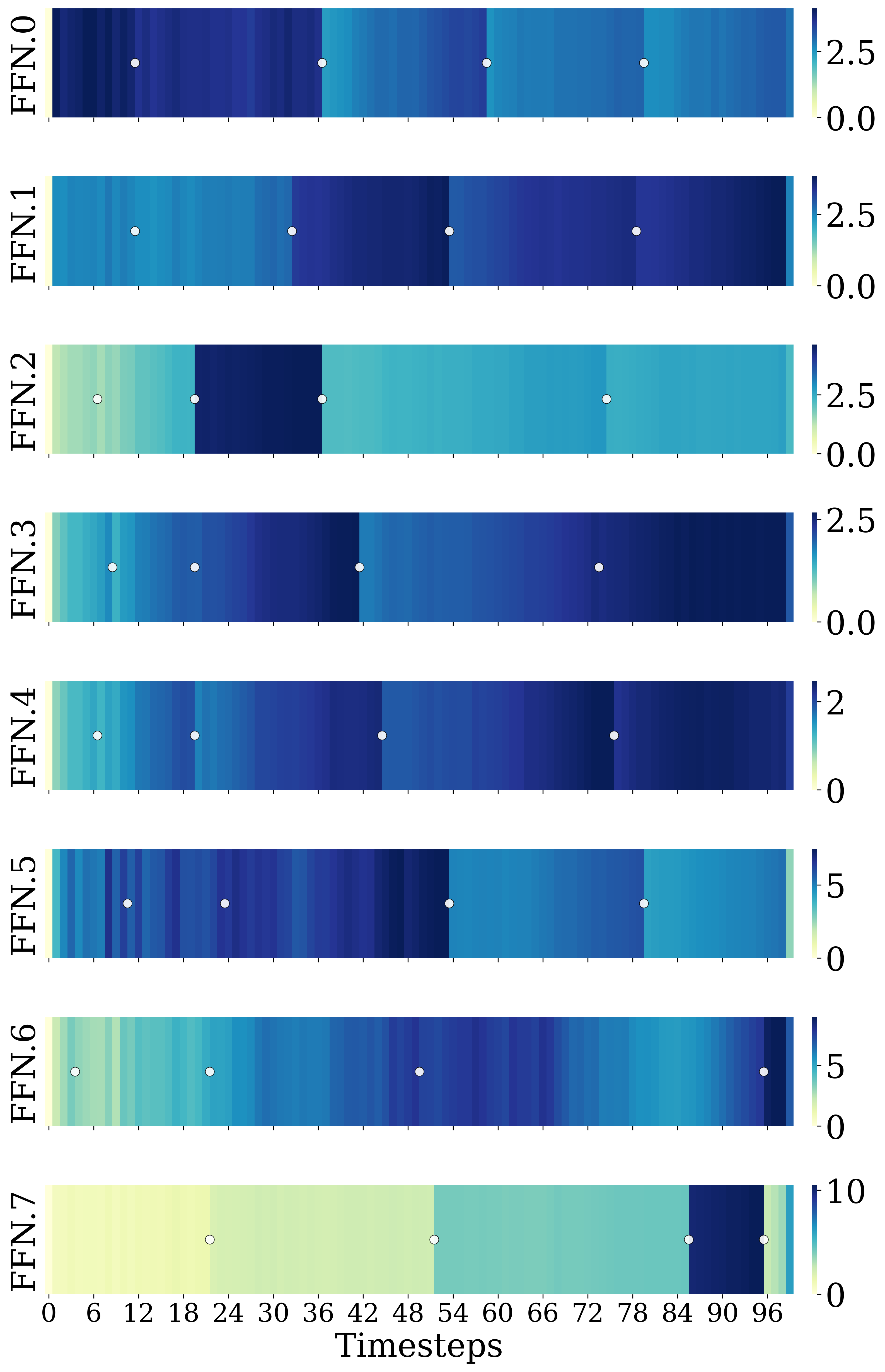}
        \caption{Kitchen}
    \end{subfigure}
    \hfill
     \begin{subfigure}[b]{0.3\textwidth}
        \centering
        \includegraphics[height=6.8cm]{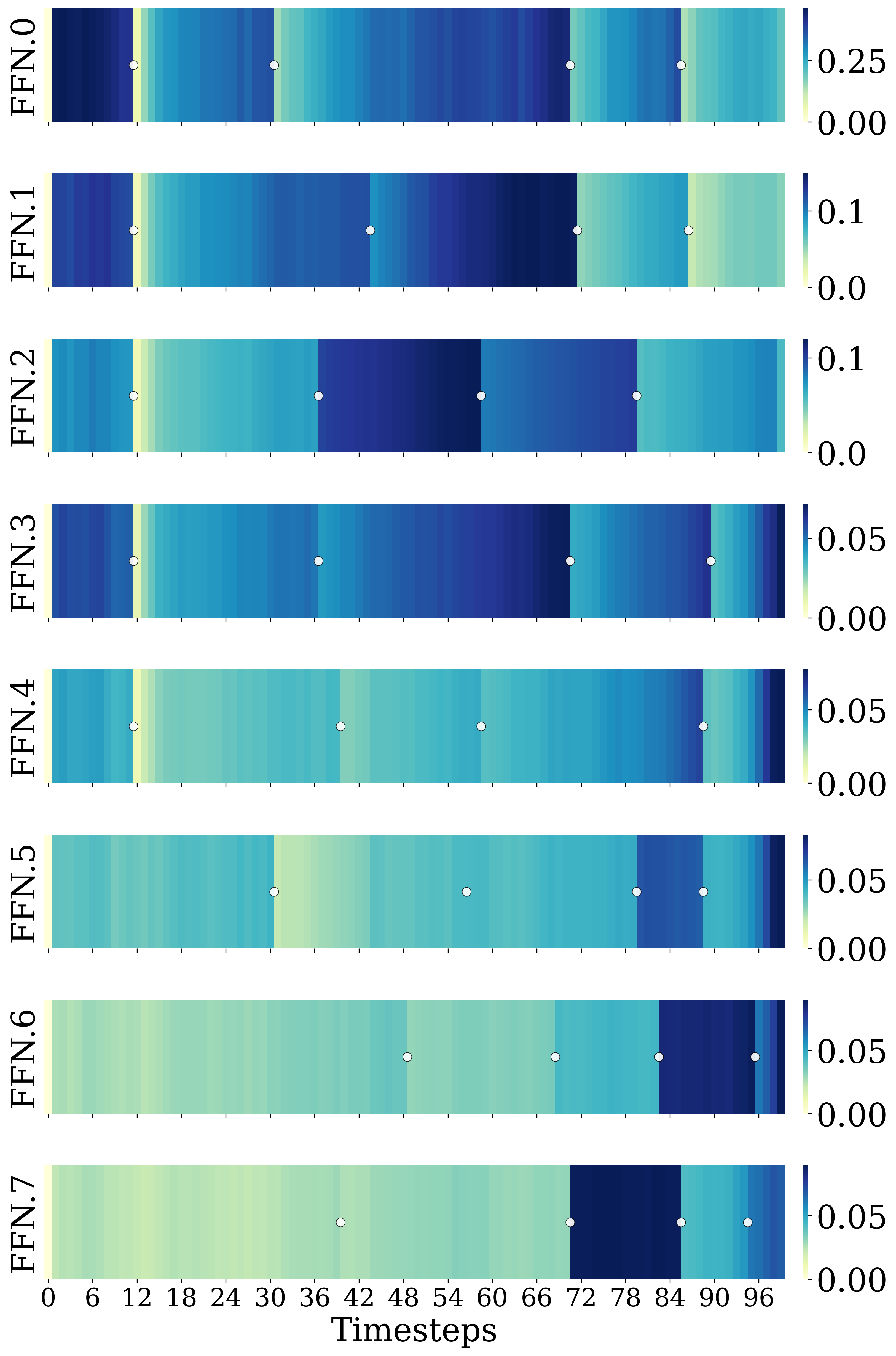}
        \caption{Block Pushing}
    \end{subfigure}
    \caption{ Caching error across all blocks throughout the diffusion process, with white dots indicating update steps.}
    \label{fig: fig3.e more}
\end{figure}

\subsection{Details on Update Steps Computed by ACS} 
\label{appendix:steps after ACS}

% Prior to conducting experiments, we first apply the offline algorithm to determine the Update steps for different Transformer blocks, in Table~\ref{tab:acs-block_pushing,tab:acs-can_mh-acs,tab:acs-can_ph,tab:acs-kitchen,tab:acs-lift_mh-bac,tab:acs-lift_ph,tab:acs-pushtT,tab:acs-square_ph,tab:acs-steps-acs,tab:acs-tool_ph,tab:acs-trans_ph,tab:acs-transport_mh-bac,} we report the ACS update steps for all blocks across all tasks at $S=10$ and $K=3$.

% Prior to conducting experiments, we first apply the offline algorithm to determine the optimal update steps for different blocks. 

Our algorithm employs a two-stage paradigm where we apply ACS followed by BUA to determine the optimal update steps for different blocks in the offline stage, and then accelerate Diffusion Policy by updating and reusing the cached features based on the prepared update steps in the online stage.

In Tables~\ref{tab:acs-can_ph}, \ref{tab:acs-lift_ph}, \ref{tab:acs-square_ph}, \ref{tab:acs-trans_ph}, \ref{tab:acs-tool_ph}, \ref{tab:acs-pushtT}, \ref{tab:acs-can_mh-acs}, \ref{tab:acs-lift_mh-bac}, \ref{tab:acs-steps-acs}, \ref{tab:acs-transport_mh-bac}, \ref{tab:acs-block_pushing} and \ref{tab:acs-kitchen}, we report the update steps after employing ACS for all blocks across all tasks at $S=10$.

%\ky{Report the steps of all the blocks and all the tasks, which are used to reproduce the results in our table.}

\begin{table}[H]
  \centering
  \caption{Update steps for Can$_{ph}$ computed by ACS.}
  \label{tab:acs-can_ph}
  \begin{tabular}{ll}
    \toprule
    Block       & Steps \\
    \midrule
    layers.0.SA  & 0,\,2,\,9,\,18,\,30,\,49,\,62,\,69,\,82,\,91 \\
    layers.0.CA  & 0,\,18,\,33,\,44,\,57,\,71,\,80,\,84,\,89,\,94 \\
    layers.0.FFN & 0,\,4,\,10,\,19,\,31,\,40,\,53,\,65,\,79,\,88 \\
    layers.1.SA  & 0,\,4,\,10,\,21,\,32,\,44,\,54,\,65,\,79,\,88 \\
    layers.1.CA  & 0,\,14,\,25,\,36,\,48,\,60,\,73,\,82,\,87,\,93 \\
    layers.1.FFN & 0,\,8,\,16,\,28,\,38,\,51,\,60,\,68,\,80,\,93 \\
    layers.2.SA  & 0,\,4,\,8,\,13,\,28,\,38,\,54,\,68,\,80,\,90 \\
    layers.2.CA  & 0,\,9,\,18,\,28,\,38,\,51,\,67,\,80,\,86,\,95 \\
    layers.2.FFN & 0,\,14,\,27,\,37,\,51,\,62,\,71,\,80,\,85,\,95 \\
    layers.3.SA  & 0,\,19,\,30,\,40,\,51,\,62,\,72,\,82,\,90,\,98 \\
    layers.3.CA  & 0,\,6,\,14,\,25,\,37,\,49,\,62,\,75,\,90,\,98 \\
    layers.3.FFN & 0,\,4,\,14,\,23,\,37,\,48,\,60,\,78,\,89,\,95 \\
    layers.4.SA  & 0,\,13,\,25,\,37,\,53,\,66,\,81,\,90,\,95,\,98 \\
    layers.4.CA  & 0,\,3,\,9,\,22,\,39,\,57,\,80,\,86,\,94,\,98 \\
    layers.4.FFN & 0,\,6,\,18,\,39,\,62,\,75,\,80,\,86,\,93,\,96 \\
    layers.5.SA  & 0,\,6,\,22,\,42,\,63,\,80,\,88,\,93,\,96,\,98 \\
    layers.5.CA  & 0,\,4,\,11,\,29,\,43,\,61,\,81,\,90,\,95,\,98 \\
    layers.5.FFN & 0,\,37,\,60,\,79,\,88,\,92,\,94,\,96,\,98,\,99 \\
    layers.6.SA  & 0,\,37,\,67,\,83,\,89,\,92,\,94,\,96,\,97,\,98 \\
    layers.6.CA  & 0,\,13,\,28,\,51,\,74,\,88,\,93,\,95,\,96,\,98 \\
    layers.6.FFN & 0,\,28,\,62,\,80,\,89,\,93,\,95,\,96,\,97,\,99 \\
    layers.7.SA  & 0,\,29,\,64,\,79,\,86,\,93,\,95,\,96,\,97,\,98 \\
    layers.7.CA  & 0,\,12,\,26,\,49,\,68,\,83,\,89,\,93,\,95,\,97 \\
    layers.7.FFN & 0,\,47,\,69,\,74,\,77,\,81,\,86,\,95,\,97,\,99 \\
    \bottomrule
  \end{tabular}
\end{table}

\begin{table}[h]
  \centering
  \caption{Update steps for Lift$_{ph}$ computed by ACS.}
  \label{tab:acs-lift_ph}
  \begin{tabular}{ll}
    \toprule
    Block       & Steps \\
    \midrule
    layers.0.SA  & 0,\,1,\,6,\,51,\,67,\,76,\,82,\,87,\,92,\,95 \\
    layers.0.CA  & 0,\,22,\,44,\,60,\,70,\,75,\,80,\,85,\,93,\,96 \\
    layers.0.FFN & 0,\,4,\,8,\,16,\,27,\,37,\,49,\,62,\,74,\,88 \\
    layers.1.SA  & 0,\,4,\,10,\,19,\,28,\,37,\,53,\,65,\,78,\,88 \\
    layers.1.CA  & 0,\,15,\,26,\,34,\,39,\,48,\,60,\,69,\,78,\,92 \\
    layers.1.FFN & 0,\,4,\,10,\,18,\,28,\,37,\,54,\,65,\,79,\,88 \\
    layers.2.SA  & 0,\,2,\,7,\,19,\,31,\,40,\,51,\,63,\,75,\,87 \\
    layers.2.CA  & 0,\,6,\,22,\,33,\,39,\,48,\,68,\,79,\,90,\,96 \\
    layers.2.FFN & 0,\,4,\,10,\,17,\,27,\,37,\,51,\,65,\,78,\,88 \\
    layers.3.SA  & 0,\,6,\,19,\,40,\,65,\,74,\,81,\,86,\,94,\,98 \\
    layers.3.CA  & 0,\,6,\,15,\,22,\,38,\,53,\,66,\,78,\,88,\,97 \\
    layers.3.FFN & 0,\,4,\,14,\,31,\,49,\,68,\,79,\,88,\,93,\,97 \\
    layers.4.SA  & 0,\,6,\,19,\,40,\,65,\,75,\,85,\,92,\,95,\,98 \\
    layers.4.CA  & 0,\,10,\,21,\,32,\,44,\,57,\,70,\,81,\,90,\,97 \\
    layers.4.FFN & 0,\,15,\,37,\,68,\,79,\,88,\,93,\,96,\,98,\,99 \\
    layers.5.SA  & 0,\,19,\,40,\,65,\,75,\,81,\,85,\,89,\,93,\,96 \\
    layers.5.CA  & 0,\,9,\,18,\,27,\,37,\,56,\,62,\,69,\,76,\,86 \\
    layers.5.FFN & 0,\,4,\,15,\,37,\,48,\,74,\,88,\,93,\,96,\,98 \\
    layers.6.SA  & 0,\,7,\,19,\,40,\,58,\,67,\,74,\,84,\,90,\,96 \\
    layers.6.CA  & 0,\,8,\,15,\,23,\,34,\,44,\,54,\,71,\,87,\,95 \\
    layers.6.FFN & 0,\,15,\,31,\,44,\,57,\,69,\,79,\,88,\,93,\,97 \\
    layers.7.SA  & 0,\,7,\,19,\,39,\,53,\,67,\,75,\,84,\,89,\,96 \\
    layers.7.CA  & 0,\,8,\,14,\,23,\,35,\,45,\,57,\,71,\,81,\,95 \\
    layers.7.FFN & 0,\,15,\,29,\,40,\,54,\,69,\,79,\,86,\,91,\,95 \\
    \bottomrule
  \end{tabular}
\end{table}

\begin{table}[h]
  \centering
  \caption{Update steps for Square$_{ph}$ computed by ACS.}
  \label{tab:acs-square_ph}
  \begin{tabular}{ll}
    \toprule
    Block       & Steps \\
    \midrule
    layers.0.SA  & 0,\,3,\,9,\,17,\,30,\,49,\,62,\,69,\,80,\,89 \\
    layers.0.CA  & 0,\,10,\,20,\,32,\,43,\,53,\,68,\,77,\,83,\,91 \\
    layers.0.FFN & 0,\,4,\,10,\,21,\,31,\,40,\,51,\,62,\,75,\,84 \\
    layers.1.SA  & 0,\,4,\,10,\,19,\,28,\,38,\,51,\,61,\,75,\,88 \\
    layers.1.CA  & 0,\,8,\,16,\,29,\,43,\,59,\,73,\,80,\,88,\,93 \\
    layers.1.FFN & 0,\,8,\,15,\,23,\,37,\,51,\,64,\,75,\,87,\,93 \\
    layers.2.SA  & 0,\,7,\,16,\,27,\,37,\,49,\,60,\,68,\,78,\,88 \\
    layers.2.CA  & 0,\,10,\,26,\,40,\,53,\,63,\,72,\,81,\,90,\,97 \\
    layers.2.FFN & 0,\,7,\,14,\,31,\,44,\,57,\,67,\,79,\,87,\,93 \\
    layers.3.SA  & 0,\,14,\,24,\,34,\,44,\,55,\,66,\,76,\,84,\,90 \\
    layers.3.CA  & 0,\,6,\,11,\,25,\,36,\,48,\,59,\,69,\,79,\,91 \\
    layers.3.FFN & 0,\,4,\,8,\,15,\,24,\,37,\,64,\,76,\,87,\,95 \\
    layers.4.SA  & 0,\,6,\,13,\,24,\,38,\,57,\,76,\,82,\,88,\,93 \\
    layers.4.CA  & 0,\,4,\,11,\,21,\,35,\,56,\,70,\,84,\,91,\,97 \\
    layers.4.FFN & 0,\,4,\,8,\,14,\,23,\,36,\,48,\,69,\,78,\,90 \\
    layers.5.SA  & 0,\,5,\,11,\,22,\,37,\,64,\,78,\,86,\,93,\,97 \\
    layers.5.CA  & 0,\,5,\,8,\,12,\,18,\,25,\,36,\,52,\,68,\,89 \\
    layers.5.FFN & 0,\,4,\,15,\,24,\,31,\,44,\,53,\,64,\,78,\,93 \\
    layers.6.SA  & 0,\,9,\,30,\,50,\,69,\,78,\,87,\,95,\,97,\,98 \\
    layers.6.CA  & 0,\,2,\,9,\,16,\,24,\,34,\,53,\,71,\,86,\,97 \\
    layers.6.FFN & 0,\,44,\,61,\,74,\,83,\,90,\,93,\,95,\,96,\,98 \\
    layers.7.SA  & 0,\,43,\,65,\,82,\,88,\,91,\,93,\,95,\,96,\,98 \\
    layers.7.CA  & 0,\,7,\,53,\,69,\,80,\,87,\,93,\,95,\,97,\,98 \\
    layers.7.FFN & 0,\,10,\,52,\,78,\,83,\,87,\,89,\,91,\,95,\,97 \\
    \bottomrule
  \end{tabular}
\end{table}

\begin{table}[h]
  \centering
  \caption{Update steps for Transport$_{ph}$ computed by ACS.}
  \label{tab:acs-trans_ph}
  \begin{tabular}{ll}
    \toprule
    Block       & Steps \\
    \midrule
    layers.0.SA  & 0,\,2,\,10,\,24,\,47,\,58,\,70,\,76,\,86,\,94 \\
    layers.0.CA  & 0,\,7,\,15,\,23,\,34,\,52,\,65,\,76,\,86,\,93 \\
    layers.0.FFN & 0,\,3,\,10,\,21,\,31,\,44,\,56,\,65,\,76,\,86 \\
    layers.1.SA  & 0,\,2,\,10,\,19,\,30,\,41,\,48,\,54,\,70,\,98 \\
    layers.1.CA  & 0,\,13,\,21,\,31,\,48,\,56,\,65,\,71,\,76,\,88 \\
    layers.1.FFN & 0,\,1,\,3,\,7,\,19,\,30,\,43,\,52,\,75,\,90 \\
    layers.2.SA  & 0,\,3,\,11,\,18,\,25,\,35,\,45,\,72,\,93,\,97 \\
    layers.2.CA  & 0,\,14,\,24,\,40,\,52,\,66,\,75,\,83,\,88,\,93 \\
    layers.2.FFN & 0,\,12,\,34,\,52,\,62,\,69,\,74,\,78,\,84,\,91 \\
    layers.3.SA  & 0,\,17,\,42,\,68,\,83,\,91,\,94,\,95,\,97,\,99 \\
    layers.3.CA  & 0,\,6,\,13,\,29,\,45,\,71,\,77,\,83,\,91,\,96 \\
    layers.3.FFN & 0,\,43,\,74,\,82,\,83,\,85,\,87,\,91,\,95,\,98 \\
    layers.4.SA  & 0,\,16,\,27,\,43,\,58,\,82,\,88,\,93,\,96,\,99 \\
    layers.4.CA  & 0,\,6,\,13,\,30,\,49,\,61,\,82,\,91,\,96,\,99 \\
    layers.4.FFN & 0,\,16,\,34,\,57,\,78,\,91,\,92,\,93,\,94,\,97 \\
    layers.5.SA  & 0,\,13,\,22,\,34,\,47,\,59,\,78,\,91,\,94,\,97 \\
    layers.5.CA  & 0,\,7,\,24,\,41,\,52,\,61,\,82,\,91,\,95,\,97 \\
    layers.5.FFN & 0,\,14,\,29,\,49,\,63,\,70,\,88,\,93,\,95,\,97 \\
    layers.6.SA  & 0,\,14,\,26,\,41,\,56,\,75,\,85,\,93,\,95,\,97 \\
    layers.6.CA  & 0,\,7,\,22,\,48,\,54,\,61,\,82,\,91,\,94,\,97 \\
    layers.6.FFN & 0,\,28,\,49,\,60,\,67,\,79,\,88,\,91,\,96,\,97 \\
    layers.7.SA  & 0,\,15,\,26,\,40,\,71,\,78,\,82,\,95,\,97,\,99 \\
    layers.7.CA  & 0,\,20,\,51,\,62,\,68,\,73,\,82,\,89,\,98,\,99 \\
    layers.7.FFN & 0,\,36,\,62,\,67,\,75,\,82,\,85,\,91,\,95,\,98 \\
    \bottomrule
  \end{tabular}
\end{table}

\begin{table}[h]
  \centering
  \caption{Update steps for Tool\_hang$_{ph}$ computed by ACS.}
  \label{tab:acs-tool_ph}
  \begin{tabular}{ll}
    \toprule
    Block       & Steps \\
    \midrule
    layers.0.SA  & 0,\,4,\,9,\,17,\,29,\,44,\,58,\,70,\,77,\,91 \\
    layers.0.CA  & 0,\,12,\,22,\,32,\,44,\,56,\,68,\,76,\,84,\,92 \\
    layers.0.FFN & 0,\,2,\,5,\,10,\,18,\,30,\,40,\,54,\,72,\,84 \\
    layers.1.SA  & 0,\,2,\,8,\,15,\,28,\,40,\,53,\,66,\,77,\,89 \\
    layers.1.CA  & 0,\,11,\,20,\,31,\,43,\,56,\,68,\,78,\,84,\,92 \\
    layers.1.FFN & 0,\,2,\,6,\,12,\,24,\,40,\,50,\,66,\,77,\,86 \\
    layers.2.SA  & 0,\,1,\,8,\,24,\,31,\,40,\,49,\,71,\,80,\,91 \\
    layers.2.CA  & 0,\,6,\,13,\,24,\,40,\,55,\,67,\,78,\,86,\,93 \\
    layers.2.FFN & 0,\,3,\,10,\,24,\,31,\,40,\,49,\,61,\,71,\,85 \\
    layers.3.SA  & 0,\,9,\,23,\,31,\,40,\,49,\,60,\,71,\,79,\,88 \\
    layers.3.CA  & 0,\,2,\,12,\,22,\,31,\,44,\,54,\,69,\,77,\,87 \\
    layers.3.FFN & 0,\,3,\,10,\,16,\,24,\,40,\,63,\,77,\,84,\,93 \\
    layers.4.SA  & 0,\,3,\,13,\,24,\,40,\,61,\,76,\,83,\,90,\,97 \\
    layers.4.CA  & 0,\,6,\,14,\,20,\,32,\,46,\,62,\,79,\,88,\,97 \\
    layers.4.FFN & 0,\,2,\,5,\,10,\,23,\,44,\,66,\,78,\,84,\,93 \\
    layers.5.SA  & 0,\,3,\,10,\,18,\,25,\,39,\,55,\,76,\,84,\,97 \\
    layers.5.CA  & 0,\,2,\,7,\,16,\,27,\,37,\,55,\,75,\,82,\,89 \\
    layers.5.FFN & 0,\,48,\,71,\,80,\,85,\,88,\,91,\,94,\,96,\,98 \\
    layers.6.SA  & 0,\,54,\,74,\,83,\,89,\,92,\,94,\,95,\,96,\,98 \\
    layers.6.CA  & 0,\,13,\,27,\,48,\,66,\,82,\,90,\,94,\,96,\,98 \\
    layers.6.FFN & 0,\,11,\,40,\,53,\,71,\,80,\,91,\,95,\,97,\,99 \\
    layers.7.SA  & 0,\,24,\,74,\,82,\,87,\,90,\,95,\,96,\,97,\,98 \\
    layers.7.CA  & 0,\,6,\,43,\,66,\,81,\,90,\,94,\,96,\,97,\,98 \\
    layers.7.FFN & 0,\,60,\,74,\,79,\,81,\,83,\,87,\,95,\,97,\,99 \\
    \bottomrule
  \end{tabular}
\end{table}

\begin{table}[h]
  \centering
  \caption{Update steps for Pusht-T computed by ACS.}
  \label{tab:acs-pushtT}
  \begin{tabular}{ll}
    \toprule
    Block       & Steps \\
    \midrule
    layers.0.SA  & 0,\,3,\,7,\,15,\,22,\,31,\,45,\,63,\,74,\,87 \\
    layers.0.CA  & 0,\,7,\,23,\,30,\,36,\,45,\,58,\,78,\,88,\,95 \\
    layers.0.FFN & 0,\,2,\,7,\,17,\,25,\,32,\,45,\,62,\,74,\,87 \\
    layers.1.SA  & 0,\,7,\,22,\,31,\,39,\,45,\,54,\,64,\,75,\,87 \\
    layers.1.CA  & 0,\,13,\,21,\,31,\,44,\,54,\,63,\,75,\,80,\,89 \\
    layers.1.FFN & 0,\,3,\,7,\,17,\,23,\,33,\,47,\,59,\,74,\,87 \\
    layers.2.SA  & 0,\,6,\,25,\,40,\,55,\,64,\,75,\,83,\,89,\,96 \\
    layers.2.CA  & 0,\,7,\,18,\,25,\,32,\,40,\,45,\,51,\,74,\,91 \\
    layers.2.FFN & 0,\,7,\,19,\,31,\,45,\,54,\,64,\,74,\,82,\,90 \\
    layers.3.SA  & 0,\,6,\,26,\,40,\,54,\,64,\,72,\,82,\,89,\,96 \\
    layers.3.CA  & 0,\,11,\,20,\,25,\,31,\,42,\,49,\,59,\,82,\,93 \\
    layers.3.FFN & 0,\,7,\,21,\,31,\,45,\,54,\,65,\,74,\,82,\,90 \\
    layers.4.SA  & 0,\,6,\,32,\,47,\,59,\,71,\,81,\,89,\,94,\,97 \\
    layers.4.CA  & 0,\,10,\,19,\,27,\,39,\,49,\,65,\,81,\,90,\,95 \\
    layers.4.FFN & 0,\,6,\,25,\,39,\,51,\,65,\,74,\,85,\,91,\,96 \\
    layers.5.SA  & 0,\,6,\,31,\,54,\,74,\,87,\,91,\,94,\,96,\,98 \\
    layers.5.CA  & 0,\,14,\,21,\,38,\,46,\,51,\,58,\,66,\,76,\,86 \\
    layers.5.FFN & 0,\,7,\,17,\,23,\,45,\,68,\,79,\,85,\,91,\,96 \\
    layers.6.SA  & 0,\,22,\,54,\,71,\,81,\,87,\,91,\,94,\,96,\,98 \\
    layers.6.CA  & 0,\,5,\,10,\,21,\,32,\,44,\,51,\,59,\,73,\,92 \\
    layers.6.FFN & 0,\,7,\,17,\,23,\,45,\,64,\,74,\,84,\,91,\,96 \\
    layers.7.SA  & 0,\,21,\,52,\,72,\,82,\,87,\,90,\,93,\,95,\,97 \\
    layers.7.CA  & 0,\,14,\,22,\,39,\,47,\,53,\,58,\,79,\,89,\,96 \\
    layers.7.FFN & 0,\,7,\,19,\,31,\,45,\,66,\,86,\,91,\,94,\,97 \\
    \bottomrule
  \end{tabular}
\end{table}

\begin{table}[h]
  \centering
  \caption{Update steps for Can$_{mh}$ computed by ACS.}
  \label{tab:acs-can_mh-acs}
  \begin{tabular}{ll}
    \toprule
    Block       & Steps \\
    \midrule
    layers.0.SA  & 0,\,7,\,16,\,26,\,41,\,52,\,62,\,75,\,83,\,92 \\
    layers.0.CA  & 0,\,14,\,28,\,43,\,56,\,70,\,78,\,80,\,86,\,93 \\
    layers.0.FFN & 0,\,4,\,11,\,19,\,29,\,43,\,54,\,70,\,81,\,90 \\
    layers.1.SA  & 0,\,4,\,10,\,20,\,29,\,42,\,54,\,70,\,81,\,89 \\
    layers.1.CA  & 0,\,13,\,26,\,42,\,55,\,68,\,78,\,84,\,90,\,95 \\
    layers.1.FFN & 0,\,5,\,11,\,19,\,30,\,43,\,59,\,69,\,81,\,92 \\
    layers.2.SA  & 0,\,5,\,13,\,29,\,43,\,59,\,70,\,78,\,85,\,92 \\
    layers.2.CA  & 0,\,9,\,17,\,26,\,39,\,54,\,67,\,77,\,86,\,96 \\
    layers.2.FFN & 0,\,3,\,11,\,18,\,24,\,33,\,43,\,60,\,80,\,92 \\
    layers.3.SA  & 0,\,12,\,25,\,38,\,49,\,59,\,69,\,80,\,86,\,96 \\
    layers.3.CA  & 0,\,8,\,16,\,25,\,37,\,52,\,63,\,71,\,87,\,96 \\
    layers.3.FFN & 0,\,3,\,11,\,19,\,28,\,43,\,57,\,80,\,88,\,96 \\
    layers.4.SA  & 0,\,9,\,19,\,30,\,45,\,63,\,80,\,87,\,95,\,98 \\
    layers.4.CA  & 0,\,3,\,6,\,13,\,23,\,38,\,52,\,63,\,71,\,84 \\
    layers.4.FFN & 0,\,2,\,11,\,38,\,62,\,78,\,85,\,92,\,97,\,99 \\
    layers.5.SA  & 0,\,11,\,34,\,60,\,80,\,87,\,92,\,95,\,97,\,98 \\
    layers.5.CA  & 0,\,6,\,19,\,31,\,44,\,57,\,74,\,89,\,95,\,97 \\
    layers.5.FFN & 0,\,54,\,69,\,81,\,88,\,91,\,93,\,95,\,97,\,99 \\
    layers.6.SA  & 0,\,60,\,80,\,89,\,92,\,94,\,95,\,96,\,97,\,98 \\
    layers.6.CA  & 0,\,6,\,24,\,46,\,70,\,89,\,93,\,95,\,96,\,98 \\
    layers.6.FFN & 0,\,43,\,59,\,79,\,87,\,91,\,94,\,96,\,97,\,98 \\
    layers.7.SA  & 0,\,24,\,59,\,72,\,81,\,88,\,91,\,94,\,96,\,98 \\
    layers.7.CA  & 0,\,6,\,28,\,54,\,69,\,83,\,90,\,95,\,97,\,98 \\
    layers.7.FFN & 0,\,60,\,70,\,73,\,76,\,80,\,86,\,95,\,97,\,99 \\
    \bottomrule
  \end{tabular}
\end{table}

\begin{table}[h]
  \centering
  \caption{Update steps for Lift$_{mh}$ computed by ACS.}
  \label{tab:acs-lift_mh-bac}
  \begin{tabular}{ll}
    \toprule
    Block       & Steps \\
    \midrule
    layers.0.SA  & 0,\,3,\,8,\,21,\,33,\,49,\,62,\,69,\,81,\,91 \\
    layers.0.CA  & 0,\,14,\,24,\,36,\,49,\,60,\,71,\,80,\,87,\,94 \\
    layers.0.FFN & 0,\,4,\,8,\,16,\,28,\,37,\,53,\,65,\,79,\,88 \\
    layers.1.SA  & 0,\,4,\,8,\,16,\,27,\,38,\,51,\,62,\,79,\,87 \\
    layers.1.CA  & 0,\,7,\,16,\,29,\,41,\,54,\,67,\,79,\,88,\,96 \\
    layers.1.FFN & 0,\,4,\,8,\,14,\,30,\,38,\,52,\,65,\,79,\,88 \\
    layers.2.SA  & 0,\,4,\,7,\,12,\,25,\,33,\,43,\,60,\,78,\,90 \\
    layers.2.CA  & 0,\,9,\,19,\,30,\,40,\,49,\,58,\,69,\,80,\,90 \\
    layers.2.FFN & 0,\,15,\,23,\,32,\,43,\,60,\,68,\,83,\,93,\,98 \\
    layers.3.SA  & 0,\,19,\,34,\,44,\,54,\,64,\,78,\,86,\,94,\,98 \\
    layers.3.CA  & 0,\,5,\,12,\,20,\,28,\,36,\,47,\,64,\,76,\,96 \\
    layers.3.FFN & 0,\,4,\,22,\,40,\,54,\,68,\,78,\,83,\,90,\,96 \\
    layers.4.SA  & 0,\,5,\,18,\,32,\,45,\,62,\,79,\,89,\,96,\,99 \\
    layers.4.CA  & 0,\,5,\,14,\,25,\,36,\,47,\,61,\,79,\,88,\,95 \\
    layers.4.FFN & 0,\,13,\,40,\,57,\,68,\,79,\,89,\,93,\,96,\,98 \\
    layers.5.SA  & 0,\,32,\,56,\,72,\,83,\,90,\,94,\,96,\,98,\,99 \\
    layers.5.CA  & 0,\,14,\,31,\,50,\,66,\,79,\,87,\,91,\,95,\,98 \\
    layers.5.FFN & 0,\,33,\,60,\,75,\,85,\,90,\,93,\,95,\,97,\,99 \\
    layers.6.SA  & 0,\,34,\,65,\,80,\,87,\,91,\,93,\,95,\,96,\,98 \\
    layers.6.CA  & 0,\,9,\,22,\,38,\,57,\,78,\,89,\,93,\,95,\,97 \\
    layers.6.FFN & 0,\,27,\,51,\,65,\,81,\,89,\,92,\,94,\,96,\,98 \\
    layers.7.SA  & 0,\,15,\,32,\,63,\,80,\,87,\,93,\,95,\,96,\,98 \\
    layers.7.CA  & 0,\,4,\,21,\,39,\,55,\,70,\,84,\,92,\,95,\,97 \\
    layers.7.FFN & 0,\,43,\,63,\,68,\,71,\,75,\,82,\,92,\,96,\,98 \\
    \bottomrule
  \end{tabular}
\end{table}

\begin{table}[h]
  \centering
  \caption{Update steps for Square$_{mh}$ computed by ACS.}
  \label{tab:acs-steps-acs}
  \begin{tabular}{ll}
    \toprule
    Block       & Steps \\
    \midrule
    layers.0.SA  & 0,\,1,\,3,\,9,\,18,\,31,\,51,\,61,\,76,\,85 \\
    layers.0.CA  & 0,\,13,\,26,\,41,\,54,\,66,\,77,\,81,\,86,\,92 \\
    layers.0.FFN & 0,\,3,\,8,\,18,\,29,\,40,\,53,\,66,\,77,\,88 \\
    layers.1.SA  & 0,\,1,\,6,\,12,\,24,\,31,\,50,\,66,\,77,\,87 \\
    layers.1.CA  & 0,\,7,\,19,\,34,\,47,\,58,\,68,\,78,\,86,\,92 \\
    layers.1.FFN & 0,\,1,\,7,\,15,\,24,\,40,\,50,\,66,\,77,\,89 \\
    layers.2.SA  & 0,\,9,\,24,\,32,\,49,\,61,\,71,\,77,\,84,\,91 \\
    layers.2.CA  & 0,\,8,\,21,\,33,\,46,\,60,\,72,\,82,\,90,\,96 \\
    layers.2.FFN & 0,\,3,\,9,\,23,\,34,\,44,\,58,\,66,\,78,\,93 \\
    layers.3.SA  & 0,\,9,\,23,\,33,\,44,\,55,\,68,\,78,\,84,\,96 \\
    layers.3.CA  & 0,\,8,\,15,\,24,\,34,\,47,\,61,\,77,\,90,\,96 \\
    layers.3.FFN & 0,\,3,\,10,\,24,\,44,\,68,\,78,\,86,\,92,\,97 \\
    layers.4.SA  & 0,\,4,\,14,\,25,\,41,\,62,\,78,\,84,\,92,\,96 \\
    layers.4.CA  & 0,\,9,\,15,\,23,\,34,\,54,\,76,\,88,\,93,\,96 \\
    layers.4.FFN & 0,\,3,\,9,\,20,\,49,\,69,\,78,\,84,\,91,\,97 \\
    layers.5.SA  & 0,\,12,\,28,\,48,\,77,\,85,\,91,\,94,\,96,\,98 \\
    layers.5.CA  & 0,\,6,\,15,\,26,\,35,\,56,\,74,\,89,\,95,\,97 \\
    layers.5.FFN & 0,\,41,\,76,\,82,\,86,\,89,\,91,\,95,\,97,\,99 \\
    layers.6.SA  & 0,\,77,\,87,\,91,\,93,\,94,\,95,\,97,\,98,\,99 \\
    layers.6.CA  & 0,\,12,\,23,\,58,\,84,\,90,\,93,\,95,\,97,\,98 \\
    layers.6.FFN & 0,\,6,\,28,\,58,\,85,\,93,\,95,\,96,\,97,\,98 \\
    layers.7.SA  & 0,\,20,\,55,\,73,\,84,\,90,\,93,\,96,\,97,\,98 \\
    layers.7.CA  & 0,\,3,\,13,\,30,\,46,\,83,\,91,\,94,\,96,\,98 \\
    layers.7.FFN & 0,\,21,\,58,\,72,\,77,\,81,\,85,\,92,\,95,\,98 \\
    \bottomrule
  \end{tabular}
\end{table}

\begin{table}[h]
  \centering
  \caption{Update steps for Transport$_{mh}$ computed by ACS.}
  \label{tab:acs-transport_mh-bac}
  \begin{tabular}{ll}
    \toprule
    Block       & Steps \\
    \midrule
    layers.0.SA  & 0,\,2,\,7,\,18,\,30,\,43,\,58,\,70,\,80,\,91 \\
    layers.0.CA  & 0,\,3,\,5,\,10,\,24,\,35,\,62,\,79,\,93,\,97 \\
    layers.0.FFN & 0,\,1,\,7,\,18,\,29,\,51,\,61,\,71,\,81,\,93 \\
    layers.1.SA  & 0,\,4,\,13,\,25,\,38,\,51,\,62,\,76,\,85,\,93 \\
    layers.1.CA  & 0,\,6,\,17,\,29,\,41,\,53,\,63,\,76,\,89,\,95 \\
    layers.1.FFN & 0,\,7,\,18,\,29,\,40,\,52,\,62,\,74,\,86,\,94 \\
    layers.2.SA  & 0,\,5,\,18,\,28,\,46,\,58,\,70,\,78,\,90,\,96 \\
    layers.2.CA  & 0,\,2,\,10,\,24,\,37,\,49,\,59,\,68,\,77,\,90 \\
    layers.2.FFN & 0,\,3,\,10,\,24,\,41,\,52,\,63,\,78,\,88,\,95 \\
    layers.3.SA  & 0,\,5,\,18,\,29,\,42,\,58,\,71,\,80,\,90,\,94 \\
    layers.3.CA  & 0,\,5,\,13,\,22,\,32,\,44,\,53,\,64,\,77,\,91 \\
    layers.3.FFN & 0,\,1,\,6,\,18,\,30,\,41,\,52,\,63,\,78,\,91 \\
    layers.4.SA  & 0,\,3,\,10,\,18,\,30,\,41,\,55,\,74,\,81,\,93 \\
    layers.4.CA  & 0,\,5,\,13,\,23,\,31,\,41,\,53,\,68,\,78,\,93 \\
    layers.4.FFN & 0,\,1,\,7,\,18,\,30,\,41,\,52,\,68,\,80,\,91 \\
    layers.5.SA  & 0,\,1,\,3,\,10,\,19,\,31,\,41,\,54,\,76,\,91 \\
    layers.5.CA  & 0,\,6,\,15,\,25,\,31,\,36,\,42,\,52,\,63,\,78 \\
    layers.5.FFN & 0,\,5,\,19,\,42,\,66,\,88,\,92,\,96,\,98,\,99 \\
    layers.6.SA  & 0,\,1,\,8,\,20,\,41,\,63,\,75,\,87,\,97,\,99 \\
    layers.6.CA  & 0,\,17,\,36,\,56,\,77,\,88,\,93,\,96,\,98,\,99 \\
    layers.6.FFN & 0,\,1,\,3,\,10,\,47,\,63,\,76,\,90,\,97,\,98 \\
    layers.7.SA  & 0,\,1,\,10,\,46,\,86,\,90,\,92,\,94,\,96,\,98 \\
    layers.7.CA  & 0,\,1,\,3,\,10,\,18,\,42,\,52,\,75,\,84,\,97 \\
    layers.7.FFN & 0,\,3,\,10,\,17,\,35,\,46,\,63,\,76,\,90,\,97 \\
    \bottomrule
  \end{tabular}
\end{table}

\begin{table}[h]
  \centering
  \caption{Update steps for Block Pushing computed by ACS.}
  \label{tab:acs-block_pushing}
  \begin{tabular}{ll}
    \toprule
    Block       & Steps \\
    \midrule
    layers.0.SA  & 0,\,1,\,2,\,4,\,14,\,25,\,32,\,40,\,62,\,74 \\
    layers.0.CA  & 0,\,3,\,13,\,26,\,41,\,57,\,73,\,83,\,91,\,97 \\
    layers.0.FFN & 0,\,1,\,3,\,7,\,13,\,18,\,26,\,59,\,74,\,82 \\
    layers.1.SA  & 0,\,1,\,7,\,13,\,18,\,25,\,40,\,59,\,74,\,90 \\
    layers.1.CA  & 0,\,4,\,12,\,26,\,42,\,58,\,72,\,83,\,92,\,97 \\
    layers.1.FFN & 0,\,3,\,7,\,16,\,26,\,40,\,59,\,76,\,83,\,92 \\
    layers.2.SA  & 0,\,3,\,7,\,16,\,25,\,40,\,57,\,74,\,83,\,96 \\
    layers.2.CA  & 0,\,4,\,7,\,12,\,19,\,32,\,48,\,66,\,84,\,95 \\
    layers.2.FFN & 0,\,3,\,7,\,17,\,26,\,40,\,59,\,74,\,86,\,96 \\
    layers.3.SA  & 0,\,5,\,13,\,26,\,41,\,52,\,70,\,81,\,86,\,96 \\
    layers.3.CA  & 0,\,3,\,6,\,9,\,10,\,12,\,22,\,56,\,81,\,98 \\
    layers.3.FFN & 0,\,5,\,10,\,18,\,26,\,52,\,71,\,83,\,92,\,97 \\
    layers.4.SA  & 0,\,4,\,9,\,16,\,26,\,41,\,53,\,71,\,83,\,95 \\
    layers.4.CA  & 0,\,3,\,9,\,16,\,25,\,36,\,51,\,68,\,82,\,93 \\
    layers.4.FFN & 0,\,7,\,16,\,30,\,41,\,53,\,70,\,83,\,92,\,97 \\
    layers.5.SA  & 0,\,5,\,13,\,25,\,40,\,53,\,71,\,83,\,92,\,97 \\
    layers.5.CA  & 0,\,5,\,11,\,20,\,28,\,42,\,53,\,71,\,81,\,90 \\
    layers.5.FFN & 0,\,5,\,15,\,27,\,46,\,58,\,71,\,83,\,92,\,97 \\
    layers.6.SA  & 0,\,11,\,25,\,38,\,53,\,70,\,81,\,88,\,93,\,97 \\
    layers.6.CA  & 0,\,3,\,6,\,10,\,27,\,42,\,64,\,75,\,84,\,93 \\
    layers.6.FFN & 0,\,17,\,38,\,53,\,64,\,74,\,83,\,89,\,93,\,97 \\
    layers.7.SA  & 0,\,25,\,51,\,66,\,75,\,81,\,86,\,90,\,93,\,97 \\
    layers.7.CA  & 0,\,7,\,14,\,26,\,49,\,71,\,83,\,91,\,95,\,97 \\
    layers.7.FFN & 0,\,16,\,40,\,64,\,76,\,83,\,89,\,93,\,96,\,98 \\
    \bottomrule
  \end{tabular}
\end{table}

\begin{table}[h]
  \centering
  \caption{Update steps for Kitchen computed by ACS.}
  \label{tab:acs-kitchen}
  \begin{tabular}{ll}
    \toprule
    Block       & Steps \\
    \midrule
    layers.0.SA  & 0,\,2,\,5,\,10,\,19,\,29,\,43,\,54,\,70,\,83 \\
    layers.0.CA  & 0,\,1,\,6,\,18,\,29,\,41,\,54,\,68,\,85,\,95 \\
    layers.0.FFN & 0,\,3,\,7,\,14,\,25,\,38,\,46,\,59,\,70,\,84 \\
    layers.1.SA  & 0,\,4,\,10,\,19,\,28,\,40,\,49,\,59,\,75,\,87 \\
    layers.1.CA  & 0,\,1,\,4,\,8,\,11,\,19,\,30,\,40,\,49,\,61 \\
    layers.1.FFN & 0,\,5,\,10,\,18,\,29,\,43,\,54,\,68,\,78,\,91 \\
    layers.2.SA  & 0,\,4,\,10,\,19,\,30,\,41,\,54,\,72,\,84,\,92 \\
    layers.2.CA  & 0,\,2,\,5,\,8,\,13,\,19,\,30,\,41,\,54,\,70 \\
    layers.2.FFN & 0,\,1,\,5,\,10,\,18,\,27,\,41,\,55,\,70,\,87 \\
    layers.3.SA  & 0,\,1,\,7,\,18,\,29,\,41,\,55,\,68,\,82,\,92 \\
    layers.3.CA  & 0,\,3,\,6,\,10,\,13,\,19,\,31,\,41,\,55,\,73 \\
    layers.3.FFN & 0,\,2,\,5,\,8,\,11,\,18,\,27,\,41,\,59,\,79 \\
    layers.4.SA  & 0,\,4,\,10,\,18,\,31,\,41,\,59,\,72,\,84,\,92 \\
    layers.4.CA  & 0,\,1,\,7,\,15,\,27,\,38,\,49,\,59,\,70,\,82 \\
    layers.4.FFN & 0,\,2,\,5,\,11,\,18,\,27,\,41,\,54,\,70,\,88 \\
    layers.5.SA  & 0,\,3,\,8,\,15,\,25,\,37,\,46,\,61,\,78,\,91 \\
    layers.5.CA  & 0,\,3,\,13,\,22,\,32,\,43,\,54,\,65,\,76,\,92 \\
    layers.5.FFN & 0,\,1,\,5,\,11,\,18,\,27,\,43,\,57,\,72,\,88 \\
    layers.6.SA  & 0,\,12,\,28,\,41,\,52,\,62,\,72,\,84,\,92,\,98 \\
    layers.6.CA  & 0,\,1,\,6,\,13,\,21,\,28,\,40,\,54,\,69,\,92 \\
    layers.6.FFN & 0,\,1,\,5,\,18,\,29,\,47,\,59,\,92,\,96,\,98 \\
    layers.7.SA  & 0,\,2,\,41,\,62,\,80,\,89,\,93,\,95,\,97,\,98 \\
    layers.7.CA  & 0,\,7,\,16,\,40,\,53,\,60,\,76,\,88,\,92,\,98 \\
    layers.7.FFN & 0,\,1,\,6,\,11,\,25,\,38,\,60,\,68,\,88,\,95 \\
    \bottomrule
  \end{tabular}
\end{table}

\subsection{Details on Update Steps Computed by BUA}

\label{appendix:steps after BUA}

In Tables~\ref{tab:bua-can_ph-bua}, \ref{tab:bua-lift_ph-bua}, \ref{tab:bua-square_ph-bua}, \ref{tab:bua-trans_ph-bua}, \ref{tab:bua-tool_ph-bua}, \ref{tab:bua-pushtT-bua}, \ref{tab:bua-can_mh-bua}, \ref{tab:bua-lift_mh-bua}, \ref{tab:bua-square_mh-bua}, \ref{tab:bua-transport_mh-bua}, \ref{tab:bua-block_pushing-bua} and \ref{tab:bua-kitchen-bua}, we report the steps added after employing BUA across all tasks at $S=10$ and $k=5$.

\begin{table}[h]
  \centering
  \small
  \caption{Update steps added for Can$_{ph}$ after BUA.}
  \label{tab:bua-can_ph-bua}
  \begin{tabular}{@{} l p{0.40\textwidth} @{}}
    \toprule
    Block       & Added Steps \\
    \midrule
    % layers.0.SA  & \\
    % layers.0.CA  & \\
    layers.0.FFN & 6, 8, 14, 16, 18, 23, 27, 28, 37, 38, 39, 47, 48, 51, 60, 62, 68, 69, 71, 74, 75, 77, 78, 80, 81, 85, 86, 89, 92, 93, 94, 95, 96, 97, 98, 99 \\
    % layers.1.SA  & \\
    % layers.1.CA  & \\
    % layers.1.FFN & \\
    % layers.2.SA  & \\
    % layers.2.CA  & \\
    % layers.2.FFN & \\
    % layers.3.SA  & \\
    % layers.3.CA  & \\
    % layers.3.FFN & \\
    % layers.4.SA  & \\
    % layers.4.CA  & \\
    % layers.4.FFN & \\
    % layers.5.SA  & \\
    % layers.5.CA  & \\
    layers.5.FFN & 28, 47, 62, 69, 74, 77, 80, 81, 86, 89, 93, 95, 97 \\
    layers.6.SA  & 28, 47, 62, 69, 74, 77 \\
    % layers.6.CA  & \\
    layers.6.FFN & 47, 69, 74, 77, 81, 86 \\
    % layers.7.SA  & \\
    % layers.7.CA  & \\
    % layers.7.FFN & \\
    \bottomrule
  \end{tabular}
\end{table}

\begin{table}[h]
  \centering
  \small
  \caption{Update steps added for Lift$_{ph}$ after BUA.}
  \label{tab:bua-lift_ph-bua}
  \begin{tabular}{@{} l p{0.40\textwidth} @{}}
    \toprule
    Block       & Added Steps \\
    \midrule
    % layers.0.SA  & \\
    % layers.0.CA  & \\
    layers.0.FFN & 10, 14, 15, 17, 18, 28, 29, 31, 40, 44, 48, 51, 54, 57, 65, 68, 69, 78, 79, 86, 91, 93, 95, 96, 97, 98, 99 \\
    layers.1.SA  & 14, 15, 17, 18, 27, 29, 31, 40, 44, 48, 49, 51, 54, 57, 68, 69, 74, 79, 86, 91, 93, 95, 96, 97, 98, 99 \\
    % layers.1.CA  & \\
    layers.1.FFN & 14, 15, 17, 27, 29, 31, 40, 44, 48, 49, 51, 57, 68, 69, 74, 78, 86, 91, 93, 95, 96, 97, 98, 99 \\
    % layers.2.SA  & \\
    % layers.2.CA  & \\
    layers.2.FFN & 14, 15, 29, 31, 40, 44, 48, 49, 54, 57, 68, 69, 74, 79, 86, 91, 93, 95, 96, 97, 98, 99 \\
    % layers.3.SA  & \\
    % layers.3.CA  & \\
    layers.3.FFN & 15, 29, 37, 40, 44, 48, 54, 57, 69, 74, 86, 91, 95, 96, 98, 99 \\
    % layers.4.SA  & \\
    % layers.4.CA  & \\
    % layers.4.FFN & \\
    % layers.5.SA  & \\
    % layers.5.CA  & \\
    % layers.5.FFN & \\
    % layers.6.SA  & \\
    % layers.6.CA  & \\
    % layers.6.FFN & \\
    % layers.7.SA  & \\
    % layers.7.CA  & \\
    % layers.7.FFN & \\
    \bottomrule
  \end{tabular}
\end{table}

\begin{table}[h]
  \centering
  \small
  \caption{Update steps added for Square$_{ph}$ after BUA.}
  \label{tab:bua-square_ph-bua}
  \begin{tabular}{@{} l p{0.40\textwidth} @{}}
    \toprule
    Block       & Added Steps \\
    \midrule
    % layers.0.SA  & \\
    % layers.0.CA  & \\
    layers.0.FFN & 7, 8, 14, 15, 23, 24, 36, 37, 44, 48, 52, 53, 57, 61, 64, 67, 69, 74, 76, 78, 79, 83, 87, 89, 90, 91, 93, 95, 96, 97, 98 \\
    % layers.1.SA  & \\
    % layers.1.CA  & \\
    % layers.1.FFN & \\
    % layers.2.SA  & \\
    % layers.2.CA  & \\
    % layers.2.FFN & \\
    % layers.3.SA  & \\
    % layers.3.CA  & \\
    % layers.3.FFN & \\
    % layers.4.SA  & \\
    % layers.4.CA  & \\
    % layers.4.FFN & \\
    % layers.5.SA  & \\
    % layers.5.CA  & \\
    % layers.5.FFN & \\
    % layers.6.SA  & \\
    % layers.6.CA  & \\
    layers.6.FFN & 10, 52, 78, 87, 89, 91, 97 \\
    layers.7.SA  & 10, 52, 78, 83, 87, 89, 97 \\
    layers.7.CA  & 10, 52, 78, 83, 89, 91 \\
    % layers.7.FFN & \\
    \bottomrule
  \end{tabular}
\end{table}

\begin{table}[h]
  \centering
  \small
  \caption{Update steps added for Transport$_{ph}$ after BUA.}
  \label{tab:bua-trans_ph-bua}
  \begin{tabular}{@{} l p{0.40\textwidth} @{}}
    \toprule
    Block       & Added Steps \\
    \midrule
    % layers.0.SA  & \\
    % layers.0.CA  & \\
    layers.0.FFN & 1, 7, 12, 14, 16, 19, 28, 29, 30, 34, 36, 43, 49, 52, 57, 60, 62, 63, 67, 69, 70, 74, 75, 78, 79, 82, 83, 84, 85, 87, 88, 90, 91, 92, 93, 94, 95, 96, 97, 98 \\
    % layers.1.SA  & \\
    % layers.1.CA  & \\
    % layers.1.FFN & \\
    % layers.2.SA  & \\
    % layers.2.CA  & \\
    % layers.2.FFN & \\
    layers.3.SA  & 14, 16, 28, 29, 34, 36, 43, 49, 57, 60, 62, 63, 67, 70, 74, 75, 78, 79, 82, 85, 87, 88, 92, 93, 96, 98 \\
    layers.3.CA  & 14, 16, 28, 34, 36, 43, 49, 57, 60, 62, 63, 67, 70, 74, 75, 78, 79, 82, 85, 87, 88, 92, 93, 94, 95, 97, 98 \\
    layers.3.FFN & 14, 16, 28, 29, 34, 36, 49, 57, 60, 62, 63, 67, 70, 75, 78, 79, 88, 92, 93, 94, 96, 97 \\
    % layers.4.SA  & \\
    % layers.4.CA  & \\
    layers.4.FFN & 14, 28, 29, 36, 49, 60, 62, 63, 67, 70, 75, 79, 82, 85, 88, 95, 96, 98 \\
    % layers.5.SA  & \\
    % layers.5.CA  & \\
    % layers.5.FFN & \\
    % layers.6.SA  & \\
    % layers.6.CA  & \\
    % layers.6.FFN & \\
    % layers.7.SA  & \\
    % layers.7.CA  & \\
    % layers.7.FFN & \\
    \bottomrule
  \end{tabular}
\end{table}

\begin{table}[h]
  \centering
  \small
  \caption{Update steps added for Tool$_{ph}$ after BUA.}
  \label{tab:bua-tool_ph-bua}
  \begin{tabular}{@{} l p{0.40\textwidth} @{}}
    \toprule
    Block       & Added Steps \\
    \midrule
    % layers.0.SA  & \\
    % layers.0.CA  & \\
    layers.0.FFN & 3, 6, 11, 12, 16, 23, 24, 31, 44, 48, 49, 50, 53, 60, 61, 63, 66, 71, 74, 77, 78, 79, 80, 81, 83, 85, 86, 87, 88, 91, 93, 94, 95, 96, 97, 98, 99 \\
    % layers.1.SA  & \\
    % layers.1.CA  & \\
    layers.1.FFN & 3, 6, 11, 12, 16, 23, 24, 31, 44, 48, 49, 50, 53, 60, 61, 63, 66, 71, 74, 77, 78, 79, 80, 81, 83, 85, 86, 87, 88, 91, 93, 94, 95, 96, 97, 98, 99 \\
    % layers.2.SA  & \\
    % layers.2.CA  & \\
    % layers.2.FFN & \\
    % layers.3.SA  & \\
    % layers.3.CA  & \\
    % layers.3.FFN & \\
    % layers.4.SA  & \\
    % layers.4.CA  & \\
    % layers.4.FFN & \\
    % layers.5.SA  & \\
    % layers.5.CA  & \\
    layers.5.FFN & 11, 40, 53, 60, 74, 79, 81, 83, 87, 95, 97, 99 \\
    layers.6.SA  & 11, 40, 53, 60, 71, 79, 80, 81, 87, 91, 97, 99 \\
    layers.6.CA  & 11, 40, 53, 60, 71, 74, 79, 80, 81, 83, 87, 91, 95, 97, 99 \\
    % layers.6.FFN & \\
    % layers.7.SA  & \\
    % layers.7.CA  & \\
    % layers.7.FFN & \\
    \bottomrule
  \end{tabular}
\end{table}

\begin{table}[h]
  \centering
  \small
  \caption{Update steps added for Pusht-T after BUA.}
  \label{tab:bua-pushtT-bua}
  \begin{tabular}{@{} l p{0.40\textwidth} @{}}
    \toprule
    Block       & Added Steps \\
    \midrule
    % layers.0.SA  & \\
    % layers.0.CA  & \\
    layers.0.FFN & 3, 6, 19, 21, 23, 31, 33, 39, 47, 51, 54, 59, 64, 65, 66, 68, 79, 82, 84, 85, 86, 90, 91, 94, 96, 97 \\
    % layers.1.SA  & \\
    % layers.1.CA  & \\
    % layers.1.FFN & \\
    % layers.2.SA  & \\
    % layers.2.CA  & \\
    % layers.2.FFN & \\
    % layers.3.SA  & \\
    % layers.3.CA  & \\
    % layers.3.FFN & \\
    % layers.4.SA  & \\
    % layers.4.CA  & \\
    layers.4.FFN & 7, 17, 19, 23, 31, 45, 64, 66, 68, 79, 84, 86, 94, 97 \\
    % layers.5.SA  & \\
    % layers.5.CA  & \\
    layers.5.FFN & 19, 31, 64, 66, 74, 84, 86, 94, 97 \\
    % layers.6.SA  & \\
    % layers.6.CA  & \\
    layers.6.FFN & 19, 31, 66, 86, 94, 97 \\
    % layers.7.SA  & \\
    % layers.7.CA  & \\
    % layers.7.FFN & \\
    \bottomrule
  \end{tabular}
\end{table}

\begin{table}[h]
  \centering
  \small
  \caption{Update steps added for Can$_{mh}$ after BUA.}
  \label{tab:bua-can_mh-bua}
  \begin{tabular}{@{} l p{0.40\textwidth} @{}}
    \toprule
    Block       & Added Steps \\
    \midrule
    % layers.0.SA  & \\
    % layers.0.CA  & \\
    layers.0.FFN & 2, 3, 5, 18, 24, 28, 30, 33, 38, 57, 59, 60, 62, 69, 73, 76, 78, 79, 80, 85, 86, 87, 88, 91, 92, 93, 94, 95, 96, 97, 98, 99 \\
    % layers.1.SA  & \\
    % layers.1.CA  & \\
    % layers.1.FFN & \\
    % layers.2.SA  & \\
    % layers.2.CA  & \\
    % layers.2.FFN & \\
    % layers.3.SA  & \\
    % layers.3.CA  & \\
    % layers.3.FFN & \\
    % layers.4.SA  & \\
    % layers.4.CA  & \\
    % layers.4.FFN & \\
    % layers.5.SA  & \\
    % layers.5.CA  & \\
    layers.5.FFN & 43, 59, 60, 70, 73, 76, 79, 80, 86, 87, 94, 96, 98 \\
    layers.6.SA  & 43, 59, 70, 73, 76, 79, 86, 87, 91, 99 \\
    % layers.6.CA  & \\
    layers.6.FFN & 60, 70, 73, 76, 80, 86, 95, 99 \\
    % layers.7.SA  & \\
    % layers.7.CA  & \\
    % layers.7.FFN & \\
    \bottomrule
  \end{tabular}
\end{table}

\begin{table}[h]
  \centering
  \small
  \caption{Update steps added for Lift$_{mh}$ after BUA.}
  \label{tab:bua-lift_mh-bua}
  \begin{tabular}{@{} l p{0.40\textwidth} @{}}
    \toprule
    Block       & Added Steps \\
    \midrule
    % layers.0.SA  & \\
    % layers.0.CA  & \\
    layers.0.FFN & 13, 14, 15, 22, 23, 27, 30, 32, 33, 38, 40, 43, 51, 52, 54, 57, 60, 63, 68, 71, 75, 78, 81, 82, 83, 85, 89, 90, 92, 93, 94, 95, 96, 97, 98, 99 \\
    % layers.1.SA  & \\
    % layers.1.CA  & \\
    % layers.1.FFN & \\
    % layers.2.SA  & \\
    % layers.2.CA  & \\
    % layers.2.FFN & \\
    % layers.3.SA  & \\
    % layers.3.CA  & \\
    % layers.3.FFN & \\
    % layers.4.SA  & \\
    % layers.4.CA  & \\
    % layers.4.FFN & \\
    layers.5.SA  & 27, 33, 43, 51, 60, 63, 65, 68, 71, 75, 81, 82, 85, 89, 92, 93, 95, 97 \\
    layers.5.CA  & 27, 33, 43, 51, 60, 63, 65, 68, 71, 75, 81, 82, 85, 89, 90, 92, 93, 94, 96, 97, 99 \\
    layers.5.FFN & 27, 43, 51, 63, 65, 68, 71, 81, 82, 89, 92, 94, 96, 98 \\
    % layers.6.SA  & \\
    % layers.6.CA  & \\
    % layers.6.FFN & \\
    % layers.7.SA  & \\
    % layers.7.CA  & \\
    % layers.7.FFN & \\
    \bottomrule
  \end{tabular}
\end{table}

\begin{table}[h]
  \centering
  \small
  \caption{Update steps added for Square$_{mh}$ after BUA.}
  \label{tab:bua-square_mh-bua}
  \begin{tabular}{@{} l p{0.40\textwidth} @{}}
    \toprule
    Block       & Added Steps \\
    \midrule
    % layers.0.SA  & \\
    % layers.0.CA  & \\
    layers.0.FFN & 1, 6, 7, 9, 10, 15, 20, 21, 23, 24, 28, 34, 41, 44, 49, 50, 58, 68, 69, 72, 76, 78, 81, 82, 84, 85, 86, 89, 91, 92, 93, 95, 96, 97, 98, 99 \\
    % layers.1.SA  & \\
    % layers.1.CA  & \\
    % layers.1.FFN & \\
    % layers.2.SA  & \\
    % layers.2.CA  & \\
    % layers.2.FFN & \\
    % layers.3.SA  & \\
    % layers.3.CA  & \\
    % layers.3.FFN & \\
    % layers.4.SA  & \\
    % layers.4.CA  & \\
    % layers.4.FFN & \\
    % layers.5.SA  & \\
    % layers.5.CA  & \\
    layers.5.FFN & 6, 21, 28, 58, 72, 77, 81, 85, 92, 93, 96, 98 \\
    layers.6.SA  & 6, 21, 28, 58, 72, 81, 85, 92, 96 \\
    layers.6.CA  & 6, 21, 28, 72, 77, 81, 85, 92, 96 \\
    % layers.6.FFN & \\
    % layers.7.SA  & \\
    % layers.7.CA  & \\
    % layers.7.FFN & \\
    \bottomrule
  \end{tabular}
\end{table}

\begin{table}[h]
  \centering
  \small
  \caption{Update steps added for Transport$_{mh}$ after BUA.}
  \label{tab:bua-transport_mh-bua}
  \begin{tabular}{@{} l p{0.40\textwidth} @{}}
    \toprule
    Block       & Added Steps \\
    \midrule
    layers.0.SA  & 1, 3, 5, 6, 10, 17, 19, 24, 29, 35, 40, 41, 42, 46, 47, 51, 52, 61, 62, 63, 66, 68, 71, 74, 76, 78, 81, 86, 88, 90, 92, 93, 94, 95, 96, 97, 98, 99 \\
    % layers.0.CA  & \\
    layers.0.FFN & 3, 5, 6, 10, 17, 19, 24, 30, 35, 40, 41, 42, 46, 47, 52, 62, 63, 66, 68, 74, 76, 78, 80, 86, 88, 90, 91, 92, 94, 95, 96, 97, 98, 99 \\
    % layers.1.SA  & \\
    % layers.1.CA  & \\
    layers.1.FFN & 1, 3, 5, 6, 10, 17, 19, 24, 30, 35, 41, 42, 46, 47, 63, 66, 68, 76, 78, 80, 88, 90, 91, 92, 95, 96, 97, 98, 99 \\
    % layers.2.SA  & \\
    % layers.2.CA  & \\
    % layers.2.FFN & \\
    % layers.3.SA  & \\
    % layers.3.CA  & \\
    % layers.3.FFN & \\
    % layers.4.SA  & \\
    % layers.4.CA  & \\
    % layers.4.FFN & \\
    % layers.5.SA  & \\
    % layers.5.CA  & \\
    layers.5.FFN & 1, 3, 10, 17, 35, 46, 47, 63, 76, 90, 97 \\
    % layers.6.SA  & \\
    % layers.6.CA  & \\
    layers.6.FFN & 17, 35, 46 \\
    % layers.7.SA  & \\
    % layers.7.CA  & \\
    % layers.7.FFN & \\
    \bottomrule
  \end{tabular}
\end{table}

\begin{table}[h]
  \centering
  \small
  \caption{Update steps added for Block Pushing after BUA.}
  \label{tab:bua-block_pushing-bua}
  \begin{tabular}{@{} l p{0.40\textwidth} @{}}
    \toprule
    Block       & Added Steps \\
    \midrule
    % layers.0.SA  & \\
    % layers.0.CA  & \\
    layers.0.FFN & 5, 10, 15, 16, 17, 27, 30, 38, 40, 41, 46, 52, 53, 58, 64, 70, 71, 76, 83, 86, 89, 92, 93, 96, 97, 98 \\
    % layers.1.SA  & \\
    % layers.1.CA  & \\
    % layers.1.FFN & \\
    % layers.2.SA  & \\
    % layers.2.CA  & \\
    % layers.2.FFN & \\
    % layers.3.SA  & \\
    % layers.3.CA  & \\
    % layers.3.FFN & \\
    % layers.4.SA  & \\
    % layers.4.CA  & \\
    % layers.4.FFN & \\
    % layers.5.SA  & \\
    % layers.5.CA  & \\
    % layers.5.FFN & \\
    % layers.6.SA  & \\
    % layers.6.CA  & \\
    layers.6.FFN & 16, 40, 76, 96, 98 \\
    layers.7.SA  & 16, 40, 64, 76, 83, 89, 96, 98 \\
    layers.7.CA  & 16, 40, 64, 76, 89, 93, 96, 98 \\
    % layers.7.FFN & \\
    \bottomrule
  \end{tabular}
\end{table}
% \FloatBarrier
\newpage
% \vspace*{-\topskip} 
\begin{table}[htp]
  \centering
  \small
  \caption{Update steps added for Kitchen after BUA.}
  \label{tab:bua-kitchen-bua}
  \begin{tabular}{@{} l p{0.40\textwidth} @{}}
    \toprule
    Block       & Added Steps \\
    \midrule
    layers.0.SA  & 1, 3, 6, 7, 8, 11, 14, 18, 25, 27, 38, 41, 46, 47, 55, 57, 59, 60, 68, 72, 78, 79, 84, 87, 88, 91, 92, 95, 96, 98 \\
    % layers.0.CA  & \\
    layers.0.FFN & 1, 2, 5, 6, 8, 10, 11, 18, 27, 29, 41, 43, 47, 54, 55, 57, 60, 68, 72, 78, 79, 87, 88, 91, 92, 95, 96, 98 \\
    % layers.1.SA  & \\
    % layers.1.CA  & \\
    % layers.1.FFN & \\
    % layers.2.SA  & \\
    % layers.2.CA  & \\
    % layers.2.FFN & \\
    % layers.3.SA  & \\
    % layers.3.CA  & \\
    % layers.3.FFN & \\
    % layers.4.SA  & \\
    % layers.4.CA  & \\
    % layers.4.FFN & \\
    % layers.5.SA  & \\
    % layers.5.CA  & \\
    layers.5.FFN & 6, 25, 29, 38, 47, 59, 60, 68, 92, 95, 96, 98 \\
    % layers.6.SA  & \\
    % layers.6.CA  & \\
    layers.6.FFN & 6, 11, 25, 38, 60, 68, 88, 95 \\
    layers.7.SA  & 1, 6, 11, 25, 38, 60, 68, 88 \\
    % layers.7.CA  & \\
    % layers.7.FFN & \\
    \bottomrule
  \end{tabular}
\end{table}

\FloatBarrier
\subsection{Visualization of Cache Behavior}
\label{appendix:Visualization_of_Cache_Behavior}

We visualize the ground-truth features, the computed and cached features under the BAC schedule, and their absolute differences for each timestep in Figures~\ref{fig:cache_heatmap_for_lift_ph} through~\ref{fig:cache_heatmap_for_transport_mh}, with cache update steps marked in red. Across all tasks, we observe two consistent phenomena.
First, consecutive steps exhibit high feature similarity, confirming that high temporal redundancy makes caching naturally applicable. Second, the difference maps reveal distinct behaviors between reuse and update phases: during cache reuse, the absolute difference remains low, reflecting activation stability. Conversely, cache update steps show significant feature shifts, indicating that updates are effectively capturing necessary changes. Collectively, these visualizations validate the reliability of the BAC cache schedule.

\FloatBarrier
\begin{figure}[h]
    \centering
    \includegraphics[width=0.9\linewidth]{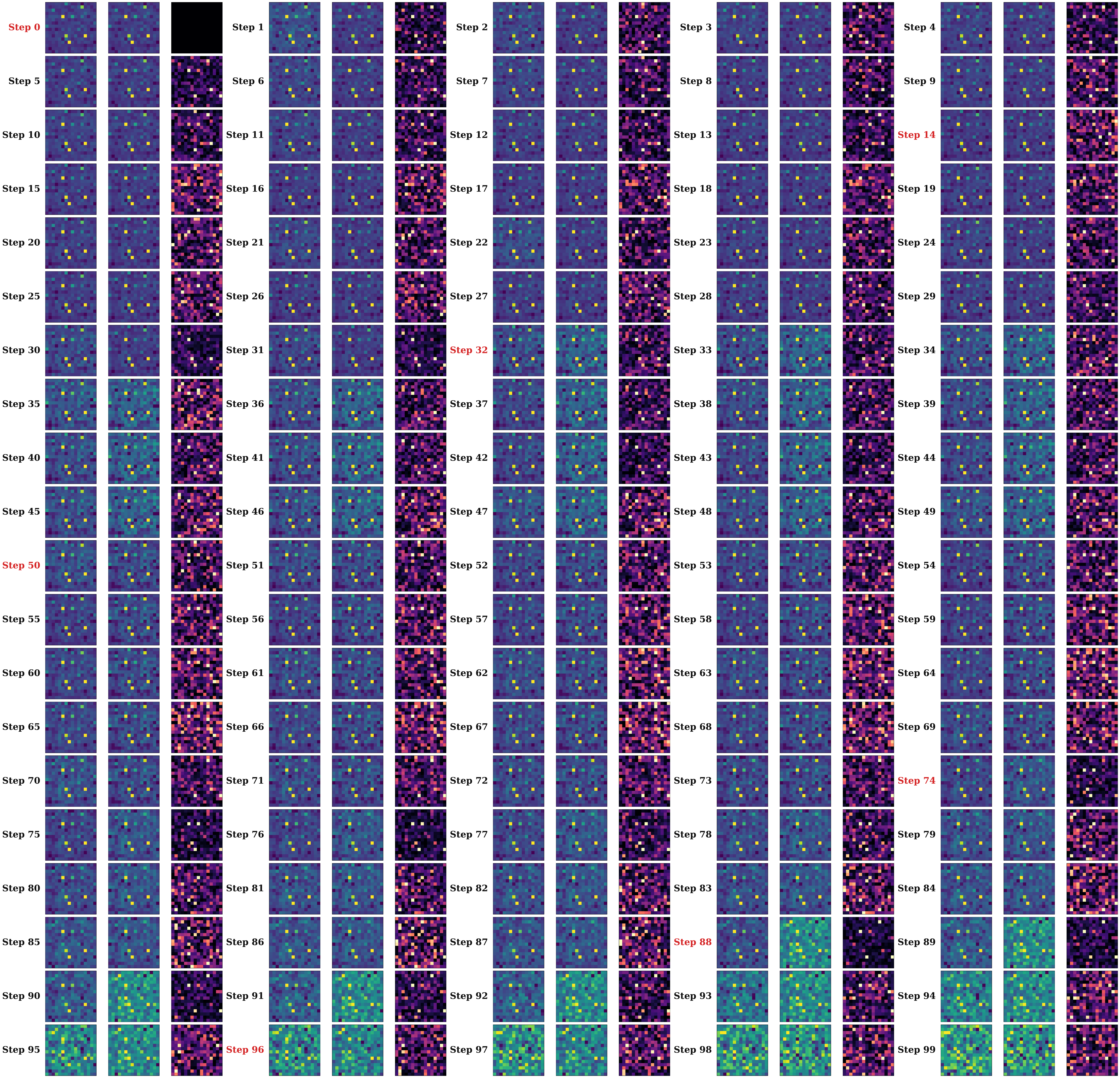}
    \caption{Feature heatmaps for Lift\_ph.}
    \label{fig:cache_heatmap_for_lift_ph}
\end{figure} 

\FloatBarrier
\begin{figure}[h]
    \centering
    \includegraphics[width=0.9\linewidth]{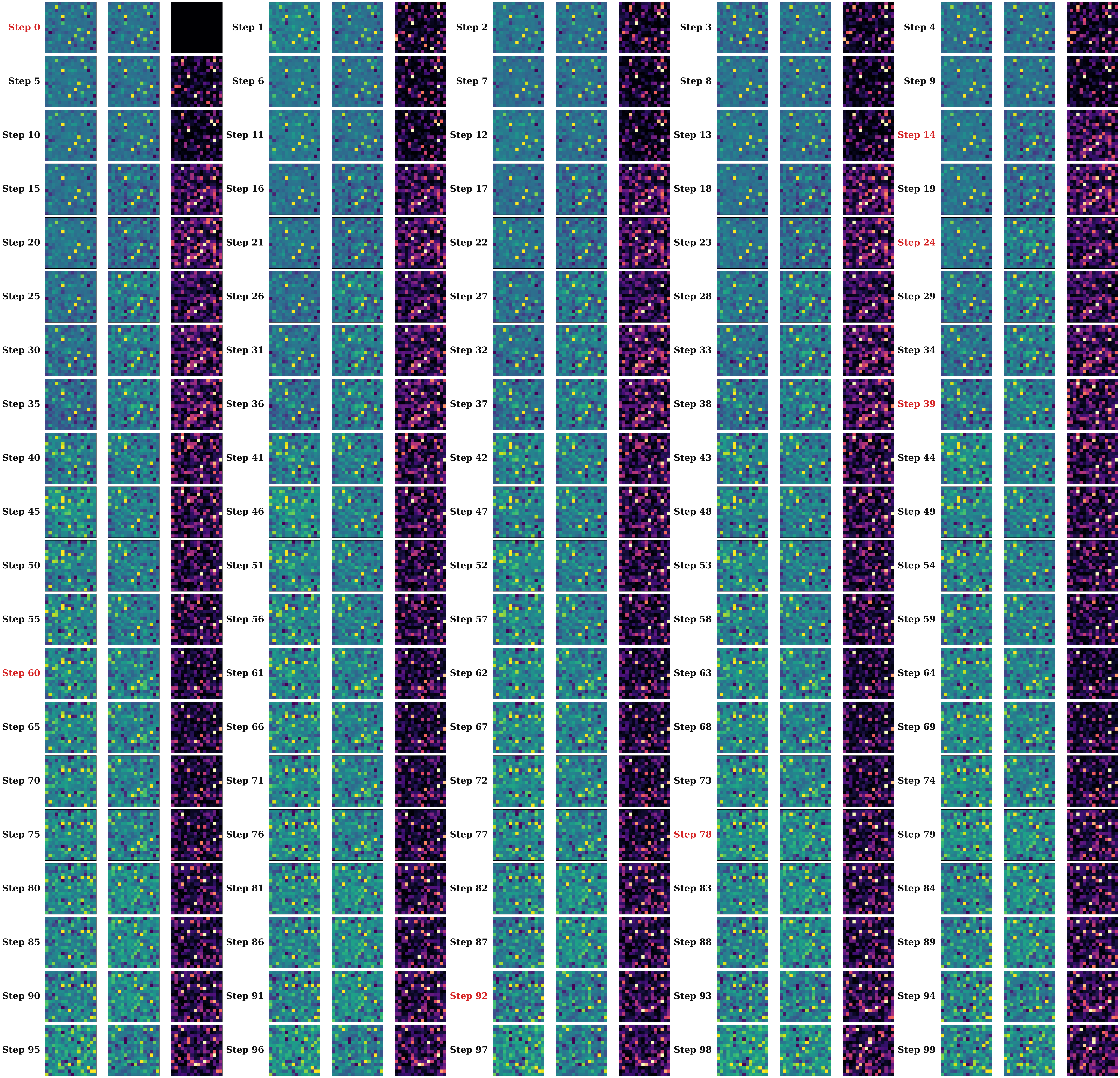}
    \caption{Feature heatmaps for Can\_ph.}
    \label{fig:cache_heatmap_for_can_ph}
\end{figure}

\FloatBarrier
\begin{figure}[h]
    \centering
    \includegraphics[width=0.9\linewidth]{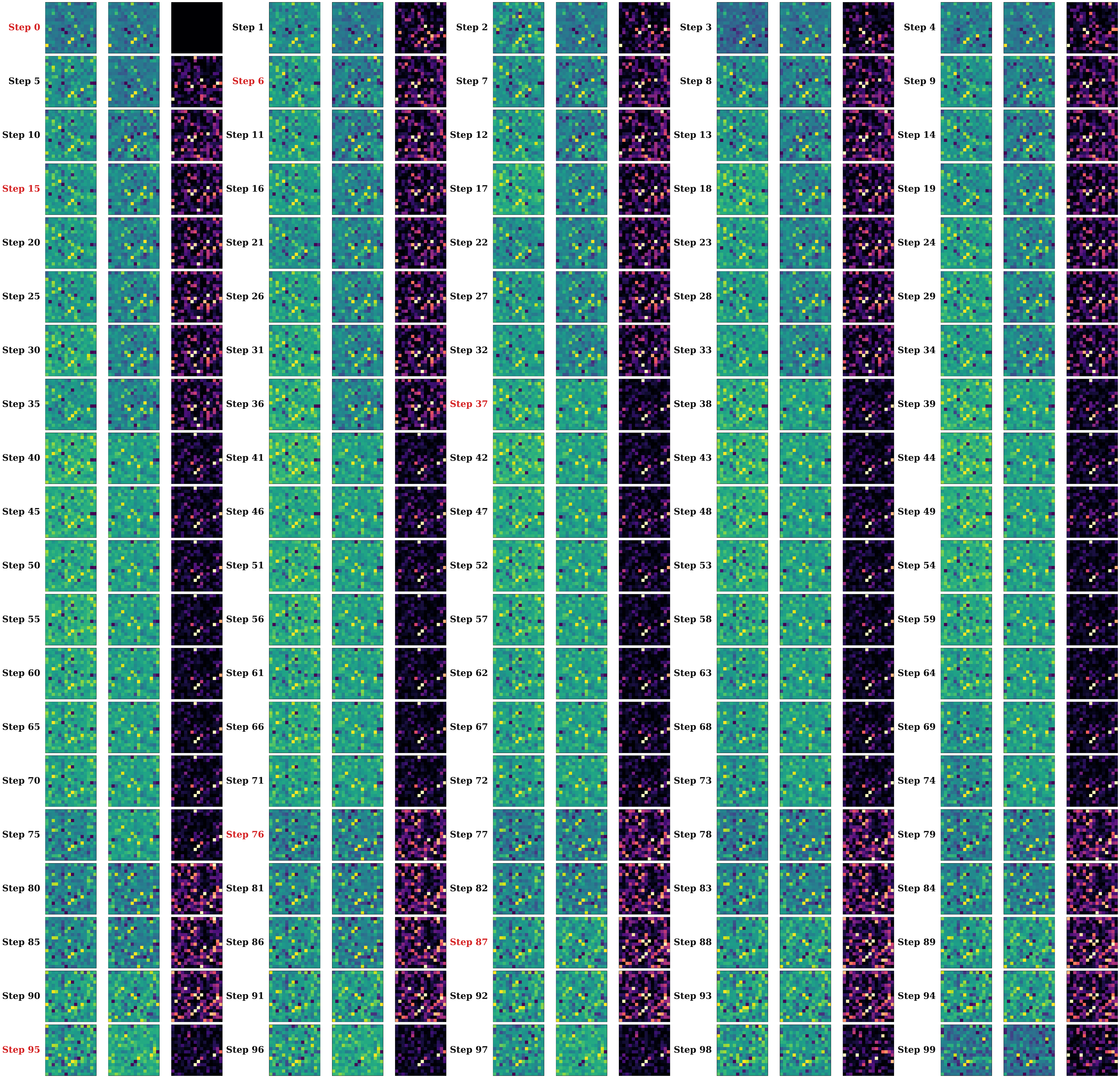}
    \caption{Feature heatmaps for Square\_ph.}
    \label{fig:cache_heatmap_for_square_ph}
\end{figure}

\FloatBarrier
\begin{figure}[h]
    \centering
    \includegraphics[width=0.9\linewidth]{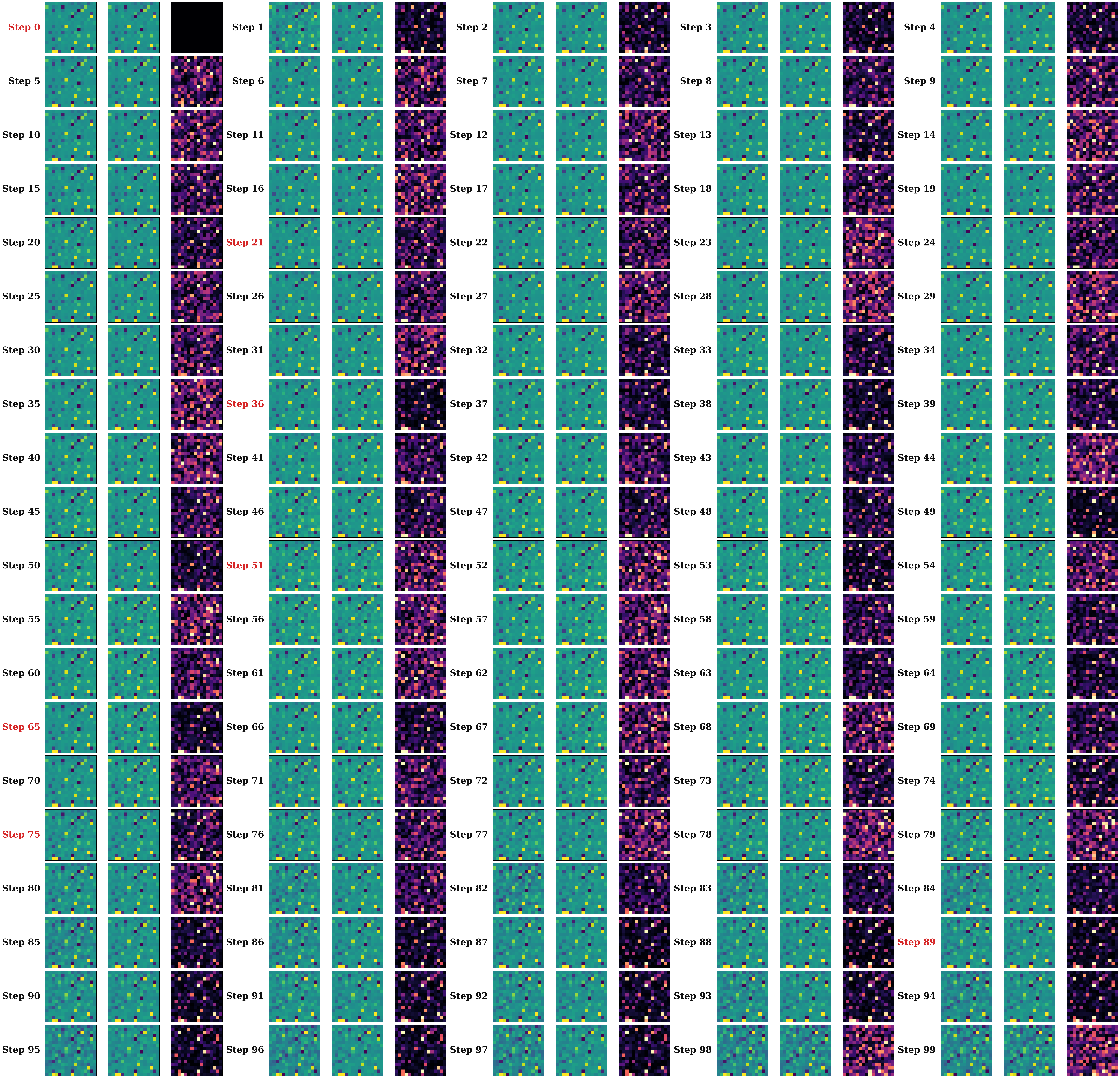}
    \caption{Feature heatmaps for Pusht.}
    \label{fig:cache_heatmap_for_pusht}
\end{figure}

\FloatBarrier
\begin{figure}[h]
    \centering
    \includegraphics[width=0.9\linewidth]{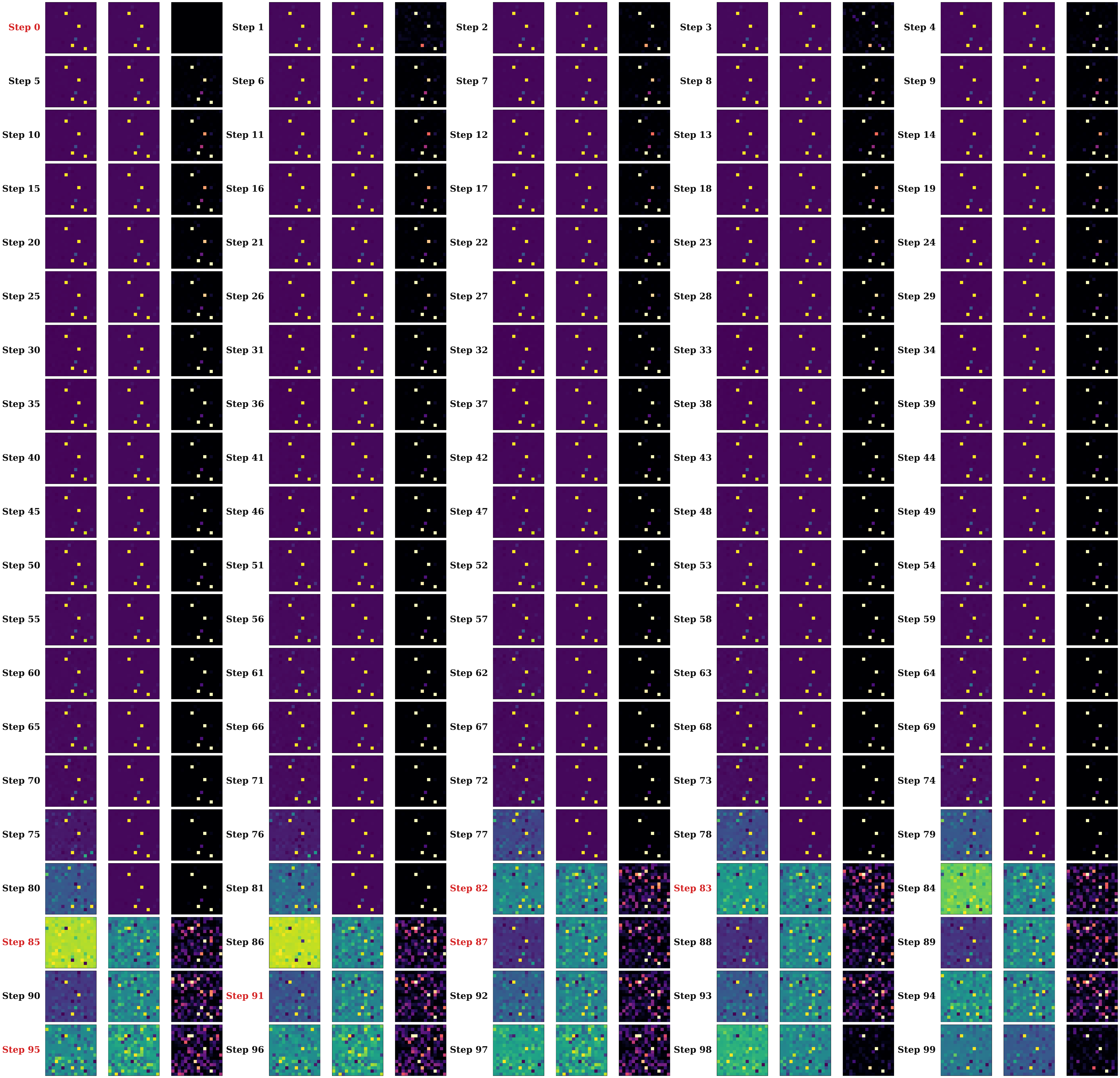}
    \caption{Feature heatmaps for Transport\_ph.}
    \label{fig:cache_heatmap_for_transport_ph}
\end{figure}

\FloatBarrier
\begin{figure}[h]
    \centering
    \includegraphics[width=0.9\linewidth]{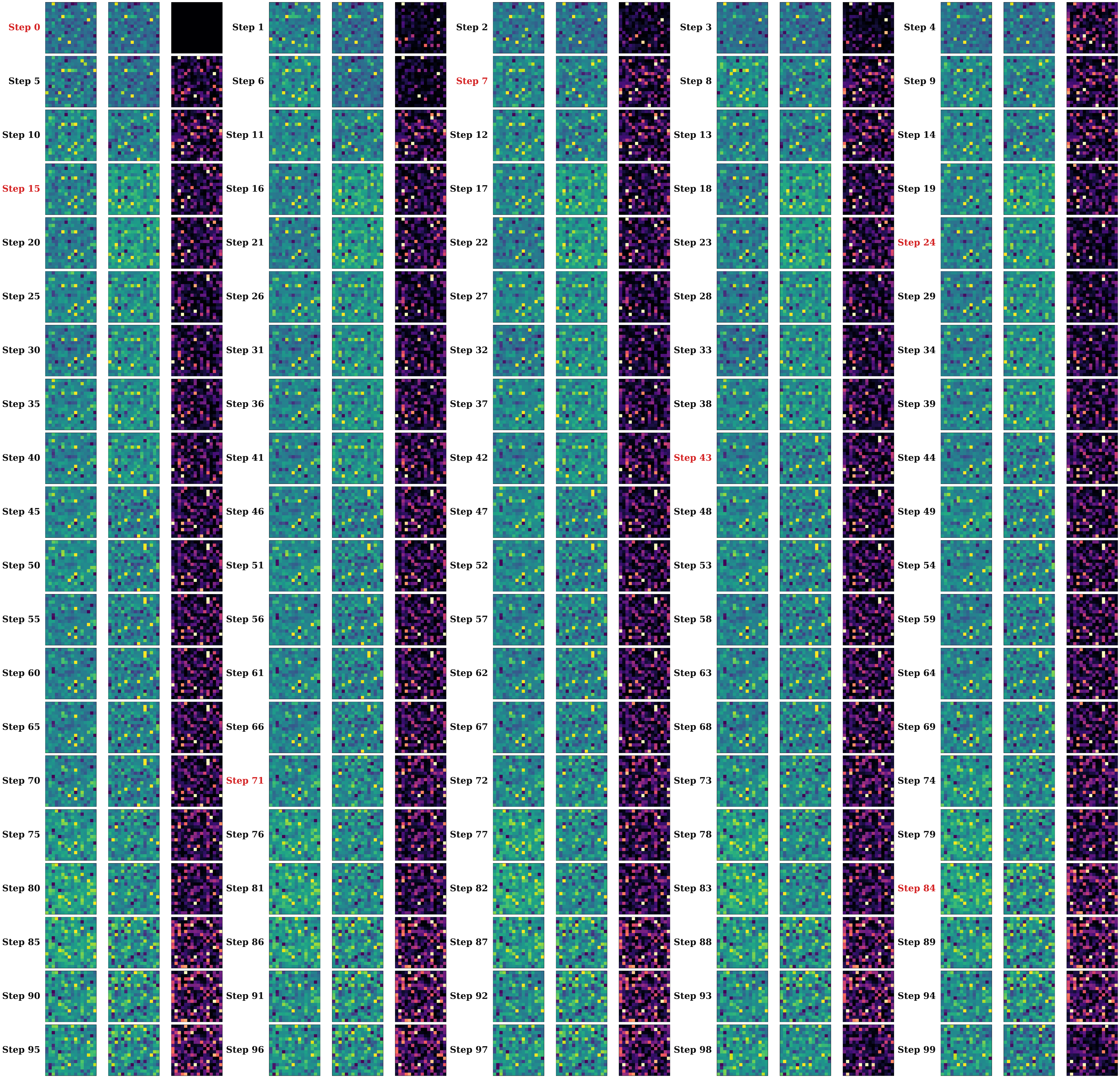}
    \caption{Feature heatmaps for Tool\_hang.}
    \label{fig:cache_heatmap_for_tool_hang}
\end{figure}

\FloatBarrier
\begin{figure}[h]
    \centering
    \includegraphics[width=0.9\linewidth]{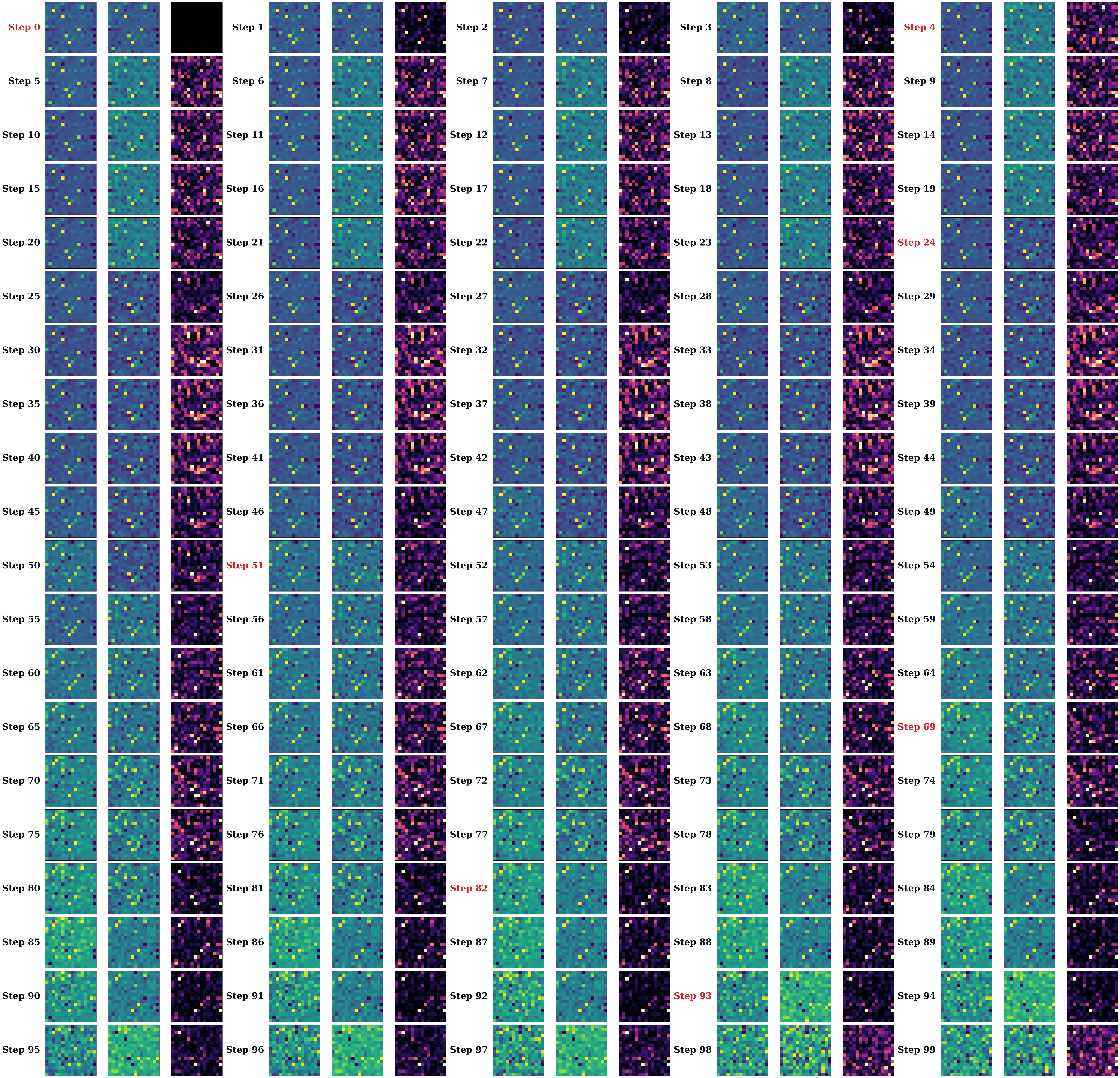}
    \caption{Feature heatmaps for Lift\_mh.}
    \label{fig:cache_heatmap_for_lift_mh}
\end{figure}

\FloatBarrier
\begin{figure}[h]
    \centering
    \includegraphics[width=0.9\linewidth]{Figures/Appendix_6/cache_heatmap_grid_can_ph.pdf}
    \caption{Feature heatmaps for Can\_mh.}
    \label{fig:cache_heatmap_for_can_mh}
\end{figure}

\FloatBarrier
\begin{figure}[h]
    \centering
    \includegraphics[width=0.9\linewidth]{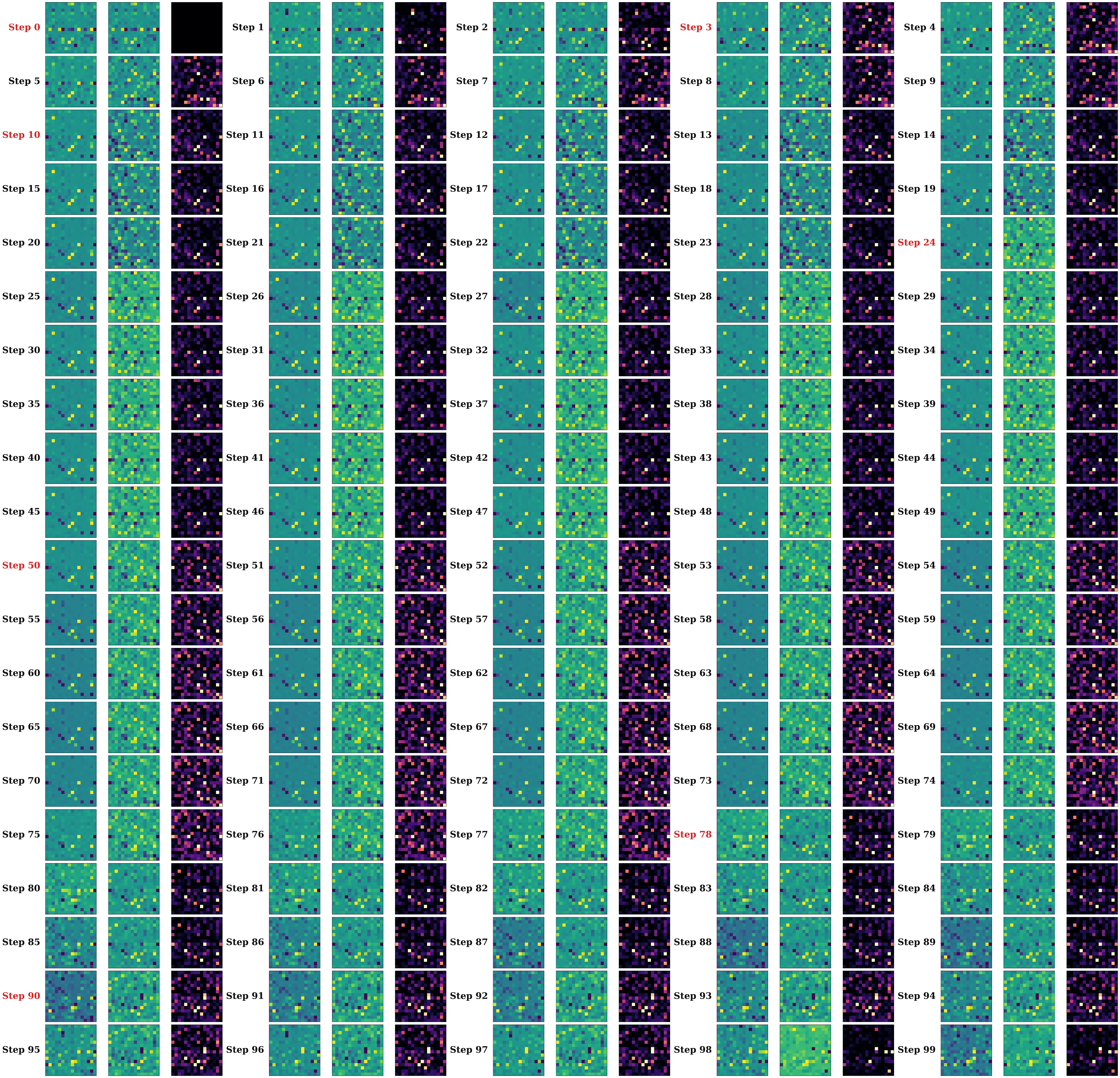}
    \caption{Feature heatmaps for Square\_mh.}
    \label{fig:cache_heatmap_for_square_mh}
\end{figure}

\FloatBarrier
\begin{figure}[h]
    \centering
    \includegraphics[width=0.9\linewidth]{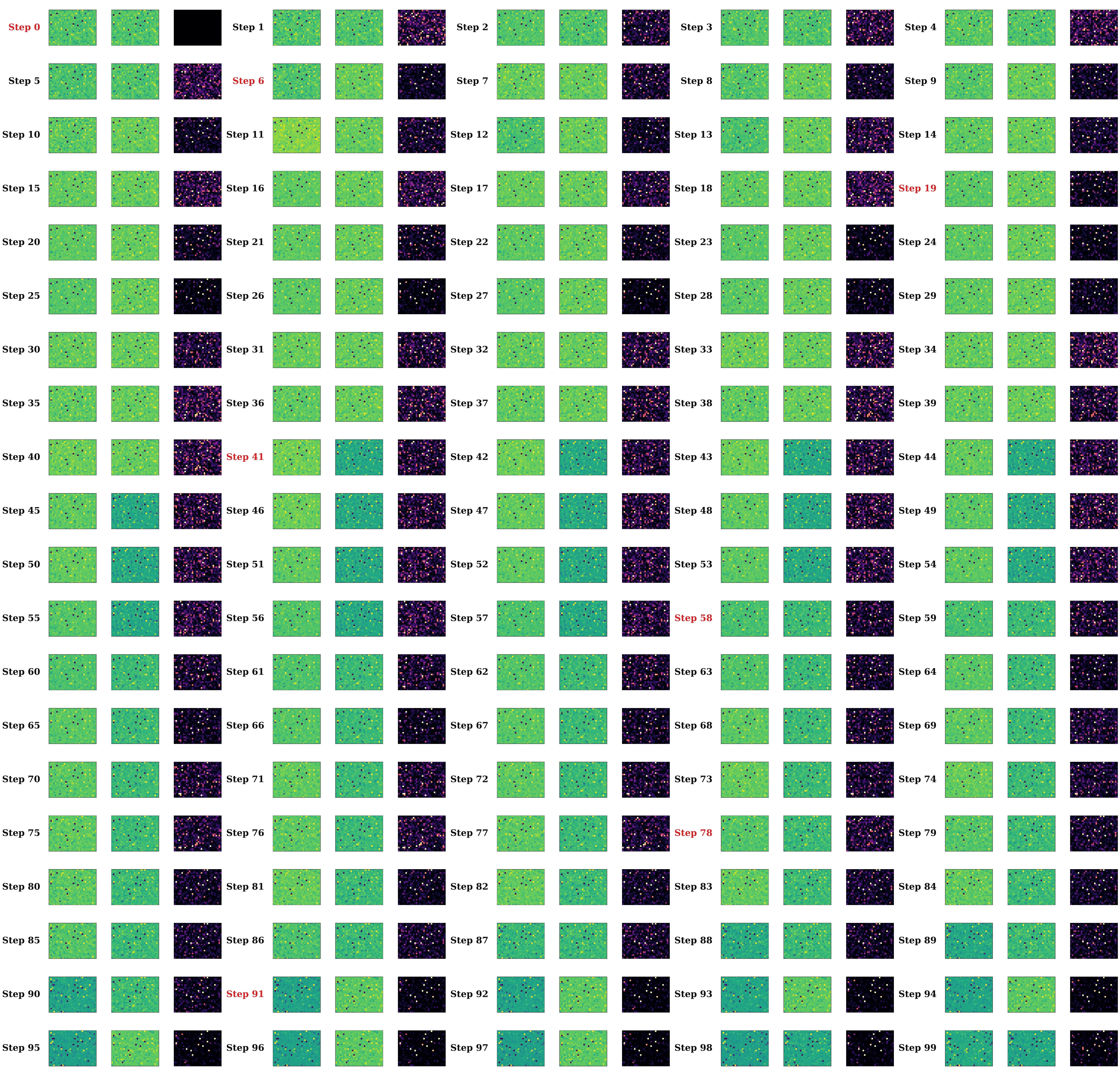}
    \caption{Feature heatmaps for Transport\_mh.}
    \label{fig:cache_heatmap_for_transport_mh}
\end{figure}

% We visualize the computed features, cached features, and their absolute difference for each timestep in Figures~\ref{fig:cache_heatmap_for_pusht},~\ref{fig:cache_heatmap_for_can_mh},~\ref{fig:cache_heatmap_for_can_ph},~\ref{fig:cache_heatmap_for_lift_mh},~\ref{fig:cache_heatmap_for_lift_ph},~\ref{fig:cache_heatmap_for_square_mh},~\ref{fig:cache_heatmap_for_square_ph},~\ref{fig:cache_heatmap_for_tool_hang},~\ref{fig:cache_heatmap_for_transport_mh}, and ~\ref{fig:cache_heatmap_for_transport_ph}. We mark the cache update steps as red.

% Across all tasks, we observed the same phenomenon:
% First, consecutive steps produce almost identical features, making caching naturally applicable.
% Second, the difference map reveals distinct behaviors between cache reuse and update steps. When the cache is reused, the third sub-figure (“Abs-Diff” view) changes only gradually across steps, reflecting the stability of the activations in these regions. In contrast, in the vast majority of update steps, the third sub-figure shows a clear and noticeable change compared to the previous step, indicating that the update has indeed taken effect.
% Therefore, we consider this evidence sufficient to demonstrate that the BAC cache schedule functions reliably.

\FloatBarrier
\subsection{Large, diverse evidence for high episode homogeneity}
\label{appendix:Large_diverse_evidence_for_episodes}

\begin{wrapfigure}{r}{0.5\linewidth}
    \centering
    \vspace{-10pt}
    \includegraphics[width=\linewidth]{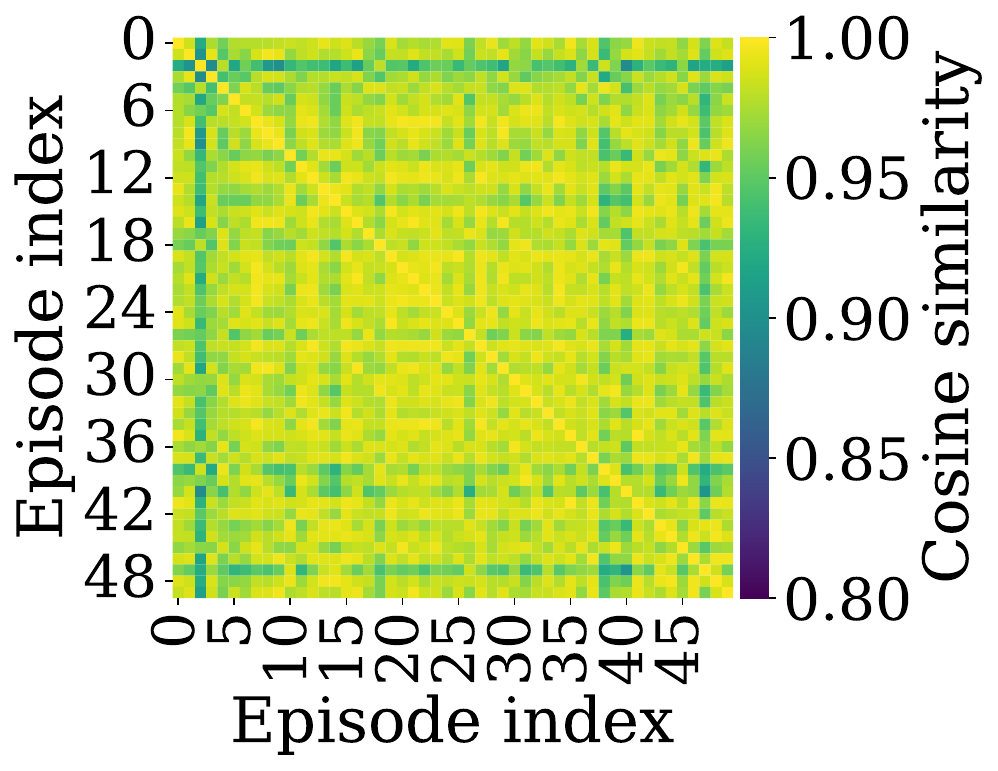}
    \caption{Inter-episode feature similarity for the Lift task at the FFN block of layer 7 at timestep~99.}
    \label{fig:Example}
    \vspace{-10pt}
\end{wrapfigure}
In this section, we provide both qualitative and quantitative evidence for high episode homogeneity.

In qualitative experiments, we visualize the similarity heatmap for the features between episodes in Figures~\ref{fig:inter_episode_similarity_for_lift} to~\ref{fig:inter_episode_similarity_for_toolhang}. Specifically, for each task, we regenerate 50 episodes and extract features from 8 layers across 10 uniformly sampled timesteps. We compute a 50 $\times$ 50 cosine-similarity matrix across episodes, producing 80 matrices per task, giving a comprehensive view on episode hemogeneity across multimodal trajectories. Fig.~\ref{fig:Example} illustrates an example of the computed similarity matrix.  
Each entry represents the similarity between the features of two independent episodes. To make the visualization clearer, the range of all maps is restricted to $[0.8, 1.0]$. These figures clearly demonstrate a strong homogeneity across different episodes.

In quantitative analysis, we conduct statistics on inter-episode similarities. As shown in Table~\ref{tab:inter_episode_stats}, the average per-matrix mean similarity stays $\geq$0.96 with 95th-percentile entries $\geq$0.99, indicating the similarity curve is essentially flat. 
Moreover, we find that differences across episodes emerge almost exclusively within the final ~5\% of denoising steps, this indicates that the vast majority of the diffusion process remains highly consistent across episodes, with only minor divergence introduced during the final stage.  
These results reflect a repeatable property of the denoising dynamics of the model.

\FloatBarrier
\begin{figure}[h]
    \centering
    \includegraphics[width=0.9\linewidth]{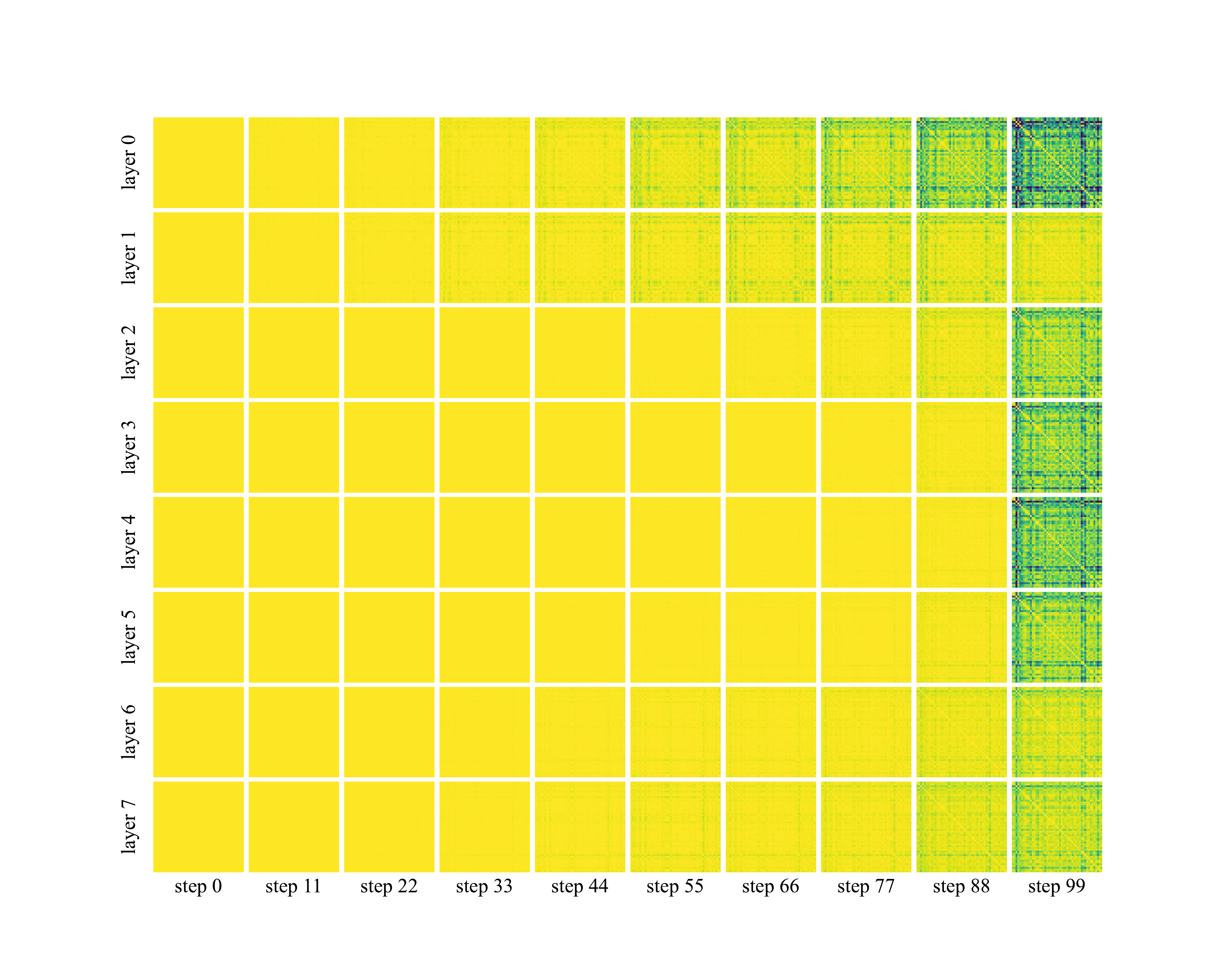}
    \caption{Inter-episode similarity pattern for Lift.}    \label{fig:inter_episode_similarity_for_lift}
\end{figure}

\FloatBarrier
\begin{figure}[h]
    \centering
    \includegraphics[width=0.9\linewidth]{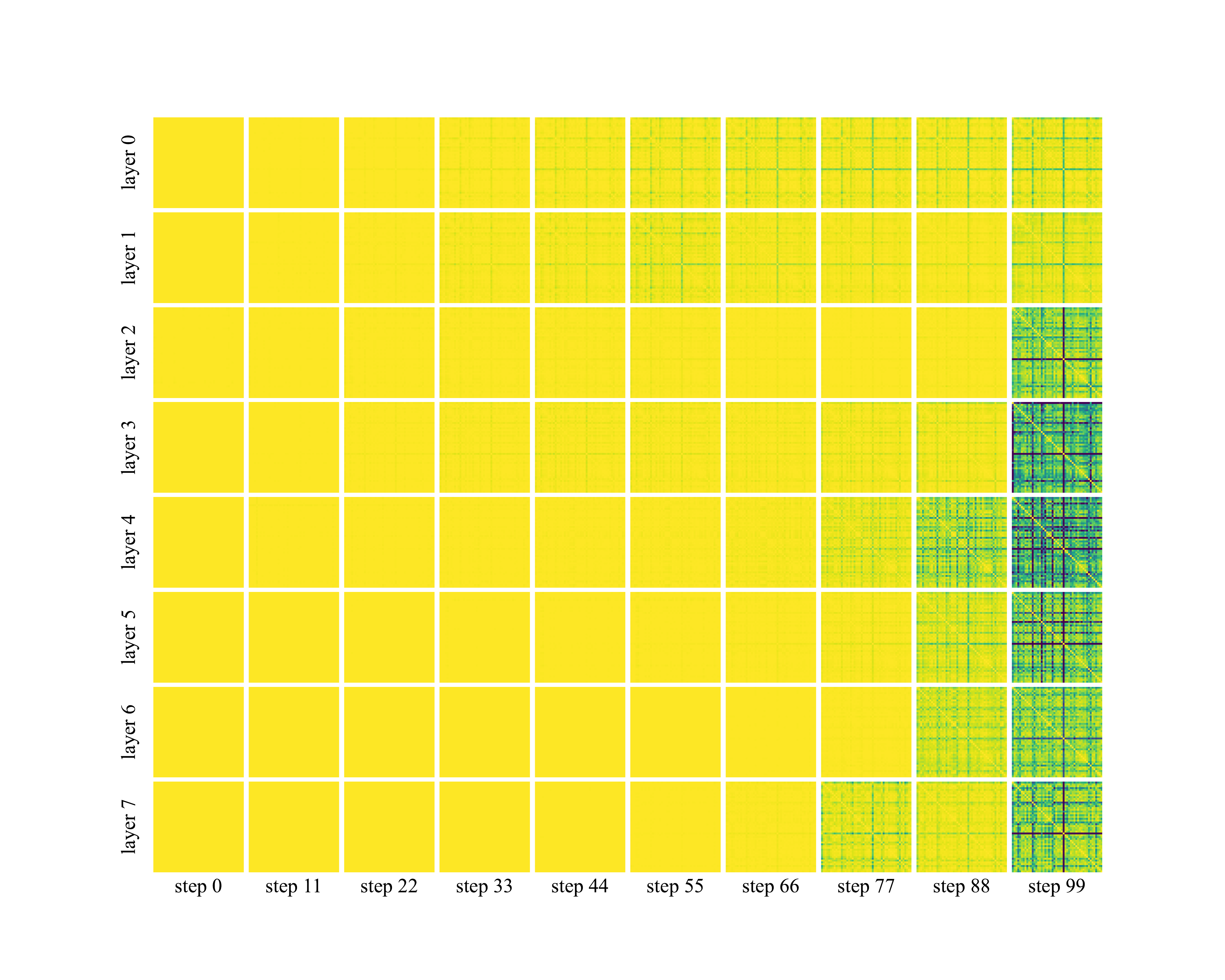}
    \caption{Inter-episode similarity pattern for Can.}    \label{fig:inter_episode_similarity_for_can}
\end{figure}

\FloatBarrier
\begin{figure}[h]
    \centering
    \includegraphics[width=0.9\linewidth]{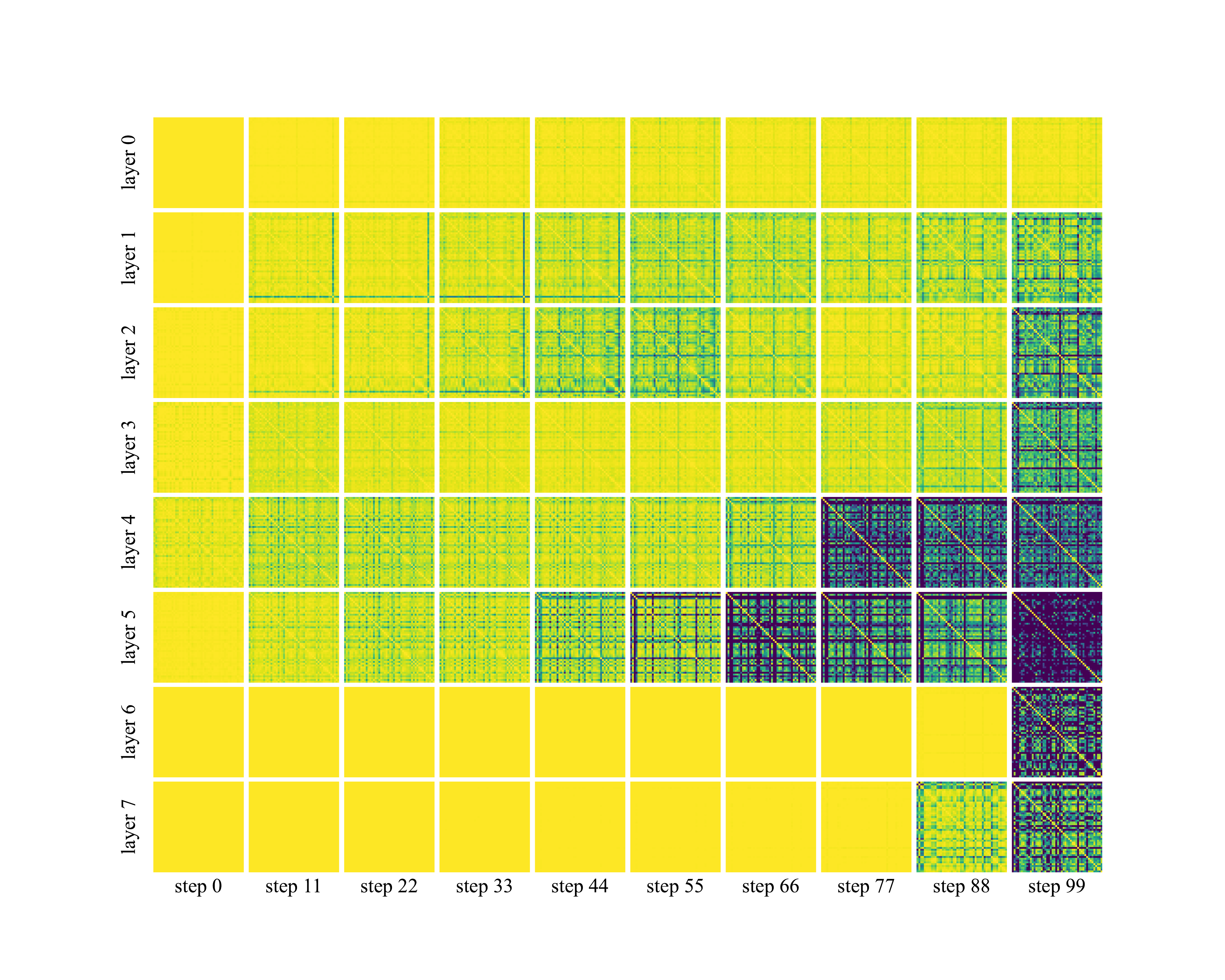}
    \caption{Inter-episode similarity pattern for Square.}    \label{fig:inter_episode_similarity_for_square}
\end{figure}

\FloatBarrier
\begin{figure}[h]
    \centering
    \includegraphics[width=0.9\linewidth]{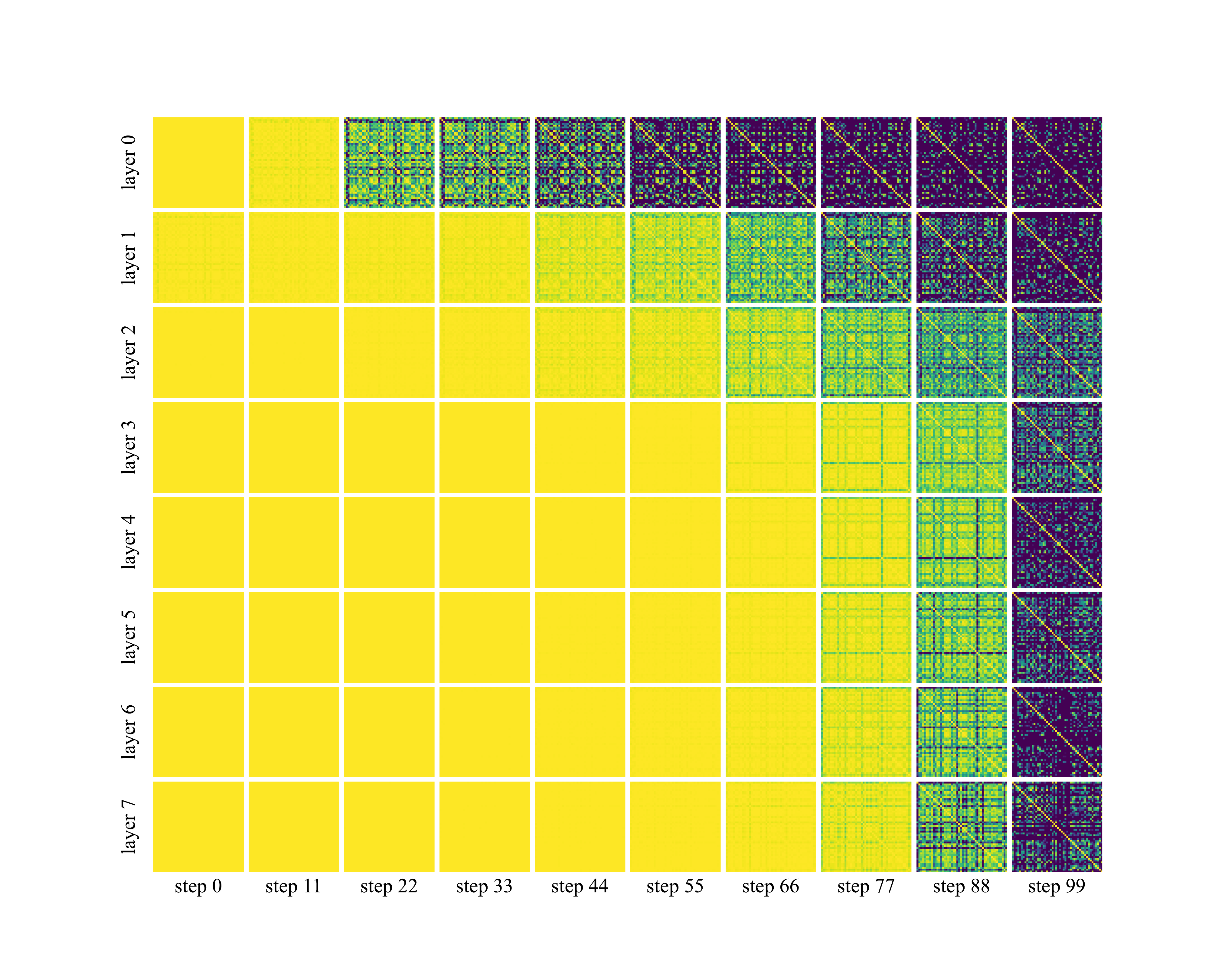}
    \caption{Inter-episode similarity pattern for Pusht.}    \label{fig:inter_episode_similarity_for_pusht}
\end{figure}

\FloatBarrier
\begin{figure}[h]
    \centering
    \includegraphics[width=0.9\linewidth]{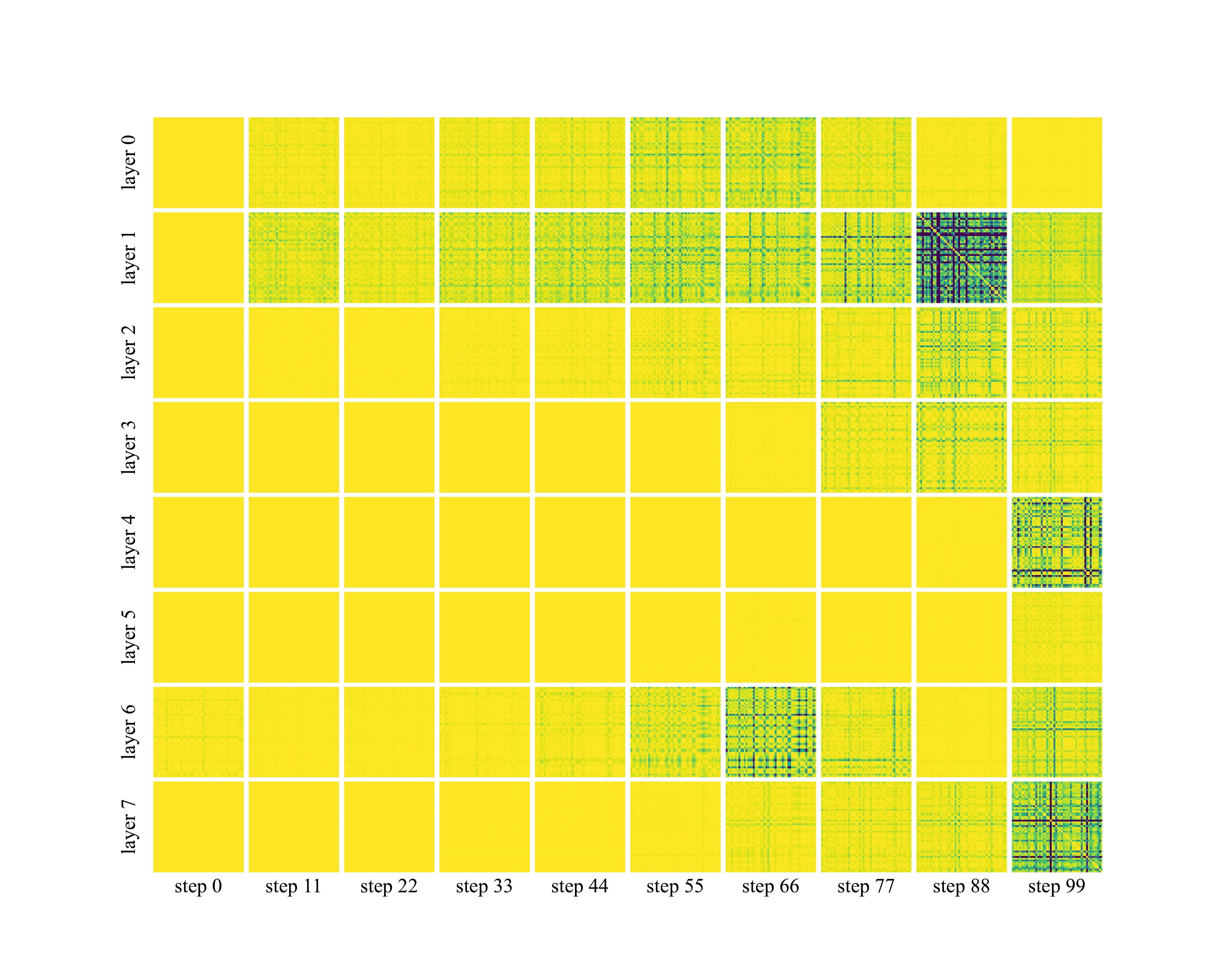}
    \caption{Inter-episode similarity pattern for Transport.}    \label{fig:inter_episode_similarity_for_transport}
\end{figure}

\FloatBarrier
\begin{figure}[h]
    \centering
    \includegraphics[width=0.9\linewidth]{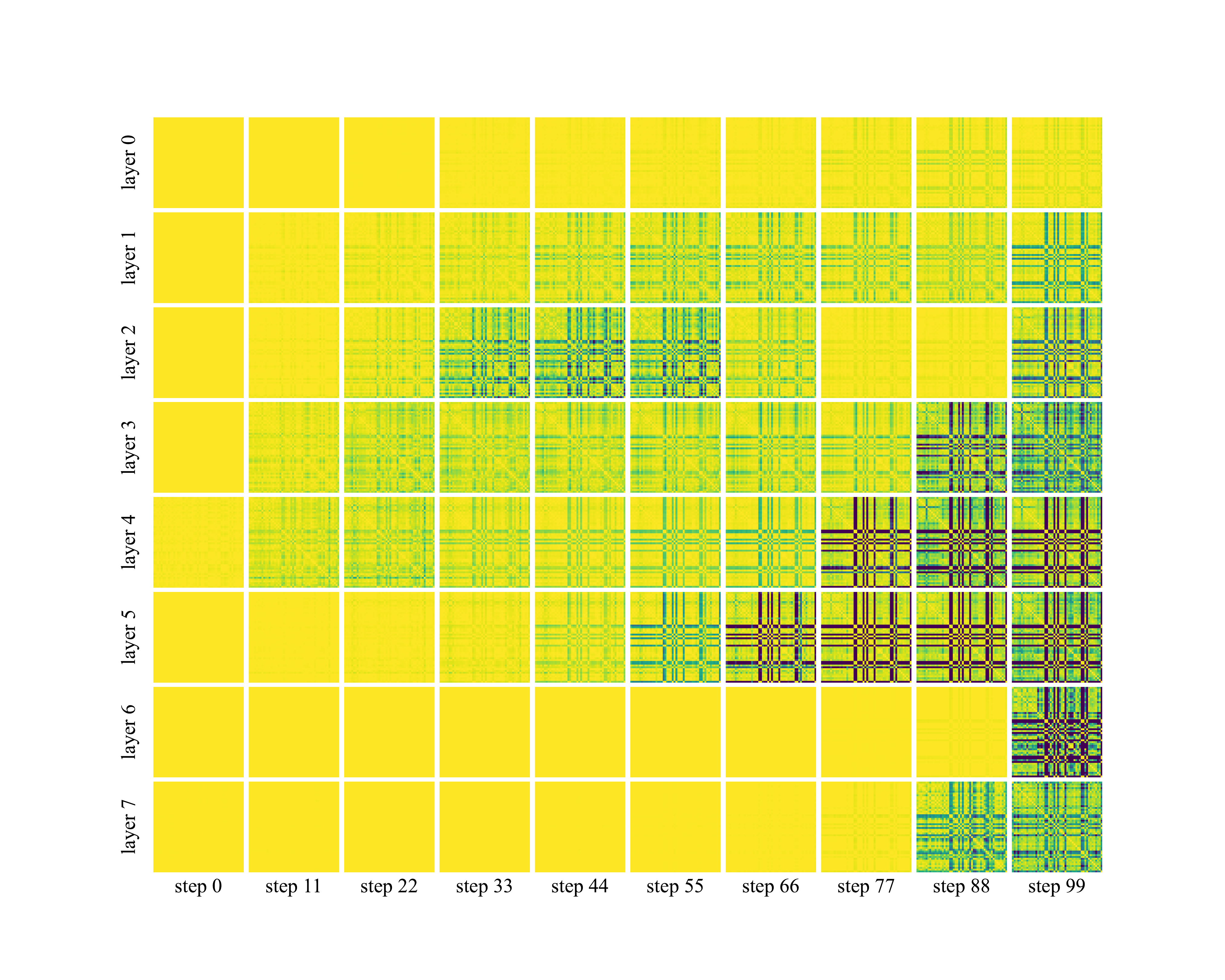}
    \caption{Inter-episode similarity pattern for Tool\_hang.}    \label{fig:inter_episode_similarity_for_toolhang}
\end{figure}

% \FloatBarrier
% \begin{figure}[h]
%     \centering
%     \includegraphics[width=0.75\linewidth]{Figures/Appendix_6/lift_mh.png}
%     \caption{Inter-episode similarity for lift\_mh.}    \label{fig:inter_episode_similarity_for_lift_mh}
% \end{figure}

% \FloatBarrier
% \begin{figure}[h]
%     \centering
%     \includegraphics[width=0.75\linewidth]{Figures/Appendix_6/can_mh.png}
%     \caption{Inter-episode similarity for can\_mh.}    \label{fig:inter_episode_similarity_for_can_mh}
% \end{figure}

% \FloatBarrier
% \begin{figure}[h]
%     \centering
%     \includegraphics[width=0.75\linewidth]{Figures/Appendix_6/square_mh.png}
%     \caption{Inter-episode similarity for square\_mh.}    \label{fig:inter_episode_similarity_for_square_mh}
% \end{figure}

% \FloatBarrier
% \begin{figure}[h]
%     \centering
%     \includegraphics[width=0.75\linewidth]{Figures/Appendix_6/transport_mh.png}
%     \caption{Inter-episode similarity for transport\_mh.}    \label{fig:inter_episode_similarity_for_transport_mh}
% \end{figure}
\FloatBarrier
\begin{table}[h]
\centering
\small
\begin{tabular}{lccccccc}
\toprule
\textbf{Task} &  \textbf{AvgMean} & \textbf{MinMean} & \textbf{AvgStd} & \textbf{MaxStd} & \textbf{AvgP05} & \textbf{AvgP95} \\
\midrule
Can\_mh         & 0.9910 & 0.8206 & 0.0061 & 0.1414 & 0.9789 & 0.9980 \\
Lift\_mh        & 0.9767 & 0.7352 & 0.0156 & 0.1625 & 0.9465 & 0.9954 \\
Square\_mh      & 0.9744 & 0.6398 & 0.0268 & 0.2050 & 0.9291 & 0.9949 \\
Transport\_mh   & 0.9981 & 0.9393 & 0.0026 & 0.1017 & 0.9922 & 0.9998 \\
Can\_ph         & 0.9925 & 0.9049 & 0.0067 & 0.0769 & 0.9782 & 0.9986 \\
Lift\_ph        & 0.9937 & 0.9254 & 0.0048 & 0.0552 & 0.9839 & 0.9988 \\
Pusht           & 0.9285 & 0.2885 & 0.0526 & 0.5010 & 0.8277 & 0.9930 \\
Square\_ph      & 0.9622 & 0.6829 & 0.0333 & 0.2729 & 0.8917 & 0.9936 \\
Tool\_hang\_ph  & 0.9751 & 0.6957 & 0.0290 & 0.4019 & 0.9195 & 0.9983 \\
Transport\_ph   & 0.9920 & 0.8627 & 0.0085 & 0.1348 & 0.9752 & 0.9991 \\
\bottomrule
\end{tabular}
\caption{Inter-episode similarity statistics across tasks.}
\label{tab:inter_episode_stats}
\end{table}

% To address the concern that our “episode homogeneity” assumption was only illustrated with two examples, we conducted a fully systematic, large-scale evaluation across all tasks.

% Our caching mechanism is deliberately conservative:
% We only reuse activations at (layer, timestep) locations whose 5th-percentile similarity exceeds the deployment threshold (default = 0.97).
% When the similarity distribution is tight and unimodal (e.g., all PH/MH tasks except pusht), the offline schedule safely caches these layers.
% When multimodal behavior appears—e.g., pusht early-stage features, late diffusion steps—the p05 falls below threshold, so caching is automatically disabled for those locations.
% In those zones, the system performs full forward passes, preventing any propagation of mismatched or noise-sensitive features.
% Across tasks, more than 90\% of all (layer, timestep) locations fall within a ±0.5\% similarity band, demonstrating that cached segments remain reliable even under new diffusion noise at rollout time.

\FloatBarrier
\subsection{Evaluating BAC Under DDIM Schedulers}
\label{appendix:Evaluating BAC Under Fast DDIM Schedulers}

In this section, we investigate the performance of BAC when integrated with sampling-based methods, specifically focusing on the DDIM scheduler with a step size of $K=40$. We apply BAC on top of DDIM across different update steps ($\mathcal{S}=3, 5, 10$) and report the results in terms of Success Rate and Speedup.

\FloatBarrier
\begin{table}[H]
\small
\setlength{\tabcolsep}{2pt}
\renewcommand{\arraystretch}{1.15}
\captionsetup{skip=2pt}

\caption{Performance of BAC integrated with DDIM ($K=40$) across various benchmarks. Results are presented as (Success Rate / Speedup).}
\label{tab:ddim_bac_results}

% ===========================
% PH Benchmark
% ===========================
\caption*{\small Benchmark on Proficient Human (PH) demonstration data.}

\begin{tabularx}{\linewidth}{l *{6}{>{\centering\arraybackslash}X}}
\toprule
\textbf{Method} &
\textbf{Lift} & \textbf{Can} & \textbf{Square} &
\textbf{Transport} & \textbf{Tool Hang} & \textbf{Push--T} \\
\midrule
DDIM ($K=40$) & 1.00/1.00 & 0.95/1.00 & 0.84/1.00 & 0.75/1.00 & 0.47/1.00 & 0.55/1.00 \\
BAC ($\mathcal S=3,n=3$) & 1.00/3.22 & 0.56/3.49 & 0.64/3.67 & 0.00/3.14 & 0.00/3.43 & 0.44/3.15 \\
BAC ($\mathcal S=5,n=3$) & 1.00/2.41 & 0.95/3.22 & 0.87/2.85 & 0.58/2.97 & 0.30/3.08 & 0.53/3.23 \\
BAC ($\mathcal S=10,n=3$) & 1.00/2.32 & 0.94/2.41 & 0.86/2.53 & 0.76/2.33 & 0.47/2.38 & 0.55/2.48 \\
\bottomrule
\end{tabularx}

\vspace{1.2em}

% ===========================
% MH Benchmark
% ===========================
\caption*{\small Benchmark on Mixed Human (MH) demonstration data.}

\begin{tabularx}{\linewidth}{l *{4}{>{\centering\arraybackslash}X}}
\toprule
\textbf{Method} &
\textbf{Lift} & \textbf{Can} & \textbf{Square} & \textbf{Transport} \\
\midrule
DDIM ($K=40$) & 0.99/1.00 & 0.93/1.00 & 0.74/1.00 & 0.34/1.00 \\
BAC ($\mathcal S=3,n=3$) & 0.98/3.44 & 0.54/3.63 & 0.04/3.71 & 0.22/3.55 \\
BAC ($\mathcal S=5,n=3$) & 0.98/3.14 & 0.87/3.23 & 0.68/3.33 & 0.30/2.94 \\
BAC ($\mathcal S=10,n=3$) & 0.99/2.29 & 0.90/2.43 & 0.75/2.45 & 0.33/2.20 \\
\bottomrule
\end{tabularx}

\vspace{1.2em}

% ===========================
% Multi-stage Benchmark
% ===========================
\caption*{\small Benchmark on multi-stage tasks. 
For Block-Pushing, $p_x$ denotes pushing $x$ blocks.
For Kitchen, $p_x$ denotes interacting with $x$ or more objects.}

\begin{tabularx}{\linewidth}{l *{6}{>{\centering\arraybackslash}X}}
\toprule
\textbf{Method} &
\textbf{Kitchen $p_1$} &
\textbf{Kitchen $p_2$} &
\textbf{Kitchen $p_3$} &
\textbf{Kitchen $p_4$} &
\textbf{BP $p_1$} &
\textbf{BP $p_2$} \\
\midrule
DDIM ($K=40$) & 1.00/1.00 & 1.00/1.00 & 1.00/1.00 & 0.96/1.00 & 0.98/1.00 & 0.94/1.00 \\
BAC ($\mathcal S=3,n=3$) & 0.99/3.56 & 0.99/3.56 & 0.99/3.56 & 0.97/3.56 & 0.94/3.27 & 0.95/3.43 \\
BAC ($\mathcal S=5,n=3$) & 1.00/2.99 & 1.00/2.99 & 1.00/2.99 & 0.99/2.99 & 0.94/2.95 & 0.95/3.16 \\
BAC ($\mathcal S=10,n=3$) & 1.00/2.23 & 1.00/2.23 & 1.00/2.23 & 0.97/2.23 & 0.98/2.23 & 0.97/2.36 \\
\bottomrule
\end{tabularx}

\end{table}

Even when integrated with the faster DDIM scheduler, the BAC algorithm demonstrates a significant speedup of $2.2$ to $3.5\times$ while maintaining high success rates. This result emphasizes that the efficiency gains provided by BAC are not merely due to reductions in denoising steps, but rather stem from the inherent acceleration capabilities of BAC itself.

\end{document}